\renewcommand{\algorithmicrequire}{\textbf{Input:}}  % Use Input in the format of Algorithm
\renewcommand{\algorithmicensure}{\textbf{Output:}}
\newtheorem{remark}{Remark}[section]
\newtheorem{theorem}{Theorem}[section]
\newtheorem{lemma}{Lemma}[section]
\newtheorem{proposition}{Proposition}[section]
\newtheorem{condition}{Condition}[section]
\newtheorem{assumption}{Assumption}[section]
\newtheorem{definition}{Definition}[section]
\newtheorem{corollary}{Corollary}[section]
\def\1{\boldsymbol{1}}
\def\B{\mathscr B}
\def\({\Big(}
\def\){\Big)}
\def\<{\langle}
\def\>{\rangle}
\def\dd{\,\text{d}}
\def\G{\mathcal G}
\def\L{\mathcal L}
\def\E{\mathbb E}
\def\G{\mathcal G}
\def\LL{\mathscr L}
\def\Gg{\mathscr G}
\def\P{\mathbb{P}}
\def\H{\mathcal H}
\def\HH{\mathscr H}
\def\F{\mathcal{F}}
\def\Ff{\mathscr F}
\def\N{\mathcal N}
\def\a{\alpha}
\def\b{\beta}
\def\ba{\begin{array}}
\def\ea{\end{array}}
\def\ban{\begin{eqnarray*}}
\def\ean{\end{eqnarray*}}
\def\bann{\begin{eqnarray*}}
\def\eann{\end{eqnarray*}}
\def\bnaa{\begin{eqnarray}}
\def\enaa{\end{eqnarray}}
\def\bd{\begin{description}}
\def\ed{\end{description}}
\def\be{\begin{equation}}
\def\ee{\end{equation}}
\def\bna{\begin{eqnarray}}
\def\ena{\end{eqnarray}}
\def\F{{\cal F}}
\def\X{\mathscr X}
\def\Y{\mathscr Y}
\def\Z{\mathscr Z}
\def\N{{\cal N}}
\begin{document}
\begin{CJK}{GBK}{song}

% paper title
% Titles are generally capitalized except for words such as a, an, and, as,
% at, but, by, for, in, nor, of, on, or, the, to and up, which are usually
% not capitalized unless they are the first or last word of the title.
% Linebreaks \\ can be used within to get better formatting as desired.
% Do not put math or special symbols in the title.
\title{Decentralized Online Learning for Random Inverse Problems Over Graphs}
%
%
% author names and IEEE memberships
% note positions of commas and nonbreaking spaces ( ~ ) LaTeX will not break
% a structure at a ~ so this keeps an author's name from being broken across
% two lines.
% use \thanks{} to gain access to the first footnote area
% a separate \thanks must be used for each paragraph as LaTeX2e's \thanks
% was not built to handle multiple paragraphs
%

\author{Xiwei Zhang, Tao Li,  Yan Chen and Qianyuan Long
\thanks{This work was supported  by the National Natural Science Foundation
of China under Grant 62261136550. This paper was presented in
part at the  14th Asian Control Conference, Dalian, China, July 5-8, 2024. Corresponding author: Tao Li  (email: litao@amss.ac.cn).}
\thanks{Xiwei Zhang is with the No.2 High School of East China Normal University, Shanghai, China and he was with the School of Mathematical Sciences, East China Normal University.}
\thanks{Tao Li is with the Key Laboratory of Management, Decision and Information Systems, Institute of Systems Science, Academy of Mathematics and Systems Science, Chinese Academy of Sciences,  Beijing 100190, China, and also with School of
Mathematical Sciences, University of Chinese Academy of Sciences, Beijing 100149, China.}
\thanks{Yan Chen and Qianyuan Long were with the School of Mathematical Sciences, East China Normal University, Shanghai, China.}
}

% note the % following the last \IEEEmembership and also \thanks -
% these prevent an unwanted space from occurring between the last author name
% and the end of the author line. i.e., if you had this:
%
% \author{....lastname \thanks{...} \thanks{...} }
%                     ^------------^------------^----Do not want these spaces!
%
% a space would be appended to the last name and could cause every name on that
% line to be shifted left slightly. This is one of those "LaTeX things". For
% instance, "\textbf{A} \textbf{B}" will typeset as "A B" not "AB". To get
% "AB" then you have to do: "\textbf{A}\textbf{B}"
% \thanks is no different in this regard, so shield the last } of each \thanks
% that ends a line with a % and do not let a space in before the next \thanks.
% Spaces after \IEEEmembership other than the last one are OK (and needed) as
% you are supposed to have spaces between the names. For what it is worth,
% this is a minor point as most people would not even notice if the said evil
% space somehow managed to creep in.

% The paper headers
\markboth{Journal of \LaTeX\ Class Files, June~2023}%
{Shell \MakeLowercase{\textit{et al.}}: Bare Demo of IEEEtran.cls for IEEE Journals}
% The only time the second header will appear is for the odd numbered pages
% after the title page when using the twoside option.
%
% *** Note that you probably will NOT want to include the author's ***
% *** name in the headers of peer review papers.                   ***
% You can use \ifCLASSOPTIONpeerreview for conditional compilation here if
% you desire.

% If you want to put a publisher's ID mark on the page you can do it like
% this:
%\IEEEpubid{0000--0000/00\$00.00~\copyright~2015 IEEE}
% Remember, if you use this you must call \IEEEpubidadjcol in the second
% column for its text to clear the IEEEpubid mark.

% use for special paper notices
%\IEEEspecialpapernotice{(Invited Paper)}

% make the title area
\maketitle

% As a general rule, do not put math, special symbols or citations
% in the abstract or keywords.
\begin{abstract}
We propose a decentralized online learning algorithm for distributed random inverse problems over network graphs with online measurements, and  unify the distributed parameter estimation in Hilbert spaces and the least mean square problem in reproducing kernel Hilbert spaces (RKHS-LMS). We transform the convergence of the algorithm into the asymptotic stability of a class of inhomogeneous random difference equations in Hilbert spaces with $L_{2}$-bounded martingale difference terms and develop the $L_2$-asymptotic stability theory in Hilbert spaces. We show that if the network graph is connected and the sequence of forward operators satisfies the \emph{infinite-dimensional spatio-temporal persistence of excitation} condition, then the estimates of all nodes are mean square and almost surely strongly consistent. Moreover, we propose a decentralized online learning algorithm in RKHS based on non-stationary online data streams, and prove that the algorithm is mean square and almost surely strongly consistent if the operators induced by the random input data satisfy the \emph{infinite-dimensional spatio-temporal persistence of excitation} condition.
\end{abstract}

\begin{IEEEkeywords}
Decentralized online learning, random inverse problem, reproducing kernel Hilbert space,
randomly time-varying difference equation, persistence of excitation.
\end{IEEEkeywords}

\section{Introduction}
\label{sec:introduction}
\IEEEPARstart{I}{nverse}  problems have wide applications such as medical imaging, geophysics and oil exploration (\cite{bertero}). An inverse problem is to determine the system input (cause) from the  system output (result).
%e.g., the principle of X-ray tomography in biomedicine%: the image of the internal cross-section of an object is produced by the energy decay measured by the rays on a given cross-section, which is essentially the problem of solving the linear integral equation of the first type associated with the Radon transform (\cite{Kirsch111}).
In reality, measurements are usually affected by external disturbances, and the  inverse problems with noisy measurements have been widely studied including the cases with deterministic noises  (\cite{Tikhonov123}) and those with Gaussian white noises   (\cite{Bissantz}).
%In recent years, with the development of data mining based on online data streams (\cite{Silva, Jiang}), 3DVAR and Kalman filtering methods were %also introduced to statistical inverse problems based on dynamical systems (\cite{Iglesias1, Lu1, Jonesfg}).
It is of practical significance to consider inverse problems with both randomly time-varying forward operators and random measurement noises. For example, consider the online learning problem in RKHS. Let $\X\subseteq \mathbb R^n$ be the input space and $(\HH_K,\langle \cdot,\cdot \rangle _K)$ be the Hilbert space with Mercer kernel $K:\X\times \X \to \mathbb R$. At time instant $k$, the random (with unknown distribution) input data $x(k)\in \X$ and the output data $y(k)\in \mathbb R$ satisfy the measurement equation $y(k)=f_0(x(k))+v(k),\ k\ge 0$, where $f_0\in \HH_K$ is the unknown function, and $v(k)\in \mathbb R$ is the random measurement noise. The online learning problem in $\HH_K$ is to reconstruct $f_0$ based on the online data stream $\{(x(k),y(k))\}_{k=0}^{\infty}$. By the reproducing property of RKHS, the above measurement equation can be written as
\begin{equation}\label{0}
y(k)=H(k)f_0+v(k),~k\ge 0,
\end{equation}
where $H(k)$ is the randomly time-varying forward operator induced by the input data $x(k)$,
satisfying $H(k)f:=\langle f,K(x(k),\cdot)\rangle _K$, $\forall\ f\in \HH_K$.
Thus, the online learning problem in RKHS  comes down to an inverse problem associated with the measurement model (\ref{0}). Besides, Dynamic inverse problems in \cite{Burger} , including  dynamic computerized tomography, dynamic load monitoring and magnetic particle imaging are the inverse problems with time-invariant forward operators.
Most of the existing  works on statistical inverse problems assumed the forward operators to be deterministic and time-invariant, which can not cover the measurement model (\ref{0})  (\cite{Bissantz}, \cite{Lu}-\cite{JinB}).

In addition to online data streams, many practical problems are required to be solved in a decentralized or distributed information structure. The overall large amounts of data are usually divided into several data sets, and the learning process is performed with multiple parallel processors (\cite{Rosenblatt}).
Decentralized online learning algorithms
for finite-dimensional parameter estimation have been widely studied. Pioneering works on the decentralized online parameter estimation in finite-dimensional spaces were achieved in \cite{Lopes}-\cite{kar2011} and fruitful results were obtained in \cite{Lopes}-\cite{Ishihara}.
%Pioneering works on the decentralized online parameter estimation in finite-dimensional spaces were achieved in \cite{Lopes}-\cite{Ishihara}.
Specifically, the decentralized online learning algorithms with randomly spatio-temporal independent observation matrices were proposed in \cite{Lopes}-\cite{Cattivelli} via the collaborative strategy of diffusion. Kar and Moura   established a distributed observability condition in  \cite{kar2011}.
%(global observability plus mean connectedness) with temporally independent observation sequences, under which the distributed estimates are consistent and asymptotically normal.
%Kar \emph{et al}. (\cite{kar2012}-\cite{kar20132}) proposed the decentralized parameter estimation algorithms based on consensus plus innovation with measurement and communication noises. It is required that the expectations of the regression matrices be known, and the graphs, regression matrices, measurement and communication noises be spatially i.i.d.
%Piggott \emph{et al}. (\cite{Piggott}-\cite{Piggott1}) proposed decentralized algorithms over fixed communication graphs  with time-dependent observation matrices. Ishihara and Alghunaim (\cite{Ishihara}) analyzed the convergence of decentralized online learning algorithms for the case with spatial independent observation matrices.

Infinite-dimensional supervised online learning in RKHS is another important topic of random inverse problems (\cite{Lyaqini}). Based on the systematic study of batch learning in \cite{Poggio}, rich results of online learning algorithms based on i.i.d. online data streams were obtained in \cite{Ying}-\cite{Deng}.
Ying and Pontil (\cite{Ying}) considered the least-square online gradient descent
algorithm in RKHS and presented a novel capacity independent approach to derive error bounds and convergence results for this algorithm.
Tarr$\grave{\text{e}}$s and Yao (\cite{Tarres}) showed that the online regularized algorithm can achieve the strong convergence rate of batch learning, and the weak convergence rate is optimal in the sense that it reaches the minimax and individual lower rates.
Dieuleveut and Bach (\cite{Dieuleveut}) considered the random-design least-squares regression problem within the RKHS framework, and showed that the averaged unregularized LMS algorithm with a given sufficient large step-size can attain optimal rates of convergence for a variety of regimes for the smoothnesses of the optimal prediction function in RKHS.
Shin \emph{et al}. (\cite{Shin}) proposed decentralized adaptive learning algorithms over graphs in RKHS in a deterministic framework. Deng \emph{et al}. (\cite{Deng}) used the multiplicative operator in the saddle point problem to carve out the communication structure of decentralized networks and proposed a distributed consensus-based online learning algorithm with  i.i.d. measurements. The nonlinear online learning problems in RKHS with spatio-temporal independent measurements were studied in  \cite{Mitra}-\cite{SmaleYao}.

Up to now,
the existing theory of inverse problems in statistical and stochastic frameworks is far from mature and the existing related results may be divided into three categories: (i) statistical inverse problems based on deterministic time-invariant compact forward operators in Hilbert spaces (\cite{Bissantz},\cite{Lu}-\cite{JinB}); (ii) distributed parameter estimation in finite-dimensional spaces (\cite{Lopes}-\cite{Ishihara}); (iii) decentralized learning based on stationary data, e.g., i.i.d. measurements in RKHS (\cite{Shin}-\cite{Bouboulis}). Some basic problems are still open, such as
\begin{itemize}
\item inverse problems with randomly time-varying forward operators;
\item to establish a unified framework for random inverse problems in infinite-dimensional Hilbert spaces, distributed parameter estimation problems in finite-dimensional spaces and online learning problems in RKHS;
\item to develop a decentralized RKHS learning theory based on non-stationary data streams.
\end{itemize}

Motivated by the above problems, we consider a class of random inverse problems over graphs, establish a unified framework to deal with the above three types of problems, and propose a decentralized online learning algorithm in Hilbert spaces. The learning algorithm of each node in the network takes the form of consensus plus innovation as in \cite{kar20132}. The innovation term is to update the node's estimate by using the node's own measurement data, and the consensus term is a weighted sum of its own estimate and the estimates of its neighboring nodes. The forward operator of the measurement of each node is randomly time-varying and is not required to satisfy special statistical properties, such as temporal independence (the forward operator of each node over the graph is independent with respect to time), spatial independence (the forward operators of different nodes are independent of each other at each moment) and stationarity, and the random measurement noise is no longer restricted to  Gaussian white noise.

We weaken the constraints on the forward operators compared with the most of the existing studies on inverse problems (\cite{Bissantz}-\cite{JinB}). We consider the cases with general bounded linear forward operators instead of compact ones. Besides, we allow the forward operators to be randomly time-varying, which are not restricted to be deterministic and time-invariant. These general settings bring essential difficulties to the convergence  analysis of the algorithm.
Tarr$\grave{\text{e}}$s and Yao (\cite{Tarres}), Smale and Yao (\cite{SmaleYao}) transformed the online learning problem with i.i.d. data streams in RKHS into the inverse problem with the deterministic time-invariant Hilbert-Schmidt forward operator, and then obtained the convergence result by using the singular value decomposition (SVD) of the linear compact operators in the Hilbert space.
By using the i.i.d. properties of the data, Dieuleveut and Bach (\cite{Dieuleveut}) transformed the estimation error equation into the random difference equation equivalently, where the homogeneous part is deterministically time-invariant and the inhomogeneous part is the martingale difference sequence in the Hilbert space, from which the mean-square convergence of the algorithm was obtained by means of the spectral decomposition property of the compact operator. Note that the SVD of linear compact operators in Hilbert spaces is only applicable for the inverse problems with deterministic and time-invariant linear compact forward operators. The existing  methodologies in  \cite{Math1}-\cite{Jonesfg}, \cite{Tarres}-\cite{Dieuleveut} and \cite{SmaleYao} are no longer applicable for our problems.

Note that for the decentralized online learning in finite-dimensional spaces over random graphs based on non-stationary data, we  have established the stochastic spatio-temporal persistence of excitation (SSTPE) condition to ensure convergence of the algorithm in \cite{WLZ}.  However, for this kind of finite-dimensional excitation conditions, the information matrices are all required to be positive definite, i.e., the eigenvalues of the matrix have strictly positive lower bounds. Obviously, the SSTPE  condition  is not applicable for inverse problems in infinite-dimensional Hilbert spaces. It is known that even for a strictly positive compact operator,  the infimum of its eigenvalues is zero, and then the SSTPE  condition can not hold.

%In this paper, we give a unified framework for integrating the random inverse problem, the distributed parameter estimation problem in finite-dimensional spaces, and the least mean square problem in RKHS.
To this end, by means of measurability and integration theory of mappings with values in Banach spaces, spectral decomposition theory of bounded self-adjoint operators, and martingale convergence methods, we investigate the $L_p^q$-stability condition on the sequence of operator-valued random elements.
%We establish the theory of a class of $L_2$-asymptotic stability of infinite-dimensional inhomogeneous random difference equations with $L_2$-bounded martingale difference terms.
We transform the convergence analysis of the algorithm into the $L_2$-asymptotic stability of time-varying random difference equations in Hilbert spaces. Since the forward operators of inverse problems in infinite-dimensional Hilbert spaces usually do not have bounded inverses, the existing asymptotic stability theory on infinite-dimensional random difference equations with compressive operators in Hilbert spaces  cannot be applied to inverse problems (\cite{Ungureanu1}-\cite{zwzwxcbs}). To this end, we propose the $L_p^q$-stability condition on the sequence of the products of operator-valued random elements, and establish the $L_2$-asymptotic stability theory for a class of inhomogeneous  random difference equations in Hilbert spaces with $L_2$-bounded martingale difference terms. We give sufficient conditions on the stability of a class of operator-valued random sequences composed of forward operators and Laplacian matrices of graphs. We prove that if the graph is connected, and the sequence of forward operators satisfies the \emph{infinite-dimensional spatio-temporal persistence of excitation} condition, then all nodes' estimates are all mean square and almost surely strongly consistent with the unknown function of the inverse problem.

We develop a theory of decentralized online learning in RKHS. Almost the existing literature on online supervised learning in RKHS  are based on i.i.d. data (e.g.,  \cite{GUO}-\cite{Lin}),  while we propose a decentralized online learning algorithm based on non-stationary and non-independent data streams in RKHS. We establish the convergence condition by equivalently transforming the distributed learning problem in RKHS into the random inverse problems over graphs. Especially, if the graph has only one node and the random input data is i.i.d., then our algorithm degenerates to the centralized online learning algorithm without regularization parameters in \cite{Ying} and \cite{Dieuleveut}.

The rest of the paper is organized as follows. In Section II, the online random inverse problems over   graphs in  Hilbert spaces are formulated and the decentralized online learning algorithm is proposed. In Section III, the convergence of the algorithm is proved. In Section IV, the decentralized online learning problems in   RKHS  are studied. In Sections V and VI, simulation results  and  conclusions are given, respectively. The proofs of theorems and corollaries in Sections III and IV are in Appendix A. The proofs of all lemmas in  Sections III and IV  can be found in Appendix C.

The following notations will be used throughout the paper.   For a subset $A$  of a set $X$, if $x \in A$, then $\mathbf{1}_A(x)=1$, and if $x \notin A$, then $\mathbf{1}_A(x)=0$. Denote $f\in \F_0$ if $f$ is measurable with respect to the $\sigma$-algebra $\F_0$. For any given sets $\{A_i,i\in \mathscr I\}$,
where $\mathscr I$ is a set of indices, denote $\sigma\left(\bigcup_{i\in \mathscr I}A_i\right)$ by $\bigvee_{i\in \mathscr I}A_i$. Let $\tau_{\text{N}}(\X)$ be the topology induced by the norm $\|\cdot\|_{\X}$ in a Banach space $\X$. Denote $(\Omega,\F,\mathbb P)$ as a complete probability space.
 Let $L^0(\Omega;\X)$ be a linear space composed of all mappings which take values in $(\X,\tau_{\text{N}}(\X))$ and are strongly $\P$-measurable  with reference to $(\Omega,\F,\mathbb P)$. In particular, for  a sub-$\sigma$-algebra $\Gg$ of $\F$, $L^0(\Omega,\Gg;\X)$ is defined with reference to  $(\Omega,\Gg,\mathbb{P}|_{\Gg})$.
Let $L^p(\Omega;\X)=\{f\in L^0(\Omega;\X): \|f\|_{L^p(\Omega;\X)}<\infty \}$, where
$\|f\|_{L^p(\Omega;\X)}:=\left(\int_{\Omega}\|f\|_{\X}^p\dd\P\right)^{\frac{1}{p}}$, $1\leq p<\infty$.
%Let $L^0(\Omega,\Gg;\X)$ be the linear space composed of all $\Gg/\B(\X;\tau_{\text{N}}(\X))$-measurable functions in $L^0(\Omega;\X)$ and $L^p(\Omega,\mathscr G;\X)=\{f\in L^0(\Omega,\mathscr G;\X): \|f\|_{L^p(\Omega;\X)}<\infty \}$, where $\Gg$ is a sub-$\sigma$-algebra of $\F$ and $\B(\X;\tau_{\text{N}}(\X))$ is the Borel $\sigma$-algebra of the topological space $(\X,\tau_{\text{N}}(\X))$.
Let $\mathscr L(\mathscr Y,\mathscr Z)$ be the linear space of all bounded linear operators from the Banach space $\mathscr Y$ to the Banach space $\mathscr Z$, in particular, $\mathscr L(\mathscr Z):=\mathscr L( \mathscr Z,\mathscr Z)$. Let $\tau_{\text{S}}(\mathscr L(\mathscr Y, \mathscr Z))$ be the strong operator topology of $\mathscr L(\mathscr Y,\mathscr Z)$. %Given a compact metric space $(X,d)$ and  $0 \leq s \leq 1$, denote the whole continuous functions defined  on $X$  by $C( X )$ and  $\|f\|_{C^s( X)}=\|f\|_{\infty}+|f|_{C^s( X)}$, $\forall\ f \in C( X)$, where $\|f\|_{\infty}=\sup _{x \in  X}|f(x)|$, and
%$
%|f|_{C^s( X)}=\sup _{x \neq y} \frac{|f(x)-f(y)|}{(d(x, y))^s} .
%$
%Denote the H\"{o}lder space by $C^s( X)=\{f \in C( X):\|f\|_{C^s( X)}<\infty\}$ and $(C^s( X))^{*}$ is the set of bounded linear functionals on $C^s( X)$.

Let $(\X_i,\langle \cdot,\cdot\rangle _{\X_i})$ be a Hilbert space, where the norm induced by the inner product is defined by $\|x_i\|_{\X_i}:=\sqrt{\langle x_i,x_i\rangle _{\X_i}}$, $x_i\in \X_ i$, $i=1,\cdots,n$. The Hilbert direct sum space is denoted by $\bigoplus_{i=1}^n\X_i=\left\{x=(x_1,\cdots,x_n):x_i\in \X_i,1\leq i\leq n\right\}$,
%\[\bigoplus_{i=1}^n\X_i=\left\{x=(x_1,\cdots,x_n):x_i\in \X_i,1\leq i\leq n\right\},\]
where the inner product is defined by $\langle x, y\rangle _{\bigoplus_{i=1}^n\X_i}:=\sum_{i=1}^n\langle x_i,y_i\rangle _{\X_i}$,$\forall\ x=(x_1,\cdots,x_n),y=(y_1,\cdots,y_n)\in \bigoplus_{i=1}^n\X_i$. Denote $\bigoplus_{i=1}^n\X:=\X^n$. Let $\X$ and $\mathscr Y$ be Hilbert spaces.  Denote the Kronecker product of the vector $\1_n\in \mathbb R^{n}$ and $ f\in \X$ by $\1_n\otimes f:=(f,\cdots,f)\in \X^n$ and the Kronecker product of the matrix $A\in \mathbb R^{n\times m}$ and $B\in \LL(\X,\Y)$ by
\ban
A\otimes B:=\begin{pmatrix}
a_{11}B & \cdots & a_{1m}B \\
\vdots & \ddots & \vdots \\
a_{n1}B & \cdots & a_{nm}B
\end{pmatrix}
\in \mathscr L(\X^m,\mathscr Y^n).
\ean
The operations and properties of operator matrices in Hilbert direct sum spaces can be found in \cite{sl}. For   $T \in \LL(\X,\Y)$,
if  an operator $T^*: \Y \to \X$ satisfies $
\langle Tx,y\rangle_{\Y}=\langle x, T^*y\rangle_{\X}$, $\forall \ x \in \X,y\in \Y$, then $T^*$ is called the adjoint operator of $T$.
For $T\in \LL(\X)$, if $T^*=T$, i.e., $\langle Tx,y \rangle _{\X}=\langle x, Ty \rangle _{\X}$, $\forall\ x,y\in \X$, then $T$ is called a self-adjoint operator. If a self-adjoint operator $T\in \LL(\X)$ satisfies $\langle Tx,x \rangle _{\X}\ge 0$, $\forall\ x\in \X$, then $T$ is  positive self-adjoint, denoted by $T\ge 0$. Especially if $\langle Tx,x \rangle _{\X}> 0$ for any given $x\neq 0$ in $\X$, then $T$ is strictly positive self-adjoint, denoted by $T>0$.

%Then, we know that if the sample space of the probability measure $\rho$ is $\X$, then $\rho \in \left(C^s(\X)\right)^{*}$.

Let $\mathcal{G}=\{\mathcal{V},\mathcal{E}_{\mathcal{G}},\mathcal{A}_{\mathcal{G}}\}$ denote a weighted graph, where $\mathcal{V}=\{1,...,N\}$ is the set of nodes and $\mathcal{E}_{\mathcal{G}}$ is the set of edges. The unordered pair $(j,i)\in\mathcal{E}_\mathcal G$ if and only if there exists an edge between nodes $j$ and $i$. Denote the set of neighboring nodes of node $i$ by $\mathcal{N}_i=\{j\in \mathcal{V}:(j,i)\in
\mathcal{E}_{\mathcal{G}}\}$. The matrix $\mathcal{A}_{\mathcal{G}}=[a_{ij}]\in \mathbb{R}^{N\times N}$ is called the weighted adjacency matrix of $\G$, and for any given  $i,j\in \mathcal V$, $a_{ii}=0$ and $a_{ij}=a_{ji}>0$ if and only if $j\in \mathcal{N}_i$.
%The weight $a_{ij}$ of the graph $\G$ reflect the topological structure and connection strength between the nodes.
The Laplacian matrix of $\G$ is defined by $\mathcal{L}_{\mathcal{G}}=\mathcal{D}_{\mathcal{G}}-\mathcal{A}_{\mathcal{G}}$, where the degree matrix $\mathcal{D}_{\mathcal{G}}=\text{diag}\{\sum_{j=1}^Na_{1j},\sum_{j=1}^Na_{2j},\cdots,\sum_{j=1}^Na_{Nj}\}$.

\section {Online Learning for Random Inverse Problems over Graphs}

\subsection{Online Random Inverse Problems over Graphs}
Consider a distributed communication network modeled by a weighted  graph $\mathcal{G}=\{\mathcal{V},\mathcal{E}_{\mathcal{G}},\mathcal{A}_{\mathcal{G}}\}$ consisting of $N$ nodes. The measurement $y_i(k)$ of node $i$ at instant $k$ is given by
\begin{equation}\label{measuramentmodel}
y_i(k)=H_i(k)f_0+v_i(k),\ k=0,1,2,...\ i=1,\cdots,N,
\end{equation}
where $f_0\in \X$ to be estimated is an unknown element in a separable Hilbert space $\X$, the random forward operator  $H_i(k):\Omega\to \mathscr L(\X,\Y_i)$ is an operator-valued random element with values in $(\LL(\X,\Y_i),\tau_{\text{S}}(\LL(\X,\Y_i)))$, and the measurement noise $v_i(k):\Omega\to \Y_i$ is a random element with values in a separable Hilbert space $(\Y_i,\tau_{\text{N}}(\Y_i))$.

\begin{remark}
\label{remark1adda11}
For the finite-dimensional Euclidean space $\X=\mathbb R^n$, the measurement model (\ref{measuramentmodel}) has been widely studied in \cite{Lopes}-\cite{Ishihara}, where the forward operator $H_i(k)$ degenerates to the random observation matrix and $f_0\in \mathbb R^n$ degenerates to the unknown finite-dimensional parameter vector.  Kar \emph{et al}. (\cite{kar2012}) investigated the decentralized estimation algorithms with nonlinear measurement models, where they introduced separably estimable measurement models that generalize the observability condition in linear centralized estimation to nonlinear decentralized estimation. It is worth pointing out that even for the nonlinear measurement model in \cite{kar2012}, the unknown quantity to be estimated is a parameter vector in a finite-dimensional space, different from which, the unknown quantity $f_0$ in the measurement model (\ref{measuramentmodel}) can be a nonlinear function, which is an element in the infinite-dimensional Hilbert space $\X$. The measurement model (\ref{measuramentmodel}) is essentially different from that in \cite{kar2012}. This will be further clarified in Remark \ref{remarkaddadd6}.
\end{remark}

Denote $y(k)=(y_1(k),\cdots,y_N(k)):\Omega\to \bigoplus_{i=1}^N\Y_i$, $ v(k)=(v_1(k),\cdots,v_N(k)):\Omega\to \bigoplus_{i=1}^N\Y_i$ and $ H(k) = (H_1(k),\cdots,H_N(k)):\Omega\to \mathscr L(\X,\bigoplus_{i=1}^N\Y_i)$.
We can write (\ref{measuramentmodel}) in the compact form
\begin{equation}\label{compactform}
y(k)=H(k)f_0+v(k),\ k\ge 0.
\end{equation}
The online random inverse problem means reconstructing $f_0$ by real-time random measurements $\{y(k), k\geq0\}$.

The measurement equation (\ref{compactform}), which covers the existing models of inverse problems, is more general in the sense that the forward operator can be randomly time-varying. Besides, different from  the existing literature, which has a centralized information structure, here, the reconstruction of $f_0$ is constrained by the information structure of the graph, i.e., there is no centralized fusion center collecting the overall measurements $y(k)$, and at each moment $k$, node $i$ can only use its own observation $y_i(k)$ and its neighbors' estimates $f_j(k), j\in \mathcal N_i$ to give its next estimate $f_i(k+1):\Omega\to \X$ for $f_0$, i.e.,
$$f_i(k)\in \left(\bigvee_{s=0,1,2,...,k-1}\sigma(y_i(s);\tau_{\text{N}}(\Y_i))\right)\bigvee\left(\bigvee_{j\in \mathcal N_i\cup\{i\}}\sigma\left(f_j(k-1);\tau_{\text{N}}(\X)\right)\right),~i\in \mathcal V.$$

Here,  the problem of cooperatively estimating $f_0$ by the nodes over the graph based on each node's local measurements, its own and neighbors' estimates is called the \textbf{\emph{random inverse problem over the graph}}.

\begin{remark}
Most of the existing literature on inverse problems assumes the forward operator to be a deterministic and time-invariant linear compact operator $H$, associated with the measurement equation %$y=Hf_0+v.$
\begin{equation}\label{nklowokkrkr}
y=Hf_0+v.
\end{equation}
Here, different from the existing literature, the forward operator in the measurement equation (\ref{compactform}) is allowed to be random and time-varying. In the classical inverse problem, the noise $v$ in the measurement equation (\ref{nklowokkrkr}) is modeled as a deterministic perturbation  (\cite{Benning}). In the statistical inverse problem, it is modeled as Gaussian white noise (\cite{Lu}-\cite{Math1}).
Based on the stochastic gradient descent (SGD) algorithm, Lu and Math\'{e} (\cite{Lu100}), Jahn and Jin (\cite{Jahn}) and Jin and Lu (\cite{JinB}) obtained the centralized learning strategy by the random discrete sampling of the forward operator and minimizing the loss functional
%$\widetilde{J}:\X\to \mathbb R$, where
\bna\label{sunshi}
\widetilde{J}(x)=\frac{1}{2}\E\left[\|y-Hx\|^2\right],\ \forall\ x\in \X.
\ena
Specifically, Jahn and Jin (\cite{Jahn}) and Jin and Lu (\cite{JinB}) investigated the regularization property of SGD with a priori and a posteriori stopping rules, and Lu and Math\'{e} (\cite{Lu100}) gave an upper bound on the estimation error of SGD with discrete  level. Recently, Iglesias \emph{et al.} (\cite{Iglesias1}) and Lu \emph{et al.}  (\cite{Lu1}) solved the statistical inverse problem based on real-time measurements $\{y(k):y(k)=Hf_0+v(k),k\ge 0\}$, where $\{v(k),k\ge 0\}$ is an i.i.d. Gaussian white noise sequence. The statistical inverse problems in \cite{Iglesias1}-\cite{Lu1} are the special cases of the online random inverse problem with random forward operators.
\end{remark}

\subsection{Decentralized Online Learning Algorithm}\label{online}
Denote $\mathcal H(k)=\text{diag}\{H_1(k),\cdots,H_N(k)\}$.  Based on the loss functional (\ref{sunshi}), we consider minimizing the loss functional $J:\X^N \to \mathbb R$ with the Laplacian regularization term given by
\bna
\label{costfunctionLaplacianregularization}
J(f)=\frac{1}{2}\left(\E\Big[\|y(k)-\mathcal H(k)f\|_{\bigoplus_{i=1}^N\Y_i}^2\Big]+\left\langle \left(\L_{\G}\otimes I_{\X}\right )f,f\right\rangle _{\X^N}\right),~\forall\ f\in \X^N,
\ena
where $I_{\X}$ is the identical operator on the Hilbert space $\X$. The loss functional  $J(f)$ consists of two terms: the mean-square estimation error term $\E\left[\|y(k)-\mathcal H(k)f\|_{\bigoplus_{i=1}^N\Y_i}^2\right]$ and the Laplacian regularization term $\langle (\L_{\G}\otimes I_{\X})f,f\rangle _{\X^N}=\frac{1}{2}\sum_{i=1}^N\sum_{j=1}^Na_{ij}\|f_i-f_j\|^2_{\X}$, where $f_i\in \X$ is the $i$-th component of $f$.

Suppose $\H(k)\in L^2(\Omega;\mathscr L(\X^N,\bigoplus_{i=1}^N\Y_i))$ and $v(k)\in L^2(\Omega;\bigoplus_{i=1}^N\Y_i)$.
If the sequences $\{\H(k),k\ge 0\}$ and $\{v(k), k\ge 0\}$ are both i.i.d, then it follows from Definition \ref{fenbudingyi} and Proposition \ref{nlllwwieiie}.(a) that $\{y(k)-\H(k)f,k\ge 0\}$ is a sequence of i.i.d. random elements with values in the Hilbert space $(\bigoplus_{i=1}^N\Y_i,\tau_{\text{N}}(\bigoplus_{i=1}^N\Y_i))$.
It can be verified that
  $\H^*(k)\H(k)\in L^1(\Omega;\LL(\X^N))$. Then, by Proposition \ref{vnwlssfweewwfew}, the gradient operator $\text{grad}~J:\X^N\to \X^N$ is given by
%\begin{align*}
% &\text{grad}~J(f)\\
% =& \frac{1}{2}\text{grad}~\left(\E\left[\|y(k)-\mathcal H(k)f\|_{\bigoplus_{i=1}^N\Y_i}^2\right]+\left\langle \left(\L_{\G}\otimes I_{\X}\right)f,f\right\rangle _{\X^N}\right) \\
% =&\frac{1}{2}\text{grad}~\E\left[\langle \H^*(k)\H(k)f,f\rangle _{\X^N}\right]-\text{grad}~\E\left[\langle \H(k)f,y(k) \rangle _{\bigoplus_{i=1}^N\Y_i}\right] \\&+\frac{1}{2}
%\text{grad}~\left\langle \left(\L_{\G}\otimes I_{\X}\right)f,f\right\rangle _{\X^N}\\
% =&\frac{1}{2}\text{grad}~\langle \E\left[\H^*(k)\H(k)\right]f,f\rangle _{\X^N}-\text{grad}~\E\left[\langle \H(k)f,y(k) \rangle _{\bigoplus_{i=1}^N\Y_i}\right]\\ & +\frac{1}{2}
%\text{grad}~\left\langle \left(\L_{\G}\otimes I_{\X}\right)f,f\right\rangle _{\X^N}.
%\end{align*}
\ban
\text{grad}~J(f)&=&\frac{1}{2}\text{grad}~\left(\E\Big[\|y(k)-\mathcal H(k)f\|_{\bigoplus_{i=1}^N\Y_i}^2\Big]+\left\langle \left(\L_{\G}\otimes I_{\X}\right)f,f\right\rangle _{\X^N}\right)\\
&=&\frac{1}{2}\text{grad}~\E\left[\langle \H^*(k)\H(k)f,f\rangle _{\X^N}\right]-\text{grad}~\E\Big[\langle \H(k)f,y(k) \rangle _{\bigoplus_{i=1}^N\Y_i}\Big]\\ &&+\frac{1}{2}
\text{grad}~\left\langle \left(\L_{\G}\otimes I_{\X}\right)f,f\right\rangle _{\X^N}\\
&=&\frac{1}{2}\text{grad}~\langle \E\left[\H^*(k)\H(k)\right]f,f\rangle _{\X^N}-\text{grad}~\E\left[\langle \H(k)f,y(k) \rangle _{\bigoplus_{i=1}^N\Y_i}\right]\\
&&+\frac{1}{2}
\text{grad}~\left\langle \left(\L_{\G}\otimes I_{\X}\right)f,f\right\rangle _{\X^N}.
\ean
Similarly, we have
\ban
\langle \E[\H^*(k)\H(k)]x,y\rangle _{\X^N}&=&\E[\langle \H^*(k)\H(k)x,y\rangle _{\X^N}]\\ &=&\E[\langle x,\H^*(k)\H(k)y\rangle _{\X^N}]\\
&=&\langle x, \E[\H^*(k)\H(k)]y\rangle _{\X^N},\ \forall\ x, y\in \X^N.
\ean
Therefore, $\E[\H^*(k)\H(k)]:\X^N\to \X^N$ is a self-adjoint operator. Noting that the Laplacian matrix $\L_{\G}$ is positive semi-definite, it follows that $\L_{\G}\otimes I_{\X}$ is a self-adjoint operator. Then
$$
\text{grad}~\langle \E[\H^*(k)\H(k)]f,f\rangle _{\X^N}=2\E[\H^*(k)\H(k)]f,~\forall\ f\in \X^N,
$$
and $$\text{grad}~\langle (\L_{\G}\otimes I_{\X})f,f\rangle _{\X^N}=2(\L_{\G}\otimes I_{\X})f,~\forall\ f\in \X^N.$$
%\begin{align}
%\text{grad}~\langle (\L_{\G}\otimes I_{\X})f,f\rangle _{\X^N}=2(\L_{\G}\otimes I_{\X})f,~\forall\ f\in \X^N.\notag
%\end{align}
Noting that $\H^*(k)y(k)\in L^1(\Omega;\X^N)$, it follows that
\begin{align}
&\lim_{t\to 0}\frac{1}{t}\left(\E\left[\langle \H(k)(f+tg),y(k) \rangle _{\bigoplus_{i=1}^N\Y_i}\right]-\E\left[\langle \H(k)f,y(k) \rangle _{\bigoplus_{i=1}^N\Y_i}\right]\right)\cr
=&\E[\langle \H^*(k)y(k),g\rangle _{\X^N}]\cr  = & \langle \E[\H^*(k)y(k)],g\rangle _{\X^N},~\forall\ g\in \X^N.\notag
\end{align}
It follows that $\text{grad}~\E[\langle \H(k)f,y(k) \rangle _{\bigoplus_{i=1}^N\Y_i}]=\E[\H^*(k)y(k)]$. Thus, we have
$$
\text{grad}~J(f)=-\E[\H^*(k)(y(k)-\H(k)f)]+(\L_{\G}\otimes I_{\X})f,~\forall\ f \in \X^N.
$$
Then we have the stochastic gradient descent (SGD) algorithm in the Hilbert space
\bna\label{algorithm1}
f(k+1)=f(k)+a(k)\H^*(k)(y(k)-\H(k)f(k))-b(k)(\L_{\G}\otimes I_{\X})f(k),~k\ge 0,
\ena
where $a(k)$ and $b(k)$ are algorithm gains.
Let $f(k)=(f_1(k),\cdots,f_N(k))$. From (\ref{algorithm1}), we obtain the decentralized online learning algorithm
\bna\label{algorithm}
f_i(k+1)&=&f_i(k)+a(k)H_i^*(k)(y_i(k)-H_i(k)f_i(k)) \cr
&&+b(k)\sum_{j\in \N_i}a_{ij}(f_j(k)-f_i(k)),~k\ge 0,~i\in \mathcal V.
\ena

\begin{remark}
The algorithm (\ref{algorithm}) takes a form of ``consensus+innovations''. This kind of decentralized estimation strategies was first proposed in \cite{kar2012}-\cite{kar20132} for estimating the parameters in finite-dimensional spaces. Here,
 for the measurement model (\ref{measuramentmodel}), the learning strategy (\ref{algorithm}), which is a stochastic gradient decent algorithm for the Laplacian regularized loss functional (\ref{costfunctionLaplacianregularization}), is exactly the ``consensus+innovations'' type. The algorithm (\ref{algorithm}) can be regarded as the extention of the ``consensus+innovations'' type algorithm to the case of non-parametric or infinite-dimensional estimation.
%Intuitively, node $i$ obtains the innovation term $y_i(k)-H_i(k)f_i(k)$ and the consensus term $\sum_{j\in \N_i}a_{ij}(f_j(k)- f_i(k))$, based on which the estimation $f_i(k)$ is updated for the next instant.
%where the gain $a(k)$ is used to adjust the node's estimation of the unknown element $f_0$ at the next moment, and the gain $b(k)$ denotes the %consensus strength, which drives the consistency of the estimations among the nodes.
\end{remark}

\begin{remark}
To implement the algorithm  (\ref{algorithm}), the terms $H_i^*(k)y_i(k)$ and $H_i^*(k)H_i(k) f_i(k)$ may be computed instead of computing $H_i^*(k)$. For some inverse problems, such as the deconvolution problem in \cite{Kaipio2006}  and recovering the unknown initial value in the heat equation with a random diffusivity coefficient  in \cite{Isakov2006}, these terms can be computed by the numerical integration scheme in \cite{Atkinson1997}. Besides, these  terms can be computed by the cubic spline interpolation method in decentralized online learning problems in RKHS. Generally speaking, the computational complexity of the two terms depends on  specific problems and the computational methods employed.

Besides, noting that the computations for each node  are  all in parallel  and there exists a  constant $d>0$ independent of the total number of nodes $N$ such that $\sup_{i=1,\ldots,N,\ N\geq  1}\left|\mathcal{N}_i\right| \leq  d$, it follows that the computational complexity of the algorithm  (\ref{algorithm}) is $O(1), \ N \rightarrow \infty$.
\end{remark}

\section{Convergence Analysis}

Although the algorithm (\ref{algorithm1}) is designed by assuming that $\{\H(k)\in L^2(\Omega;\mathscr L(\X^N,\bigoplus_{i=1}^N\Y_i)),\\k\ge 0\}$ and $\{v(k)\in L^2(\Omega;\bigoplus_{i=1}^N\Y_i), k\ge 0\}$ are both i.i.d, and $\H(k),\ k\ge 0$ are  random elements with values in $   (\mathscr L(\X^N,\bigoplus_{i=1}^N\Y_i), \tau_{\text{N}}(\mathscr L(\X^N,\bigoplus_{i=1}^N\Y_i)))$.
In fact, in this section, we will show that even for the non-independence and non-stationarity sequence of operator-valued random elements $\{\H(k),k\ge 0\}$ with values in  $ (\mathscr L(\X^N,\bigoplus_{i=1}^N\Y_i), \tau_{\text{S}}(\mathscr L(\X^N,\bigoplus_{i=1}^N\Y_i)))$ and the noise sequence $\{v(k),k\ge 0\}$, the algorithm (\ref{algorithm}) still converges under mild conditions.

Denote the global estimation error by $e(k)=f(k)-\1_N\otimes f_0$. Note that $(\L_{\G}\otimes I_{\X})(\1_N\otimes f_0)=0$ and $\H(k)(\1_N\otimes f_0)=H(k)f_0$. Subtracting $\1_N\otimes f_0$ on both sides of equation (\ref{algorithm1}) yields
\begin{align}\label{error}
&e(k+1)\notag\\
=&(I_{\X^N}-a(k)\H^*(k)\H(k)-b(k)\L_{\G}\otimes I_{\X})f(k)+a(k)\H^*(k)y(k)-\1_N\otimes f_0 \notag\\
 =&(I_{\X^N}-a(k)\H^*(k)\H(k)-b(k)\L_{\G}\otimes I_{\X})(f(k)-\1_N\otimes f_0+\1_N\otimes f_0) \notag\\
&+a(k)\H^*(k)y(k)-\1_N\otimes f_0\notag\\
%&&=(I_{\X^N}-a(k)\H^*(k)\H(k)-b(k)\L_{\G}\otimes I_{\X})(e(k)+\1_N\otimes f_0)+a(k)\H^*(k)y(k)-\1_N\otimes f_0\cr
=&(I_{\X^N}-a(k)\H^*(k)\H(k)-b(k)\L_{\G}\otimes I_{\X})e(k)-(a(k)\H^*(k)\H(k) \notag\\
&+b(k)\L_{\G}\otimes I_{\X})(\1_N\otimes f_0)+a(k)\H^*(k)y(k) \notag\\
=&(I_{\X^N}-a(k)\H^*(k)\H(k)-b(k)\L_{\G}\otimes I_{\X})e(k) +a(k)\H^*(k)(y(k)-\H(k)(\1_N\otimes f_0))\notag\\
=&(I_{\X^N}-a(k)\H^*(k)\H(k)-b(k)\L_{\G}\otimes I_{\X})e(k)+a(k)\H^*(k)v(k).
\end{align}
The estimation error equation (\ref{error}) belongs to the following family of randomly time-varying difference equations
\bna \label{chafen}
x(k+1)=(I_{\X_1}-F(k))x(k)+G(k)u(k),~k\ge 0,
\ena
where $x(k)$ is a random element with values in a separable Hilbert space $(\X_1,\tau_{\text{N}}(\X_1))$, $u(k)$ is a random element with values in a separable Hilbert space $(\X_2,\tau_{\text{N}}(\X_2))$, $F(k):\Omega\to\mathscr L(\X_1)$ and $G(k):\Omega\to\mathscr L(\X_2,\X_1)$ are random elements with values in the topological spaces $(\LL(\X_1),\tau_{\text{S}}(\LL(\X_1)))$ and $(\LL(\X_2,\X_1),\tau_{\text{S}}(\LL(\X_2,\X_1)))$, respectively. To analyze the convergence of the estimation error equation (\ref{error}), we will first develop an asymptotic stability theory of the randomly time-varying difference equation (\ref{chafen}).

\subsection{Asymptotic Stability of Random Difference Equations in Hilbert Spaces}
To rigorously study the asymptotic stability of the random difference equations in the Hilbert space $(\X,\tau_{\text{N}}(\X))$, we introduce the following definitions.

 \vspace{-3mm}
\begin{definition}
If the sequence of random elements $\{x(k),k\ge 0\}$ with values in the Hilbert space $(\X,\tau_{\text{N}}(\X))$ satisfies $\sup_{k\ge 0}\E\left[\|x(k)\|_{\X}^p\right]<\infty$, where $p>0$, then $\{x(k),k\ge 0\}$ is said to be $L_p$-bounded.
\end{definition}
 \vspace{-6mm}
\begin{definition}
If the sequence of random elements $\{x(k),k\ge 0\}$ with values in the Hilbert space $(\X,\tau_{\text{N}}(\X))$ satisfies $\lim_{k\to \infty}\E\left[\|x(k)\|_{\X}^p\right]=0$, where $p>0$, then $\{x(k),k\ge 0\}$ is said to be $L_p$-asymptotically stable.
\end{definition}
\vspace{-6mm}

\begin{definition} \label{dingyi}
Let $\{A(k),k\ge 0\}$ be a sequence of operator-valued random elements with values in $(\LL(\X),\tau_{\text{S}}(\LL(\X)))$ and $\{\mathcal F(k), k\ge 0\}$ be a filter in $(\Omega,\F,\P)$. If for any given $L_q$-bounded adaptive sequence $\{x(k),\F(k), k\ge 0\}$ with values in the Hilbert space $\X$,
$$
\lim_{m\to \infty}\E\left[\left\|\prod_{k=n+1}^mA(k)x(n)\right\|_{\X}^p\right]=0,\ \forall\ n\geq0,\ \text{where}\ p,\ q>0,
$$
then $\{A(k), k\geq 0\}$ is said to be $L_p^q$-stable with respect to the filter $\{\mathcal F(k), k\geq0\}$.
\end{definition}

The strong convergence of the sequence of products of deterministic non-expansive operators has attracted the attentions of many scholars. By assuming strong convergence of operator products, the convergence results on infinite-dimensional deterministic time-varying difference equations in a general metric space were obtained (\cite{Reich3}-\cite{Pustylnik4}). Reich and Zaslavski (\cite{Reich3}) studied deterministic time-varying compressive operators in general metric spaces and obtained strong convergence results on the sequence of operator products; Pustylnik \emph{et al.}  (\cite{Pustylnik4}) studied the strong convergence of the sequence of operator products consisting of finite number of projection operators.

Noting that a sequence of deterministic operator products converging strongly to $0$ can be regarded as a $L_p^q$-stable sequence of operators w.r.t. the trivial filter $\{\F(k)=\{\emptyset,\Omega\},k\in \mathbb Z\}$, the concept of strong convergence for the sequence of operator products in the above literature can be regarded as a special case of Definition \ref{dingyi}. Besides, for the case in finite-dimensional spaces, Guo (\cite{Guo1994}) proposed the concept of $L_p$-exponentially stable random matrix sequence $\{I-B(k)\in \mathbb R^{N\times N},k\geq0\}$, i.e. there exist constants $M>0$ and $\lambda \in (0,1)$ such that
$$\E\left[\left\|\prod_{k=n+1}^m(I-B(k))\right\|_{\LL(\mathbb R^{N})}^p\right]\leq M\lambda^{m-n},\ \forall\ m> n\geq0.$$
%Here, it can be seen that if $\{I-B(k),k\in \mathbb Z\}$ is $L_p$-exponentially stable,
From Definition \ref{dingyi}, we know that
 $\{I-B(k)\in \mathbb R^{N\times N},k\geq0\}$ is $L^{s}_{r}$-stable  w.r.t. any given filter $\{\mathcal F(k), k\geq0\}$, where $r=pa^{-1}$, $s=pba^{-1}$ and $a,b$ are positive real numbers with $a^{ -1}+b^{-1}=1$.

\vskip 0.2cm

Denote
$\F(k)=\bigvee_{i=0}^k(\sigma(F(i);\tau_{\text{S}}(\LL(\X_1)))\bigvee \sigma(G(i); \tau_{\text{S}}(\LL(\X_2,\X_1)))\bigvee \sigma(u(i);\tau_{\text{N}}(\X_2)))$, $k\\ \ge 0$ and $\F(-1)=\{\emptyset,\Omega\}$.
\vskip 0.2cm

For the $L_2$-asymptotic stability of the solution sequence of the random difference equation (\ref{chafen}), we have the following lemma.
\vskip 1mm

\begin{lemma}\label{wendingxing}
For the random difference equation (\ref{chafen}), let $\{u(k),\F(k),k\ge 0\}$ be a $L_2$-bounded sequence of martingale differences, and the sequence $\{u(k),k\ge 0\}$ be independent of  $\{F(k),k\ge 0\}$ and $\{G(k),k\ge 0\}$. If (i) $\{I_{\X_1}-F(k),k\ge 0\}$ is $L_2^2$-stable  w.r.t. $\{\F(k),k\ge 0\}$, (ii) there exists a sequence of nonnegative real numbers $\{\gamma(k),k\ge 0\}$ such that
\begin{align}\label{qqqqq}
\E\left[\|I_{\X_1}-F(k)\|_{\LL(\X_1)}^4 \Big|\mathcal F(k-1)\right]\leq 1+\gamma(k)~\text{a.s.},
\end{align}
$\sum_{k=0}^{\infty}\gamma(k) <\infty$, and (iii)
\begin{align}
&\sum_{k=0}^{\infty}\E\left[\|G(k)\|_{\LL(\X_2,\X_1)}^2
\right]<\infty,\label{ssafe}\\
&\sup_{k\ge 0}\E\left[\|G(k)\|_{\LL(\X_2,\X_1)}^4\right]<\infty,\label{ssafe1}
\end{align}
then, the  solution $\{x(k),k\ge 0\}$ of  (\ref{chafen}) is $L_2$-asymptotically stable.
\end{lemma}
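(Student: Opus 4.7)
The plan is to apply discrete variation-of-constants to (\ref{chafen}), exploit the martingale-difference property of $u$ together with its independence from $\{F(k), G(k)\}$ to obtain an orthogonal decomposition of $\E[\|x(m)\|_{\X_1}^2]$, and then drive each piece to zero using (i)--(iii).

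Define the state-transition operator $\Phi(m, n) := (I_{\X_1} - F(m-1)) \cdots (I_{\X_1} - F(n))$ for $m > n$, with $\Phi(n, n) := I_{\X_1}$. Iterating (\ref{chafen}) yields
\[
x(m) = \Phi(m, 0) x(0) + \sum_{n=0}^{m-1} \Phi(m, n+1) G(n) u(n).
\]
Let $\mathscr G := \sigma(\{F(k), G(k) : k \geq 0\})$. Since $\{u(k)\}$ is independent of $\{F(k), G(k)\}$ and $\E[u(k) \mid \F(k-1)] = 0$, we have $\E[u(\ell) \mid \mathscr G \vee \sigma(u(0), \ldots, u(\ell-1))] = 0$ for every $\ell$. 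Conditioning on these enlarged $\sigma$-algebras therefore kills every cross term in the expansion of $\|x(m)\|_{\X_1}^2$, leaving
\[
\E[\|x(m)\|_{\X_1}^2] = \E[\|\Phi(m, 0) x(0)\|_{\X_1}^2] + \sum_{n=0}^{m-1} \E[\|\Phi(m, n+1) G(n) u(n)\|_{\X_1}^2].
\]

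The first term vanishes as $m \to \infty$ by (i) applied to the $L_2$-bounded adapted sequence $y(k) \equiv x(0)$. For each summand, iterating (ii) through the tower property gives the uniform fourth-moment estimate
\[
\E\left[\|\Phi(m, n+1)\|_{\LL(\X_1)}^4 \;\bigl|\; \F(n)\right] \leq \prod_{k=n+1}^{m-1}(1 + \gamma(k)) \leq \prod_{k=0}^{\infty}(1 + \gamma(k)) =: C_1 < \infty,
\]
where finiteness uses $\sum_k \gamma(k) < \infty$. Conditional Jensen then yields $\E[\|\Phi(m, n+1)\|^2 \mid \F(n)] \leq \sqrt{C_1}$, and combining this with the independence of $u(n)$ from $(F, G)$ and with (\ref{ssafe1}) produces
\[
\E[\|\Phi(m, n+1) G(n) u(n)\|_{\X_1}^2] \leq \sqrt{C_1}\, c_u\, \E[\|G(n)\|_{\LL(\X_2, \X_1)}^2], \qquad c_u := \sup_k \E[\|u(k)\|_{\X_2}^2] < \infty,
\]
which by (\ref{ssafe}) is summable in $n$.

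The main obstacle is to upgrade (i), which provides only termwise convergence, to convergence of the full sum whose length grows with $m$. I handle this by a tail-split: for any $\varepsilon > 0$ choose $N$ so large that $\sqrt{C_1}\, c_u \sum_{n \geq N} \E[\|G(n)\|^2] < \varepsilon/2$, which bounds $\sum_{n=N}^{m-1}$ uniformly in $m$; then for each fixed $n < N$ the sequence equal to $G(n) u(n)$ at time $n$ and zero elsewhere is $\{\F(k)\}$-adapted and $L_2$-bounded, so (i) gives $\E[\|\Phi(m, n+1) G(n) u(n)\|^2] \to 0$ as $m \to \infty$, and the finite head $\sum_{n=0}^{N-1}$ eventually drops below $\varepsilon/2$. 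Letting $m \to \infty$ and then $\varepsilon \to 0$ yields $\lim_{m \to \infty} \E[\|x(m)\|_{\X_1}^2] = 0$, which is the required $L_2$-asymptotic stability.
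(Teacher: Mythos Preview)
Your proof is correct and follows essentially the same route as the paper: variation-of-constants, orthogonality of the noise terms via the martingale-difference property and independence of $\{u(k)\}$ from $\{F(k),G(k)\}$, the fourth-moment product bound from condition~(ii) iterated through the tower property, and a tail-split to combine the summable control from~(\ref{ssafe}) with the termwise convergence from the $L_2^2$-stability assumption. The only cosmetic differences are that the paper normalizes $G(i)u(i)$ to unit $L_2$-norm before invoking (i) (whereas you apply (i) to each fixed $n$ separately, which is a bit cleaner) and packages your tail-split argument as a separate lemma.
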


\begin{remark}
Systematic results on the stability of randomly time-varying difference equations in finite-dimensional spaces were achieved in \cite{Guo1994}-\cite{GUO444}, while the results on randomly time-varying difference equations in infinite-dimensional spaces remain fragmented. Kubrusly (\cite{Kubrusly}), Vajjha \emph{et al.} (\cite{Vajjha}) and the references therein transformed the analysis of the mean square convergence of stochastic approximation algorithms in Hilbert spaces into the $L_2$-asymptotic stability analysis of randomly time-varying difference equation. Ungureanu \emph{et al.} (\cite{Ungureanu1}), Ungureanu (\cite{Ungureanu3}) and Zhang \emph{et al.} (\cite{zwzwxcbs}) investigated the $L_2$-asymptotic stability of the solution sequence of the randomly time-varying difference equation $x(k+1)=A(k)x(k)+b(k)$ in Hilbert space. The above literature assumed $A(k):\Omega\to \LL(\X)$ to be mean square exponentially stable, i.e., there exist constants $M>0$ and $\lambda\in (0,1)$ such that, for any $m>n\ge 0,~\forall\ x\in \X$,
\bna\label{zhishuwending}
\E\left[\left\|\left(\prod_{k=n+1}^mA(k)\right)x\right\|_{\X}^2\right]\leq M\lambda^{m-n}\|x\|^2.
\ena
For the randomly time-varying difference equation (\ref{chafen}), even if $F(k)\equiv F$, where $F$ is a deterministic self-adjoint compact operator with $\|F\|_{\LL(\X)}\leq 1$, the operator $A(k)\equiv I_{\X}-F$ does not satisfy (\ref{zhishuwending}). In fact, if (\ref{zhishuwending}) holds, then
\bna\label{zhihuhhh}
\left\|\prod_{k=n+1}^{n+2^l}A(k)\right\|_{\LL(\X)}\leq \sqrt{M}\lambda^{2^{l-1}},\ \forall\ l, n\ge 0.
\ena
Noting that $\|I_{\X}-F\|_{\LL(\X)}=\sup\limits_{\|x\|_{\X}=1}|\langle (I_{\X}-F)x,x\rangle_{\X} |=1-\inf_{\|x\|_{\X}=1}\langle Fx,x\rangle_{\X} =1$, thus we have $\E\Big[ \big\|\prod_{k=n+1}^{n+2^l} A(k)
 \big\|_{\LL(\X)}^2\Big]=\big\|(I_{\X}-F)^{2^l}\big
\|_{\LL(\X)}=\big\|I_{\X}-F\big\|_{\LL(\X)} ^{2^l}=1,\ \forall\ l, \ n\ge 0,$
which is in contradiction to (\ref{zhihuhhh}). Thereby, the existing results and methods on the stability of infinite-dimensional random difference equations are not applicable to random inverse problems.
\end{remark}

\subsection{Convergence of the Decentralized Algorithm}
Let
\begin{align}
\F(k)=\bigvee_{i=0}^k\left(\sigma\left(\H(i);\tau_{\text{S}}\left(\LL\left(\X^N,\bigoplus_{j=1}^N\Y_j\right)\right)\right)\bigvee \sigma\left(v(i);\tau_{\text{N}}\left(\bigoplus_{j=1}^N\Y_j\right)\right)\right),~k\ge 0,\label{section3fk}
\end{align}
and $\F(-1)=\{\emptyset,\Omega\}$, where $\H(i)$ and $v(i)$ are given in Section 2.2. We need the following assumptions.

\begin{assumption}\label{assumption1}
  The noises  $\{v(k),k\ge 0\}$ with values in $(\bigoplus_{i=1}^N\Y_i,\tau_{\text{N}}(\bigoplus_{i=1}^N\Y_i))$ and the random forward operator sequence $\{H(k),k\ge 0\}$ with values in $(\mathscr L( \X,\bigoplus_{i=1}^N$ $\Y_i),\tau_{\text{S}}(\mathscr L(\X$, $\bigoplus_{i=1}^N\Y_i)))$ are mutually independent.
\end{assumption}
%\textbf{A1.} The noise $\{v(k),k\ge 0\}$ with values in Hilbert space $(\bigoplus_{i=1}^N\Y_i,\tau_{\text{N}}(\bigoplus_{i=1}^N\Y_i))$ and the random forward operator sequence $\{H(k),k\ge 0\}$ with values in topological space $(\mathscr L( \X,\bigoplus_{i=1}^N$ $\Y_i),\tau_{\text{S}}(\mathscr L(\X,\bigoplus_{i=1}^N\Y_i)))$ are mutually independent.
\begin{assumption}\label{assumption2}
The  noises  $\{v(k),\F(k),k   \ge 0\}$ are martingale differences  and there exists a constant $\b_v>0$ such that $\sup_{k\ge 0}\E\big[\|v(k)\|_{\bigoplus_{i=1}^N\Y_i}^2|\F(k-1)\big]\leq \b_v~\text{a.s.}$
\end{assumption}
%\textbf{A2.} The noise $\{v(k),\F(k),k\ge 0\}$ is a martingale sequence and there exists a constant $\b_v>0$ such that $$\sup_{k\ge 0}\E\left.\left[\left\|v(k)\right\|_{\bigoplus_{i=1}^N\Y_i}^2\right|\F(k-1)\right]\leq \b_v~\text{a.s.}$$

For the gains of the algorithm (\ref{algorithm}), we  need the following conditions.

\begin{condition}\label{condition1}
The algorithm gains $\{a(k),k\ge 0\}$ and $\{b(k),k\ge 0\}$ are both monotonically decreasing sequences of positive real numbers.
\end{condition}
%\textbf{C1.} The algorithm gains $\{a(k),k\ge 0\}$ and $\{b(k),k\ge 0\}$ are both monotonically decreasing sequences of positive real numbers.

\begin{condition}\label{condition2}
$\sum_{k=0}^{\infty}a^2(k)<\infty$ and $\sum_{k=0}^{\infty}b^2(k)<\infty$.
\end{condition}
%\textbf{C2.} $\sum_{k=0}^{\infty}a^2(k)<\infty$ and $\sum_{k=0}^{\infty}b^2(k)<\infty$.

\begin{condition}\label{condition3}
$\sum_{k=0}^{\infty}a(k)=\infty$ and $\max\{a(k)-a(k+1),b(k)-a(k)\}=\mathcal O(a^2(k)+b^2(k))$.
\end{condition}
%\textbf{C3.} $\sum_{k=0}^{\infty}a(k)=\infty$ and $\max\{a(k)-a(k+1),b(k)-a(k)\}=\mathcal O(a^2(k)+b^2(k))$.

We analyze the convergence of the algorithm (\ref{algorithm}) now.
%by studying the $L_2$-asymptotic stability of the solution sequence of the randomly time-varying difference equation (\ref{error}) in the Hilbert %space $(\X^N,\tau_{\text{N}}(\X^N))$.
Firstly, from Lemma \ref{wendingxing}, we have the following key theorem.

\begin{theorem}\label{dingliyi1}
For the algorithm (\ref{algorithm}), suppose that Assumptions \ref{assumption1}, \ref{assumption2} and Condition \ref{condition2} hold, there exists a sequence of nonnegative real numbers $\{\gamma(k),k\ge 0\}$ with $\sum_{k=0}^{\infty}\gamma(k)< \infty$, such that
\bna\label{qafgs}
&&~~~\left.\E\left[\|I_{\X^N}-\left(a(k)\H^*(k)\H(k)+b(k)\L_{\G}\otimes I_{\X}\right)\|^4_{\LL\left(\X^N\right)}\right|\F(k-1)\right]\cr
&&\leq 1+\gamma(k)~\text{a.s.},
\ena
and  the sequence of operator-valued random elements $\{I_{\X^N}-(a(k)\H^*(k)\H(k)+b(k)\L_{\G}\otimes I_{\X}),k\ge 0\}$ is $L_2^2$-stable  w.r.t. $\{\F(k),k\ge 0\}$.\\
I. If
$$\sup_{k\ge 0}\E\left[\|\H(k)\|_{\mathscr L\left(\X^N,\bigoplus_{i=1}^N\Y_i\right)}^2\right]<\infty,$$
then the algorithm (\ref{algorithm}) is mean square consistent, i.e., $\lim_{k\to\infty}\E[\|f_i(k)-f_0\|_{\X}^2]=0,~i\in \mathcal V$.\\
II. If there exists a constant $\rho_0>0$ such that
$$\E\left.\left[\|\H(k)\|_{\mathscr L\left(\X^N,\bigoplus_{i=1}^N\Y_i\right)}^2\right|\F(k-1)\right]\leq \rho_0~\text{a.s.},$$
then the algorithm (\ref{algorithm}) is almost surely strongly consistent, i.e., $\lim_{k\to\infty}\|f(k)-f_0\|_{\X}=0~\text{a.s.},~i\in \mathcal V$.
\end{theorem}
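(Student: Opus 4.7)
My plan is to recognize the error recursion (\ref{error}) as a special case of the abstract randomly time-varying difference equation (\ref{chafen}), invoke Lemma \ref{wendingxing} for Part I, and then upgrade the $L_2$-convergence to almost sure convergence in Part II via a Robbins--Siegmund supermartingale argument. The matching between the two equations is $\X_1 = \X^N$, $\X_2 = \bigoplus_{i=1}^N\Y_i$, $u(k)=v(k)$, $F(k) = a(k)\H^*(k)\H(k) + b(k)\L_{\G}\otimes I_{\X}$, and $G(k) = a(k)\H^*(k)$, under which (\ref{error}) becomes (\ref{chafen}) with $x(k)=e(k)$.

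For Part I I would verify the hypotheses of Lemma \ref{wendingxing} one by one. Assumption \ref{assumption2} provides the $L_2$-bounded martingale-difference structure of $\{v(k),\F(k)\}$; Assumption \ref{assumption1} gives the required independence of $\{u(k)\}$ from $\{F(k),G(k)\}$; the $L_2^2$-stability of $\{I_{\X^N}-F(k)\}$ and the bound (\ref{qqqqq}) are in the theorem through (\ref{qafgs}); and (\ref{ssafe}) follows at once from $\sum_k\E[\|G(k)\|^2] \leq (\sup_k\E[\|\H(k)\|^2])\sum_k a^2(k)<\infty$ by Condition \ref{condition2}. The only non-routine hypothesis is (\ref{ssafe1}), because the theorem assumes only a second moment on $\|\H(k)\|$. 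To extract the needed fourth moment I would use the identity $a(k)\H^*(k)\H(k) = I_{\X^N}-(I_{\X^N}-F(k))-b(k)\L_{\G}\otimes I_{\X}$ together with the triangle inequality and $(x+y+z)^2\leq 3(x^2+y^2+z^2)$ to obtain
\[
a(k)^2\|\H(k)\|^4 \leq 3\bigl(1 + \|I_{\X^N}-F(k)\|^2 + b(k)^2\|\L_{\G}\otimes I_{\X}\|^2\bigr),
\]
then take expectations, invoke Lyapunov's inequality on (\ref{qafgs}) to control $\E[\|I_{\X^N}-F(k)\|^2]$ uniformly, and use $a(k)\to 0$ (implied by Condition \ref{condition2}) to conclude $\sup_k\E[\|G(k)\|^4] = \sup_k a(k)^4\E[\|\H(k)\|^4]<\infty$. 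Lemma \ref{wendingxing} then delivers $\E[\|e(k)\|^2_{\X^N}]\to 0$, and Part I follows from $\|f_i(k)-f_0\|_{\X}^2 \leq \|e(k)\|^2_{\X^N}$ for every $i\in\mathcal V$.

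For Part II the conditional second-moment bound implies the unconditional one, so Part I still applies and yields $\E[\|e(k)\|^2]\to 0$. To upgrade to almost sure convergence I would expand
\[
\|e(k+1)\|^2 = \|(I_{\X^N}-F(k))e(k)\|^2 + 2a(k)\langle (I_{\X^N}-F(k))e(k),\H^*(k)v(k)\rangle + a(k)^2\|\H^*(k)v(k)\|^2,
\]
then take conditional expectations given $\F(k-1)$, interposing the intermediate $\sigma$-algebra $\F(k-1)\vee\sigma(\H(k))$ where needed. Assumption \ref{assumption1} combined with the martingale-difference property of $\{v(k),\F(k)\}$ yields $\E[v(k)\mid\F(k-1)\vee\sigma(\H(k))]=0$, annihilating the cross term; the same splitting bounds the noise term by $a(k)^2\beta_v\rho_0$; and Lyapunov applied to (\ref{qafgs}) bounds the contractive term by $(1+\gamma(k)/2)\|e(k)\|^2$. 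These assemble into the Robbins--Siegmund inequality
\[
\E[\|e(k+1)\|^2_{\X^N}\mid\F(k-1)] \leq (1+\gamma(k)/2)\|e(k)\|^2_{\X^N} + a(k)^2\beta_v\rho_0,
\]
with $\sum\gamma(k)<\infty$ and $\sum a^2(k)<\infty$, so $\|e(k)\|^2_{\X^N}$ converges almost surely to some $\xi\geq 0$; Fatou's lemma combined with the $L_2$-convergence from Part I forces $\E[\xi]=0$, hence $\xi=0$ a.s., which is Part II. The chief obstacle throughout is (\ref{ssafe1}): because only a second moment on $\|\H(k)\|$ is assumed, the fourth-moment control on $G(k)$ required by Lemma \ref{wendingxing} must be wrung out of the operator-level bound (\ref{qafgs}) via the rearrangement above.
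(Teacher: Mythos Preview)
Your proposal is correct and follows essentially the same route as the paper: identify (\ref{error}) with (\ref{chafen}), verify the hypotheses of Lemma \ref{wendingxing} for Part I, and run a Robbins--Siegmund argument (Lemma \ref{lemmaA3}) for Part II after the $L_2$-convergence is in hand. The only noteworthy technical difference is in how the fourth-moment bound (\ref{ssafe1}) is extracted. The paper exploits the positive-operator ordering $0\le a(k)\H^*(k)\H(k)\le F(k)$ directly, which gives $\|a(k)\H^*(k)\H(k)\|\le\|F(k)\|$ in one stroke and then bounds $\E[\|F(k)\|^2]$ via $\|F(k)\|\le 1+\|I_{\X^N}-F(k)\|$ and (\ref{qafgs}); your triangle-inequality rearrangement $a(k)\H^*(k)\H(k)=I_{\X^N}-(I_{\X^N}-F(k))-b(k)\L_{\G}\otimes I_{\X}$ reaches the same conclusion with one extra term. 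For the cross term in Part II the paper appeals to its conditional-independence machinery (Propositions \ref{lemmaA4}--\ref{lemmaA6}) rather than your intermediate $\sigma$-algebra $\F(k-1)\vee\sigma(\H(k))$, but the two devices encode the same fact and your justification that $\E[v(k)\mid\F(k-1)\vee\sigma(\H(k))]=0$ goes through under Assumption \ref{assumption1}.
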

%\begin{proof}
%
%\end{proof}

%\begin{remark}\label{nvllwwnnvvv}
%The boundedness condition (\ref{qafgs}) in Lemma \ref{dingliyi1} guarantees the existence of the expectation and conditional expectation of the estimate $f(k)$ in the algorithm (\ref{algorithm}).
%%i.e., $f(k)$ is Bochner integrable.
%It is not difficult to verify that (\ref{qafgs}) holds if the norm of the random forward operator has a uniform upper bound independent of the sample path.
%%The existence of the Bochner integral in the infinite-dimensional Banach space needs to be guaranteed by the integrability.
%%The condition (\ref{qafgs}) can be weakened for the algorithm in finite-dimensional Hilbert spaces. Wang \emph{et al.} \cite{WLZ} considered the distributed parameter estimation algorithm, which is equivalent to the algorithm (\ref{algorithm}) in an Euclidean  space, and since the expectation of the random vector is defined by the Lebesgue integral, Wang \emph{et al.} \cite{WLZ} proposed a weaker boundedness condition than (\ref{qafgs}).
%\end{remark}

 We will next give intuitive sufficient conditions on the mean square and almost sure strong consistency of the algorithm (\ref{algorithm}).  At first, we need the following fundamental lemma.

%If the graph $\G$ is connected, then by Lemma \ref{yibanxingdejieguo}, we can further obtain a more intuitive sufficient condition, under which the operator-valued random sequence $\{I_{\X^N}-a(k)\H^*(k)\H(k)-b(k)\L_{\G}\otimes I_{\X},k\ge 0\}$ is $L_2^2$-stable w.r.t.   $\{\F (k),k\ge 0\}$.

\begin{lemma}\label{jihubiranshoulian}
For the algorithm (\ref{algorithm}), let $\G$ be connected and assume that Assumptions \ref{assumption1}, \ref{assumption2} and Conditions \ref{condition1}-\ref{condition3} hold. If there exist positive self-adjoint operators $\HH_i\in \mathscr L(\mathscr X)$, $i=1,\cdots,N$ satisfying $\sum_{i=1}^N\HH_i>0$, and  a fixed-length time period $h>0$, such that
\bna\label{yinlitiaojian1}
\sum_{j=1}^N\sum_{k=0}^{\infty}\E\left[\left\|\HH_jx(k)-\sum_{i=kh}^{(k+1)h-1}\E\left.\left[H_j^*(i)H_j(i)x(k)\right|\F(kh-1)\right] \right\|_{\X}^2\right]<\infty,
\ena
for any $L_2$-bounded adaptive sequence $\{x(k), \F(kh-1),k\ge 0\}$ with values in the Hilbert space $\X$, and there exists a constant $\rho_0>0$ such that
\bna\label{yinlitiaojian2}
\sup_{k\ge 0}\left(\E\left.\left[\|\H^*(k)\H(k)\|_{\LL\left(\X^N\right)}^{2^{\max\{h,2\}}}\right|\F(k-1)\right]\right)^{\frac{1}{2^ {\max\{h,2\}}}}\leq \rho_0~\text{a.s.},
\ena
then $\{I_{\X^N}-a(k)\H^*(k)\H(k)-b(k)\L_{\G}\otimes I_{\X},k\ge 0\}$ is $L_2^2$-stable  w.r.t. $\{\F(k),k\ge 0\}$.
\end{lemma}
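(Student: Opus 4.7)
The target is to verify that for every $L_2$-bounded adapted sequence $\{x(k),\F(k),k\ge 0\}$ with values in $\X^N$ and every $n\ge 0$,
\[
\lim_{m\to\infty}\E\Big[\Big\|\prod_{k=n+1}^m A(k)\,x(n)\Big\|_{\X^N}^2\Big]=0,
\]
where $A(k):=I_{\X^N}-a(k)\H^*(k)\H(k)-b(k)(\L_\G\otimes I_\X)$. My plan is to slice the product into consecutive blocks of length $h$ matching the PE window, linearize each block in the gains, replace the sum of random operators inside a block by its conditional expectation via (\ref{yinlitiaojian1}), and then use the strict positivity of the resulting averaged operator, which follows from graph connectedness together with $\sum_i\HH_i>0$, to drive the leading term to zero.

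\textbf{Block linearization.} On each block $[\ell h,(\ell+1)h-1]$ I would expand $\Phi_\ell:=\prod_{k=\ell h}^{(\ell+1)h-1}A(k)$ using the identity $(I-X_{\ell h})\cdots(I-X_{(\ell+1)h-1})=I-\sum_k X_k+R_\ell$ with $X_k:=a(k)\H^*(k)\H(k)+b(k)(\L_\G\otimes I_\X)$. The moment hypothesis (\ref{yinlitiaojian2}) together with Conditions \ref{condition1} and \ref{condition2} makes $\sum_\ell \E\|R_\ell\|^2_{\LL(\X^N)}$ finite, since each factor in $R_\ell$ contributes $O(a(\ell h)+b(\ell h))$ in a sufficiently high $L^p$-norm and products of two or more factors are at most $O(a^2+b^2)$ in $L^2$ after H\"older. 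Condition \ref{condition3} then permits replacing every $a(k),b(k)$ inside a fixed block by the common value $a(\ell h)$ modulo a summable $L_2$ error, since $|b(k)-a(k)|$ and $|a(k)-a(\ell h)|$ are both $O(a^2(\ell h)+b^2(\ell h))$ on a block of bounded length $h$.

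\textbf{Conditional averaging and strict positivity.} Let $z_\ell:=\Phi_{\ell-1}\cdots\Phi_0\,x(n)$; this is $\F(\ell h-1)$-measurable, and I would show by an inductive Lyapunov estimate in the spirit of Lemma \ref{wendingxing} that $\sup_\ell\E\|z_\ell\|_{\X^N}^2<\infty$. Conditioning on $\F(\ell h-1)$ and invoking (\ref{yinlitiaojian1}) for the $L_2$-bounded adapted sequence $(z_\ell)_\ell$, the random sum $\sum_{k=\ell h}^{(\ell+1)h-1}H_j^*(k)H_j(k)\,z_\ell$ is replaced by $\HH_j z_\ell$ at the cost of a block-wise $L_2$ error that is summable in $\ell$. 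Combined with block linearization, this makes the block effectively $\Phi_\ell z_\ell\approx(I_{\X^N}-a(\ell h)T_\ell)z_\ell$ with
\[
T_\ell:=\mathrm{diag}(\HH_1,\ldots,\HH_N)+h\,\tfrac{b(\ell h)}{a(\ell h)}(\L_\G\otimes I_\X).
\]
Connectedness of $\G$ forces $(\L_\G\otimes I_\X)z=0$ iff $z=\1_N\otimes w$, and for such $z$, $\langle T_\ell z,z\rangle_{\X^N}=\langle(\sum_i\HH_i)w,w\rangle_\X>0$ whenever $w\ne 0$. Hence each $T_\ell$ is a strictly positive bounded self-adjoint operator on $\X^N$, and $T_\ell\to T:=\mathrm{diag}(\HH_1,\ldots,\HH_N)+h(\L_\G\otimes I_\X)$ since $b(\ell h)/a(\ell h)\to 1$ by Condition \ref{condition3}.

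\textbf{Spectral decay and main obstacle.} Combining $\sum_k a(k)=\infty$ with the monotonicity of $a(k)$ yields $\sum_\ell a(\ell h)=\infty$, so it remains to show that the essentially deterministic product $\prod_{\ell=0}^M(I_{\X^N}-a(\ell h)T)z\to 0$ for every $z\in\X^N$. Decomposing $T=\int_0^{\|T\|}\lambda\,dE_\lambda$, the squared norm becomes $\int_0^{\|T\|}\prod_{\ell=0}^M(1-a(\ell h)\lambda)^2\,d\langle E_\lambda z,z\rangle$, whose integrand vanishes pointwise on $(0,\|T\|]$ and is dominated by $1$; strict positivity of $T$ forces $E_{\{0\}}=0$, and dominated convergence closes the spectral step. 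The hard part is gluing this deterministic spectral decay to the stochastic block recursion: (i) proving the inductive $L_2$-boundedness of $z_\ell$ so that per-block PE and remainder errors accumulate to a finite total, (ii) controlling the mild $\ell$-dependence of $T_\ell$ through $b(\ell h)/a(\ell h)$, and most importantly (iii) reconciling the absence of any uniform spectral lower bound on $T$ with the requirement that decay hold in $L_2$ for a possibly spectrally delocalized $L_2$-bounded initial element. Step (iii) is, in my view, where the infinite-dimensional Hilbert space setting really bites and where the bulk of the work will go.
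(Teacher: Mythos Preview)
Your overall architecture---slice into $h$-blocks, linearize, replace the random block operator by a deterministic strictly positive operator built from the $\HH_j$'s and the Laplacian, then invoke spectral calculus---is exactly the route the paper takes (via its Lemmas \ref{yibanxingdejieguo} and \ref{henandelemma}, together with Lemma \ref{lemmaA7} for the spectral step and Lemma \ref{lemmaA10} for strict positivity). So the skeleton is right.

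There is, however, a genuine gap in the ``conditional averaging'' step. You write that invoking (\ref{yinlitiaojian1}) lets you replace the \emph{random} sum $\sum_{k=\ell h}^{(\ell+1)h-1}H_j^*(k)H_j(k)z_\ell$ by $\HH_jz_\ell$ with a block-wise $L_2$ error that is summable in $\ell$. But (\ref{yinlitiaojian1}) controls only the difference between $\HH_jz_\ell$ and the \emph{conditional expectation} $\sum_k\E[H_j^*(k)H_j(k)z_\ell\mid\F(\ell h-1)]$. The remaining piece,
\[
a(\ell h)\sum_{k=\ell h}^{(\ell+1)h-1}\Big(\E\big[H_j^*(k)H_j(k)z_\ell\,\big|\,\F(\ell h-1)\big]-H_j^*(k)H_j(k)z_\ell\Big),
\]
is of order $a(\ell h)$ in $L_2$, and since $\sum_\ell a(\ell h)=\infty$ this is \emph{not} summable block by block. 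What saves the argument is that these terms are martingale-difference-like (mean zero given $\F(\ell h-1)$), hence orthogonal across blocks after propagation through the deterministic product $\prod_{j>i}(I-c(j)\HH)$; the paper isolates this as the term $w_2(i)$ in Lemma \ref{henandelemma} and uses orthogonality to reduce the accumulated error to $\sum_\ell a^2(\ell h)<\infty$. Your write-up does not separate this piece out and the summability claim as stated is false.

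Relatedly, you have the difficulty calibration inverted. Your step (iii)---spectral decay without a uniform lower eigenvalue bound---is in fact the clean part: once $T>0$ strictly, the spectral measure $d\langle E_\lambda z,z\rangle$ has no atom at $0$, $\prod_\ell(1-a(\ell h)\lambda)^2\to 0$ pointwise on $(0,\|T\|]$, and dominated convergence (pathwise, then again in $L_2$ using the uniform bound $\|\prod(I-a(\ell h)T)z\|\le M^d\|z\|$) finishes it; this is the content of Lemma \ref{lemmaA7}. The real labor is exactly the gluing you flag as (i), together with the martingale decomposition you are currently missing. The paper also sidesteps your issue (ii)---the $\ell$-dependence of $T_\ell$ through $b(\ell h)/a(\ell h)$---by working with the \emph{fixed} operator $\HH=\mathrm{diag}(\tfrac{1}{h}\HH_1,\ldots,\tfrac{1}{h}\HH_N)+\L_\G\otimes I_\X$ and the scalar step $c(k)=\sum_{s=kh}^{(k+1)h-1}b(s)$, pushing all the $a$-versus-$b$ discrepancy into the $\mu(i)$ error term via Lemma \ref{lemmaA11}; this is tidier than tracking a varying $T_\ell$.
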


%Through the $L_2^2$-stability analysis of $\{I_{\X^N}-a(k)\H^*(k)\H(k)\H(k)-b(k)\L_{\G}\otimes I_{\X},k\ge 0\}$,

%%Through the $L_2^2$-stability analysis of $\{I_{\X^N}-a(k)\H^*(k)\H(k)\H(k)-b(k)\L_{\G}\otimes I_{\X},k\ge 0\}$,
%Combining Theorem  \ref{dingliyi1} and Lemma  \ref{jihubiranshoulian}, we will next give intuitive sufficient conditions on the mean square and almost sure strong consistency of the algorithm (\ref{algorithm}).

Combining Theorem  \ref{dingliyi1} and Lemma  \ref{jihubiranshoulian}, we have the following theorem.

\begin{theorem}\label{vnknoklfl}
For the algorithm (\ref{algorithm}),
suppose that all the conditions in Lemma \ref{jihubiranshoulian} hold. If there exists a sequence of nonnegative real numbers $\{\Gamma(k),k\ge 0\}$ with $\sum_{k=0}^{\infty}\Gamma(k)<\infty$, such that
\bna\label{dinglitiaojian}
  \E\left[\|I_{\X^N}-4\left(a(k)\H^*(k)\H(k)+b(k)\L_{\G}\otimes I_{\X}\right)\|_{\LL\left(\X^N\right)}\big|\F(k-1)\right]
 \leq 1+\Gamma(k)~\text{a.s.},
\ena
then the algorithm (\ref{algorithm}) is both mean square and almost surely strongly consistent.
\end{theorem}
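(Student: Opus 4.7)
The plan is to invoke Theorem \ref{dingliyi1} with $F(k):=a(k)\H^*(k)\H(k)+b(k)\L_\G\otimes I_\X$, using Lemma \ref{jihubiranshoulian} to dispose of the $L_2^2$-stability hypothesis. Two items then remain: (a) upgrade the first-moment hypothesis (\ref{dinglitiaojian}) on $\|I_{\X^N}-4F(k)\|$ to the fourth-moment hypothesis (\ref{qafgs}) on $\|I_{\X^N}-F(k)\|$; and (b) produce the second-moment bound on $\H(k)$ demanded by parts I and II of Theorem \ref{dingliyi1}.

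Item (a) is the crux. Because $\H^*(k)\H(k)\ge 0$ and $\L_\G\otimes I_\X\ge 0$, the random operator $F(k)$ is positive self-adjoint a.s., so $(I_{\X^N}-F(k))^4$ is positive self-adjoint with $\|I_{\X^N}-F(k)\|^4=\|(I_{\X^N}-F(k))^4\|_{\LL(\X^N)}$. Expanding and isolating the linear-in-$F(k)$ term gives the operator identity
\begin{equation*}
(I_{\X^N}-F(k))^4=(I_{\X^N}-4F(k))+\bigl(6F(k)^2-4F(k)^3+F(k)^4\bigr),
\end{equation*}
so the triangle inequality yields $\|I_{\X^N}-F(k)\|^4\leq\|I_{\X^N}-4F(k)\|+6\|F(k)\|^2+4\|F(k)\|^3+\|F(k)\|^4$. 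Using $\|F(k)\|\leq a(k)\|\H(k)\|^2+b(k)\|\L_\G\|$ together with the power-mean inequality, each $\|F(k)\|^m$ is majorized by a constant multiple of $a(k)^m\|\H(k)\|^{2m}+b(k)^m$. Since $\|\H^*(k)\H(k)\|=\|\H(k)\|^2$, Jensen's inequality applied to the hypothesis (\ref{yinlitiaojian2}) of Lemma \ref{jihubiranshoulian} gives $\E[\|\H(k)\|^{2m}|\F(k-1)]\leq\rho_0^m$ a.s.\ for every integer $m\leq 2^{\max\{h,2\}}$, which in particular covers $m=2,3,4$. Taking conditional expectations and collecting the cubic and quartic gain terms into the quadratic ones (allowed since $a(k),b(k)\to 0$ under Condition \ref{condition2}) produces
\begin{equation*}
\E\bigl[\|I_{\X^N}-F(k)\|^4\,\big|\,\F(k-1)\bigr]\leq 1+\Gamma(k)+C\bigl(a(k)^2+b(k)^2\bigr)~\text{a.s.}
\end{equation*}
for some constant $C$. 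Setting $\gamma(k):=\Gamma(k)+C(a(k)^2+b(k)^2)$ then delivers $\sum_k\gamma(k)<\infty$ from Condition \ref{condition2} and the assumed summability of $\Gamma(k)$, which is exactly (\ref{qafgs}).

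Item (b) is immediate: the same Jensen bound gives $\E[\|\H(k)\|^2|\F(k-1)]\leq\rho_0$ a.s., and hence $\sup_k\E[\|\H(k)\|^2]\leq\rho_0$ by the tower property — supplying the premises of both I and II of Theorem \ref{dingliyi1}. Lemma \ref{jihubiranshoulian} supplies the $L_2^2$-stability, and the remaining hypotheses (Assumptions \ref{assumption1}, \ref{assumption2} and Condition \ref{condition2}) are inherited from the hypotheses of Lemma \ref{jihubiranshoulian}. Invoking both parts of Theorem \ref{dingliyi1} then delivers the mean-square and almost-sure strong consistency simultaneously. The only delicate point is the passage from (\ref{dinglitiaojian}) to (\ref{qafgs}): the specific coefficient $4$ in (\ref{dinglitiaojian}) is precisely the linear coefficient of $F$ in the expansion of $(I-F)^4$, and the positive self-adjointness of $F(k)$ is what lets the operator-norm fourth power be evaluated through this identity rather than through a spectral decomposition, which is not available for the randomly time-varying $F(k)$.
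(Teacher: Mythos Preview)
Your proposal is correct and follows essentially the same route as the paper: both invoke Lemma \ref{jihubiranshoulian} for $L_2^2$-stability, expand $(I_{\X^N}-F(k))^4$ via the identity $\|I_{\X^N}-F(k)\|^4=\|(I_{\X^N}-F(k))^4\|$ for the self-adjoint $F(k)$, bound the higher-order terms using the conditional moment hypothesis (\ref{yinlitiaojian2}) and Jensen, and then apply Theorem \ref{dingliyi1}. You are in fact slightly more explicit than the paper in checking the premise of part II of Theorem \ref{dingliyi1} (the conditional bound $\E[\|\H(k)\|^2\mid\F(k-1)]\le\rho_0$), which the paper leaves implicit.
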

%\begin{proof}
%
%\end{proof}

Especially, if $\{\H(k),k\ge 0\}$
%with values in $(\mathscr L(\X^N,\\ \bigoplus_{i=1}^N  \Y_i), \tau_{\text{S}}(\mathscr L(\X^N,\bigoplus_{i=1}^N\Y_i)))$
and  $\{v(k),k\ge 0\}$
 %with values in $(\bigoplus_{i=1}^N\Y_i,\tau_{\text{N}}(\bigoplus_{i=1}^N\Y_i))$
 are both i.i.d. and they are mutually independent, then the following corollary  follows from Theorem \ref{vnknoklfl}.

\begin{corollary}\label{xiaosirendetuilun}
For the algorithm (\ref{algorithm}), assume that $\G$ is connected, Assumptions \ref{assumption1}, \ref{assumption2} and Conditions \ref{condition1}-\ref{condition3} hold, and $\{\H(k),k\ge 0\}$ and $\{v(k),k\ge 0\}$ are both i.i.d. sequences and they are mutually independent. If there exists a constant $\rho_0>0$ such that $\|\H(0)\|\leq \rho_0~\text{a.s.}$ and
\bna\label{tuiluntiaojian} \sum_{j=1}^N\E\left[\|H_j(0)x\|_{\Y_j}^2\right]>0,\ \forall\ x\in \X\setminus\{0\},
\ena
then the algorithm (\ref{algorithm}) is both mean square and almost surely strongly consistent.
\end{corollary}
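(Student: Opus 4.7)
The plan is to specialize Theorem \ref{vnknoklfl} to the i.i.d.\ setting with horizon parameter $h=1$. Three things need to be checked on top of the standing assumptions: the persistence-of-excitation condition \eqref{yinlitiaojian1}, the conditional moment bound \eqref{yinlitiaojian2}, and the one-step operator bound \eqref{dinglitiaojian}. Once these are in place, the conclusion follows immediately from Theorem \ref{vnknoklfl}.

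For \eqref{yinlitiaojian1}, I would take $\HH_j:=\E[H_j^*(0)H_j(0)]\in\LL(\X)$, which is well-defined as a Bochner integral because $\|\H(0)\|\le\rho_0$ forces $\|H_j^*(0)H_j(0)\|\le\rho_0^2$ almost surely. Each $\HH_j$ is self-adjoint and positive, since $\langle\HH_j x,x\rangle_{\X}=\E[\|H_j(0)x\|_{\Y_j}^2]\ge 0$; moreover, hypothesis \eqref{tuiluntiaojian} gives $\langle(\sum_{j=1}^{N}\HH_j)x,x\rangle_{\X}>0$ for every nonzero $x\in\X$, so $\sum_{j=1}^{N}\HH_j>0$ in the sense of Definition given in the preliminaries. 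With $h=1$ the inner sum in \eqref{yinlitiaojian1} collapses to the single index $i=k$; because any admissible $x(k)$ is $\F(k-1)$-measurable while $H_j(k)$ is independent of $\F(k-1)$ and distributed as $H_j(0)$, one gets $\E[H_j^*(k)H_j(k)x(k)\,|\,\F(k-1)]=\E[H_j^*(k)H_j(k)]x(k)=\HH_j x(k)$. Hence the summand in \eqref{yinlitiaojian1} is identically zero and the PE condition holds trivially.

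Condition \eqref{yinlitiaojian2} (where $\max\{h,2\}=2$, i.e.\ a conditional $L^{4}$-type bound) follows at once from $\|\H^*(k)\H(k)\|\le\|\H(k)\|^{2}\le\rho_0^{2}$ a.s.; the a.s.\ bound holds for every $k$ because the $\H(k)$ are identically distributed as $\H(0)$. The more delicate step is \eqref{dinglitiaojian}. Writing $D(k):=a(k)\H^*(k)\H(k)+b(k)\L_{\G}\otimes I_{\X}$, both summands are nonnegative self-adjoint, so $D(k)\ge 0$ with $\|D(k)\|\le a(k)\rho_0^{2}+b(k)\|\L_{\G}\otimes I_{\X}\|$ a.s. Because Condition \ref{condition2} forces $a(k),b(k)\to 0$, there exists a deterministic $k_{0}$ such that $4\|D(k)\|\le 2$ a.s.\ for every $k\ge k_{0}$; self-adjointness of $D(k)$ then gives $\|I_{\X^N}-4D(k)\|=\sup_{\|x\|=1}|1-4\langle D(k)x,x\rangle_{\X^N}|\le 1$ a.s.\ on that range. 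For the finitely many $k<k_{0}$ a crude triangle inequality produces a deterministic upper bound $C_{k}$. Setting $\Gamma(k):=C_{k}$ for $k<k_{0}$ and $\Gamma(k):=0$ otherwise yields a summable sequence for which \eqref{dinglitiaojian} is satisfied.

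The only step that I expect to require real care is \eqref{dinglitiaojian}: a naive triangle-inequality estimate gives $\|I_{\X^N}-4D(k)\|\le 1+4\|D(k)\|$ with residual of order $a(k)+b(k)$, which is not summable under Condition \ref{condition2} alone. One has to exploit the positivity and self-adjointness of $D(k)$ to recover $\|I_{\X^N}-4D(k)\|\le 1$ for all sufficiently large $k$. Granting that, Theorem \ref{vnknoklfl} delivers both mean-square and almost-sure strong consistency, completing the proof.
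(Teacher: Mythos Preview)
Your proposal is correct and follows essentially the same route as the paper's proof, with the harmless simplification of taking $h=1$ (the paper carries a general $h$ throughout, which is unnecessary here). One technical point to tighten: asserting that $\HH_j=\E[H_j^*(0)H_j(0)]$ is ``well-defined as a Bochner integral'' in $\LL(\X)$ presumes strong measurability of $H_j^*(0)H_j(0)$ in the uniform operator topology, which the paper's framework does not guarantee; the paper instead defines $\HH_j$ pointwise via $\HH_j x:=\E[H_j^*(0)H_j(0)x]$ (a Bochner integral in $\X$, which is unproblematic) and then checks linearity, self-adjointness, and boundedness directly---you should do the same, and likewise justify $\E[H_j^*(k)H_j(k)x(k)\mid\F(k-1)]=\HH_j x(k)$ by approximating $x(k)$ with simple functions and invoking dominated convergence, rather than appealing to an operator-valued expectation.
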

%\begin{proof}
%
%\end{proof}

The graph $\G$ describes the communication topology among nodes, and its connectivity ensures the nodes to collaboratively reconstruct the unknown function $f_0$ successfully. The condition (\ref{yinlitiaojian1}) in Lemma \ref{jihubiranshoulian} plays an important role in the convergence analysis of the decentralized algorithm, which we call the \textbf{\emph{infinite-dimensional spatio-temporal persistence of excitation}} condition.
%It is well known that the operator orbit $x\mapsto H_j(i)x$ can fully reflect the nature of the operator $H_j(i)$ itself. Intuitively, there exist deterministic time-invariant operators $\HH_j$, $j=1,\cdots,N$ such that the aggregated information of $H_j^*(i)H_j(i)$ over fixed-length time periods $[kh,(k+1)h-1]$ and the information provided by $\HH_j$ trend to be consistent over time.
Note that
\begin{align}
&\left|\sum_{j=1}^N\sum_{i=kh}^{(k+1)h-1}\E\left[\|H_j(i)x\|_{\Y_j}^2\right]-\left\langle \sum_{j=1}^N\HH_jx,x\right\rangle _{\X}\right|\cr
 %=&\left|\sum_{j=1}^N\left\langle \E\left[\sum_{i=kh}^{(k+1)h-1}H^*_j(i)H_j(i)x-\HH_jx\right],x\right\rangle _{\X}\right|\cr
 =& \Bigg|\sum_{j=1}^N\Bigg\langle \E\Bigg[\sum_{i=kh}^{(k+1)h-1}
 \E\left.\left[H^*_j(i)H_j(i)x\right|\F(kh-1)\right] -\HH_jx\Bigg],x\Bigg\rangle _{\X}\Bigg|\cr
 \leq & \sum_{j=1}^N\Bigg[\E\Bigg[\Bigg\|\sum_{i=kh}^{(k+1)h-1}\E\left.\left[H_j^*(i)H_j(i)x\right|\F(kh-1)\right]
   -\HH_jx
\Bigg\|_{\X}^2\Bigg]\Bigg]^{\frac{1}{2}}\|x\|_{\X},~x\in \X,~k\ge 0.\notag
\end{align}
Therefore, the infinite-dimensional spatio-temporal persistence of excitation condition is equivalent to the combination of the following  two conditions.
\begin{itemize}
\item For any $x\in \X\setminus\{0\}$, there exists an integer $K(x)>0$ such that
\bna\label{vnkwwoelekel}
\sum_{j=1}^N\sum_{i=kh}^{(k+1)h-1}\E\left[\left\|H_j(i)x\right\|_{\Y_j}^2\right]>0,~k\ge K(x);
\ena
\item there exist deterministic time-invariant operators $\HH_j$, $j=1,\cdots,N$ such that
$$
\sum_{j=1}^N\sum_{k=0}^{\infty} \E\Bigg[\bigg\|\HH_jx(k)\\ -\sum\limits_{i=kh}^{(k+1)h-1} \E\left[H_j^*(i)H_j(i)x(k)|\F(kh-1)\right] \bigg\|_{\X}^2\Bigg]<\infty,
$$
\end{itemize}
for any $L_2$-bounded adaptive sequence $\{x(k),\F(kh-1),k\ge 0\}$ with values in the Hilbert space $\X$.
%To reconstruct the unknown element $f_0$ under valid measurement information, the random forward operators are required to have

\vskip 1mm

The \textbf{\emph{spatio-temporal persistence of excitation}} implies that
the non-zero orbits of the random forward operators of all nodes are non-degenerate in the mean square sense for a fixed length time period,  where \textbf{\emph{spatio-temporal}} refers specifically to the temporal and spatial states of the operator orbit $x\mapsto H_j(i)x$ of the random forward operator. By (\ref{vnkwwoelekel}), we know that we neither need the temporal orbit of the forward operator of each node over the graph to be non-degenerate, i.e., $\sum_{i=kh}^{(k+1)h-1}\E \left[\|H_j(i)x\|_{\Y_j}^2\right]>0,\ \forall\ j\in \mathcal V,$ nor need the spatial orbit of the forward operator of all nodes to be non-degenerate at each instant, i.e., $\sum_{j=1}^N\E\left[\|H_j(i)x\|_{\Y_j}^2\right]>0,\ \forall\ i\ge 0.$
Notably, if the sequences of random forward operator  and   noises  are both i.i.d. and they are mutually independent, then the \textbf{\emph{infinite-dimensional spatio-temporal persistence of excitation}} condition degenerates to the case that the spatial orbits of the forward operators of all nodes are non-degenerate at the initial moment $k=0$, i.e., the condition (\ref{tuiluntiaojian}) in the Corollary \ref{xiaosirendetuilun}.

In the past decades, to solve the problems of finite-dimensional parameter estimation and signal tracking with non-stationary and non-independent data, many scholars have proposed excitation conditions based on the conditional expectation of the observation/regression matrix. The stochastic persistence of excitation condition was first proposed by Guo (\cite{Guo1990}) in the analysis of centralized Kalman filtering algorithms.  Xie and Guo (\cite{Xieguo}) proposed the cooperative information condition based on the conditional expectations of the observation matrices for the distributed adaptive filtering algorithm over connected graphs. Wang \emph{et al.} (\cite{WLZ}) proposed the stochastic spatio-temporal persistence of excitation condition for the decentralized online estimation algorithm over randomly time-varying graphs.
%and Zhang \emph{et al.} \cite{ZLF} further proposed a more general excitation condition: the sample path spatio-temporal persistence of excitation condition in the analysis of the decentralized online regularized regression algorithm over randomly time-varying graphs.
Zhang \emph{et al.} (\cite{ZLF}) proved that if the graph is connected and the randomly time-varying regression matrix satisfies the uniformly conditionally spatio-temporally joint observability condition, i.e., there exists an integer $h>0$ and a constant $\theta>0$, respectively, such that
\ban
\inf_{k\ge 0}\lambda_{\text{min}}\bigg(\sum_{j=1}^N\sum_{i=kh}^{(k+1)h-1}
\E\left[H_j^T(i)H_j(i)\big|\F(kh-1)\right]\bigg)\ge \theta \ \text{a.s.},
\ean
then the algorithm achieves mean square and almost sure convergence. For this case, it is not difficult to verify that the random matrix sequence $\{I_{Nn}-a(k)\H^*(k)\H(k)-b(k)\L_{\G}\otimes I_{n},k\ge 0\}$ satisfies the $L_2^2$-stability condition w.r.t. $\{\F(k),k\ge 0\}$, and thus the excitation conditions proposed in \cite{WLZ} and \cite{ZLF} are all special cases of the $L_2^2$-stability condition in Theorem \ref{dingliyi1}.

The persistence of excitation conditions proposed for finite-dimensional systems all require that the conditional expectation of the information matrix consisting of the observation (regression) matrices is positive definite, i.e., the information matrix has strictly positive minimum eigenvalues; however, inverse problems in infinite-dimensional Hilbert spaces are usually ill-posed. Even for a strictly positive linear compact operator, the excitation condition similar to the persistence excitation conditions for finite-dimensional systems can not hold any more since the infimum of the eigenvalues of the compact operator is always zero.

\section{Decentralized Online Learning in Reproducing Kernel Hilbert Space}
We will discuss in this section a special class of online random inverse problems: decentralized online learning problems in  RKHS. Let $\X$ be a non-empty subset of $\mathbb R^n$, $K:\X\times \X\to \mathbb R$ be a Mercer kernel, and $(\mathscr H_K,\langle \cdot,\cdot \rangle _K)$ be a separable RKHS with kernel $K$, which  consists of functions with domain $\X$.
%Suppose $f_0:\X\to \mathbb R$ is an unknown function in $\HH_K$. The nodes cooperatively estimate $f_0$ by information exchanging among them.
The observation data $y_i(k)$ of the $i$-th node at instant $k$ is given by
\bna\label{xuexi}
y_i(k)=f_0(x_i(k))+v_i(k),\ k\ge 0,\ i\in \mathcal V,
\ena
where $x_i(k):\Omega\to \X$ is a random vector with values in the Hilbert space $(\X,\tau_{\text{N}}(\X))$ at instant $k$, called the random input data, and the observation noise $v_i(k):\Omega\to \mathbb R$ is a random vector with values in the Hilbert space $(\mathbb R,\tau_{\text{N}}(\mathbb R))$,  and  $f_0:\X\to \mathbb R$ is an unknown function in $\HH_K$. The nodes cooperatively estimate $f_0$ by information exchanging among them.

\begin{remark}
\label{remarkaddadd6}
In \cite{kar2012}, the measurement of each node is given by $y_i(k)=f_i(\theta^*)+v_i(k)$, $i=1,2,...,N$, in which the nonlinear mappings $f_i(\cdot)$, $i=1,2,...,N$ are completely known in prior and it is the parameter vector $\theta^*$ in a finite-dimensional space, which is unknown and needs to be estimated. While for the measurement model (\ref{xuexi}), it is the mapping $f_0$ in the infinite-dimensional space $\HH_K$, which is unknown and needs to be estimated.
\end{remark}

For any given $x\in \X$, the function $K_x:\X\to \HH_K$ induced by the Mercer kernel is given by $K_x(y)=K(x,y),\ \forall\ y\in\X.$
Define the random forward operator $H_i(k)$ as
$$
H_i(k)(f):=f(x_i(k)),~f\in \HH_K,~k\ge 0,~i\in \mathcal V,
$$
then the measurement model (\ref{xuexi}) can be represented as the random inverse problem with the measurement equation (\ref{measuramentmodel}). Based on the algorithm (\ref{algorithm}), the decentralized online learning strategy in $\HH_K$ is given by
\bna\label{rkhs}
&&\hspace{-0.8cm}f_i(k+1)=f_i(k)+a(k)(y_i(k)-f_i(k)(x_i(k)))K_{x_i(k)}+b(k)\sum_{j\in \N_i}a_{ij}(f_j(k)-f_i(k)),\cr
&&~~~~~~~~~~~~~~~~~~~~~~~~~~~~~~~~~~~~~~~~~~~~~~~~~~~~~~~~~~~~~~~~~~~~~~~~~~~~k\ge 0,~i\in \mathcal V.
\ena

The learning  process of (\ref{rkhs}) is given by Algorithm \ref{DORKHS}.
Assume that  $\X$ is a  bounded and closed interval of $\mathbb{R}$ in Algorithm \ref{DORKHS}.

\renewcommand{\algorithmicrequire}{\textbf{Input:}}  % Use Input in the format of Algorithm
\renewcommand{\algorithmicensure}{\textbf{Output:}} % Use Output in the format of Algorithm

\begin{algorithm}

         \begin{algorithmic}[1] %姣忚 鏄剧ず琛屽彿
         \caption{Decentralized Online Learning in RKHS}
          \label{DORKHS}
          \State Uniformly sample $L$ points $\{z_{l},\ l=1,\ldots,L\}$ at equal distances  on $\X$.
          \State Input:
           initial values $f_{i}(0)(z_{l}),\ l=1,\ldots,L,\ i=1,...,N$,   iteration step $K_1$, weighted matrix $A=(a_{ij})$,  algorithms gains $\{a(k), b(k), \ k\geq 0\}$, kernel function $K(\cdot,\cdot)$.
          \For{$k = 1$, \ldots, $K_1$}
              \For{$i = 1$, \ldots, $N$}
              \State Input: $x_{i}(k)$, $y_{i}(k)$.
              \If{$x_{i}(k)\in \{z_{l},\ l=1,\ldots,L\}$}
              \For{$l = 1$, \ldots, $L$}
         \State $f_i(k+1)(z_{l})=f_i(k)(z_{l})+a(k) ( y_{i}(k)-f_i(k)(x_i(k))) K(x_i(k),z_{l})$
         \Statex $\ \ \ \ \ \ \ \ \ \ \ \ \ \ \ \ \ \ \ \ \ \ \ \ \ \ \ \ \ \ \ \ \ \ \ +b(k)\sum_{j\in \N_i}a_{ij} (f_j(k)(z_{l})-f_i(k)(z_{l})).$
        \EndFor
        \Else
           \State  Approximate   $f_i(k)(x_i(k))$ by the cubic spline interpolation method based on $\{f_i(k)(z_{l}),  l= 1, \ldots, L\}$.
           \For{$l = 1$, \ldots, $L$}
         \State $f_i(k+1)(z_{l})=f_i(k)(z_{l})+a(k) ( y_{i}(k)-f_i(k)(x_i(k))) K(x_i(k),z_{l})$
         \Statex $\ \ \ \ \ \ \ \ \ \ \ \ \ \ \ \ \ \ \ \ \ \ \ \ \ \ \ \ \ \ \ \ \ \ \ +b(k)\sum_{j\in \N_i}a_{ij} (f_j(k)(z_{l})-f_i(k)(z_{l})).$
        \EndFor
       \EndIf
          \EndFor
         \EndFor
         \State Output: for $i\in \mathcal V$ and $x\in \X$, if $x\in \{z_{l},\ l=1,\ldots,L\}$, then output $f_i(K_1)(x)$, and if $x\notin \{z_{l},\ l=1,\ldots,L\}$, then output  $f_i(K_1)(x)$ by the cubic spline interpolation method based on $\{f_i(K_1)(z_{l}),  l= 1, \ldots, L\}$.
        \end{algorithmic}
 \end{algorithm}

Given $\phi,\psi\in \HH_K$, denote the rank 1 tensor product operator $\phi\otimes \psi:\HH_K\to \HH_K$ by
$$
\left(\phi\otimes \psi\right)(f):=\langle f,\psi\rangle _{K}\phi,~f\in \HH_K.
$$
%Denote $
%H_i(k)(f):=f(x_i(k)),~f\in \HH_K,~k\ge 0,~i\in \mathcal V.
%$

 For the algorithm (\ref{rkhs}), we need the following assumptions.

\begin{assumption}\label{assumption5}
$\sup_{x\in \X}K(x,x)<\infty$.
\end{assumption}

\begin{assumption}\label{assumption3}
The sequence $\{x_i(k),i\in \mathcal V,k\ge 0\}$ of random vectors with values in the Hilbert space $(\X,\tau_{\text{N}}(\X))$ and the sequence $\{v_i(k),i\in \mathcal V,k\ge 0\}$ of random variables with values in the Hilbert space $(\mathbb R,\tau_{\text{N}}(\mathbb R))$ are mutually independent.
\end{assumption}
%\textbf{B1.} The sequence $\{x_i(k),i\in \mathcal V,k\ge 0\}$ of random vectors with values in the Hilbert space $(\X,\tau_{\text{N}}(\X))$ and the sequence $\{v_i(k),i\in \mathcal V,k\ge 0\}$ of random variables with values in the Hilbert space $(\mathbb R,\tau_{\text{N}}(\mathbb R))$ are mutually independent.

\begin{lemma}\label{rabdomelerkhs}
  If Assumption  \ref{assumption5} and Assumption \ref{assumption3}  hold, then $H_i(k)$ is a random element with values in $(\LL(\HH_K,\mathbb R),\tau_{\text{S}}(\LL(\HH_K,\mathbb R))),\ k\ge 0,~i\in \mathcal V$.
\end{lemma}

By Lemma \ref{rabdomelerkhs} and  (\ref{section3fk}), it is known that
\begin{align}\label{nvwjoijjff}
\F(k)= \bigvee_{s=0}^k\bigg(\sigma\left(\H(s);\tau_{\text{S}}\left(\LL\left(\HH_K^N,\mathbb R^N\right)\right)\right)
 \bigvee \sigma\left(v(s);\tau_{\text{N}}\left(\mathbb R^N\right)\right)\bigg),~k\ge 0,
\end{align}
and $\F(-1)=\{\emptyset,\Omega\}$, where $v(s)=(v_1(s),\cdots,v_N(s))$ and $\mathcal H(s)=\text{diag}\{H_1(s), \cdots, H_N(s)\},\ s\geq 0$.

\begin{assumption}\label{assumption4}
The   noises  $\{v_i(k),\F(k), k\ge 0\}$, $i\in \mathcal V$  are martingale difference sequences and there exists a constant $\b>0$, such that  $ \max\limits_{i\in\mathcal V}\sup\limits_{k\ge 0}\E\big[\|v_i(k)\|_{\mathbb R}^2 |\F(k-1)\big]\leq \b~\text{a.s.}$
\end{assumption}
%\textbf{B2.} The noises $\{v_i(k),\F(k),k\ge 0\}$, $i\in \mathcal V$, are martingale sequences and there exists a constant $\b>0$, such that $$\max_{i\in\mathcal V}\sup_{k\ge 0}\E\left.\left[\|v_i(k)\|_{\mathbb R}^2\right|\F(k-1)\right]\leq \b~\text{a.s.}$$

%\textbf{B3.} $\sup_{x\in \X}K(x,x)<\infty$.

\begin{remark}
\rm{The existing works on RKHS online learning (\cite{Ying}-\cite{SmaleYao}) all require the random input data to be i.i.d. Notice that in Assumption  \ref{assumption3}, the sequence of random forward operators with values in $(\mathscr L(\X,\bigoplus_{i=1}^N\Y_i),\tau_{\text{S}}(\mathscr L(\X,\bigoplus_{i=1}^N\Y_i)))$ is not required to satisfy special statistical properties such as independence, stationarity, etc.}
\end{remark}

%\begin{remark}
%\rm{Assumption \ref{assumption5} is often used in the learning theory in RKHS (\cite{Tarres}-\cite{Dieuleveut}, \cite{SmaleYao},  \cite{ctfg}). There are many kernel functions satisfying Assumption \ref{assumption5}, such as the Gaussian kernel $K:\mathbb R^n\times \mathbb R^n\to \mathbb R$, $K(x,y)=\text{e}^{-\frac{\|x-y\|_{\mathbb R^n}^2}{c^2}}$ and the homogeneous polynomial kernel $K:\mathbb S^{n-1}\times \mathbb S^{n-1}\to \mathbb R$ on the unit sphere in $\mathbb R^n$, $K(x,y)=\langle x,y\rangle_{\mathbb R^n}^d$, where the integer $d>0$. In addition to these common kernel functions, note that the Mercer kernel is continuous, thus Assumption  \ref{assumption5} holds for arbitrary compact set $\X\subseteq \mathbb R^n$.}
%\end{remark}

%Noting the continuity of the Mercer kernel function, $K_x$ is therefore a continuous function. For any $f\in \HH_K$ and random input data $x_i(k)$, $i\in \mathcal V$, $k\ge 0$, it follows from the reproducing properties of RKHS that $f(x_i(k))=\langle f,K_{x_i(k) }\rangle _{K}$ (\cite{Poggio}), thus $f(x_i(k)):\Omega\to \mathbb R$ is a random variable with values in $(\mathbb R,\tau_{\text{N}}(\mathbb R))$. It follows from Assumption \ref{assumption5} that $H_i(k)$ is the random element with values in $(\LL(\HH_K,\mathbb R),\tau_{\text{S}}(\LL(\HH_K,\mathbb R)))$, which is induced by the random input data $x_i(k)$.

Based on the convergence results of the algorithm (\ref{algorithm}), we can obtain the following convergence results.
%The proofs of theorems and corollaries of this section are in Appendix  \ref{appendixa1} and Appendix \ref{appendixc}.

\begin{theorem}\label{rkhsdingli}
For the algorithm (\ref{rkhs}), suppose that $\G$ is connected, Assumptions \ref{assumption5}-\ref{assumption4} and Conditions \ref{condition1}-\ref{condition3} hold. If there exist positive self-adjoint operators $N_i\in \mathscr L(\HH_K)$, $i=1,\cdots,N$ satisfying $\sum_{i=1}^NN_i>0$ and there exists a  fixed-length time period $h>0$, such that
  \bna\label{vnknknldldklsd}
\sum_{j=1}^N\sum_{k=0}^{\infty}\E\left[\Bigg\|\Bigg(N_j-\sum_{i=kh}^{(k+1)h-1}\E\left.\left[K_{x_j(i)}\otimes K_{x_j(i)}\right|\F( kh-1)\right]\Bigg)g(k)\Bigg\|_K^2\right]<\infty,
  \ena
 for arbitrary $L_2$-bounded adaptive sequence $\{g(k),\F(kh-1),k \ge 0\}$, then the algorithm (\ref{rkhs}) is both mean square and almost surely strongly consistent.
\end{theorem}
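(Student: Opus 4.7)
The plan is to reduce Theorem \ref{rkhsdingli} to Theorem \ref{vnknoklfl} by recognising the RKHS update (\ref{rkhs}) as the abstract algorithm (\ref{algorithm}) instantiated with $\X := \HH_K$, $\Y_i := \mathbb R$, and forward operator
\[
H_i(k) \in \LL(\HH_K, \mathbb R), \qquad H_i(k) f = f(x_i(k)) = \langle f, K_{x_i(k)} \rangle_K.
\]
The reproducing property gives the adjoint $H_i^*(k)c = c K_{x_i(k)}$, and hence $H_i^*(k)H_i(k) = K_{x_i(k)} \otimes K_{x_i(k)}$, so that $a(k) H_i^*(k)(y_i(k) - H_i(k)f_i(k)) = a(k)(y_i(k) - f_i(k)(x_i(k))) K_{x_i(k)}$ matches the innovation term in (\ref{rkhs}) pointwise. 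Measurability of $H_i(k)$ as a random element with values in $(\LL(\HH_K,\mathbb R), \tau_{\text{S}}(\LL(\HH_K,\mathbb R)))$ follows from the measurability of $x_i(k)$ together with the continuity $x \mapsto K_x$ supplied by the Mercer kernel.

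Next I would check that every hypothesis of Theorem \ref{vnknoklfl} and Lemma \ref{jihubiranshoulian} is available in this instantiation. Assumption \ref{assumption3} yields Assumption \ref{assumption1}, and Assumption \ref{assumption4} yields Assumption \ref{assumption2} with $\beta_v \leq N\beta$. Assumption \ref{assumption5} delivers the uniform bound
\[
\|H_i(k)\|_{\LL(\HH_K,\mathbb R)}^2 = K(x_i(k), x_i(k)) \leq \kappa := \sup_{x \in \X} K(x,x) < \infty \quad \text{a.s.},
\]
so that $\|\H^*(k)\H(k)\|_{\LL(\HH_K^N)} \leq \kappa$ almost surely, which gives the moment bound (\ref{yinlitiaojian2}) of Lemma \ref{jihubiranshoulian} with $\rho_0 = \kappa$ for every $h \geq 1$. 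Setting $\HH_j := N_j$ and exploiting that $g(k)$ is $\F(kh-1)$-measurable (so the conditional expectation commutes with its action on $g(k)$), the PE condition (\ref{yinlitiaojian1}) coincides with the hypothesis (\ref{vnknknldldklsd}). Together with graph connectivity and Conditions \ref{condition1}--\ref{condition3}, Lemma \ref{jihubiranshoulian} then produces the required $L_2^2$-stability of $\{I_{\HH_K^N} - a(k)\H^*(k)\H(k) - b(k)\L_{\G} \otimes I_{\HH_K}, k \geq 0\}$.

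It remains to verify the first-moment condition (\ref{dinglitiaojian}) of Theorem \ref{vnknoklfl}. Because $a(k)\H^*(k)\H(k) + b(k)\L_{\G} \otimes I_{\HH_K}$ is positive self-adjoint with norm a.s.\ bounded by $a(k)\kappa + b(k)\|\L_{\G} \otimes I_{\HH_K}\|$, I would reproduce the truncation argument from the proof of Corollary \ref{xiaosirendetuilun}: using Condition \ref{condition2} pick $s_0$ such that $4(a(k)+b(k))(\kappa + \|\L_{\G}\otimes I_{\HH_K}\|) \leq 1$ for all $k \geq s_0$, whence $\|I_{\HH_K^N} - 4(a(k)\H^*(k)\H(k) + b(k)\L_{\G}\otimes I_{\HH_K})\|_{\LL(\HH_K^N)} \leq 1$ a.s., and define $\Gamma(k) := 4(a(k)+b(k))(\kappa + \|\L_{\G}\otimes I_{\HH_K}\|)$ for $k < s_0$ and $\Gamma(k) := 0$ for $k \geq s_0$, which is trivially summable. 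All hypotheses of Theorem \ref{vnknoklfl} are then in force, and the mean-square and almost sure strong consistency of (\ref{rkhs}) follow.

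The principal technical hurdle is not any single estimate but the measurability bookkeeping: ensuring that $H_i(k)$ is a bona fide random element in the strong-operator topology, that the conditional expectations $\E[K_{x_j(i)} \otimes K_{x_j(i)} \mid \F(kh-1)]$ are well-defined operator-valued objects acting on $L_2$-bounded $g(k)$, and that the hypothesis (\ref{vnknknldldklsd}) is genuinely the form (\ref{yinlitiaojian1}) demanded inside Lemma \ref{jihubiranshoulian}. Assumption \ref{assumption5} is expected to supply the Bochner integrability that legitimises these conditional expectations (via Pettis measurability and the rank-one structure of the tensor products), after which the reduction is purely formal.
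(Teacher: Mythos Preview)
Your proposal is correct and follows essentially the same route as the paper's own proof: instantiate the abstract framework with $\X=\HH_K$, $\Y_i=\mathbb R$, $H_i(k)f=\langle f,K_{x_i(k)}\rangle_K$ so that $H_i^*(k)H_i(k)=K_{x_i(k)}\otimes K_{x_i(k)}$, derive Assumptions \ref{assumption1}--\ref{assumption2} from Assumptions \ref{assumption3}--\ref{assumption4}, use Assumption \ref{assumption5} for the uniform a.s.\ bound on $\|\H^*(k)\H(k)\|$, pull the $\F(kh-1)$-measurable $g(k)$ out of the conditional expectation to match (\ref{vnknknldldklsd}) with (\ref{yinlitiaojian1}), and finish with the truncation argument for (\ref{dinglitiaojian}) before invoking Theorem \ref{vnknoklfl}. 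The paper spends most of its effort on exactly the measurability bookkeeping you flag at the end---showing $K_x\otimes K_x$ is strongly measurable in $\tau_{\text N}(\LL(\HH_K))$ via approximation by simple functions and continuity of $x\mapsto K_x$, so that Proposition \ref{tiaojianqiwangxingzhi} legitimises the commutation $\E[K_{x_j(i)}\otimes K_{x_j(i)}\,g(k)\mid\F(kh-1)]=\E[K_{x_j(i)}\otimes K_{x_j(i)}\mid\F(kh-1)]\,g(k)$---and your closing paragraph correctly anticipates this.
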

%\begin{proof}
%
%\end{proof}
\vskip 1mm

We now give some corollaries.

\vskip 1mm

\begin{corollary} \label{vnlllleleeemmem}
For the algorithm (\ref{rkhs}), suppose that $\G$ is connected,  Assumptions \ref{assumption5}-\ref{assumption4} and Conditions \ref{condition1}-\ref{condition3} hold. If there exist positive self-adjoint operators $N_i\in \mathscr L(\HH_K)$, $i=1,\cdots,N$ satisfying $\sum_{i=1}^NN_i>0$, and there exists a  fixed-length time period $h>0$, a constant $\mu_0>0$ and a nonnegative real sequence $\{\tau(k),k\ge 0\}$,  such that
\bna\label{vnkmeeeemefffff}
\max_{j\in \mathcal V}\Bigg\|N_j-\sum_{i=kh}^{(k+1)h-1}\E\left.\left[K_{x_j(i)}\otimes K_{x_j(i)}\right|\F(kh-1)\right]\Bigg\|_{\LL( \HH_K)}^2\leq \mu_0\tau(k)~\text{a.s.},
\ena
where $\sum_{k=0}^{\infty}\tau(k)<\infty$, then the algorithm (\ref{rkhs}) is both mean square and almost surely strongly consistent. Besides, the algorithm (\ref{rkhs}) is pointwisely almost surely strongly consistent, that is, $\lim\limits_{k\to \infty} f_{i}(k)(x)=f_{0}(x) $   \text{ a.s.},  $\forall \ x \in \mathscr{X}, \ i \in \mathcal{V}$.
%\begin{align}
%\lim_{k\to \infty} f_{i}(k)(x)=f_{0}(x), \ \text{a.s.}, \ \forall \ x \in \mathscr{X}, \ i \in \mathcal{V};\label{almostpointcon}\\
%\lim_{k\to \infty}\E\left[\left\| f_{i}(k)(x)-f_{0}(x)\right\|^{2}\right]=0, \ \text{a.s.}, \ \forall \ x \in \mathscr{X}, \ i \in \mathcal{V};\label{almostpointcon}
%\end{align}.
\end{corollary}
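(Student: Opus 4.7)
The plan is to reduce Corollary \ref{vnlllleleeemmem} to Theorem \ref{rkhsdingli}: the only hypothesis of Theorem \ref{rkhsdingli} not directly listed in the corollary is the infinite-dimensional spatio-temporal persistence of excitation condition (\ref{vnknknldldklsd}), and the stronger pointwise operator-norm bound (\ref{vnkmeeeemefffff}) should imply it in one step. Once almost sure strong consistency $\|f_i(k)-f_0\|_K\to 0$ is in hand, the pointwise statement will follow from the reproducing property together with Assumption \ref{assumption5}.

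To verify (\ref{vnknknldldklsd}), I would take an arbitrary $L_2$-bounded adaptive sequence $\{g(k),\F(kh-1),k\ge 0\}$ with values in $\HH_K$ and set $M:=\sup_{k\ge 0}\E[\|g(k)\|_K^2]<\infty$. Writing $T_j(k):=N_j-\sum_{i=kh}^{(k+1)h-1}\E\left[K_{x_j(i)}\otimes K_{x_j(i)}\big|\F(kh-1)\right]$, the elementary inequality $\|T_j(k)g(k)\|_K^2\le \|T_j(k)\|_{\LL(\HH_K)}^2\|g(k)\|_K^2$ combined with the pathwise bound (\ref{vnkmeeeemefffff}) (and the fact that $\tau(k)$ is deterministic) gives
\begin{align*}
\sum_{j=1}^N\sum_{k=0}^{\infty}\E\left[\|T_j(k)g(k)\|_K^2\right]
&\le \sum_{j=1}^N\sum_{k=0}^{\infty}\E\left[\|T_j(k)\|_{\LL(\HH_K)}^2\|g(k)\|_K^2\right]\\
&\le \sum_{j=1}^N\sum_{k=0}^{\infty}\mu_0\tau(k)\,\E[\|g(k)\|_K^2]
\le N\mu_0 M\sum_{k=0}^{\infty}\tau(k)<\infty,
\end{align*}
which is precisely (\ref{vnknknldldklsd}) for the same operators $N_j$ and integer $h$. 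Theorem \ref{rkhsdingli} then yields mean square and almost sure strong consistency, so in particular $\|f_i(k)-f_0\|_K\to 0$ a.s.\ for every $i\in \mathcal V$.

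For the pointwise claim, fix $x\in \X$ and $i\in \mathcal V$. By the reproducing property and Cauchy--Schwarz,
\begin{equation*}
|f_i(k)(x)-f_0(x)|=|\langle f_i(k)-f_0,K_x\rangle_K|\le \sqrt{K(x,x)}\,\|f_i(k)-f_0\|_K,
\end{equation*}
and Assumption \ref{assumption5} guarantees $\sqrt{K(x,x)}$ is finite, so the already-established almost sure norm convergence transfers immediately to $\lim_{k\to\infty}f_i(k)(x)=f_0(x)$ a.s. I anticipate no serious obstacle; the only delicate point is using the pathwise operator-norm bound (\ref{vnkmeeeemefffff}) inside an expectation, but the proof of Theorem \ref{rkhsdingli} already established that each $K_{x_j(i)}\otimes K_{x_j(i)}$ is a strongly measurable $\LL(\HH_K)$-valued random element in $L^1(\Omega;\LL(\HH_K))$ under Assumption \ref{assumption5}, so the requisite measurability and integrability are inherited for free.
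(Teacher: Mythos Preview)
Your proposal is correct and mirrors the paper's proof essentially step for step: both verify condition (\ref{vnknknldldklsd}) by bounding $\|T_j(k)g(k)\|_K^2\le \|T_j(k)\|_{\LL(\HH_K)}^2\|g(k)\|_K^2$, invoking the a.s.\ pathwise bound (\ref{vnkmeeeemefffff}) with deterministic $\tau(k)$, and summing to obtain $N\mu_0\sup_k\E[\|g(k)\|_K^2]\sum_k\tau(k)<\infty$, then appeal to Theorem \ref{rkhsdingli}; the pointwise claim is derived identically via the reproducing property and Cauchy--Schwarz. The only cosmetic difference is that the paper inserts an intermediate $\max_{j\in\mathcal V}$ before applying (\ref{vnkmeeeemefffff}), which you bypass without loss.
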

%\begin{proof}
%
%\end{proof}

\begin{remark}
Noting that Assumption \ref{assumption5} implies that $K_{x_j(i)}\otimes K_{x_j(i)}$ is a Bochner integrable random element with values in the Banach space $(\mathscr L(\HH_K),\tau_{\text{N}}(\mathscr L(\HH_K)))$, then the conditional expectations $\E[K_{x_j(i)}\otimes K_{x_j(i)}|\F(kh-1)]$, $i,k\ge 0$, $j\in \mathcal V$ uniquely exist by Lemma \ref{nvkvpeoeo}.
\end{remark}

Especially, if the input data $\{(x_1(k),\cdots,x_N(k)),k\ge 0\}$ are i.i.d, then we have the following corollary.

\begin{corollary}\label{rkhsdinglijjjjj}
For the algorithm (\ref{rkhs}), suppose that $\{(x_1(k),\cdots,x_N(k)),k\ge 0\}$ and $\{(v_1(k),\cdots,\\v_N(k)),k\ge 0\}$ are i.i.d. sequences and they are mutually independent, and $\G$ is connected. If Assumptions \ref{assumption5}-\ref{assumption4} and Conditions \ref{condition1}-\ref{condition3} hold, and
  \bna\label{nklnkle}
\E\Bigg[\sum_{j=1}^NK_{x_j(0)}\otimes K_{x_j(0)}\Bigg]>0,
  \ena
then the algorithm (\ref{rkhs}) is both mean square and almost surely strongly consistent.
\end{corollary}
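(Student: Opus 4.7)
The plan is to derive this as a direct consequence of Corollary \ref{vnlllleleeemmem} by exploiting the i.i.d.\ structure to collapse the excitation inequality into a trivial equality, rather than verifying the more general condition of Theorem \ref{rkhsdingli} from scratch.

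First I would define the candidate operators
$$
N_j := \E\bigl[K_{x_j(0)}\otimes K_{x_j(0)}\bigr], \qquad j\in\mathcal V,
$$
and check that each $N_j$ is a well-defined positive self-adjoint element of $\LL(\HH_K)$. The boundedness and strong measurability of $x\mapsto K_x\otimes K_x$ as a map from $\X$ into $(\LL(\HH_K),\tau_{\text{N}}(\LL(\HH_K)))$, along with the norm bound $\|K_x\otimes K_x\|_{\LL(\HH_K)}\le \sup_{x\in\X}K(x,x)<\infty$ from Assumption \ref{assumption5}, have already been established inside the proof of Theorem \ref{rkhsdingli}, so the Bochner integral defining $N_j$ makes sense. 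Self-adjointness follows from the symmetry $\langle(K_x\otimes K_x)f,g\rangle_K=f(x)g(x)=\langle f,(K_x\otimes K_x)g\rangle_K$ combined with Proposition 2.6.13 in \cite{hy}, and positivity from $\langle N_j f,f\rangle_K=\E[|f(x_j(0))|^2]\ge 0$. The requirement $\sum_{j=1}^N N_j>0$ is then precisely the hypothesis (\ref{nklnkle}).

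Next I would take $h=1$ in Corollary \ref{vnlllleleeemmem}. The key observation is that, because $\{(x_1(k),\dots,x_N(k)),k\ge 0\}$ is i.i.d.\ and mutually independent of $\{(v_1(k),\dots,v_N(k)),k\ge 0\}$, the vector $(x_1(k),\dots,x_N(k))$ is independent of the $\sigma$-algebra $\F(k-1)$ in the representation (\ref{nvwjoijjff}) already obtained in the proof of Theorem \ref{rkhsdingli}. Since $x\mapsto K_x\otimes K_x$ is continuous, hence Borel-measurable, the random element $K_{x_j(k)}\otimes K_{x_j(k)}$ is independent of $\F(k-1)$, and identical distribution yields
$$
\E\!\left[K_{x_j(k)}\otimes K_{x_j(k)}\,\big|\,\F(k-1)\right]=\E\!\left[K_{x_j(0)}\otimes K_{x_j(0)}\right]=N_j\quad\text{a.s.}
$$
With $h=1$ the sum $\sum_{i=kh}^{(k+1)h-1}$ in condition (\ref{vnkmeeeemefffff}) reduces to a single term, so
$$
\max_{j\in\mathcal V}\Bigl\|N_j-\E\!\left[K_{x_j(k)}\otimes K_{x_j(k)}\,\big|\,\F(k-1)\right]\Bigr\|_{\LL(\HH_K)}^{2}=0\quad\text{a.s.},
$$
and (\ref{vnkmeeeemefffff}) is satisfied trivially, e.g.\ with $\mu_0=1$ and $\tau(k)\equiv 0$, which is obviously summable.

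Since $\G$ is connected and Assumptions \ref{assumption3}--\ref{assumption5} together with Conditions \ref{condition1}--\ref{condition3} hold by hypothesis, I would then invoke Corollary \ref{vnlllleleeemmem} to conclude that the algorithm (\ref{rkhs}) is both mean square and almost surely strongly consistent, finishing the proof. There is no real obstacle here; the only mildly delicate point is the rigorous justification that independence of $x_j(k)$ from $\F(k-1)$ transfers to $K_{x_j(k)}\otimes K_{x_j(k)}$, which is handled by the continuity/measurability results for $x\mapsto K_x\otimes K_x$ already developed within the proof of Theorem \ref{rkhsdingli}.
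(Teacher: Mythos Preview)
Your proposal is correct and follows essentially the same route as the paper: define $N_j$ as the expectation of $K_{x_j(0)}\otimes K_{x_j(0)}$, use the i.i.d.\ and mutual-independence assumptions to reduce the conditional expectation in condition (\ref{vnkmeeeemefffff}) to $N_j$ so that the left-hand side vanishes identically, check $\sum_j N_j>0$ from (\ref{nklnkle}), and invoke Corollary \ref{vnlllleleeemmem}. The paper keeps $h$ general (writing $N_j=h\,\E[K_{x_j(0)}\otimes K_{x_j(0)}]$) whereas you take $h=1$, but this is an immaterial cosmetic difference.
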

%\begin{proof}
%
%\end{proof}

\begin{remark}
For the centralized online learning with input data drawn independently from the probability measure $\rho_{\X}$ on $\X$,  Tarr\`{e}s  and Yao (\cite{Tarres}) defined the covariance operator of the probability measure $\rho_{\X}$ in $\HH_K$ as $L_K:\HH_K\to \HH_K$,
$$
L_K(f)(y)=\int_{\Omega}K(x,y)f(x)\dd\rho_{\X},~\forall\ f\in \HH_K.
$$
By the reproducing property and Assumption \ref{assumption5}, it follows that $L_K=\E[K_x\otimes K_x]$. This implies that for the centralized online learning problem in RKHS with i.i.d. data, the condition (\ref{nklnkle}) in Corollary \ref{rkhsdinglijjjjj} just degenerates to that in \cite{Tarres}: the covariance operator $L_K>0$.
\end{remark}

\section{Numerical Simulation}
We consider an undirected connected  graph with a node set $\mathcal{V}= \{1,2,\ldots, 10\}$  and its  weighted adjacency  matrix is given by $A=[a_{i,j}]$, where  $a_{1,2}= a_{2,1}=0.2,$ $a_{1,4}=a_{4,1}=0.4$, $a_{2,3}=a_{3,2}=0.1$, $a_{2,4}=a_{4,2}=0.3$, $a_{3,5}=a_{5,3}=0.5$, $a_{4,5}=a_{5,4}=0.6$, $a_{4,6}=a_{6,4}=0.8$, $a_{5,6}=a_{5,6}=0.7,$ $a_{6,7}=a_{7,6}=0.3$, $a_{7,8}=a_{8,7}=0.2$, $a_{8,9}=a_{9,8}=0.9$, $a_{9,10}=a_{10,9}=0.1$ and for the remaining positions, $a_{i,j}=0.$
	%\begin{align*}
%		A=[a_{ij}]
%		=
%		\left(
%		\begin{array}{cccccccccc}
%			0.1 &   0.2 &  0 &  0.4 &  0 &  0 &  0 &  0 &  0 &  0\\
% 0.2 &   0 &  0.1  &  0.3 &  0 &  0 &  0 &  0 &  0 &  0\\
% 0 &  0.1 &  0 &  0 &  0.5 &  0 &  0 &  0  & 0  & 0 \\
%0.4 &  0.3  &  0 &  0  &  0.6 &  0.8 &  0  & 0 &  0 &  0\\
% 0 &  0 &  0.5 &   0.6 &  0  & 0.7 &  0 &  0  & 0 &  0\\
% 0  & 0  &  0 &  0.8 &  0.7 &  0  & 0.3  & 0 &  0 &  0\\
% 0  & 0 &  0  & 0 &  0 &  0.3 &  0 &  0.2 &  0 &  0\\
% 0 &  0 &  0 &  0 &  0 &  0  & 0.2 &  0 &  0.9 &  0\\
% 0  & 0  & 0 &  0  & 0 &  0 &  0  & 0.9 &  0 &  0.1\\
% 0  & 0  & 0 &  0 &  0 &  0 &  0  & 0 &  0.1  & 0
%		\end{array}
%		\right ).
%	\end{align*}

For $i\in \mathcal{V}$, the observation data of node $i$ at instant $k$ is $(x_{i}(k),y_{i}(k))$, where
 $y_{i}(k) = f^*(x_{i}(k)) + v_{i}(k)$,  $f^*(x) =e^{-(x-1)^2},\  \forall \ x \in \mathscr{X} = [-2,4]$ is the unknown true function to be estimated, the input data $ x_{i}(k),\ i = 1,2,\ldots, 10,\ k\in \mathbb{N}$ are  independent random variables sampling according to the following rules. For  $k\in \mathbb{N}$,   $ x_{i}(2k)$ and $ x_{i}(2k+1)$ are   with uniform distributions on $\big[-2,4- \frac{3}{k+1}\big]$ and   $\big[\frac{3}{ k+1}-2,4\big]$, respectively.   Take the kernel function as $K(x,y) = e^{-(x-y)^2},\  \forall \ x,\ y \in \left[-2,4\right]$ and  we know that the unknown true function $f^* = K(x,1) \in \mathscr{H}_K$, where $\mathscr{H}_K$ is the RKHS  with kernel $K$.
	
We sample $1000$ points $\{z_{l},\ l=1,\ldots,1000\}$ with $z_{l}=-2+\frac{6(l-1)}{1000},\  l=1,\ldots,1000.$ We use the algorithm (\ref{rkhs})  to iterate the values of $f_{i}(k)$ at the sampled points, that is,
\begin{align}
&f_i(k+1)(z_{l})\notag\\=&f_i(k)(z_{l})+a(k)(y_i(k)-f_i(k)(x_i(k)))  K(x_i(k), z_{l} )
+b(k)\sum_{j\in \N_i}a_{ij}(f_j(k)(z_{l})-f_i(k)(z_{l})),\label{simu}
\end{align}
where $f_{i}(0)(z_{l})=0$, $\ l=1,\ldots,1000,\ k\ge 0,~i\in \mathcal V$.
If $x_i(k)\notin \{z_{l},\ l=1,\ldots,1000\}$, then we get the approximation of $f_i(k)(x_i(k))$ by the cubic spline interpolation method.

The measurement noises $v_{i}(k),\ i\in \mathcal{V},\  k\in \mathbb{N}$ are the i.i.d. Gaussian noises  with the normal distribution $N(0, 0.1)$ and are independent of the input data. The algorithm   gains are
	$ a(k) =   (k+1)^{-0.6},\ b(k) = (k+1)^{-1},\forall\ k\geq 1. $
Fig. 1(a) and Fig. 1(b)  show the  values of the true function and   the outputs  $f_i(k), \ i\in \mathcal V$  of (\ref{simu}), i.e.
the values of $f^{*}$ and $f_i(k), \ i\in \mathcal V$  at all sampled points in $[-2,4]$ for $k=1000$  and $k=100000$ iterations, respectively. It can be seen that the  estimations of all nodes $f_i(k), \ i\in \mathcal V$ converge to the unknown true function $f^{*}$ pointwisely almost surely as $k$ increases. This  is consistent with  Corollary \ref{vnlllleleeemmem}.

In addition to Gaussian measurement noises, Laplace measurement  noises  also appear  in some signal processing problems (\cite{Alliney1994}). We assume that $v_{i}(k),\  k\in \mathbb{N}, \ i\in \mathcal{V}$ are the  i.i.d. Laplace noises  with  mean $0$ and variance $2$ and are  independent of the input data. The algorithm   gains are
	$ a(k) =   (k+1)^{-0.8},\ b(k) = (k+1)^{-0.9},\forall\ k\geq 1.$  Fig. \ref{fig:mainfig2}(a) and Fig. \ref{fig:mainfig2}(b)  show the  values of the true function and   the outputs  $f_i(k), \ i\in \mathcal V$  of (\ref{simu}).
%   i.e.
%the values of $f^{*}$ and $f_i(k), \ i\in \mathcal V$  at all sampled points in $[-2,4]$ for $k=1000$  and $k=100000$ iterations, respectively.
It can be seen that the  estimations of all nodes $f_i(k), \ i\in \mathcal V$ converge to the unknown true function $f^{*}$ pointwisely almost surely as $k$ increases. This  is consistent with  Corollary \ref{vnlllleleeemmem}.

\begin{figure}[htbp]
    \centering
    \subfigure{\includegraphics[width=0.5\linewidth, height=6cm]{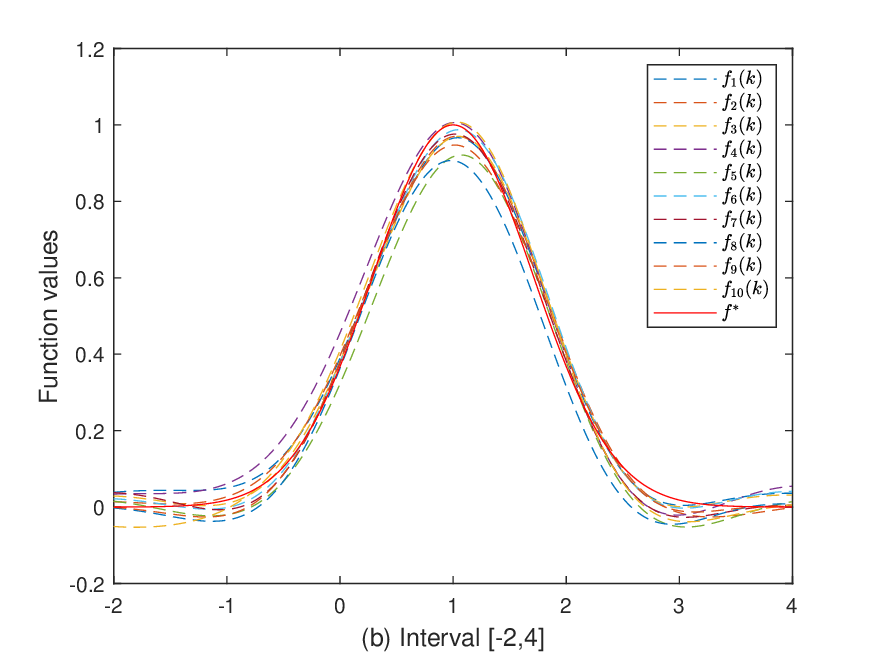}}
      \hspace{-5mm}
  \subfigure{\includegraphics[width=0.5\linewidth, height=6cm]{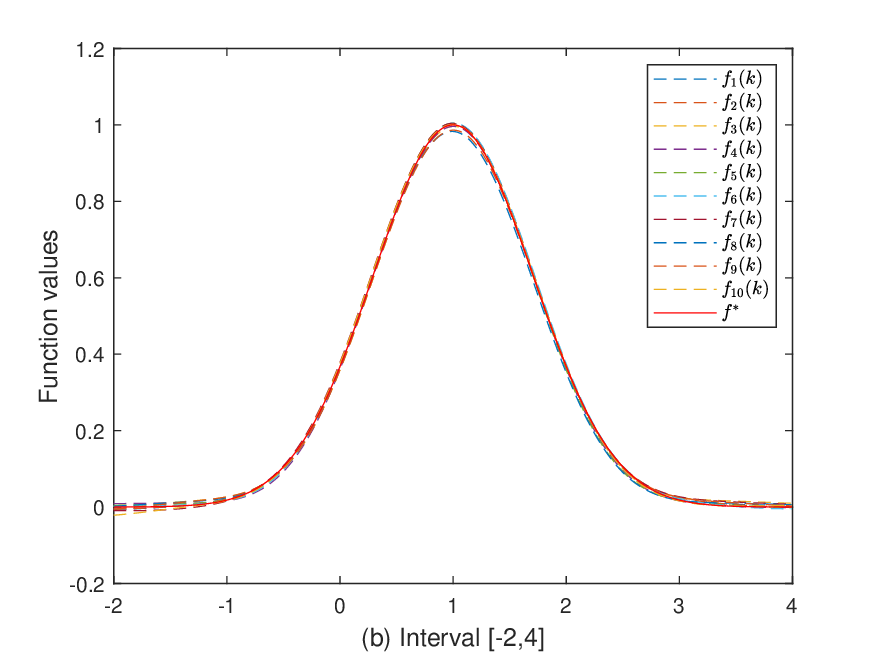}}
   \caption{ Gaussian  measurement noises: (a)  estimates  $f_{i}(k),\ i=1,\cdots,10$  of nodes   for $k=1000$; (b)   estimates  $f_{i}(k),\ i=1,\cdots,10$  of nodes for $k=100000$.}
    \label{fig:mainfig}
\end{figure}

\begin{figure}[htbp]
    \centering
    \subfigure{\includegraphics[width=0.5\linewidth, height=6cm]{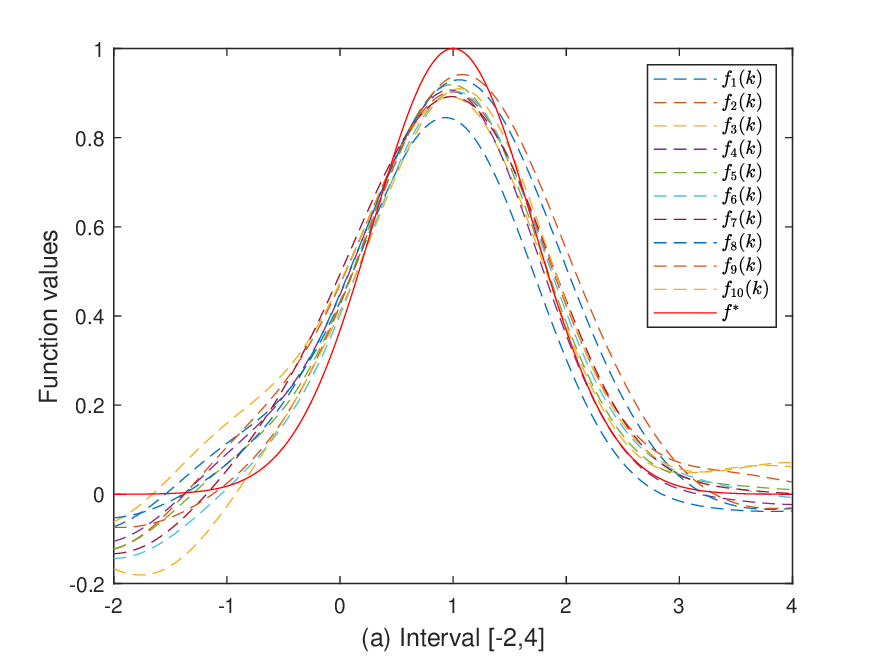}}
      \hspace{-5mm}
  \subfigure{\includegraphics[width=0.5\linewidth, height=6cm]{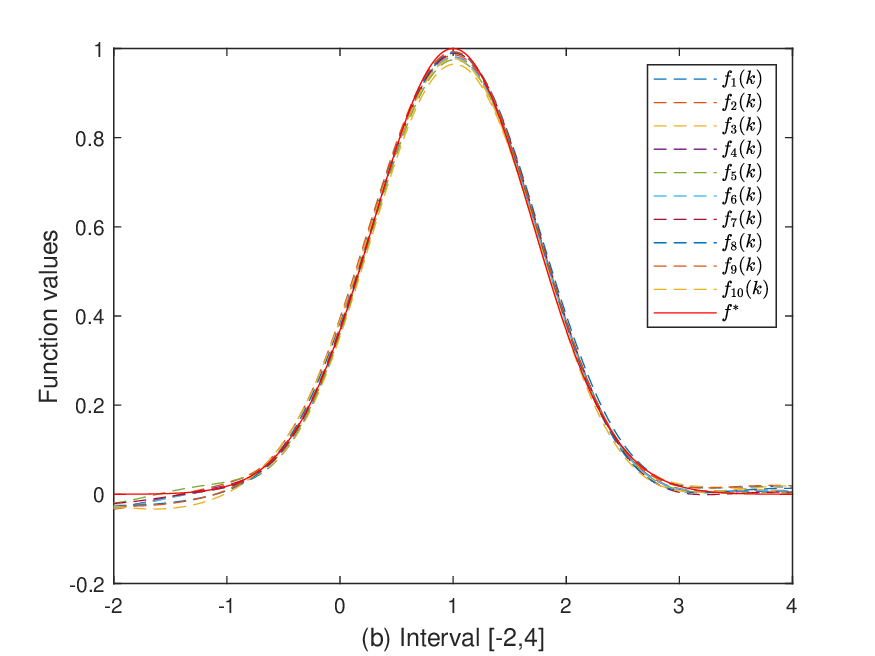}}
   \caption{Laplace measurement noises: (a)  estimates  $f_{i}(k),\ i=1,\cdots,10$  of nodes   for $k=1000$; (b)   estimates  $f_{i}(k),\ i=1,\cdots,10$  of nodes for $k=100000$.}
    \label{fig:mainfig2}
\end{figure}

\section{Conclusions}
We have established a framework of random inverse problems with online measurements over graphs, and present a decentralized online learning algorithm with online data streams.
%, which unifies the distributed parameter estimation in Hilbert spaces and the least mean square problem in reproducing kernel Hilbert spaces.
It is not required that the random forward operators satisfy special statistical assumptions such as mutual independence, spatio-temporal independence or stationarity. Each node updates its estimate at the next instant by using its new observation and a weighted sum of its own and neighbors' estimates.
Firstly, by  exploiting the probabilistic properties of random elements with values in different topological spaces in a stochastic framework, we propose the $L_p^q$-stability condition on the sequence of operator-valued random elements, and establish the $L_2$-asymptotic stability theory of a class of   inhomogeneous random difference equations in Hilbert spaces with $L_2$-bounded martingale difference terms. Subsequently, we transform the asymptotic stability of these kinds of infinite-dimensional random difference equations into the $L_2^2$-stability condition on the operator-valued random elements. We have obtained an intuitive sufficient condition on the convergence of decentralized online learning algorithms for random inverse problems over graphs which is imposed on the random forward operators and the Laplacian matrix of the graph, i.e., the \emph{infinite-dimensional spatio-temporal persistence of excitation} condition.  We have proved that if the forward operators over connected graphs satisfy the \emph{infinite-dimensional spatio-temporal persistence of excitation} condition, then all nodes' estimates are mean square and almost surely strongly consistent. Finally, by equivalently transforming the distributed learning problem in RKHS to the random inverse problem over graphs, we propose a decentralized online learning algorithm in RKHS with non-stationary online data streams, and prove that the algorithm is mean square and almost surely strongly consistent if the operators induced by the random input data satisfy the \emph{infinite-dimensional spatio-temporal persistence of excitation} condition.

There are still many promising  directions deserving for future research. (i) The case with multiplicative  measurement noises. Only additive measurement noises are considered in (\ref{measuramentmodel}),
%In the measurement equation , the measurement noises are additive,
 while multiplicative  noises  appear in some realistic scenarios, such as synthetic aperture radar images (\cite{Varadarajan}). For this case, the estimation error equation belongs to a class of randomly time-varying difference equations that do not satisfy the independence condition required in Lemma \ref{wendingxing}.
%As a result, Lemma \ref{wendingxing} cannot be applied to analyze the convergence of the algorithm.
%Therefore, new techniques  should be developed in future.
 (ii) The case with  unbalanced and directed communication graphs. For this case, the operator $\mathcal{L}_{\mathcal{G}}\otimes I_{\mathscr{X}}$ may not be positive, which implies that   Lemma \ref{jihubiranshoulian} does not hold.  It is crucial to develop novel proof techniques or leverage the algorithm presented in \cite{Li2020}  for addressing unbalanced  directed graphs.
(iii) Extension to the distributed fixed-point finding problem (\cite{Li2024}) with measurement noises. Li \emph{et al}. (\cite{Li2024}) investigated the distributed fixed-point finding problem for a global operator. Two algorithms are proposed for two scenarios, both of which use constant algorithm gains and are proven to achieve linear convergence.  Noting that the decaying algorithm gains in (\ref{algorithm}) are introduced to attenuate the measurement noises, thus the convergence rate of (\ref{algorithm}) can not be  linear.  While \cite{Li2024} does not consider the case with the measurement noises  of the  global operator, it is worth  extending our method to address the problem in \cite{Li2024}  with  measurement noises.

\begin{appendices}

\section{Proofs of Theorems and Corollaries in Sections 3-4}
\label{appendixb}
 \setcounter{equation}{0}
\renewcommand{\theequation}{A.\arabic{equation}}
\emph{Proof of Theorem \ref{dingliyi1}.}
Denote $F(k)=a(k)\H^*(k)\H(k)+b(k)\L_{\G}\otimes I_{\X}$ and $G(k)=a(k)\mathcal H^*(k)$, respectively. Notice that $F(k)\ge 0$ and the estimation error equation (\ref{error}) can be rewritten as the following random difference equation
$
e(k+1)=\left(I_{\X^N}-F(k)\right)e(k)+G(k)v(k).
$
 We will briefly sketch the proof. Firstly, choosing  algorithm gains properly,
  we verify the conditions in    Lemma \ref{wendingxing} and   establish conclusion I. Then, by the nonnegative
supermartingale convergence theorem, we demonstrate that $\|e(k)\|^2$ converges almost surely. This together with conclusion I leads to conclusion II.

Given the initial value $e(0)\in \X^N$, by Proposition \ref{nlllwwieiie}.(a)-(c), we know that $\{e(k),k\ge 0\}$ is a random sequence with values in the Hilbert space $(\X^N,\tau_{\text{N}}(\X^N))$. On the one hand, it follows from Assumptions \ref{assumption1} and \ref{assumption2} that $\{v(k),k\ge 0\}$ is independent of $\{F(k),G(k),k\ge 0\}$, and $\sup_{k\ge 0}\E[\|v(k)\|^2]\leq \b_v$. On the other hand, by the condition (\ref{qafgs}), we get $\E[\|I_{\X^N}-F(k)\|^4|\F(k-1)]\leq 1+\gamma(k)~\text{a.s.}$, which gives
\bna\label{xlms}
&&~~~~\E\left[\|a(k)\H^*(k)\H(k)+b(k)\L_{\G}\otimes I_{\X}\|^2\right]\cr
&&\leq \E\left[\|I_{\X^N}-(I_{\X^N}-F(k))\|^2\right]\cr
&&\leq 2\left(1+\E\left[\|I_{\X^N}-F(k)\|^2\right]\right)\cr
&&\leq 2\left(1+\E\left[\|I_{\X^N}-F(k)\|^4\right]^{\frac{1}{2}}\right)\cr
&&\leq 2\left(1+\sqrt{1+\gamma(k)}\right).
\ena
Noting that $\L_{\G}$ is positive semi-definite, we have $a(k)\H^*(k)\H(k)+b(k)\L_{\G}\otimes I_{\X}\ge a(k)\H^*(k)\\ \H(k)\ge 0~\text{a.s.}$, which leads to $\|a(k)\H^*(k)\H(k)\|^2\leq \|a(k)\H^*(k)\H(k)+b(k)\L_{\G}\otimes I_{\X}\|^2$. By (\ref{xlms}), we obtain
\bna\label{xfwee}
&&~~~~\sup_{k\ge 0}\E\left[\|G(k)\|^4\right]\cr &&=\sup_{k\ge 0}\left\{a^2(k)\E\left[\left\|a(k)\H^*(k)\H(k)\right\|^2\right]\right\}\cr
&&\leq \sup_{k\ge 0}\left\{a^2(k)\E\left[\|a(k)\H^*(k)\H(k)+b(k)\L_{\G}\otimes I_{\X}\|^2\right]\right\}\cr
&&\leq 2\sup_{k\ge 0}\left\{a^2(k)\left(1+\sqrt{1+\gamma(k)}\right)\right\}\cr
&&<\infty,
\ena
where the last inequality is obtained from Condition \ref{condition2} and $\sum_{k=0}^{\infty}\gamma(k)<\infty$.\\
(I) If $\sup_{k\ge 0}\E[\|\H(k)\|^2]<\infty$, it follows from Condition \ref{condition2} that
\ban
\sum_{k=0}^{\infty}\E\left[\|G(k)\|^2\right]\leq \left\{\sup_{k\ge 0}\E\left[\|\H(k)\|^2\right]\right\}\sum_{k=0}^{\infty}a^2(k)<\infty.
\ean
By Lemma \ref{wendingxing}, the sequence of solutions to the estimation error equation (\ref{error}) is $L_2$-asymptotically stable, i.e., $\lim_{k\to\infty}\E[\|f_i(k)-f_0\|^2]=0,~i\in \mathcal V$. \\
(II) If $\E[\|\H(k)\|^2|\F(k-1)]\leq \rho_0~\text{a.s.}$, on the one hand, noting that $\sup_{k\ge 0}\E[\|\H(k)\|^2]\leq \rho_0<\infty$, by the above conclusion  I, we have $\lim_{k\to\infty}\E[\|e(k)\|^2]=0$. On the other hand, it follows from the condition (\ref{qafgs}) and (\ref{xfwee}) that
\ban
\sup_{k\ge 0}\E\left[\|(I_{\X^N}-F(k))G(k)\|^2\right]\leq \sup_{k\ge 0}\E\left[\|I_{\X^N}-F(k)\|^4+\|G(k)\|^4\right]<\infty.
\ean
Thus, by   (\ref{error}), Assumptions \ref{assumption1} and \ref{assumption2}, Proposition 2.6.31 in \cite{hy} and Proposition \ref{lemmaA6}, we get
\ban
&&~~~~\E\left.\left[\|e(k+1)\|^2\right|\F(k-1)\right]\cr
&&=\E\left.\left[\|\left(I_{\X^N}-F(k)\right)e(k)+G(k)v(k)\|^2\right|\F(k-1)\right]\cr
&&=\E\left.\left[\|(I_{\X^N}-F(k))e(k)\|^2\right|\F(k-1)\right]+\E\left.\left[\|G(k)v(k)\|^2\right|\F(k-1)\right]\cr
&&~~~+2\E\left.\left[\langle e(k),(I_{\X^N}-F(k))G(k)v(k)\rangle \right|\F(k-1)\right]\cr
&&=\E\left.\left[\|(I_{\X^N}-F(k))e(k)\|^2\right|\F(k-1)\right]+\E\left.\left[\|G(k)v(k)\|^2\right|\F(k-1)\right]\cr
&&~~~+2\langle e(k),\E\left.\left[(I_{\X^N}-F(k))G(k)v(k)\right|\F(k-1)\right]\rangle\cr
&&=\E\left.\left[\|(I_{\X^N}-F(k))e(k)\|^2\right|\F(k-1)\right]+\E\left.\left[\|G(k)v(k)\|^2\right|\F(k-1)\right]\cr
&&~~~+2\langle e(k),\E\left.\left[(I_{\X^N}-F(k))G(k)\E[v(k)|\F(k-1)]\right|\F(k-1)\right]\rangle\cr
&&=\E\left.\left[\|(I_{\X^N}-F(k))e(k)\|^2\right|\F(k-1)\right]+\E\left.\left[\|G(k)v(k)\|^2\right|\F(k-1)\right]\cr
&&\leq \E\left.\left[\|(I_{\X^N}-F(k))\|^2\right|\F(k-1)\right]\|e(k)\|^2+\E\left.\left[\|G(k)v(k)\|^2\right|\F(k-1)\right]\cr
&&\leq \E\left.\left[\|(I_{\X^N}-F(k))\|^4\right|\F(k-1)\right]^{\frac{1}{2}}\|e(k)\|^2+\E\left.\left[\|G(k)v(k)\|^2\right|\F(k-1)\right]\cr
&&\leq \left(1+\gamma(k)\right)^{\frac{1}{2}}\|e(k)\|^2+\b_v\E\left.\left[\|G(k)\|^2\right|\F(k-1)\right]\cr
&&\leq \left(1+\frac{1}{2}\gamma(k)\right)\|e(k)\|^2+\rho_0\b_va^2(k)~\text{a.s.}
\ean
Noting that $\sum_{k=0}^{\infty}\gamma(k)<\infty$ and $\sum_{k=0}^{\infty}a^2(k)<\infty$, it follows from Lemma \ref{lemmaA3} that $\|e(k)\|^2$ converges almost surely, which together with $\lim_{k\to\infty}\E[\|e(k)\|^2]=0$ gives $\lim\limits_{k\to\infty}e(k)=0~\text{a.s.}$
$\hfill\square$

\vskip 2mm

\emph{Proof of Theorem \ref{vnknoklfl}.}
By the conditions (\ref{yinlitiaojian1})-(\ref{yinlitiaojian2}) and Lemma \ref{jihubiranshoulian}, it is known that $\{I_{\X^N}-a(k)\H^*(k)\H(k)-b(k)\L_{\G}\otimes I_{\X},k\ge 0\}$ is $L_2^2$-stable w.r.t. $\{\F(k),k\ge 0\}$. Denote $D(k)=a(k)\H^*(k)\H(k)+b(k)\L_{\G}\otimes I_{\X}$. By the condition (\ref{yinlitiaojian2}), we know that
\begin{align}\label{cmllemfnn}
&\E\left[\|D(k)\|^r|\F(k-1)\right]\notag\\
 \leq & \Big(\E\big[\|a(k)\H^*(k)\H(k)+b(k)(\L_{\G}\otimes I_{\X})\|^{2^h}\big|\F(k-1)\big]\Big)^{\frac{r}{2^h}}\notag\\
 \leq & \max\{a(k),b(k)\}^r\Big(2^{2^h-1}\E\left.\left[\|\H^*(k)\H(k)\|^{2^h}\right|\F(k-1)
 \right]+2^{2^h-1}\|\L_{\G}\otimes I_{\X}\|^{2^h}\Big)^{\frac{r}{2^h}}\notag\\
 \leq & 2^r\left(a^r(k)+b^r(k)\right)\bigg(\left(\E\left.\left[\|\H^*(k)\H(k)\|^{2^h}\right|
 \F(k-1)\right]\right)^{\frac{r}{2^h}}+\|\L_{\G}\otimes I_{\X}\|^{r}\bigg)\notag\\
 \leq& 2^r (a^r(k)+b^r(k))(\rho_0^r+\|\L_{\G}\otimes I_{\X}\|^r)\notag\\
 \leq & 2^r(a^r(k)+b^r(k))\rho_1^r~\text{a.s.},~\forall \ 1\leq r\leq 4,
\end{align}
where $\rho_1=\rho_0+\|\L_{\G}\otimes I_{\X}\|$. By the condition (\ref{dinglitiaojian}) and (\ref{cmllemfnn}), we get
 %$\E\Big[\|I_{\X^N}- (a(k)\H^*(k)\H(k)+b(k)\L_{\G}\otimes I_{\X} )\|^4|\F(k-1)\Big]
% =  \E\Big[\big\|  (I_{\X^N}- (a(k)\H^*(k)\H(k)
%  +b(k)\L_{\G}\otimes I_{\X} ) )^4\big\||\F(k-1)\Big]
% = \E\Big[\big\|I_{\X^N}-4D(k)+6D^2(k)-4D^3(k)
%  +D^4(k)\big\|
%  |\F(k-1)\Big]
% \leq   \E\Big[\|I_{\X^N}-4D(k)\|+6\|D(k)\|^2+4\|D(k)\|^3
%  +\|D(k)\|^4 |\F(k-1)\Big]
% \leq   1+\gamma(k)~\text{a.s.},$
\begin{align}
&\E\left[\|I_{\X^N}-\left(a(k)\H^*(k)\H(k)+b(k)\L_{\G}\otimes I_{\X}\right)\|^4|\F(k-1)\right]\notag\\
 =& \E\Big[\big\|  (I_{\X^N}- (a(k)\H^*(k)\H(k)
 +b(k)\L_{\G}\otimes I_{\X} ) )^4\big\||\F(k-1)\Big]\notag\\
 =& \E\Big[\big\|I_{\X^N}-4D(k)+6D^2(k)-4D^3(k) +D^4(k)\big\|
  |\F(k-1)\Big]\notag\\
 \leq & \E\Big[\|I_{\X^N}-4D(k)\|+6\|D(k)\|^2+4\|D(k)\|^3
  +\|D(k)\|^4 |\F(k-1)\Big]
 \leq   1+\gamma(k)~\text{a.s.},\notag
\end{align}
where $\gamma(k)=\Gamma(k)+6(a^2(k)+b^2(k))\rho_1^2+4(a^3(k)+b^3(k))\rho_1^3+(a^4(k)+b^4(k))\rho_1^4$. From Condition \ref{condition2} and the condition (\ref{dinglitiaojian}), we know that $\sum_{k=0}^{\infty}\gamma(k)<\infty$, which together with Theorem \ref{dingliyi1} implies that algorithm (\ref{algorithm}) is both mean square and almost surely strongly consistent.
$\hfill\qed$

\vskip 2mm

\emph{Proof of Corollary \ref{xiaosirendetuilun}.}
We will briefly sketch the proof. By the independence theory of random elements with values in topological   spaces and the measurability theory of mappings with values in topological   spaces, we
 construct $\HH_j,\ j\in \mathcal V$ satisfying the conditions in Theorem \ref{vnknoklfl} and verify the other conditions in Theorem \ref{vnknoklfl}.

  Noting that $\{\H(k),k\ge 0\}$ is an i.i.d. sequence with values in the space $(\mathscr L(\X^N,  \bigoplus_{i=1}^N\Y_i),\\ \tau_{\text{S}}(\mathscr L(\X^N, \bigoplus_{i=1}^N\Y_i)))$, from Definition \ref{dulixing}, it can be verified that   $\{\H^*(k)\H(k)x,\ x$ $\in\X^N, k\ge 0\}$ is an i.i.d. sequence with values in the Banach space $(\X^N,\tau_{\text{N}}(\X^N))$. By Proposition E.1.10 in \cite{hy2}, we know that $\{\|\H^*(k)\H(k)\|,k\ge 0\}$ and $\{\|I_{\X^N}-(a(k)\H^*(k)\H(k)+b(k)\L_{\G}\otimes I_{\X})\|,k\ge 0\}$ are both independent random sequences. By Condition \ref{condition2}, there exists an integer $s_0>0$, such that $a(k)+b(k)\leq (4\rho^2_0+4\|\L_{\G}\otimes I_{\X}\|)^{-1}$, $\forall\ k\ge s_0$. We define the nonnegative real sequence $\{\Gamma(k),k\ge 0\}$ as
\bna%\label{owwwww0}
\Gamma(k)=\begin{cases}
4(a(k)+b(k))\left(\rho^2_0+\|\L_{\G}\otimes I_{\X}\|\right),& 0\leq k<s_0;\\
0, & k\ge s_0,\notag
\end{cases}
\ena
which shows that $\sum_{k=0}^{\infty}\Gamma(k)<\infty$. Noting that $\{\H(k),k\ge 0\}$ are i.i.d. and
 $P\{\|\H(0)\|\leq \rho_0\}=1$, it can be verified that $P\{\|\H(k)\|\leq \rho_0\}=1$, $k=0,1,...$, and
$\\\|I_{\X^N}-4(a(k)\H^*(k)\H(k)+b(k)\L_{\G}\otimes I_{\X})\|\leq 1+4(a(k)+b(k))(\rho^2_0+\|\L_{\G}\otimes I_{\X}\|)~\text{a.s.}$, $\forall\ k\ge 0$. Then, we obtain
\bna%\label{owwwww1}
&&~~~\left.\E\left[\left\|I_{\X^N}-4\left(a(k)\H^*(k)\H(k)+b(k)\L_{\G}\otimes I_{\X}\right)\right\|\right|\F(k-1)\right]\cr
&&=\E\left[\left\|I_{\X^N}-4\left(a(k)\H^*(k)\H(k)+b(k)\L_{\G}\otimes I_{\X}\right)\right\|\right]\cr
&&=\E\bigg[\sup_{\|x\|=1}\left|\left\langle I_{\X^N}-4\left(a(k)\H^*(k)\H(k)+b(k)\L_{\G}\otimes I_{\X}\right)x,x\right\rangle \right|\bigg]\cr
&&=\E\bigg[\sup_{\|x\|=1}\left|1-4\left\langle\left(a(k)\H^*(k)\H(k)+b(k)\L_{\G}\otimes I_{\X}\right)x,x\right\rangle \right|\bigg]\cr
&&=\E\bigg[1-4\inf_{\|x\|=1}\left\langle\left(a(k)\H^*(k)\H(k)+b(k)\L_{\G}\otimes I_{\X}\right)x,x\right\rangle \bigg]\cr
%&&=\E\left[\left\|I_{\X^N}-\left(a(k)\H^*(0)\H(0)+b(k)\L_{\G}\otimes I_{\X}\right)\right\|^4\right]\cr
 &&\leq 1~\text{a.s.},~\forall \ k\ge s_0.\notag
\ena
Then, we get
\begin{align}\label{taukcnt}
  &\E\big[ \|I_{\X^N}-4 (a(k)\H^*(k)\H(k)+b(k)\L_{\G}\otimes I_{\X} ) \| |\F(k-1)\big]\notag\\
  \leq & 1+\Gamma(k)~\text{a.s.},~\forall\ k\ge 0.
\end{align}

It can be verified that
\bna\label{vnkwenvefklw}
&&~~~\sup_{k\ge 0}\left(\E\left.\left[\|\H^*(k)\H(k)\|^{2^{\max\{h,2\}}}\right|\F(k-1)\right]\right)^{\frac{1}{2^{\max\{h,2\}}}}\cr
&&=\sup_{k\ge 0}\left(\E\left[\|\H^*(k)\H(k)\|^{2^{\max\{h,2\}}}\right]\right)^{\frac{1}{2^{\max\{h,2\}}}}\cr
&&=\sup_{k\ge 0}\left(\E\left[\|\H^*(0)\H(0)\|^{2^{\max\{h,2\}}}\right]\right)^{\frac{1}
{2^{\max\{h,2\}}}}
 \leq \rho^2_0~\text{a.s.}
\ena

It follows from $\|\H(0)\|\leq \rho_0~\text{a.s.}$ and Proposition \ref{nlllwwieiie}.(a) that $H^*_j(0)    H_j(0)x\in L^1(\Omega;\X)$, $x\in \X$. For the integer $h>0$ and $j\in \mathcal V$, we define the operator $\HH_j:\X\to\X$ by
\bna\label{wpkfpkpew}
\HH_j(x)=h\E\left[H^*_j(0)H_j(0)x\right],~x\in \X,~j\in \mathcal V.
\ena
For any given $x_1,x_2\in\X$ and $c_1,c_2\in \mathbb R$, we have
\bna\label{jcknvknw}
\HH_j(c_1x_1+c_2x_2)&=&c_1h\E\left[H^*_j(0)H_j(0)x_1\right]+c_2h\E\left[H^*_j(0)H_j(0)x_2\right]\cr
&=&c_1\HH_j(x_1)+c_2\HH_j(x_2).
\ena
Noting that $H^*_j(0)H_j(0)x\in L^1(\Omega;\X)$, by Proposition 2.6.31 in \cite{hy}, we get
\bna
\left\langle \HH_j(x_1), x_2\right\rangle&=&h\E\left[\left\langle H^*_j(0)H_j(0)x_1,x_2 \right\rangle \right]\cr
&=&h\E\left[\left\langle x_1,H^*_j(0)H_j(0)x_2 \right\rangle \right]\cr
&=&\left\langle x_1, \HH_j(x_2)\right\rangle,~j\in \mathcal V.\notag
\ena
This together with   (\ref{jcknvknw}) shows that $\HH_j$ is a linear self-adjoint operator, which together with (\ref{wpkfpkpew}) and
\ban
\|\HH_j(x)\|\leq h\E\left[\left\|H^*_j(0)H_j(0)\right\|\|x\|\right]\leq h\rho_0^2\|x\|,~\forall\ x\in \X,~j\in \mathcal V
\ean
 gives that the self-adjoint operator $\HH_j\in \mathscr L(\X)$. Denote $\HH_j(x):=\HH_jx$, $\forall\ x\in \X$. Noting that $H^*_j(0)H_j(0)x\in L^1(\Omega;\X)$, it follows from Proposition 2.6.31 in \cite{hy} that
\bna\label{owwwww2}
\left\langle \HH_jx,x\right\rangle&=&h\left\langle \E\left[H^*_j(0)H_j(0)x\right],x \right\rangle\cr
&=&h\E\left[\left\langle H^*_j(0)H_j(0)x,x \right\rangle \right]\cr
&=&h\E\left[\left\|H_j(0)x\right\|^2\right]\ge 0,
\ena
from which we know that the operator $\HH_j$ defined in (\ref{wpkfpkpew}) is positive bounded linear self-adjoint, $j\in \mathcal V$.
For any given integers $k\ge 0$, $h>0$ and $x\in \X$, if $A\in \F(kh-1)$, then it follows from Proposition \ref{nlllwwieiie} and (\ref{vnkwenvefklw}) that $H^*_j(i)H_j(i)(\1_A\otimes x)\in L^1(\Omega;\X)$, $i\ge kh$, which together with Lemma \ref{nvkvpeoeo} implies that $\E[H^*_j(i)H_j(i)(\1_A\otimes x)|\F(kh-1)]$ uniquely exists. Since $\{\H(k),k\ge 0\}$ and $\{v(k),k\ge 0\}$ are both i.i.d sequences and they are mutually independent,   from Definition \ref{dulixing}, it can be verified   that $H^*_j(i)H_j(i)x, \ j\geq kh$ are independent of $\1_{F\cap A}$ with $F\in \F(kh-1)$. Thus,
\ban
&&~~~\int_F\E\left.\left[\sum_{i=kh}^{(k+1)h-1}H^*_j(i)H_j(i)(\1_A\otimes x)\right|\F(kh-1)\right]\dd\P\cr
&&=\int_F\sum_{i=kh}^{(k+1)h-1}H^*_j(i)H_j(i)(\1_A\otimes x)\dd\P\cr
&&=\int_{\Omega}\left(\sum_{i=kh}^{(k+1)h-1}H^*_j(i)H_j(i)x\right)\1_{F\cap A}\dd\P\cr
&&=\int_{\Omega}\sum_{i=kh}^{(k+1)h-1}H^*_j(i)H_j(i)x\dd\P\int_{\Omega}\1_{F\cap A}\dd\P\cr
&&=\E\left[\sum_{i=kh}^{(k+1)h-1}H^*_j(i)H_j(i)x\right]\P(F\cap A)\cr
&&=h\E\left[H^*_j(0)H_j(0)x\right]\P(F\cap A)\cr
&&=h\int_{F}\E\left[H^*_j(0)H_j(0)x\right]\1_A\dd\P,~\forall \ F\in \F(kh-1),~j\in \mathcal V,
\ean
which gives
\ban
\E\left.\left[\sum_{i=kh}^{(k+1)h-1}H^*_j(i)H_j(i)(\1_A\otimes x)\right|\F(kh-1)\right]=\HH_j(\1_A\otimes x)~\text{a.s.},~j\in \mathcal V.
\ean
 This together with the properties of the conditional expectation, the operator $\HH_j$ and the linearity of Bochner integral leads to
\bna\label{vnkwoejvnvvn}
\HH_jy=\E\Bigg[\sum_{i=kh}^{(k+1)h-1}H^*_j(i)H_j(i)y\Bigg|\F(kh-1)\Bigg]~\text{a.s.},~j\in \mathcal V,
\ena
where $y\in L^0(\Omega,\F(kh-1);\X)$ is a simple function. For $f\in L^2(\Omega,\F(kh-1);\X)$, by Lemma \ref{vnwkelel}, we know that there exists a sequence of simple functions $\{f_n \in L^0(\Omega,\F(kh-1);\X),n\ge 0\}$ satisfying $\|f_n\|\leq \|f\|\text{a.s.}$ and $\lim_{n\to \infty}f_n=f~\text{a.s.}$, which together with (\ref{vnkwenvefklw}) and Cauchy inequality gives $H^*_j(i)H_j(i)f\in L^1(\Omega;\X)$. Thus, from Lemma \ref{nvkvpeoeo}, it is known that $\E[H^*_j(i)H_j(i)f|\F(kh-1)]$ uniquely exists. Noting that $\HH_j\in \LL(\X)$, it follows from (\ref{vnkwenvefklw})-(\ref{vnkwoejvnvvn}) and the dominated convergence theorem of conditional expectation that
\bna\label{vnowlelkel}
&&~~~\HH_jf\cr
&&=\lim_{n\to\infty}\HH_jf_n\cr &&=\lim_{n\to\infty}\E\left.\left[\sum_{i=kh}^{(k+1)h-1}H^*_j(i)H_j(i)f_n\right|\F(kh-1)\right]\cr
&&=\E\left.\left[\sum_{i=kh}^{(k+1)h-1}H^*_j(i)H_j(i)\lim_{n\to\infty}f_n\right|\F(kh-1)\right]\cr
&&=\E\left.\left[\sum_{i=kh}^{(k+1)h-1}H^*_j(i)H_j(i)f\right|\F(kh-1)\right]
~\text{a.s.},~j\in \mathcal V.
\ena
Let $\{x(k),\F(kh-1),k\ge 0\}$ be a $L_2$-bounded adaptive sequence with values in the Hilbert space $\X$. Then, by (\ref{vnowlelkel}), we get
%$\HH_jx(k)=\E \big[\sum_{i=kh}^{(k+1)h-1}H^*_j(i)H_j(i)x(k) |\F(kh-1)\big]~\text{a.s.},~j\in\mathcal V.$
\ban
\HH_jx(k)=\E\left.\left[\sum_{i=kh}^{(k+1)h-1}H^*_j(i)H_j(i)x(k)\right|\F(kh-1)\right]~\text{a.s.},~j\in\mathcal V.
\ean
For any non-zero element $x$ in Hilbert space $x$, from Proposition 2.6.31 in \cite{hy}, the condition (\ref{tuiluntiaojian}) and (\ref{wpkfpkpew}), we have
\bna%\label{owwwww3}
\left\langle \sum_{j=1}^N\HH_jx,x\right\rangle=h\left\langle\E\left[\sum_{j=1}^N
H^*_j(0)H_j(0)x\right],x\right\rangle=h\sum_{j=1}^N\E\left[\|H_j(0)x\|^2\right]>0.\notag
\ena
This together with  (\ref{taukcnt})-(\ref{vnkwenvefklw})  and Theorem \ref{vnknoklfl} gives  that the algorithm (\ref{algorithm}) is both mean square and almost surely strongly consistent.
$\hfill\qed$

\vskip 2mm

\emph{Proof of Theorem \ref{rkhsdingli}.}
 We will briefly sketch the proof. Firstly, by the measurability and the integration theory of mappings with values in Banach spaces, we prove that, for any $i \geq kh$, $\E[K_{x_j(i)}\otimes K_{x_j(i)}|\F(kh-1)]$ exists. Then, we develop a property of the conditional expectations of operator-valued random elements. By the  property and the reproducing property of RKHS,  we verify  the conditions in Theorem \ref{vnknoklfl}, thereby deriving the conclusion of the theorem.

%Let $H_i(k)$ be the mapping induced by random input data $x_i(k)$, where
%$
%H_i(k)(f)=f(x_i(k)),~f\in \HH_K,~k\ge 0,~i\in \mathcal V.
%$

%
%By (\ref{vlwlmmfff})  and Proposition \ref{nlllwwieiie}, we know that $H_i(k):\Omega\to\LL(\HH_K,\mathbb R)$ is a random element with values in the topological space $(\LL(\HH_K,\mathbb R), \tau_{\text{S}}(\LL(\HH_K,\mathbb R)))$. Denote  $v(k):=(v_1(k), \cdots,v_N(k))$, it follows from Definition \ref{tuopukongjian} that

%Note that $v(k)$ is a random vector with values in the Hilbert space $(\mathbb R^N,\tau_{\text{N}}(\mathbb R^N))$. Thus, by Proposition \ref{nlllwwieiie}, we have

It follows from Assumption \ref{assumption3}  and Lemma \ref{rabdomelerkhs} that Assumption \ref{assumption1} holds. By Assumption \ref{assumption4}, it is known that $\{v(k),\F(k),k\ge 0\}$ is the martingale difference sequence and there exists a constant $\b_v:=N\b>0$, such that $ \sup\limits_{k\ge 0}\E\big[\|v(k)\|^2|\F(k-1)\big]
 \leq   N\max_{i\in \mathcal V}\sup\limits_{k\ge 0}   \E \big[\|v_i(k)\|^2 |\F(k-1)\big]\leq \b_v~\text{a.s.},$
%\ban
%&& \sup_{k\ge 0}\E\left.\left[\|v(k)\|^2\right|\F(k-1)\right]\cr
%&\leq & N\max_{i\in \mathcal V}\sup_{k\ge 0}\E\left.\left[\|v_i(k)\|^2\right|\F(k-1)\right]\leq \b_v~\text{a.s.},
%\ean
which implies that Assumption \ref{assumption2} holds.
Let $x\in \X$. Then, for any given $f_1,f_2\in \HH_K$ and $c_1,c_2\in \mathbb R$, we get
\begin{align}\label{vmkwmefkeml1}
 &\left(K_x\otimes K_x\right)(c_1f_1+c_2f_2)\cr
 =&c_1f_1(x)K_x+c_2f_2(x)K_x\cr
 =&c_1\left(K_x\otimes K_x\right)f_1+c_2\left(K_x\otimes K_x\right)f_2.
\end{align}
From Assumption \ref{assumption5} and the reproducing property of $\HH_K$, we know that  $\left\|\left(K_x\otimes K_x\right)f\right\|_{K} \leq \|K_x\|_K^2\|f\|_K=K(x,x)\|f\|_K
\leq   \sup_{x\in\X}K(x,x)\|f\|_K,~\forall\ f\in \HH_K,~\forall\ x\in \X.$
%\begin{align}\label{vmkwmefkeml2}
%&\left\|\left(K_x\otimes K_x\right)f\right\|_{K}\leq \|K_x\|_K^2\|f\|_K\cr
%\leq & \sup_{x\in\X}K(x,x)\|f\|_K,~\forall\ f\in \HH_K,~\forall\ x\in \X.
%\end{align}
Thus, it follows from (\ref{vmkwmefkeml1}) that $K_x\otimes K_x\in \LL(\HH_K)$. Let $x=\sum_{i=1}^n\1_{A_i}\otimes x_i$ be a random vector with values in the Hilbert space $(\X,\tau_{\text{N}}(\X))$, where $A_i\cap A_j=\emptyset$, $1\leq i\neq j\leq n$. Noting that $K_{x}=\sum_{i=1}^n\1_{A_i}\otimes K_{x_i}$, we have $K_{x}\otimes K_{x}= \left(\sum_{i=1}^n\1_{A_i}\otimes K_{x_i}\right)\otimes \left(\sum_{i=1}^n\1_{A_i}\otimes K_{x_i}\right)
= \sum_{i=1}^n\1_{A_i}\otimes \left(K_{x_i}\otimes K_{x_i}\right),$
%\begin{align}
%K_{x}\otimes K_{x}=&\left(\sum_{i=1}^n\1_{A_i}\otimes K_{x_i}\right)\otimes \left(\sum_{i=1}^n\1_{A_i}\otimes K_{x_i}\right)\notag\\
%=&\sum_{i=1}^n\1_{A_i}\otimes \left(K_{x_i}\otimes K_{x_i}\right),\notag
%\end{align}
thus, $K_{x}\otimes K_{x}$ is a simple function with values in the Banach space $\LL(\HH_K)$. For any given random vector $x$ with values in the Hilbert space $(\X,\tau_{\text{N}}(\X))$, it is known that there exists a simple function sequence $\{x_n,n\ge 0\}$ with values in $\X$, such that $\lim_{n\to\infty}\|x-x_n\|=0~\text{a.s.}$
This together with the reproducing property of $\HH_K$ and the continuity of kernel $K$ gives
%\begin{align}\label{cnkwmekmk}
%&\left\|K_x-K_{x_n}\right\|^2_K = \left\langle K_x-K_{x_n},K_x-K_{x_n} \right\rangle_K \notag\\
%  = & K(x,x)-2K(x,x_n)+K(x_n,x_n).
%\end{align}
%Noting the continuity of Mercer kernel $K$ and Assumption \ref{assumption5}, by (\ref{cnkwmekmk}),
%Similar to the proof of (\ref{klianxuxing}), we get
\bna\label{nncknkwnkwc1}
\lim_{n\to\infty}\left\|K_x-K_{x_n}\right\|_K=0~\text{a.s.}
\ena
It follows from   the reproducing property of $\HH_K$ that
 %$ \|K_{x}\otimes K_{x}-K_{x_n}\otimes K_{x_n} \|_{\LL(\HH_K)}
%\leq    \|(K_x -K_{x_n}) \otimes K_x \|_{\LL(\HH_K)}
% + \|K_{x_n}\otimes (K_x-K_{x_n}) \|_{\LL(\HH_K)}
% = \sup_{\|f\|_K=1} \| ((K_x-K_{x_n})\otimes K_x )f \|_K
%  +\sup_{\|f\|_K=1} \| (K_{x_n}\otimes (K_x-K_{x_n}) )f \|_K
% =  \sup_{\|f\|_K=1} \|f(x)(K_x-K_{x_n}) \|_K
% +\sup_{\|f\|_K=1}  \|(f(x)-f(x_n))K_{x_n} \|_K
% \leq  \|K_x\|_K\|K_x-K_{x_n}\|_K+\|K_{x_n}\|_K \|K_x-K_{x_n}\|_K
% \leq   2\sup_{x\in \X}\sqrt{K(x,x)}  \|K_x-K_{x_n}\|_K~\text{a.s.}$
\begin{align}
&\left\|K_{x}\otimes K_{x}-K_{x_n}\otimes K_{x_n}\right\|_{\LL(\HH_K)}\cr
\leq & \left\|(K_x-K_{x_n})\otimes K_x\right\|_{\LL(\HH_K)} +\left\|K_{x_n}\otimes (K_x-K_{x_n})\right\|_{\LL(\HH_K)}\cr
 =&\sup_{\|f\|_K=1}\left\|\left((K_x-K_{x_n})\otimes K_x\right)f\right\|_K +\sup_{\|f\|_K=1}\left\|\left(K_{x_n}\otimes (K_x-K_{x_n})\right)f\right\|_K\cr
 =& \sup_{\|f\|_K=1}\left\|f(x)(K_x-K_{x_n})\right\|_K +\sup_{\|f\|_K=1}\left\|(f(x)-f(x_n))K_{x_n}\right\|_K\cr
 \leq & \|K_x\|_K\|K_x-K_{x_n}\|_K+\|K_{x_n}\|_K\|K_x-K_{x_n}\|_K
 \leq   2\sup_{x\in \X}\sqrt{K(x,x)}\|K_x-K_{x_n}\|_K~\text{a.s.}\notag
\end{align}
Then, by Assumption \ref{assumption5} and (\ref{nncknkwnkwc1}), we have $\lim_{n\to\infty}\left\|K_{x}\otimes K_{x}-K_{x_n}\otimes K_{x_n}\right\|_{\LL(\HH_K)}  =0~\text{a.s.}$
%\ban
%\lim_{n\to\infty}\left\|K_{x}\otimes K_{x}-K_{x_n}\otimes K_{x_n}\right\|_{\LL(\HH_K)}=0~\text{a.s.}
%\ean
Noting that $K_{x_n}\otimes K_{x_n}$ is a simple function with values in Banach space $\LL(\HH_K)$, by Lemma \ref{vnwkelel}, it is known that $K_{x}\otimes K_{x}$ is strongly $\P$-measurable with respect to the topology $\tau_{\text{N}}(\LL(\HH_K))$, and therefore $K_{x}\otimes K_{x}$ is a random element with values in Banach space $(\LL(\HH_K),\tau_{\text{N}}(\LL(\HH_K)))$. Thus, by Assumption \ref{assumption5}, we get $K_{x_j(i)}\otimes K_{x_j(i)}\in L^1(\Omega;\mathscr L(\HH_K))$, which together with Lemma \ref{nvkvpeoeo} gives that $\E[K_{x_j(i)}\otimes K_{x_j(i)}|\F(kh-1)]$ uniquely exists. Let $\{g(k),\F(kh-1),k\ge 0\}$ be the $L_2$-bounded adaptive sequence with values in $\HH_K$, by Assumption \ref{assumption5}, Proposition \ref{tiaojianqiwangxingzhi} and the condition (\ref{vnknknldldklsd}), we obtain
\begin{align}\label{vnkwenkfffnkfmkweklf}
&\sum_{j=1}^N\sum_{k=0}^{\infty}\E\Bigg[\bigg\|
\sum_{i=kh}^{(k+1)h-1}\E\left.\left[H_j^*(i)H_j(i)g(k)\right|
\F(kh-1)\right] -N_jg(k)\bigg\|_K^2\Bigg]\cr
 =&\sum_{j=1}^N\sum_{k=0}^{\infty}\E\Bigg[\bigg\|
 \sum_{i=kh}^{(k+1)h-1}\E\left.\left[K_{x_j(i)}\otimes K_{x_j(i)}g(k)\right|\F(kh-1)\right] -N_jg(k)\bigg\|_K^2\Bigg]\cr
 =& \sum_{j=1}^N\sum_{k=0}^{\infty}\E\Bigg[\bigg\|\Big(\sum_{i=kh}^{(k+1)h-1}\E\left.\left[K_{x_j(i)}\otimes K_{x_j(i)}\right|\F(kh-1)\right] -N_j\Big)g(k)\bigg\|_K^2\Bigg]
 <\infty.
\end{align}

Denote $\rho_0=N\left(
\sup_{x\in\X}K(x,x)\right)^{2^{\max\{h,2\}-1}}$.
Given the integer $h>0$, by Assumption \ref{assumption5}, we have
\begin{align}
& \E\left.\left[\|\H^*(k)\H(k)\|_{\mathscr L\left(\HH_K^N\right)}^{2^{\max\{h,2\}}}\right|\F(k-1)\right] \cr
\leq &
% N  \E\left.\left[\sup_{i\in \mathcal V}\left\|H_i^*(k)H_i(k)\right\|_{\LL(\HH_K)}^{2^{\max\{h,2\}}}\right|\F(k-1)\right] \cr
% =&
 N \E\left.\left[\sup_{i\in \mathcal V}\left\|K_{x_i(k)}\otimes K_{x_i(k)}\right\|_{\LL(\HH_K)}^{2^{\max\{h,2\}}}\right|\F(k-1)\right] \cr
\leq&
%N \E\left.\left[\sup_{i\in \mathcal V}\sup_{\|f\|_K=1}\left\|\left(K_{x_i(k)}\otimes K_{x_i(k)}\right)f\right\|_K^{2^{\max\{h,2\}}}\right|\F(k-1)\right] \cr
% =&
 N \E\left.\left[\sup_{i\in \mathcal V}\sup_{\|f\|_K=1}\left\|f(x_i(k))K_{x_i(k)}\right\|_K^{2^{\max\{h,2\}}}
 \right|\F(k-1)\right] \cr
 \leq &
N \E \Bigg[\sup_{i\in \mathcal V}\sup_{\|f\|_K=1}|f(x_i(k))|^{2^{\max\{h,2\}}}  \left\|K_{x_i(k)}
\right\|_K^{2^{\max\{h,2\}}}\Bigg|\F(k-1)\Bigg] \cr
 \leq & N \E\Bigg[\sup_{i\in \mathcal V}\sup_{\|f\|_K=1}\left\|f\right\|^{2^{\max\{h,2\}}}_K  \left(
\sup_{x\in\X}K(x,x)\right)^{2^{\max\{h,2\}-1}}\Bigg|\F(k-1)\Bigg]\cr
 \leq &\rho_0~\text{a.s.}, \notag
\end{align}
which gives
 %$\sup_{k\ge 0}\left(\E\left.\left[\|\H^*(k)\H(k)\|_{\mathscr L\left(\HH_K^N\right)}^{2^{\max\{h,2\}}}\right|\F(k-1)\right]\right)^{\frac{1}{2^{
%\max\{h,2\}}}}\leq \rho_0~\text{a.s.}$
\begin{align}
 \sup_{k\ge 0}\left(\E\left.\left[\|\H^*(k)\H(k)\|_{\mathscr L\left(\HH_K^N\right)}^{2^{\max\{h,2\}}}\right|\F(k-1)\right]
 \right)^{\frac{1}{2^{
\max\{h,2\}}}}
%=& N \sup_{k\ge 0}\left(\E\left.\left[\sup_{i\in \mathcal V}\left\|H_i^*(k)H_i(k)\right\|_{\LL(\HH_K)}^{2^{\max\{h,2\}}}\right|\F(k-1)\right]\right)^{\frac{1}{2^{\max\{h,2\}}}}\cr
% =&N\sup_{k\ge 0}\left(\E\left.\left[\sup_{i\in \mathcal V}\left\|K_{x_i(k)}\otimes K_{x_i(k)}\right\|_{\LL(\HH_K)}^{2^{\max\{h,2\}}}\right|\F(k-1)\right]\right)^{\frac{1}{2^{\max\{h,2\}}}}\cr
%\leq& N\sup_{k\ge 0}\left(\E\left.\left[\sup_{i\in \mathcal V}\sup_{\|f\|_K=1}\left\|\left(K_{x_i(k)}\otimes K_{x_i(k)}\right)f\right\|_K^{2^{\max\{h,2\}}}\right|\F(k-1)\right]\right)^{\frac{1}{2^{\max\{h,2\}}}}\cr
% =&N\sup_{k\ge 0}\left(\E\left.\left[\sup_{i\in \mathcal V}\sup_{\|f\|_K=1}\left\|f(x_i(k))K_{x_i(k)}\right\|_K^{2^{\max\{h,2\}}}\right|\F(k-1)\right]\right)^{\frac{1}{2^{\max\{h,2\}}}}\cr
% \leq &
%N\sup_{k\ge 0}\left(\E\left.\left[\sup_{i\in \mathcal V}\sup_{\|f\|_K=1}|f(x_i(k))|^{2^{\max\{h,2\}}}\left\|K_{x_i(k)}\right\|_K^{2^{\max\{h,2\}}}\right|\F(k-1)\right]\right)^{\frac{1}{2^{\max\{h,2\}}}}\cr
% \leq & N\sup_{k\ge 0}\left(\E\left.\left[\sup_{i\in \mathcal V}\sup_{\|f\|_K=1}\left\|f\right\|^{2^{\max\{h,2\}}}_K\left(
%\sup_{x\in\X}K(x,x)\right)^{2^{\max\{h,2\}}}\right|\F(k-1)\right]
%\right)^{\frac{1}{2^{\max\{h,2\}}}}\cr
% \leq & N\sup_{x\in\X} K(x,x)
 \leq  \rho_0~\text{a.s.} \label{zhuanzhihk}
\end{align}

It follows from Condition \ref{condition2} that there exists a constant $j_0>0$ and an integer $t_0>0$, such that $\sup_{k\ge 0}(4\rho_0a(k)+4\|\L_{\G}\|b(k))\leq j_0$ and $\sup_{k\ge t_0}(a(k) +b(k)) (4\rho_0+4\|\L_{\G}\|)\leq 1$. Noting that $ \|4a(k)\H^*(k)\H(k)+4b(k)  \L_{\G}\otimes I_{\HH_K} \|_{\LL\left(\HH_K^N\right)}\leq j_0$ if $k<t_0$, and $ \|4a(k)\H^*(k)\H(k)+4b(k)\L_{\G}\otimes I_{\HH_K} \|_{\LL\left(\HH_K^N\right)}\leq 1$, otherwise.
%\begin{align}\label{vkllekkeek}
%&\left\|4a(k)\H^*(k)\H(k)+4b(k)\L_{\G}\otimes I_{\HH_K}\right\|_{\LL\left(\HH_K^N\right)}\notag\\
%\leq &
%\begin{cases}
%j_0, & k<t_0;\\
%1, & k\ge t_0
%\end{cases}
%~\text{a.s.}
%\end{align}
Then, we get
\begin{align}\label{vnksdkjdkdddd}
&\big\|I_{\HH_K^N}-4\left(a(k)\H^*(k)\H(k)+b(k)\L_{\G}\otimes I_{\HH_K}\right)\big\|_{\LL\left(\HH_K^N\right)}
\leq
1+\Gamma(k)
~\text{a.s.},
\end{align}
where
% $\Gamma(k)=j_0$, if $k<t_0$, and  $\Gamma(k)=0$, otherwise,
\ban
\Gamma(k)=
\begin{cases}
j_0, & k<t_0,\\
0, & k\ge t_0,
\end{cases}
\ean
and satisfies $\sum_{k=0}^{\infty}\Gamma(k)<\infty$. It follows from (\ref{vnksdkjdkdddd}) that
\begin{align}
 &\E\Big[\big\|I_{\HH_K^N}-4(a(k)\H^*(k)\H(k)
  +b(k)\L_{\G}\otimes I_{\HH_K})\Big\|_{\LL\left(\HH_K^N\right)}\big|\F(kh-1)\Big]
 \leq 1+\Gamma(k)~\text{a.s.}\notag
\end{align}
Hence, combining (\ref{vnkwenkfffnkfmkweklf})-(\ref{zhuanzhihk}), the above inequality and Theorem \ref{vnknoklfl} gives that the algorithm (\ref{rkhs}) is mean square and almost surely strongly consistent.
$\hfill\qed$

\vskip 2mm

\emph{Proof of Corollary \ref{vnlllleleeemmem}.}
It follows from Assumption \ref{assumption5} and Theorem \ref{rkhsdingli} that $K_{x_j(i)}\otimes K_{x_j(i)}\in L^1(\Omega;\mathscr L(\HH_K))$, which together with Lemma \ref{nvkvpeoeo} implies that $\E[K_{x_j(i)}\otimes $ $K_{x_j(i)}|\F(kh-1)]$ uniquely exists. Let $\{g(k),\F(kh-1),k\ge 0\}$ be a $L_2$-bounded adaptive sequence with values in $\HH_K$, by the condition (\ref{vnkmeeeemefffff}), we get
\ban
&&~~~~\sum_{j=1}^N\sum_{k=0}^{\infty}\E\left[\Bigg\|\Bigg(N_j-\sum_{i=kh}^{(k+1)h-1}\E\left.\left[K_{x_j(i)}\otimes K_{x_j(i)}\right|\F(kh-1)\right]\Bigg)g(k)\Bigg\|_K^2\right]\cr
&&\leq \sum_{j=1}^N\sum_{k=0}^{\infty}\E\left[\Bigg\|N_j-\sum_{i=kh}^{(k+1)h-1}\E\left.\left[K_{x_j(i)}\otimes K_{x_j(i)}\right|\F(kh-1)\right]\Bigg\|_{\LL(\HH_K)}^2\|g(k)\|_K^2\right]\cr
&&\leq \sum_{j=1}^N\sum_{k=0}^{\infty}\E\left[\max_{j\in\mathcal V}\Bigg\|N_j-\sum_{i=kh}^{(k+1)h-1}\E\left.\left[K_{x_j(i)}\otimes K_{x_j(i)}\right|\F(kh-1)\right]\Bigg\|_{\LL(\HH_K)}^2\|g(k)\|_K^2\right]\cr
&&\leq N\mu_0\sup_{k\ge 0}\E\left[\|g(k)\|_K^2\right]\sum_{k=0}^{\infty}\tau(k)
  <\infty,
\ean
where the last inequality is obtained from $\sum_{k=0}^{\infty}\tau(k)<\infty$. This together with Theorem \ref{rkhsdingli} implies that the algorithm (\ref{rkhs}) is both mean square and almost surely strongly consistent.

By Cauchy-Schwarz inequality and the reproducing property of $\HH_K$, we have
\begin{align}
|f_{i}(k)(x)-f_{0}(x)|=|\langle f_{i}(k)-f_{0}, K_{x}\rangle| \leq \left\|f_{i}(k)-f_{0}\right\|_{K}  \left\|K_{x}\right\|_{K}, \ \forall \ x \in \mathscr{X}, \ i \in \mathcal{V}.\notag
\end{align}
  Noting that algorithm (\ref{rkhs}) is   almost surely strongly consistent and by the above inequality, we have  $\lim\limits_{k\to \infty} f_{i}(k)(x)=f_{0}(x) $   \text{ a.s.},  $\forall \ x \in \mathscr{X}, \ i \in \mathcal{V}$.
$\hfill\qed$

\vskip 2mm

\emph{Proof of Corollary \ref{rkhsdinglijjjjj}.}
Noting that $\{(x_1(k),\cdots,x_N(k)),k\ge 0\}$ and $\{(v_1(k),\cdots ,v_N(k)),k\ge 0\}$ are both i.i.d. sequences and they are mutually independent, we have
\begin{align*}
&\sum_{i=kh}^{(k+1)h-1}\E\left.\left[K_{x_j(i)}\otimes K_{x_j(i)}\right|\F(kh-1)\right]  =  \sum_{i=kh}^{(k+1)h-1}\E\left[K_{x_j(i)}\otimes K_{x_j(i)}\right]
 =   h\E\left[K_{x_j(0)}\otimes K_{x_j(0)}\right].
\end{align*}
Denote $N_j:\HH_K\to \HH_K$ by
$
N_j=h\E\left[K_{x_j(0)}\otimes K_{x_j(0)}\right],~j\in \mathcal V.
$
It follows from Proposition 2.6.31 in \cite{hy} that $N_j\in \mathscr L(\HH_K)$ is a positive self-adjoint operator. Noting that
\ban
\bigg\|N_j-\sum_{i=kh}^{(k+1)h-1}\E\left.\left[K_{x_j(i)}\otimes K_{x_j(i)}\right|\F(kh-1)\right]\bigg\|_{\LL(\HH_K)}^2=0~\text{a.s.},
\ean
by the condition (\ref{nklnkle}), we get
\ban
\left\langle \sum_{j=1}^NN_jf,f\right\rangle _K =h\left\langle \E\left[\sum_{j=1}^NK_{x_j(0)}\otimes K_{x_j(0)}\right]f,f\right\rangle _K >0,
\ean
where $f$ is an arbitrary non-zero function in $\HH_K$. Hence, it follows from Corollary \ref{vnlllleleeemmem} that the algorithm (\ref{rkhs}) is both mean square and almost surely strongly consistent.
$\hfill\qed$

\section{Theoretical Framework of Random Elements with Values in a Topological Space}
\label{appendixb}
 \setcounter{equation}{0}
\renewcommand{\theequation}{B.\arabic{equation}}
The proposed algorithm involves the sequences of random forward operators induced by random input data. Conventionally, a random element with values in a Banach space is required to be strongly measurable, which is almost separably valued with respect to the topology induced by the norm in the Banach space (\cite{hy2}). It is known that operator-valued mappings may not be strongly measurable since the Banach space of bounded linear operators is generally non-separable with respect to the uniform operator topology (the topology induced by the operator norm) (\cite{hy}). In this section, we develop a self-contained  theoretical framework of random elements with values in a topological space.

\subsection{Random Elements with Values in a Topological Space}\label{sectionrandonelement}
%\subsection{Preliminary analytical theory of Banach spaces}\label{appendixa}
%\setcounter{lemma}{0}
%\def\thelemma{A.\arabic{lemma}}
%\setcounter{definition}{0}
%\def\thedefinition{A.\arabic{definition}}
%\setcounter{equation}{0}
%\def\theequation{A.\arabic{equation}}

\begin{definition}\label{jihukefenzhi}
Let $(\Omega,\F,\P)$ be a complete probability space, and $(\mathscr U,\tau)$ be a topological space. Given the mapping $f:\Omega\to \mathscr U$, if there exists a separable closed subset $\mathscr U_0$ of $\mathscr U$ and a subset $\Omega_0$ of $\Omega$ with probability measure  $1$, such that $$f(\Omega_0):=\{f(x):x\in \Omega_ 0\}\subseteq \mathscr U_0,$$ then $f$ is called  almost separably valued with respect to $\tau$.
\end{definition}

%\begin{definition}[\cite{hy}]
%Let $(S,\mathscr A)$ and $(T,\mathscr B)$ be measurable spaces, if the mapping $f:S\to T$ satisfies $$f^{-1}(B):=\{x\in S:f(x)\in B\}\in \mathscr A,~\forall\ B\in \mathscr B,$$ then $f$ is said to be $\mathscr A/\mathscr B$-measurable.
%\end{definition}

\begin{definition}\label{tuopukongjian}
Let $(\Omega,\F,\P)$ be a complete probability space and $(\mathscr U,\B(\mathscr U;\tau))$ be a measurable space, where $\tau$ is the topology on $\mathscr U$, and $\B(\mathscr U;\tau)$ is the Borel $\sigma$-algebra of the topological space $(\mathscr U,\tau)$, i.e., the smallest $\sigma$-algebra containing all open sets in $\mathscr U$. A mapping $f:\Omega\to \mathscr U$ is said to be strongly  $\P$-measurable or to be  a random element  with values in the topological space $(\mathscr U,\tau)$ if it is  $\F/\B(\mathscr U;\tau)$-measurable and almost separably valued with respect to $\tau$.
\end{definition}

\begin{definition}\label{fenbudingyi}
If $f$ is a random element with values in the topological space $(\mathscr U,\tau)$, then the distribution of $f$ is defined by the Borel probability measure $\mu_{f}(B):=\P(f^{-1}(B))$ on $(\mathscr U,\tau )$, $\forall\ B\in \B(\mathscr U;\tau)$.
\end{definition}

%\begin{definition}[\cite{hy}]
%The function $f=\sum_{i=1}^n\boldsymbol{1}_{A_i}\otimes x_i$ is called a simple function with values in the Banach space $\X$, where $A_i\in \F$, $x_i\in \X$, $i=1,\cdots,n$, $\boldsymbol{1}_{A}$ is the indicator function of the set $A$ and $(\1_A\otimes x)(s):=\1_A(s)x$, $\forall\ x\in \X$, $s\in \Omega$.
%\end{definition}

The following lemma is derived directly from Proposition 1.1.16 in \cite{hy}.

\begin{lemma}\label{vnwkelel}
Given a mapping $f:\Omega \to \X$ with values in Banach space $(\X,\tau_{\text{N}}(\X))$, $f$ is  strongly $\P$-measurable if and only if
  there exists a sequence of simple functions $\{f_n:\Omega \to \X,n\ge 1 \}$ such that $\lim_{n\to \infty}\|f_n -f \|_{\X}=0$ with probability  $1$.
\end{lemma}

%\begin{lemma}[\cite{hp}]\label{mse}
%The mapping $f:\Omega\to \X$ is strongly measurable with respect to the topology $\tau_{\text{N}}(\X)$ if and only if $f$ is a random element with values in the Banach space $(\X,\tau_{\text{N}}(\X))$.
%\end{lemma}

It follows from Definition \ref{tuopukongjian} that for a  separable Banach space $(\X,\tau_{\text{N}}(\X))$,
%the mapping $f:\Omega\to \X$ is strongly measurable with respect to $\tau_{\text{N}}(\X)$ if and only if $f$ is a random element with values in the Banach space $(\X,\tau_{\text{N}}(\X))$. Especially,
%if $\X$ is separable with respect to $\tau_{\text{N}}(\X)$, then
 any $\F/\B(\X;\\ \tau_{\text{N}}(\X))$-measurable mapping $f:\Omega\to \X$ is a random element with values in $(\X,\tau_{\text{N}}(\X))$.

 \vskip 0.2cm
We give the following properties of operator-valued random elements.

\begin{proposition}\label{nlllwwieiie}
%\begin{longlist}
Let $(\X,\tau_{N}(\X)),\ (\Y,\tau_{N}(\Y))$ and $(\Z,\tau_{N}(\Z))$ be Banach spaces,  $f_0:\Omega\to \mathscr L(\mathscr Y,\mathscr Z)$, $f_1:\Omega\to \X$, $f_2:\Omega\to \LL(\X,\Y)$, $g_1:\Omega\to \LL(\X,\Y)$ and $g_2:\Omega\to \LL(\Y,\mathscr Z)$.\\
\indent  (a). If $(\Y,\tau_{N}(\Y))$ and $(\Z,\tau_{N}(\Z))$ are separable Hilbert spaces, then  $f_0$ is a random element with values in $(\mathscr L(\mathscr Y,\mathscr Z),\tau_{\text{S}}(\mathscr L(\mathscr Y,\mathscr Z)))$ if and only if the mapping $f_0y:\omega\mapsto f_0(\omega)y$ is a random element with values in $(\mathscr Z,\tau_{\text{N}}(\Z))$ for arbitrary $y\in \Y$.\\
\indent  (b). If $f_1$ is strongly $\P$-measurable with respect to $\tau_{\text{N}}(\X)$ and $g_1x:\Omega\to \mathscr Y$ is strongly $\P$-measurable with respect to $\tau_{\text{N}}(\Y)$, $\forall\ x\in \X$, then $g_1f_1:\Omega\to \mathscr Y$ is strongly $\P$-measurable with respect to $\tau_{\text{N}}(\Y)$.\\
\indent  (c). For arbitrary $x\in\X$ and $y\in \Y$, if the mapping $f_2x:\Omega\to \Y$ is strongly $\P$-measurable with respect to $\tau_{\text{N}}(\Y)$ and the mapping $g_2y:\Omega\to \mathscr Z$ is strongly $\P$-measurable with respect to $\tau_{\text{N}}(\mathscr Z)$, then the mapping $(g_2f_2)x':\Omega\to \mathscr Z$ is strongly $\P$-measurable with respect to $\tau_{\text{N}}(\mathscr Z)$ for arbitrary $x'\in \X$.
%\end{longlist}
\end{proposition}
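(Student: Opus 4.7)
The plan is to prove this proposition by exploiting that the strong operator topology $\tau_{\text{S}}$ on $\LL(\Y,\Z)$ is the initial topology induced by the evaluation maps $T\mapsto Ty$, $y\in\Y$, and combining this with Pettis's characterization of strong measurability (Lemma \ref{mse}). I will establish (a) first by unwinding the subbase of $\tau_{\text{S}}$, then deduce (b) by simple-function approximation of $f_1$, and finally reduce (c) to (b) by re-reading the hypothesis on $g_2$ through part (a).

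For (a), the forward implication is routine: the evaluation $\text{ev}_y:(\LL(\Y,\Z),\tau_{\text{S}})\to(\Z,\tau_{\text{N}}(\Z))$ is continuous, so $f_0 y=\text{ev}_y\circ f_0$ is $\F/\B(\Z;\tau_{\text{N}}(\Z))$-measurable, and the image of a $\tau_{\text{S}}$-separable set under a continuous map is $\tau_{\text{N}}$-separable. For the converse, fix a countable dense $\{y_n\}\subseteq\Y$. Sets of the form $\{T:Ty_n\in V\}$ with $V$ a basic open set of $\Z$ form a countable subbase of $\tau_{\text{S}}$, so measurability of every $f_0 y_n$ yields $\F/\B(\LL(\Y,\Z);\tau_{\text{S}})$-measurability of $f_0$. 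The almost-separably-valued requirement in $\tau_{\text{S}}$ is produced by choosing, for each $n$, a separable closed subset of $\Z$ containing $f_0(\cdot)y_n$ almost surely and then taking the $\tau_{\text{S}}$-closure of the rational linear hull of operators whose actions on the $y_n$ live in these subsets.

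For (b), apply Pettis's theorem to produce simple functions $f_{1,n}=\sum_k x_{n,k}\1_{A_{n,k}}$ with $\|f_{1,n}-f_1\|_{\X}\to 0$ almost surely. Then
\[
g_1 f_{1,n}=\sum_k (g_1 x_{n,k})\1_{A_{n,k}}
\]
is a finite sum of products of indicators and strongly measurable $\Y$-valued maps, hence strongly measurable. Since
\[
\|g_1 f_{1,n}-g_1 f_1\|_{\Y}\le \|g_1\|_{\LL(\X,\Y)}\|f_{1,n}-f_1\|_{\X}\longrightarrow 0\ \text{a.s.}
\]
(the operator norm is a.s.\ finite, so no uniform bound is required), $g_1 f_1$ is the almost sure limit of strongly measurable $\Y$-valued maps and is itself strongly measurable. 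Part (c) is then immediate: by (a), the hypothesis that $g_2 y$ is strongly measurable for every $y\in\Y$ says that $g_2$ is a random element with values in $(\LL(\Y,\Z),\tau_{\text{S}})$, while $f_2 x'$ is a strongly measurable $\Y$-valued map; applying (b) with the correspondence $(f_1,g_1)\leftrightarrow (f_2 x',g_2)$ yields strong measurability of $(g_2 f_2)x'=g_2(f_2 x')$.

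The main obstacle will be the almost-separably-valued clause in (a). Since $\LL(\Y,\Z)$ is typically nonseparable in the operator norm even when $\Y$ and $\Z$ are separable, I must keep the separability hypothesis strictly at the $\tau_{\text{S}}$ level and cannot appeal to the usual Pettis theorem directly inside $\LL(\Y,\Z)$. The delicate point is verifying that the candidate $\tau_{\text{S}}$-separable closed subset constructed from a countable dense $\{y_n\}\subseteq\Y$ together with $\tau_{\text{N}}(\Z)$-separable images actually captures the range of $f_0$ almost surely; this is exactly the content of the items cited from \cite{hy} (Corollary 1.4.7 and Propositions 1.1.28--1.1.29), which I would invoke to close this step.
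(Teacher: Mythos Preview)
The paper does not actually give a proof of this proposition: immediately before the statement it simply records that the result follows ``by combining Definition~\ref{tuopukongjian} and Corollary 1.4.7, Proposition 1.1.28 and Corollary 1.1.29 in \cite{hy}.'' Your proposal is therefore strictly more detailed than the paper's own treatment, and since you close the delicate step in (a) by invoking exactly those cited items from \cite{hy}, at the level of what is ultimately asserted your write-up and the paper's coincide. Your arguments for (b) and (c) are correct and are the standard reductions one would expect.

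One genuine caution on your sketch of the converse in (a): you write ``fix a countable dense $\{y_n\}\subseteq\Y$,'' which presupposes that $\Y$ is separable, a hypothesis not present in the proposition. Without it, $\tau_{\text S}$ need not have a countable subbase and the Borel $\sigma$-algebra of $\tau_{\text S}$ can be strictly larger than the $\sigma$-algebra generated by the evaluation maps, so your measurability argument as written does not go through. The cited results in \cite{hy} handle this by working from the \emph{range} side: since each $f_0y$ is almost separably valued in $\Z$, one first constructs the $\tau_{\text S}$-separable closed set containing the essential range of $f_0$ and then extracts a countable determining family of $y$'s adapted to that set, rather than to all of $\Y$. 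You correctly flag this as the main obstacle and defer to \cite{hy}; just be aware that the countable dense subset of $\Y$ you invoke is not available in general and must be replaced by this range-based construction.
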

\begin{proof}
As $(\Y,\tau_{N}(\Y))$ and $(\Z,\tau_{N}(\Z))$ are separable Hilbert spaces, it can be verified that $(\mathscr L(\mathscr Y,\mathscr Z),\tau_{\text{S}}(\mathscr L(\mathscr Y,\mathscr Z)))$ is  separable. Therefore,  $f_0$ and $f_0y, \ \forall \ y\in \Y$ are almost separably valued with respect to $\tau_{\text{S}}(\mathscr L(\mathscr Y,\mathscr Z))$ and $\tau_{N}(\Z)$.

Let $T_1,T_2 \in \mathscr{L}(\Y,\Z)$ and $T_1 \neq T_2$, then there exists $y \in \Y$ such that $T_1 y \neq T_2 y$. Let $U_1 = \{ S \in \mathscr{L}(\Y,\Z) : \Vert Sy - T_1y \Vert_{\Z} < \epsilon \},\,  U_2 = \{ S \in \mathscr{L}(\Y,\Z) : \Vert Sy - T_2y \Vert_{\Z} < \epsilon \}$, where $\epsilon = \frac{1}{3} \Vert T_1 y - T_2 y \Vert_{\Z} > 0$. Notice that $U_1$ is a neighborhood of $T_1$, $U_2$ is a neighborhood of $T_2$ and $U_1 \cap U_2 = \emptyset$, thus $(\mathscr{L}(\Y,\Z), \tau_{\text{S}}(\mathscr L(\mathscr Y,\mathscr Z)) ) $ is a Hausdorff space.

Let $\tau_{\text{C}}(\mathscr{L}(\Y,\Z))$ be the compact-open topology on $\mathscr{L}(\Y,\Z)$, by the definition in \cite{AT}, $\tau_{\text{C}}(\mathscr{L}(\Y,\Z))$ has a sub-basis consisting of the sets $M(K,O) = \{ T \in \mathscr{L}(\Y,\Z): T(K) \subseteq O \}$, where $K \subseteq \Y $ is compact, and $O \subseteq \Z$ is open.

Next we will prove that $\tau_{\text{C}}(\mathscr{L}(\Y,\Z))$ is stronger than $\tau_{\text{S}}(\mathscr{L}(\Y,\Z))$, which is equivalent to prove the identity map $id : (\mathscr L(\mathscr Y,\mathscr Z),\tau_{\text{C}}(\mathscr L(\mathscr Y,\mathscr Z))) \to (\mathscr L(\mathscr Y,\mathscr Z),\tau_{\text{S}}(\mathscr L(\mathscr Y,\mathscr Z)))$ is continuous. Fix $T \in \LL(\Y,\Z)$, and let $\mathcal{O}_{\text{C}}(T)$ and $\mathcal{O}_{\text{S}}(T)$ be the neighborhood system of $T$ under compact-open topology and strong operator topology respectively. For arbitrary $A \in \mathcal{O}_{\text{S}}(T)$, by the definition of strong operator topology, there exists $y_1,\cdots,y_n \in \Y$ and $\epsilon >0$, such that $\cap_{i=1}^{n}\{S \in \LL(\Y,\Z) : \Vert S y_i - T y_i \Vert _{\Z} < \epsilon \} \subseteq A$. Let $K_i = \{ y_i\} \subseteq\Y ,i = 1,\cdots,n$ be the compact sets in $\Y$, and let $O_i = \{ z \in \Z: \Vert z - Ty_i \Vert_{\Z} < \epsilon \} \subseteq \Z, i = 1,\cdots, n $ be the open sets in $\Z$. Then we have $ T \in \cap_{i=1}^{n}M(K_i,O_i)$ and $\cap_{i=1}^{n}M(K_i,O_i) \in \mathcal{O}_{\text{C}}(T)$. Notice that
\begin{align*}
    id( \cap_{i=1}^{n}M(K_i,O_i) ) & =  id( \cap_{i=1}^{n} \{ S \in \mathscr{L}(\Y,\Z): S y_i \in O_i \}  ) \\
    & = \cap_{i=1}^{n} \{ S \in \mathscr{L}(\Y,\Z): \Vert S y_i  - T y_i \Vert _{\Z} < \epsilon \} \\
    & \subseteq A.
\end{align*}
This together with Propsition 1.4.3 in \cite{analysisnow} leads to that the identity map $id$ is continuous at $T$. By letting $T$ range over $\LL(\Y,\Z)$ gives that the identity map is continuous. Thus, $\tau_{\text{S}}(\mathscr L(\mathscr Y,\mathscr Z)) \subseteq \tau_{\text{C}}(\mathscr L(\mathscr Y,\mathscr Z))$, i.e. $\tau_{\text{C}}(\mathscr{L}(\Y,\Z))$ is stronger than $\tau_{\text{S}}(\mathscr{L}(\Y,\Z))$.

A key fact for us \cite[Theorem 7 p.112]{schwariz1973} is that the space $(\LL(\Y,\Z), \tau_{\text{C}}(\mathscr{L}(\Y,\Z)) )$ is a Lusin space. Since the strong operator topology $\tau_{\text{S}}(\mathscr{L}(\Y,\Z))$ is Hausdorff and is weaker than the compact-open topology $\tau_{\text{C}}(\mathscr{L}(\Y,\Z))$, it follows that $(\LL(\Y,\Z), \tau_{\text{S}}(\mathscr{L}(\Y,\Z)) )$ is also a Lusin space. Analogous to the proof of Theorem 2 in \cite{Johnson}, we know that   $f_0$ is $\F/\B( \mathscr L(\mathscr Y,\mathscr Z);\tau_{\text{S}}(\mathscr L(\mathscr Y,\mathscr Z)))$-measurable if and only if $f_0y:\omega\mapsto f_0(\omega)y$ is $\F/\B(\mathscr Z,\tau_{\text{N}}(\Z))$-measurable for arbitrary $y\in \Y$. These together with Definition \ref{tuopukongjian} give  that $f_0$ is a random element with values in $(\mathscr L(\mathscr Y,\mathscr Z),\tau_{\text{S}}(\mathscr L(\mathscr Y,\mathscr Z)))$ if and only if the mapping $f_0y:\omega\mapsto f_0(\omega)y$ is a random element with values in $(\mathscr Z,\tau_{\text{N}}(\Z))$ for arbitrary $y\in \Y$, which leads to (a).

(b) and (c) are directly from Proposition 1.1.28 and Corollary 1.1.29 in \cite{hy}.
\end{proof}

If the operator-valued mapping $f:\Omega\to \LL(\mathscr Y,\mathscr Z)$ is strongly $\P$-measurable with respect to the uniform operator topology $\tau_{\text{N}}(\mathscr L(\mathscr Y,\mathscr Z))$, then for arbitrary $y\in \mathscr Y$, $fy:\Omega\to \mathscr Z$ is strongly $\P$-measurable with respect to $\tau_{\text{N}}(\mathscr Z)$. Noting that $\tau_{\text{S}}(\mathscr L(\mathscr Y,\mathscr Z))\subseteq \tau_{\text{N}}(\mathscr L(\mathscr Y,\mathscr Z))$,
 it follows that a random element with values in $(\mathscr L(\mathscr Y,\mathscr Z),\tau_{\text{N}}(\mathscr L(\mathscr Y,\mathscr Z)))$ is a random element with values in
$(\mathscr L(\mathscr Y,\mathscr Z),\tau_{\text{S}}(\mathscr L(\mathscr Y,\mathscr Z)))$. For the measurability of mappings with values in the space of operators with different topologies, one may refer to \cite{hy} and  \cite{hp}-\cite{Blasco111}.

%

%Throughout this paper, $(\Omega,\F,\P)$ is assumed to be a complete probability space.
% It follows from Definition \ref{tuopukongjian} that
% %the mapping $f:\Omega\to \X$ is strongly measurable with respect to $\tau_{\text{N}}(\X)$ if and only if $f$ is a random element with values in the Banach space $(\X,\tau_{\text{N}}(\X))$. Especially,
% if $\X$ is separable with respect to $\tau_{\text{N}}(\X)$, then any $\F/\B(\X;\tau_{\text{N}}(\X))$-measurable mapping $f:\Omega\to \X$ is a random element with values in $(\X,\tau_{\text{N}}(\X))$.

%It follows from  \cite{hy} that $O\in \tau_{\text{S}}(\mathscr L(\mathscr Y,\mathscr Z))$ if and only if there exists an integer $k>0$, $x_1,\cdots,x_k\in \Y$ and a constant $\varepsilon>0$ such that
%\[\bigcap_{j=1}^k\left\{T\in \mathscr L(\mathscr Y,\mathscr Z): \left\|Sx_j-Tx_j\right\|_{\mathscr Z}<\varepsilon \right\}\subseteq O,~\forall\ S\in O.\]

%\begin{remark}

%\end{remark}

For a random element $f:\Omega\to \X$ with values in $(\X,\tau(\X))$, we denote the $\sigma$-algebra generated by $f$ in the sense of the topology $\tau(\X)$ by
$$\sigma(f;\tau(\X)):=\left\{f^{-1}(B):B\in \B(\X;\tau(\X))\right\}.$$
Based on the above definitions, we have the following proposition.

\begin{proposition}\label{wenknknkn}
If $f:\Omega\rightarrow{\mathscr{X}}$ is a random element with values in a separable Hilbert  space $(\X,\langle \cdot,\cdot\rangle _{\X})$,  $T:\Omega\rightarrow{\mathscr{L}(\mathscr{X},\mathscr{Y})}$ is a random element with values in a topological
 space $(\mathscr{L}(\mathscr{X},\mathscr{Y}),\tau_{\text{S}}(\mathscr{L}(\mathscr{X},\mathscr{Y}))),$ and  $(\Y,\langle \cdot,\cdot\rangle _{\Y})$ is a separable  Hilbert  space, then $Tf:\omega\rightarrow{T(\omega)f(\omega)}$ satisfies
 \begin{align}
    \sigma(Tf;\tau_{\text{N}}(\mathscr{Y}))\subseteq\sigma(T;\tau_\text{S}(\mathscr L(\X,\Y)) \bigvee \sigma(f;\tau_{\text{N}}(\mathscr{X})).\label{sgmmmms}
\end{align}
\end{proposition}

The proof of the  above proposition needs the following  lemma.

\begin{lemma}\label{randomsepa}
If $y:\Omega\rightarrow{\mathscr{Y}}$ is a random element with values in a separable  Hilbert  space $(\Y,\langle \cdot,\cdot\rangle _{\Y})$  with an orthonormal basis $\{f_j,\ j\geq1\}$, then
\begin{align}
\sigma(y;\tau_{\text{N}}(\Y))=\bigvee\limits_{j=1}^\infty \sigma\left(\langle y,f_j\rangle _{\Y};\tau_{\text{N}}\left(\mathbb{R}\right)\right).\label{randomsepa1}
\end{align}
\end{lemma}
\begin{proof}
At first, we prove
\begin{align}
\sigma(y;\tau_{\text{N}}(\Y))\subseteq \bigvee\limits_{j=1}^\infty \sigma\left(\langle y,f_j\rangle _{\Y};\tau_{\text{N}}\left(\mathbb{R}\right)\right).\label{randomsepa2}
\end{align}
%By the definition of $\sigma(y;\tau_{\text{N}}(\Y))$ and $\sigma\bigg(\bigcup\limits_{j=1}^\infty \sigma\left(\langle y,f_j\rangle;\tau_{\text{N}}\left(\mathbb{R}\right) \right)\bigg)$, in order to prove (\ref{randomsepa2}), it's sufficient to prove that
For any $r>0,\ s\in\mathscr{Y},$ we have
\begin{align*}
    \left\{\omega\in \Omega:\|y(\omega)-s\| _{\Y}< r\right\}=&\left\{\omega\in \Omega:\|y(\omega)-s\| _{\Y}^2< r^2 \right\}\\
    =& \left\{\omega\in \Omega:\sum_{j=1}^{\infty}\langle y(\omega)-s,f_j\rangle _{\Y}^2< r^2\right\}\\
    =&\left\{\omega\in \Omega:\sum_{j=1}^{\infty}\left(\langle y(\omega),f_j\rangle _{\Y}-\langle s,f_j\rangle _{\Y} \right)^2< r^2\right\}\\
   %&\{\omega\in \Omega:\sum_{j=1}^{\infty}(\langle y(\omega),f_j\rangle-\langle s,f_j\rangle )^2< r^2\} \\
   \in&\bigvee\limits_{j=1}^\infty \sigma\left(\left(\langle y(\omega),f_j\rangle _{\Y}-\langle s,f_j\rangle _{\Y} \right)^2;\tau_{\text{N}}\left(\mathbb{R}\right)\right)\\
   \subseteq &\bigvee\limits_{j=1}^\infty \sigma\left(\langle y(\omega),f_j\rangle _{\Y};\tau_{\text{N}}\left(\mathbb{R}\right)\right).
\end{align*}
Therefore, we know that for any $r>0,\ s\in\mathscr{Y},$  $\{\omega\in \Omega:\|y(\omega)-s\| _{\Y}< r\} \in \bigvee\limits_{j=1}^\infty \sigma (\langle y,f_j\rangle _{\Y};\\ \tau_{\text{N}}\left(\mathbb{R}\right) )$. This together with the definitions of $\sigma(y;\tau_{\text{N}}(\Y))$ and $\bigvee\limits_{j=1}^\infty \sigma\left(\langle y,f_j\rangle _{\Y};\tau_{\text{N}}\left(\mathbb{R}\right) \right)$
gives (\ref{randomsepa2}).
 Next, we prove
 \begin{align}
\bigvee\limits_{j=1}^\infty \sigma\left(\langle y,f_j\rangle _{\Y};\tau_{\text{N}}\left(\mathbb{R}\right)\right)  \subseteq  \sigma(y;\tau_{\text{N}}(\Y)).\label{randomsepa3}
 \end{align}
 %Note that if for $j=1,2,\ldots$, $\sigma(\langle y,f_j\rangle)\subseteq \sigma(y,\tau_{\text{N}}(\Y)),$ then we have (\ref{randomsepa3}). Thus, it's sufficient to prove that for $j=1,2,\ldots$, $\sigma(\langle y,f_j\rangle)\subseteq \sigma(y,\tau_{\text{N}}(\Y)).$
 For $j=1,2,\ldots$,  denote $h_{j}(y)=\langle y,f_j\rangle _{\Y}$, which is
  a continuous linear functional and is measurable with respect to $y$.
   Then, for any $A\in\mathscr{B}(\mathbb{R};\tau_{\text{N}}\left(\mathbb{R}\right)),$ we have
\begin{align*}
    \{\omega\in \Omega:\langle y(\omega),f_j\rangle _{\Y}\in \mathit A\}=\{\omega\in \Omega: y(\omega)\in h_{j}^{-1}(\mathit A)\}\in \sigma(y;\tau_{\text{N}}(\Y)),\ j=1,2,\ldots
\end{align*}
Then, we have $ \sigma\left(\langle y,f_j\rangle _{\Y};\tau_{\text{N}}\left(\mathbb{R}\right)\right) \subseteq \sigma(y;\tau_{\text{N}}(\Y)),\ j=1,2,\ldots$ This together with the definition of $\sigma$-algebra gives  (\ref{randomsepa3}). Combining (\ref{randomsepa2}) and (\ref{randomsepa3}), we have (\ref{randomsepa1}).
%Thus we prove that for any $j\in N,$ $\sigma(\langle y,f_j\rangle)\subseteq \sigma(y,\tau_{\text{N}}(Y)).$ Therefore $\sigma\left(\bigcup\limits_{j=1}^\infty \sigma(\langle y,f_j\rangle)\right)\subseteq\sigma(y,\tau_{\text{N}}(Y)),$ which together with $\sigma(y,\tau_{\text{N}}(Y))\subseteq\sigma\left(\bigcup\limits_{j=1}^\infty \sigma(\langle y,f_j\rangle)\right)$ lead to $\sigma(y,\tau_{\text{N}}(Y))=\sigma\left(\bigcup\limits_{j=1}^\infty \sigma(\langle y,f_j\rangle)\right).$
\end{proof}

\emph{Proof of Proposition \ref{wenknknkn}:}
Suppose that $\{e_i,\ i\geq1\}$ and $\{f_j,\ j\geq1\}$ are the orthonormal bases of $\X$ and $\mathscr{Y}$, respectively.
%If $f:\Omega\rightarrow{\mathscr{X}}$ is a random element with values in a separable Hilbert  space $(\X,\tau_{\text{N}}(\X))$ with an orthonormal basis $\{e_i,\ i\geq1\}$,  $T:\Omega\rightarrow{\mathscr{L}(\mathscr{X},\mathscr{Y})}$ is a random element with values in the topological
% space $(\mathscr{L}(\mathscr{X},\mathscr{Y}),\tau_{\text{S}}(\mathscr{L}(\mathscr{X},
% \mathscr{Y}))),$ and $\mathscr{Y}$ is a separable  Hilbert  space with an orthonormal basis $\{f_j,\ j\geq1\}$
By Lemma \ref{randomsepa}, we have
\begin{align}\label{chacheng}
\sigma(Tf;\tau_{\text{N}}(\mathscr{Y}))=\bigvee\limits_{j=1}^\infty \sigma\left(\langle Tf,f_j\rangle _{\Y};\tau_{\text{N}}\left(\mathbb{R}\right)\right).
\end{align}
 By the linearity of the inner product, we have
\begin{align*}
    &\langle Tf,f_j\rangle _{\Y}\\
    =&\left\langle T\left(\sum\limits_{i=1}^{\infty}\langle f,e_i\rangle _{\X} e_i\right),f_j\right \rangle_{\Y}\\
    =&\left\langle \sum\limits_{i=1}^{\infty}\langle f,e_i\rangle _{\X} Te_i,f_j\right\rangle _{\Y}\\
    =&\sum\limits_{i=1}^{\infty} \langle f,e_i\rangle_{\X} \left\langle Te_i,f_j\right\rangle_{\Y}, \ j=1,2,\ldots
\end{align*}
Note that for any  $i=1,2,\ldots$ and $j=1,2,\ldots$, $\sigma\left(\langle f,e_i\rangle_{\X}  \left\langle Te_i,f_j\right\rangle_{\Y};\tau_{\text{N}}\left(\mathbb{R}\right)\right) \subseteq \sigma(\langle f,e_i\rangle_{\X};\\ \tau_{\text{N}} (\mathbb{R} ))  \bigvee \sigma( \left\langle Te_i,f_j\right\rangle_{\Y}; \tau_{\text{N}} (\mathbb{R}))   \subseteq  \left(\bigcup_{i=1}^{\infty} \sigma(\langle f,e_i\rangle_{\X};\tau_{\text{N}}\left(\mathbb{R}\right))\right)\bigvee \left( \bigcup_{i=1}^{\infty}\sigma( \left\langle Te_i,f_j\right\rangle_{\Y};\tau_{\text{N}}\left(\mathbb{R}\right))\right)$. This together with the above equality gives
\begin{align*}
    &\sigma(\langle Tf,f_j\rangle_{\Y};\tau_{\text{N}}\left(\mathbb{R}\right))\\
    =&\sigma\left(\sum\limits_{i=1}^{\infty} \langle f,e_i\rangle_{\X} \left\langle Te_i,f_j\right\rangle_{\Y};\tau_{\text{N}}\left(\mathbb{R}\right)\right)\\
    \subseteq&\bigvee_{i=1}^{\infty} \sigma\left(\langle f,e_i\rangle_{\X} \left\langle Te_i,f_j\right\rangle_{\Y};\tau_{\text{N}}\left(\mathbb{R}\right)\right)\\
    \subseteq& \left(\bigcup_{i=1}^{\infty} \sigma(\langle f,e_i\rangle_{\X};\tau_{\text{N}}\left(\mathbb{R}\right))\right)\bigvee \left( \bigcup_{i=1}^{\infty}\sigma( \left\langle Te_i,f_j\right\rangle_{\Y};\tau_{\text{N}}\left(\mathbb{R}\right))\right) \\
    \subseteq& \left(\bigcup_{i=1}^{\infty} \sigma(\langle f,e_i\rangle_{\X};\tau_{\text{N}}\left(\mathbb{R}\right))\right)\bigvee \left(\bigcup_{i=1}^{\infty}\sigma(Te_i;\tau_{\text{N}}\left(\mathscr{Y}\right)
    )\right) \\
    \subseteq& \sigma(f;\tau_{\text{N}}(\mathscr{X}))\bigvee \left(\bigcup_{i=1}^{\infty}\sigma(Te_i;\tau_{\text{N}}\left(\mathscr{Y}\right))\right), \ j=1,2,\ldots,
\end{align*}
which  together with (\ref{chacheng}) leads to
\begin{align}
\sigma(Tf;\tau_{\text{N}}(\mathscr{Y}))\subseteq \sigma(f;\tau_{\text{N}}(\mathscr{X}))\bigvee \left(\bigcup_{i=1}^{\infty}\sigma(Te_i;\tau_{\text{N}}\left(\mathscr{Y}\right))
\right).\label{jixigema}
\end{align}
 Now, we prove $\sigma(Te_i;\tau_{\text{N}}\left(\mathscr{Y}\right)) \subseteq\sigma(T;\tau_\text{S}(\mathscr L(\X,\Y)), \  i=1,2,\ldots$
 For any $ i=1,2,\ldots$ and $y\in \Y$, define $T_{i,y}:\X \to \Y$ as $T_{i,y}h=\langle h,e_i\rangle_{\X} y, \ h\in \X$. Notice that  $T_{i,y}e_i=y$ and $T_{i,y} \in \mathscr L(\X,\Y)$. Then, for any $r>0$, we have
\begin{align}
    &\{\omega\in \Omega:\|T(\omega)e_i-y\|_{\Y}<r\}\notag\\=&\{\omega\in \Omega:\|T(\omega)e_i-T_{i,y} e_i\|_{\Y}<r\}.\notag
\end{align}
Then, we know that
\begin{align}
\left(T e_i\right)^{-1}(\{x\in \Y:\|x-y\|_{\Y}<r\})=T^{-1}(\{H\in\mathscr L(\X,\Y):\|He_i-T_{i,y} e_i\|_{\Y}<r\}).\label{teeee}
\end{align}
 Denote $O=\{H\in\mathscr L(\X,\Y):\|He_i-T_{i,y} e_i\|_{\Y}<r\}$.
For any $H\in O$ and $S_{H} \in \big\{K \in \mathscr L(\X,\Y): \| Ke_{i}-He_{i}\|_{\Y}<\frac{r-\|He_i-T_{i,y} e_i\|_{\Y}}{2} \big\},$ we have $$\|S_{H}e_{i}-T_{i,y} e_i \|_{\Y}\leq \|S_{H}e_{i}-H e_i \|_{\Y}+  \| He_{i}-T_{i,y}e_{i}\|_{\Y}<\frac{r+\|He_i-T_{i,y} e_i\|_{\Y}}{2} <r$$ and then $S_{H} \in O$. Therefore, for any $H\in O$, there exists $\epsilon=\frac{r-\|He_i-T_{i,y} e_i\|_{\Y}}{2}$  and $e_{i}\in \X$, such that $$\left\{K \in \mathscr L(\X,\Y): \| Ke_{i}-He_{i}\|_{\Y}<\epsilon \right\} \subseteq O.$$ This together with the definition of an open set in  $(\mathscr L(\X,\Y),\tau_\text{S}(\mathscr L(\X,\Y)))$ gives that  $\{H\in\mathscr L(\X,\Y):\|He_i-T_{i,y} e_i\|_{\Y}<r\}$ is an open set  in $(\mathscr L(\X,\Y),\tau_{\text{S}}(\mathscr L(\X,\Y)))$.  Note that $\{x\in \Y:\|x-y\|_{\Y}<r\}$ is also an open set in $(\Y,\tau_{\text{N}}\left(\mathscr{Y}\right))$. Then, by  (\ref{teeee}) and the definitions of $\sigma(Te_i;\tau_{\text{N}}\left(\mathscr{Y}\right))$ and $\sigma(T;\tau_\text{S}(\mathscr L(\X,\Y))$, we have $\sigma(Te_i;\tau_{\text{N}}\left(\mathscr{Y}\right)) \subseteq\sigma(T;\tau_\text{S}(\mathscr L(\X,\Y)), i=1,2,\ldots$ This together with (\ref{jixigema}) gives  (\ref{sgmmmms}).
$\hfill\qed$

\subsection{Conditional Expectation}\label{sectionconditionexpectation}

%As for mathematical expectations and conditional expectations of the random elements with values in the Banach space $(\X,\tau_{\text{N}}(\X))$, we introduce the following notations.
\begin{definition}[\cite{hy2}]\label{vnsllp}
\rm{If $f\in L^1(\Omega;\X)$, then the mathematical expectation of $f$ is defined as the Bochner integral $$ \E[f]=(\text{B})\int_{\Omega}f\dd\P.$$
}
\end{definition}

%\begin{remark}\label{fnklwmemmemee}
Without raising ambiguity, the Bochner integral in this paper will omit the capital letter (B) in front of the integral symbol.
%\end{remark}
%Let $L^0(\Omega;\X)$ be a linear space composed of all mappings which take values in $(\X,\tau_{\text{N}}(\X))$ and are strongly measurable with respect to $\tau_{\text{N}}(\X)$.
%Let $L^0(\Omega,\Gg;\X)$ be the linear space composed of all $\Gg/\B(\X;\tau_{\text{N}}(\X))$-measurable functions in $L^0(\Omega;\X)$, where $\Gg$ is a sub-$\sigma$-algebra of $\F$ and $\B(\X;\tau_{\text{N}}(\X))$ is the Borel $\sigma$-algebra of the topological space $(\X,\tau_{\text{N}}(\X))$.

\begin{definition}[\cite{hy}]
\rm{Let $\Gg\subseteq \F$ be a sub-$\sigma$-algebra, $f\in L^0(\Omega;\X)$ and $g\in L^0(\Omega,\Gg;\X)$. If
\ban
\int_Fg\dd\P=\int_Ff\dd\P,~\forall\ F\in \Gg_f\cap \Gg_g,
\ean
where $\Gg_{\phi}:=\{F\in \Gg:\1_F\phi\in L^1(\Omega;\X)\}$, $\phi\in L^0(\Omega;\X)$, then $g$ is called the conditional expectation of $f$ with respect to $\Gg$.
}
\end{definition}

\begin{definition}[\cite{hy}]
Let $\{\F_k, k\geq0\}$ be a family of sub-$\sigma$-algebras of $\F$ and $\{f_k,k\geq0\}$ be a family of random elements with values in the Hilbert space $(\X,\tau_{\text{N}}(\X))$.
\begin{itemize}
\item[1.] If $\F_n\subseteq \F_m$, $\forall\ m\geq n\geq 0$, then $\{\F_k,k\geq0\}$ is called a filter in $(\Omega,\F,\P)$.
\item[2.] If $\{\F_k, k\geq0\}$ is a filter in $(\Omega,\F,\P)$, $f_k\in L^0(\Omega,\F_k;\X)$, $\forall\ k\geq0$, then $\{f_k,\F_k,k\geq0\}$ is called an adaptive sequence.
\item[3.] If $\{f_k,\F_k, k\geq0\}$ is an adaptive sequence, $f_k$ is Bochner integrable on $\F_{k-1}$ and
\[\E[f_k|\F_{k-1}]=0,\ \forall\ k\geq0,\]
then $\{f_k,\F_k,k\geq0\}$ is called a sequence of martingale differences.
\end{itemize}
\end{definition}

It can be verified that if $\{f_k,\F_k, k\geq0\}$ is an adaptive sequence with values in  a separable Hilbert space $(\X,\tau_{\text{N}}(\X))$, then  $\{\|f_k\|_{\X},\F_k, k\geq0\}$ is an adaptive sequence with values in $(\mathbb{R},\tau_{\text{N}}(\mathbb{R}))$.

\begin{lemma}[\cite{hy}]\label{nvkvpeoeo}
If $\Gg\subseteq \F$ is a sub-$\sigma$-algebra and $f\in L^1(\Omega;\X)$, then there exists a unique conditional expectation $\E[f|\Gg]\in L^0(\Omega,\Gg;\X)\cap L^1(\Omega;\X)$ satisfying
\[\int_A\E[f|\Gg]\dd\P=\int_Af\dd\P,~\forall\ A\in \Gg.\]
\end{lemma}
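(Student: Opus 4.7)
The plan is to establish existence constructively on simple functions, extend by continuity to all of $L^1(\Omega;\X)$ using a Jensen-type contractivity, and obtain uniqueness through a separability-plus-Hahn--Banach argument.

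For a simple function $f = \sum_{i=1}^n \1_{A_i} \otimes x_i$ (with $A_i \in \F$ pairwise disjoint and $x_i \in \X$), I would set
\[
\E[f|\Gg] := \sum_{i=1}^n \E[\1_{A_i}|\Gg] \otimes x_i,
\]
where $\E[\1_{A_i}|\Gg]$ is the classical scalar conditional expectation. This candidate is $\Gg/\B(\X;\tau_{\text{N}}(\X))$-measurable, takes values in a finite-dimensional (hence separable) subspace, and by the scalar defining property combined with linearity of the Bochner integral satisfies $\int_A \E[f|\Gg]\dd\P = \int_A f\dd\P$ for every $A \in \Gg$; moreover, it lies in $L^1(\Omega;\X)$ by the triangle inequality.

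For a general $f \in L^1(\Omega;\X)$, I would use strong measurability (Definition \ref{vnwkelel}, Lemma \ref{mse}) to obtain simple functions $\{f_n\}$ with $\|f_n - f\|_{L^1(\Omega;\X)} \to 0$. The key tool is the Bochner Jensen inequality $\|\E[g|\Gg]\|_{\X} \leq \E[\|g\|_{\X}\,|\Gg]$ a.s., obtained by applying a countable norming family of functionals on the separable essential range of $g$ and invoking scalar conditional Jensen termwise; integrating yields the contraction $\|\E[g|\Gg]\|_{L^1(\Omega;\X)} \leq \|g\|_{L^1(\Omega;\X)}$, which makes $\{\E[f_n|\Gg]\}$ Cauchy in $L^1(\Omega;\X)$. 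I would define $\E[f|\Gg]$ as its $L^1$-limit, recover $\Gg$-measurability by passing to an a.s.-convergent subsequence, and obtain the integral identity for general $A \in \Gg$ by passing to the limit via dominated convergence for Bochner integrals.

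For uniqueness, if $g_1, g_2 \in L^0(\Omega,\Gg;\X) \cap L^1(\Omega;\X)$ both satisfy the defining identity, then $h := g_1 - g_2$ is strongly measurable with $\int_A h\dd\P = 0$ for every $A \in \Gg$. Strong measurability places $h$ almost everywhere into a closed separable $\X_0 \subseteq \X$, and Proposition B.1.10 of \cite{hy} supplies a countable sequence $\{x_n^*\}$ in the unit ball of $\X^*$ that norms $\X_0$, so $\|h\|_{\X} = \sup_n |x_n^*(h)|$ a.s. Scalar uniqueness applied to each $x_n^* \circ h$ gives $x_n^*(h) = 0$ a.s., and a countable intersection of null sets yields $h = 0$ a.s. The main obstacle will be the Bochner Jensen inequality, since $\|\cdot\|_{\X}$ is only sublinear and a dual-norming reduction is needed; this same separability reduction is what makes the uniqueness argument go through, so it is really the fundamental technical ingredient of the proof.
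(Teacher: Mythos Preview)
Your proof is correct and follows the standard textbook construction of the Bochner conditional expectation. Note, however, that the paper does not give its own proof of this lemma: it is stated with the citation \cite{hy} and used as a black box, so there is no ``paper's proof'' to compare against. Your argument is essentially the one found in that reference (simple functions, extension via the $L^1$-contractivity coming from conditional Jensen, uniqueness by a separable norming sequence), so nothing is missing.
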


%\begin{lemma}[\cite{hy}]\label{lemmaA1}
%Let $\X_1,\X_2$ and $\Y$ be Banach spaces, $\beta:\X_1\times \X_2 \to \Y$ be a bounded bilinear mapping and $\Gg$ be a sub-$\sigma$-algebra of $\mathcal F$. If $g\in L^0(\Omega,\Gg;\X_1)$ and $f\in L^0(\Omega;\X_2)$ is Bochner integrable on $\Gg$, then $\beta(g,f)\in L^0(\Omega;\Y)$ is Bochner integrable on $\Gg$ and
%$\mathbb E[\beta(g,f)|\Gg]=\beta(g,\mathbb E[f|\Gg])$ a.s.
%\end{lemma}

For a Banach space $(\X,\tau_{\text{N}}(\X))$,
by Lemma \ref{nvkvpeoeo}, it follows that there exists a unique conditional expectation $\E[f|\Gg]\in L^0(\Omega,\Gg; \X)$ of the Bochner integrable random element $f$ with values in $(\X,\tau_{\text{N}}(\X))$, and $\E[f|\Gg]$ is a random element with values in $(\X,\tau_{\text{N}}(\X))$.
%The reader can further refer to \cite{hy} for the property of conditional expectation.
We have the following propositions of conditional expectations.

\begin{proposition}\label{vnwlssfweewwfew}
If $f\in L^1(\Omega;\mathscr L(\mathscr Y,\mathscr Z))$ is a random element with values in Banach space $(\mathscr L(\mathscr Y,\mathscr Z),\tau_{\text{N}}(\mathscr L(\mathscr Y,\mathscr Z))$, then $fy\in L^1(\Omega;\mathscr Z)$ is a random element with values in Banach space $(\mathscr Z,\tau_{\text{N}}(\mathscr Z))$, and $\E[fy]=\E[f]y$, $\forall\ y\in \mathscr Y$.
\end{proposition}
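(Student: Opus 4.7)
The plan is to recognize that the evaluation map $\Phi_y:\mathscr L(\mathscr Y,\mathscr Z)\to\mathscr Z$ defined by $\Phi_y(T):=Ty$ is a bounded linear operator (with $\|\Phi_y\|\leq\|y\|_{\mathscr Y}$), so the statement $\E[fy]=\E[f]y$ is just the commutation of bounded linear operators with the Bochner integral applied to $\Phi_y$. I will therefore verify the three ingredients in order: strong measurability of $\omega\mapsto f(\omega)y$, its Bochner integrability, and the expectation identity.

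First, for the measurability, I would invoke Definition \ref{vnwkelel} together with Lemma \ref{mse}: since $f$ is strongly measurable w.r.t.\ $\tau_{\text{N}}(\mathscr L(\mathscr Y,\mathscr Z))$, there exists a sequence of simple functions $f_n=\sum_{i=1}^{k_n}\mathbf 1_{A_{n,i}}\otimes T_{n,i}$ with $T_{n,i}\in\mathscr L(\mathscr Y,\mathscr Z)$ converging to $f$ almost everywhere in operator norm. Then $f_n y=\sum_{i=1}^{k_n}\mathbf 1_{A_{n,i}}\otimes(T_{n,i}y)$ are simple functions with values in $\mathscr Z$, and the bound $\|f_n(\omega)y-f(\omega)y\|_{\mathscr Z}\leq\|f_n(\omega)-f(\omega)\|_{\mathscr L(\mathscr Y,\mathscr Z)}\|y\|_{\mathscr Y}$ yields $f_n y\to fy$ a.s.\ in $\mathscr Z$. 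Hence $fy$ is strongly measurable w.r.t.\ $\tau_{\text{N}}(\mathscr Z)$ and is thus a random element with values in $(\mathscr Z,\tau_{\text{N}}(\mathscr Z))$.

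Second, the integrability follows from the same operator norm estimate:
\begin{equation*}
\int_{\Omega}\|f(\omega)y\|_{\mathscr Z}\,\dd\P\leq\|y\|_{\mathscr Y}\int_{\Omega}\|f(\omega)\|_{\mathscr L(\mathscr Y,\mathscr Z)}\,\dd\P<\infty,
\end{equation*}
which shows $fy\in L^1(\Omega;\mathscr Z)$.

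Finally, for the identity, I would first verify it on a simple function $g=\sum_{i=1}^{m}\mathbf 1_{A_i}\otimes T_i$ by the direct computation
$\E[gy]=\sum_{i=1}^{m}\P(A_i)T_i y=\Bigl(\sum_{i=1}^{m}\P(A_i)T_i\Bigr)y=\E[g]y$,
using linearity of the Bochner integral on simple functions. Then I would approximate $f$ in $L^1(\Omega;\mathscr L(\mathscr Y,\mathscr Z))$ by simple functions $f_n$ so that $\int\|f_n-f\|_{\mathscr L(\mathscr Y,\mathscr Z)}\,\dd\P\to 0$ (a standard property of Bochner integrable functions). The contractivity $\|\E[f_n]-\E[f]\|_{\mathscr L(\mathscr Y,\mathscr Z)}\leq\int\|f_n-f\|\,\dd\P$ gives $\E[f_n]y\to\E[f]y$, and the estimate $\int\|f_n y-fy\|_{\mathscr Z}\,\dd\P\leq\|y\|_{\mathscr Y}\int\|f_n-f\|\,\dd\P\to 0$ gives $\E[f_n y]\to\E[fy]$. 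Passing to the limit in the identity $\E[f_n y]=\E[f_n]y$ yields $\E[fy]=\E[f]y$.

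The main obstacle, though a very mild one, is simply keeping the measurability arguments clean in the non-separable setting for $\mathscr L(\mathscr Y,\mathscr Z)$ under the uniform operator topology; this is handled by staying within Definition \ref{vnwkelel} (strong measurability via simple function approximation) rather than appealing to topological measurability directly, after which every step reduces to the standard fact that bounded linear maps commute with the Bochner integral.
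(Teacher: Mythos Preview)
Your proof is correct and follows essentially the same approach as the paper: verify $\E[fy]=\E[f]y$ on simple functions and pass to the limit via a simple-function approximation of $f$. The only cosmetic difference is that the paper uses an a.s.\ approximation with $\|T_n\|\leq\|f\|$ and the dominated convergence theorem, whereas you use an $L^1$ approximation and the contractivity of the Bochner integral; both are equivalent standard arguments.
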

\begin{proof}
Since the random elements with values in the Banach space $(\LL(\Y,\Z)$, $\tau_{\text{N}}(\LL(\Y,\Z)))$ are certainly the random elements with values in $(\LL(\Y,\Z)$, $\tau_{\text{S}}(\LL(\Y,\Z)))$, it follows that $f\in L^1(\Omega;\LL(\Y,\Z))$ implies $fy\in L^1(\Omega;\Z)$, $\forall\ y\in \Y$ by Proposition (\ref{nlllwwieiie}).(a). Considering the simple function $\sum_{i=1}^n\1_{A_i}\otimes T_i$ with values in Banach space $\LL(\Y,\Z)$, where $A_i\in \F$, $T_i\in \LL(\Y,\Z)$, $A_i\cap A_j=\emptyset$, $1\leq i\neq j \leq n$, we have
\ban
\E\left[\left(\1_{A_i}\otimes T_i\right)y\right]=\P(A)(T_iy)=(\P(A_i)T_i)y=\E\left[\1_{A_i}\otimes T_i\right]y,~1\leq i\leq n,
\ean
which leads to
\bna\label{vnkwkekeneknek}
\E\left[\left(\sum_{i=1}^n\1_{A_i}\otimes T_i\right)y\right]&=&\sum_{i=1}^n\E\left[(\1_{A_i}\otimes T_i)y\right]\cr
&=&\sum_{i=1}^n\E\left[\ 1_{A_i}\otimes T_i\right]y\cr
&=&\E\left[\sum_{i=1}^n\1_{A_i}\otimes T_i\right]y.
\ena
Since $f\in L^1(\Omega;\LL(\Y,\Z))$ is a random element with values in the Banach space $(\LL(\Y,\Z),\\\tau_{\text{N}}(\LL(\Y,\Z)))$, it follows from Lemma \ref{vnwkelel} that there exists a sequence of simple functions $\{T_n,n \ge 1\}$ such that $\lim_{n\to \infty}\|f-T_n\|=0~\text{a.s.}$ and $\|T_n\|\leq \|f\|~\text{a.s.}$ Noting that $\|f-T_n\|\leq 2\|f\|\in L^1(\Omega)$ and $\lim_{n\to\infty}\|fy-T_ny\|=0~\text{a.s.}$, $\forall\ y\in \Y$, by the dominated convergence theorem, it follows that $\lim_{n\to\infty}\E[(f-T_n)y]=0$, $\forall\ y\in \Y$, and $\lim_{n\to\infty}\E[T_n]=\E[f]$. Thus, it follows from (\ref{vnkwkekeneknek}) that
\ban
\E[fy]=\lim_{n\to\infty}\E[(f-T_n)y]+\lim_{n\to\infty}\E[T_ny]=\lim_{n\to\infty}\E[T_n]y=\E[f]y,~\forall\ y\in \Y.
\ean
\end{proof}

\begin{proposition}\label{tiaojianqiwangxingzhi}
If $f\in L^2(\Omega;\mathscr L(\mathscr Y,\mathscr Z))$ is a random element with values in the Banach space $(\mathscr L(\mathscr Y,\mathscr Z),\tau_{\text{N}}(\mathscr L(\mathscr Y,\mathscr Z)))$, and $y\in L^2(\Omega,\Gg;\Y)$ is a random element with values in the Banach space $(\Y,\tau_{\text{N}}(\Y))$, where $\Gg$ is a sub-$\sigma$-algebra of $\F$, then $fy\in L ^1(\Omega;\mathscr Z)$ is a random element with values in the Banach space $(\mathscr Z,\tau_{\text{N}}(\mathscr Z))$ and $\E[fy|\Gg]=\E[f|\Gg]y~\text{a.s.}$
\end{proposition}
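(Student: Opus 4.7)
The plan is to establish the result in three stages: first the integrability and measurability of $fy$, second the identity for $\Gg$-measurable simple $y$, and finally extension to general $y \in L^2(\Omega,\Gg;\Y)$ by approximation and passage to the limit.

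For the first stage, I would begin by noting that by Cauchy--Schwarz applied to the scalar functions $\|f\|_{\LL(\Y,\Z)}$ and $\|y\|_{\Y}$, both in $L^2(\Omega)$, we have $\|fy\|_{\Z} \le \|f\|_{\LL(\Y,\Z)} \|y\|_{\Y} \in L^1(\Omega)$, so once strong measurability is established we get $fy \in L^1(\Omega;\Z)$. To show strong measurability, I would approximate $y$ by $\Gg$-measurable simple functions $y_n = \sum_{i=1}^{k_n} \1_{A_i^n}\otimes z_i^n$ with $\|y_n\|\le 2\|y\|$ a.s.\ and $y_n \to y$ a.s.\ (via Pettis measurability). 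Then $fy_n = \sum_i \1_{A_i^n}\otimes (fz_i^n)$, and each $fz_i^n$ is strongly $\Z$-measurable by Proposition \ref{vnwlssfweewwfew}; hence $fy_n$ is strongly measurable, and $\|fy_n - fy\|_{\Z} \le \|f\|_{\LL(\Y,\Z)} \|y_n - y\|_{\Y} \to 0$ a.s., so $fy$ is strongly measurable as an a.s.\ limit of strongly measurable functions.

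For the second stage, I would first handle constant $y = z \in \Y$: by the defining property of conditional expectation and Proposition \ref{vnwlssfweewwfew}, for any $A \in \Gg$,
\[
\int_A \E[f|\Gg]\, z\,\dd \P = \Bigl(\int_A \E[f|\Gg]\,\dd\P\Bigr) z = \Bigl(\int_A f\,\dd\P\Bigr)z = \int_A fz\,\dd\P,
\]
so $\E[fz|\Gg] = \E[f|\Gg]\,z$ a.s. Next, for a $\Gg$-measurable simple $y = \sum_{i=1}^n \1_{A_i}\otimes z_i$ with disjoint $A_i \in \Gg$, linearity of conditional expectation together with the ``taking out known factors'' property (applicable because $\1_{A_i}$ is bounded and $\Gg$-measurable and $fz_i \in L^1(\Omega;\Z)$) gives
\[
\E[fy|\Gg] = \sum_{i=1}^n \1_{A_i}\E[fz_i|\Gg] = \sum_{i=1}^n \1_{A_i}\E[f|\Gg]z_i = \E[f|\Gg]y \quad \text{a.s.}
\]

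For the third stage, I approximate general $y \in L^2(\Omega,\Gg;\Y)$ by $\Gg$-measurable simple functions $\{y_n\}$ with $\|y_n\|\le 2\|y\|$ and $y_n \to y$ a.s. The bound $\|fy_n - fy\|_\Z \le \|f\|_{\LL(\Y,\Z)}\|y_n - y\|_\Y \le 3\|f\|_{\LL(\Y,\Z)}\|y\|_\Y$ provides an $L^1$-dominating function (again by Cauchy--Schwarz, using $f,y \in L^2$), so the dominated convergence theorem yields $fy_n \to fy$ in $L^1(\Omega;\Z)$, and the $L^1$-contractivity of conditional expectation gives $\E[fy_n|\Gg] \to \E[fy|\Gg]$ in $L^1(\Omega;\Z)$. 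Symmetrically, $\|\E[f|\Gg](y_n - y)\|_\Z \le \|\E[f|\Gg]\|_{\LL(\Y,\Z)}\|y_n - y\|_\Y$, dominated by $3\|\E[f|\Gg]\|_{\LL(\Y,\Z)}\|y\|_\Y \in L^1$, so $\E[f|\Gg]y_n \to \E[f|\Gg]y$ in $L^1$. Passing to the limit in the simple-function identity of stage two concludes the proof.

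The main obstacle I anticipate is the strong measurability of the product $fy$, because the composition of a norm-measurable operator-valued map with a random vector is not covered directly by a standard product rule; it must be justified via the simple-function approximation of $y$ combined with Proposition \ref{vnwlssfweewwfew}. A secondary but routine care is in verifying the ``pull out $\Gg$-measurable factor'' step: $\1_{A_i}$ is scalar and $\Gg$-measurable, so the reduction to the constant-$z$ case is legitimate, but the argument has to be stated in the Bochner setting rather than invoked by reflex.
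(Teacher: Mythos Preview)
Your proposal is correct and follows essentially the same three-stage architecture as the paper's proof: establish measurability and integrability of $fy$, verify the identity for $\Gg$-measurable simple $y$ (the paper does this by testing against sets $F\in\Gg$ and invoking Proposition~\ref{vnwlssfweewwfew}, while you split into the constant case plus a scalar pull-out of indicators---equivalent bookkeeping), and then pass to general $y$ by simple-function approximation and dominated convergence. The only cosmetic difference is that the paper appeals to the conditional dominated convergence theorem directly for the a.s.\ limit, whereas you route through $L^1$-convergence and contractivity of conditional expectation; both are standard and yield the same conclusion.
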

\begin{proof}
Since the random elements with values in the Banach space $(\LL(\Y,\Z)$, $\tau_{\text{N}}(\LL(\Y,\Z)))$ are  the random elements with values in the topological space $(\LL(\Y,\Z)$, $\tau_{\text{S}}(\LL(\Y,\Z)))$, it follows from Proposition \ref{nlllwwieiie}.(b) that $fy$ is a random element with values in the Banach space $(\Z,\tau_{\text{N}}(\Z))$, $f\in L^2(\Omega;\LL(\Y,\Z))$, and $y\in L^2(\Omega;\Y)$ implies $fy\in L^1(\Omega ;\Z)$. Consider the simple function $\sum_{i=1}^n\1_{A_i}\otimes y_i$ with values in the Banach space $\Y$, where $A_i\in \Gg$, $y_i\in \Y$, $A_i\cap A_j=\emptyset$, $1\leq i\neq j\leq n$. For any given $F\in \Gg$ and $1\leq i\leq n$, by Lemma \ref{nvkvpeoeo}, we have
\bna\label{fcwlmlmcc}
\int_F\E[f(\1_{A_i}\otimes y_i)|\Gg]\dd\P=\int_Ff(\1_{A_i}\otimes y_i)\dd\P=\int_{F\cap A_i}fy_i\dd\P=\E[(\1_{F\cap A_i}f)y_i].
\ena
Noting that $F\cap A_i\in \Gg$, by Lemma \ref{nvkvpeoeo} and Proposition \ref{vnwlssfweewwfew}, we get
\ban
\E[(\1_{F\cap A_i}f)y_i]=\E[\1_{F\cap A_i}f]y_i=\left(\int_{F\cap A_i}f\dd\P \right)y_i=\left(\int_{F\cap A_i}\E[f|\Gg]\dd\P \right) y_i.
\ean
Noting that $\1_{F\cap A_i}\E[f|\Gg]\in L^2(\Omega,\Gg;\LL(\Y,\Z))$, it follows from Lemma \ref{nvkvpeoeo} and Proposition \ref{vnwlssfweewwfew} that
\bna\label{vmkweknfffff}
\left(\int_{F\cap A_i}\E[f|\Gg]\dd\P\right) y_i=\E\left[\1_{F\cap A_i}\E[f|\Gg]y_i\right]=\int_F\E[f|\Gg](\1_{A_i}\otimes y_i)\dd\P.
\ena
Noting that $\E[f|\Gg](\1_{A_i}\otimes y_i)\in L^0(\Omega,\Gg;\Z)$, it follows from (\ref{fcwlmlmcc})-(\ref{vmkweknfffff}) and Lemma \ref{nvkvpeoeo} that
\bna\label{vcmklwleklekle}
\E\left.\left[f\left(\sum_{i=1}^n\1_{A_i}\otimes y_i\right)\right|\Gg\right]=\E[f|\Gg]\left(\sum_{i=1}^n\1_{A_i}\otimes y_i\right)~\text{a.s.}
\ena
Given a random element $y\in L^2(\Omega,\Gg;\Y)$ with values in the Banach space $(\Y,\tau_{\text{N}}(\Y))$, there exists a sequence of simple functions $\{y_n\in L^0(\Omega,\Gg;\Y),n\ge 1\}$ with values in the Banach space $\Y$ such that $\lim_{n\to\infty}\|y-y_n\|=0~\text{a.s.}$ and $\|y_n\|\leq \|y\|~\text{a.s.}$ Noting that $\|fy_n\|\leq \|f\|^2+\|y\|^2\in L^1(\Omega)$ and $\E[f|\G]\in L^2(\Omega,\Gg;\LL(\Y,\Z))$, by the dominated convergence theorem of conditional expectation and (\ref{vcmklwleklekle}), we have
\ban
\E[fy|\Gg]=\E\left.\left[f\left(\lim_{n\to\infty}y_n\right)\right|\Gg\right]=\lim_{n\to\infty}\E\left.\left[f y_n\right|\Gg\right]= \E\left.\left[f\right|\Gg\right]\left(\lim_{n\to\infty}y_n\right)=\E\left.\left[f\right|\Gg\right]y~\text{a.s.}
\ean
\end{proof}
\subsection{Independence and Conditional Independence}\label{sectionconinde}

In terms of the independence and conditional independence of random elements with values in a topological space, we have the following definitions.

\begin{definition}\label{dulixing}
Let $\mathscr I$ be a set of indices, $f_j$, $j\in \mathscr I$ be the random elements with values in $(\X_j,\tau(\X_j))$. If
$\P\left(f_{\a_1}\in B_1,\cdots,f_{\a_n}\in B_n\right)=\prod_{j=1}^n\P\left(f_{\a_j}\in B_j\right)$, for arbitrarily different indices $\a_1,\cdots,\a_n\in \mathscr I$ and arbitrary $B_1\in \B(\X_1;\tau(\X_{\a_1})),\cdots,B_n\in \B(\X_n;\tau(\X_{\a_n}))$,
then $f_j$, $j\in \mathscr I$ are said to be mutually independent.
\end{definition}

\begin{definition} \label{tiaojiandulixing}
%Let $\mathscr I$ be a set of indices.
Let $\mathcal F_{i}$, $i \in \mathscr I$ and $\Gg$ be sub-$\sigma$-algebra of $\F$. If $\P\left.\left(\bigcap_{j=1}^nA_j\right|\Gg\right)=\prod\limits_{j=1}^n\P\left(A_j|\Gg\right)$ a.s., for arbitrarily different indices $\a_1,\cdots,\a_n\in \mathscr I$ and arbitrary $A_1\in \F_{\a_1},\cdots,A_n\in \F_{\a_n}$,
then it is said that $\mathcal F_{\alpha}$ is conditionally independent  w.r.t. $\Gg$, $\alpha \in \mathscr I$. If $\sigma(f_{i};\tau(\X_i))$ is conditionally independent  w.r.t. $\Gg$, $i \in \mathscr I$, then we say that the random elements $f_{i}$, $i \in \mathscr I$ with values in $(\X_i,\tau(\X_i))$ are conditionally independent  w.r.t. $\Gg$.
\end{definition}

By Definitions \ref{dulixing} and \ref{tiaojiandulixing}, we have the following propositions.

\begin{proposition}\label{lemmaA4}
If the family of random elements $\{f_k,k\ge 1\}$ with values in $(\X_1,\tau(\X_1))$ and the family of random elements $\{g_k,k\ge 1\}$ with values in $(\X_2,\tau(\X_2))$ are independent, then $\bigvee_{i=k}^{\infty}\sigma(f_i;\tau(\X_1))$ is conditionally independent of  $\bigvee_{i=k}^{\infty}\sigma(g_i;\tau(\X_2))$ with respect to $\bigvee_{i=1}^{k-1}(\sigma(f_i;\tau(\X_1))\bigvee \sigma(g_i;\tau(\X_2)))$, $\forall\ k\ge 2$.
\end{proposition}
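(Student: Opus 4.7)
The plan is to reduce the claim to a rectangle identity and then stretch it via Dynkin's $\pi$-$\lambda$ theorem. Introduce the shorthand
\[\mathscr A_k^+ := \bigvee_{i=k}^{\infty}\sigma(f_i;\tau(\X_1)), \quad \mathscr B_k^+ := \bigvee_{i=k}^{\infty}\sigma(g_i;\tau(\X_2)),\]
\[\mathscr A_k^- := \bigvee_{i=1}^{k-1}\sigma(f_i;\tau(\X_1)), \quad \mathscr B_k^- := \bigvee_{i=1}^{k-1}\sigma(g_i;\tau(\X_2)),\]
and put $\mathscr H := \mathscr A_k^- \vee \mathscr B_k^-$. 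From Definition~\ref{dulixing} applied to the combined family $\{f_i\}\cup\{g_j\}$ together with the independence hypothesis, a routine cylinder-to-$\sigma$-algebra argument (two successive applications of $\pi$-$\lambda$ starting from finite-dimensional events) yields that the full $\sigma$-algebras $\mathscr A := \bigvee_{i=1}^{\infty}\sigma(f_i;\tau(\X_1))$ and $\mathscr B := \bigvee_{i=1}^{\infty}\sigma(g_i;\tau(\X_2))$ are independent; in particular $\mathscr A_k^-$ is independent of $\mathscr B_k^-$, and any sub-$\sigma$-algebra of $\mathscr A$ is independent of any sub-$\sigma$-algebra of $\mathscr B$.

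The first auxiliary step I would establish is: for every $A\in\mathscr A_k^+$, $P(A\mid\mathscr H) = P(A\mid\mathscr A_k^-)$ a.s., and symmetrically $P(B\mid\mathscr H) = P(B\mid\mathscr B_k^-)$ a.s.\ for every $B\in\mathscr B_k^+$. To see the first, take a rectangle $G_1\cap G_2$ with $G_1\in\mathscr A_k^-$ and $G_2\in\mathscr B_k^-$. Because $A\cap G_1\in\mathscr A$ is independent of $G_2\in\mathscr B$, and $\mathscr A_k^-$ is independent of $\mathscr B_k^-$, one computes
\[P(A\cap G_1\cap G_2) = P(A\cap G_1)\,P(G_2) = \E[\1_{G_1}P(A\mid\mathscr A_k^-)]\,P(G_2) = \E[\1_{G_1\cap G_2}P(A\mid\mathscr A_k^-)].\]
Such rectangles form a $\pi$-system generating $\mathscr H$, so by $\pi$-$\lambda$ and uniqueness of conditional expectation (Lemma~\ref{nvkvpeoeo}) the identity extends to every $G\in\mathscr H$.

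The second (and main) step applies the same template to $P(A\cap B\mid\mathscr H)$. For $A\in\mathscr A_k^+$, $B\in\mathscr B_k^+$ and a rectangle $G_1\cap G_2$ as above,
\[P(A\cap B\cap G_1\cap G_2) = P(A\cap G_1)\,P(B\cap G_2)\]
by independence of $\mathscr A$ and $\mathscr B$, which in turn equals
\[\E[\1_{G_1}P(A\mid\mathscr A_k^-)]\cdot \E[\1_{G_2}P(B\mid\mathscr B_k^-)] = \E[\1_{G_1\cap G_2}P(A\mid\mathscr A_k^-)P(B\mid\mathscr B_k^-)]\]
by independence of $\mathscr A_k^-$ and $\mathscr B_k^-$. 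A further $\pi$-$\lambda$ sweep propagates this to every $G\in\mathscr H$, and together with the first step and the uniqueness of conditional expectation this gives $P(A\cap B\mid\mathscr H) = P(A\mid\mathscr H)\,P(B\mid\mathscr H)$ a.s., which is exactly the defining relation of Definition~\ref{tiaojiandulixing} in the two-index case $\mathscr I=\{1,2\}$ with $\mathcal F_1 = \mathscr A_k^+$ and $\mathcal F_2 = \mathscr B_k^+$.

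The main obstacle is purely organisational: four separate $\sigma$-algebras must be kept straight, and Dynkin's lemma is invoked in several slightly different roles, namely to upgrade cylindrical independence supplied by Definition~\ref{dulixing} to independence of the full $\sigma$-algebras $\mathscr A,\mathscr B$, then to extend each auxiliary identity from rectangles in $\mathscr A_k^-\times\mathscr B_k^-$ to all of $\mathscr H$, and finally to verify the factorization for arbitrary pairs $A\in\mathscr A_k^+$, $B\in\mathscr B_k^+$. No analytic or topological difficulty arises, because the general topological-space setting enters only through the Borel $\sigma$-algebras already fixed by Definition~\ref{dulixing}.
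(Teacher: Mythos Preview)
Your proof is correct and follows essentially the same route as the paper: establish the factorisation on rectangles $G_1\cap G_2$ with $G_1\in\mathscr A_k^-$, $G_2\in\mathscr B_k^-$ using independence of the full $\sigma$-algebras $\mathscr A$ and $\mathscr B$, then extend to all of $\mathscr H$ via the $\pi$-$\lambda$ theorem. The only organisational differences are that the paper obtains your auxiliary identity $P(A\mid\mathscr H)=P(A\mid\mathscr A_k^-)$ by citing a standard result (Corollary~7.3.3 in \cite{chow}) rather than proving it from scratch, and the paper uses as its $\pi$-system the slightly larger class of finite disjoint unions of rectangles, whereas your choice of bare rectangles already suffices and is a bit cleaner.
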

\begin{proof}
Given an integer $k\ge 2$, denote $\Ff^k=\bigvee_{i=k}^{\infty}\sigma(f_i;\tau(\X_1))$, $\Gg^k=\bigvee_{i=k}^{\infty}\sigma(g_i;\tau(\X_2))$,  $\Ff_k=\bigvee_{i=1}^{k-1}\sigma(f_i;\tau(\X_1))$, $\Gg_k=\bigvee_{i=1}^{k-1}\sigma(g_i;\tau(\X_2))$ and $\Ff(k)=\bigvee_{i=1}^{k-1}(\sigma(f_i;\tau(\X_1))\bigvee \sigma(g_i;\tau(\X_2))).$
Let $E\in \Ff^k$, $F\in \Gg^k$, $A\in \Ff_k$ and $B\in \Gg_k$. On the one hand, noting that $\{f_k,k\ge 1\}$ and $\{g_k,k\ge 1\}$ are mutually independent, we know that $\bigvee_{i=1}^{\infty}\sigma(f_i;\tau(\X_1))$ and $\bigvee_{i=1}^{\infty}\sigma(g_i;\tau(\X_2))$ are also mutually independent. Since $A\cap E\in \bigvee_{i=1}^{\infty}\sigma(f_i;\tau(\X_1))$ and $B\cap F\in \bigvee_{i=1}^{\infty}\sigma(g_i;\tau(\X_2))$, we get $\P(E\cap F\cap A\cap B)=\P(E\cap A)\P(F\cap B)$, which together with $A\cap B \in \Ff(k)$ gives
\bna\label{apendix1}
\int_{A\cap B}\P(E\cap F|\Ff(k))\dd\P=\P(E\cap F\cap A\cap B)=\P(E\cap A)\P(F\cap B).
\ena
On the other hand, noting that $\1_F\in \Gg^k$, $\Ff(k)=\Ff_k\bigvee\Gg_k$ and $\Gg^k\bigvee\Gg_k=\bigvee_{i=1}^{\infty}\sigma(g_i;\tau(\X_2))$ is independent of $\Ff_k$, by Corollary $7.3.3$ in \cite{chow}, we obtain $\E[\1_F|\Ff(k)]=\E[\1_F|\Gg_k]$, which further implies
$$
\E[\1_E|\Ff(k)]\E[\1_F|\Ff(k)]=\E[\1_E\E[\1_F|\Ff(k)]|\Ff(k)]=\E[\1_E\E[\1_F|\Gg_k]|\Ff(k)],
$$
from which we get
\begin{align}\label{apendix3}
\int_{A\cap B}\P(E|\Ff(k))\P(F|\Ff(k))\dd\P = &\int_{A\cap B}\E[\1_E\E[\1_F|\Gg_k]|\Ff(k)]\dd\P \notag\\
 =& \int_{A\cap B}\1_E\E[\1_F|\Gg_k]\dd\P.
\end{align}
Noting that $\1_B\in \Gg_k$, $\1_{A\cap E}$ and $\E[\1_{B\cap F}|\Gg_k]$ are mutually independent, we have
\bna\label{apendix4}
\int_{A\cap B}\1_E\E[\1_F|\Gg_k]\dd\P&=&\E[\1_{A\cap E}\E[\1_{B\cap F}|\Gg_k]]\cr &=&\E[\1_{A\cap E}]\E[\1_{B\cap F}]\cr
&=&\P(A\cap E)\P(B\cap F).
\ena
It follows from (\ref{apendix3})-(\ref{apendix4}) that
\bna\label{apendix5}
\int_{A\cap B}\P(E|\Ff(k))\P(F|\Ff(k))\dd\P=\P(A\cap E)\P(B\cap F).
\ena
Combining (\ref{apendix1}) and (\ref{apendix5}) gives
\bna\label{abcd}
\int_{A\cap B}\P(E\cap F|\Ff(k))\dd\P=\int_{A\cap B}\P(E|\Ff(k))\P(F|\Ff(k))\dd\P.
\ena
Denote
\[\Pi=\left.\left\{\bigcup_{i=1}^n(A_i\cap B_i)\right|A_i \in \Ff_k, B_i\in \Gg_k, A_i\cap B_i\cap A_j\cap B_j=\emptyset, 1\leq i\neq j\leq n\right\}.\]
If $\bigcup_{i=1}^n(A_i\cap B_i)\in \Pi$ and $\bigcup_{i=1}^m(C_i\cap D_i)\in \Pi$, then we have
\bna\label{apendix6}
&&~~~\left(\bigcup_{i=1}^n(A_i\cap B_i)\right)\bigcap \left(\bigcup_{i=1}^m(C_i\cap D_i)\right)\cr
&&=\bigcup_{j=1}^m\left(\left(\bigcup_{i=1}^n(A_i\cap B_i)\right)\cap C_j\cap D_j\right) =\bigcup_{j=1}^m\bigcup_{i=1}^n\left(A_i\cap B_i\cap C_j\cap D_j\right).
\ena
Noting that $A_i\cap B_i\cap A_j\cap B_j=\emptyset, 1\leq i\neq j\leq n$, and $C_s\cap D_s\cap C_t\cap D_t=\emptyset, 1\leq s\neq t\leq m$, we know that $A_i\cap B_i\cap A_j\cap B_j\cap C_s\cap D_s\cap C_t\cap D_t=\emptyset$, $(i,j)\neq (s,t)$, which together with (\ref{apendix6}) gives
$$
\left(\bigcup_{i=1}^n(A_i\cap B_i)\right)\bigcap \left(\bigcup_{i=1}^m(C_i\cap D_i)\right)\in \Pi,
$$
thus, we conclude that $\Pi$ is a $\pi$-class, i.e., $\Pi$ is closed under the intersection operation of the set. Denote
\[\Lambda=\left\{ M\in \Ff(k)\bigg|\int_{M}\P(E\cap F|\Ff(k))\dd\P=\int_{M}\P(E|\Ff(k))\P(F|\Ff(k))\dd\P\right\}.\]
Noting that $\Pi$ is composed of finite disjoint union of elements in $\Ff_k\cap \Gg_k$, it follows from  (\ref{abcd}) that $\Omega \in \Lambda$. If $\bigcup_{i=1}^n(A_i\cap B_i)\in \Pi$, then we have
$$
\int_{\bigcup_{i=1}^n(A_i\cap B_i)}\P(E\cap F|\Ff(k))\dd\P=\int_{\bigcup_{i=1}^n(A_i\cap B_i)}\P(E|\Ff(k))\P(F|\Ff(k))\dd\P,
$$
which shows that $\bigcup_{i=1}^n(A_i\cap B_i)\in \Lambda$ and further gives $\Pi \subseteq \Lambda$. If $M_i\in \Lambda$ and $M_i \subseteq M_{i+1}$, then we obtain
\ban
\int_{\Omega}\1_{M_i}\P(E\cap F|\Ff(k))\dd\P=\int_{\Omega}\1_{M_i}\P(E|\Ff(k))\P(F|\Ff(k))\dd\P,~\forall
\ i \ge 1.
\ean
Noting that $M_i \subseteq M_{i+1}$ implies $\lim_{i\to\infty}\1_{M_i}=\1_{\bigcup_{i=1}^{\infty}M_i}$, by Lebesgue dominated convergence theorem, we get
\ban
\int_{\Omega}\1_{\bigcup_{i=1}^{\infty}M_i}\P(E\cap F|\Ff(k))\dd\P=\int_{\Omega}\1_{\bigcup_{i=1}^{\infty}M_i}
\P(E|\Ff(k))\P(F|\Ff(k))\dd\P.
\ean
Thus, we have $\bigcup_{i=1}^{\infty}M_i\in \Lambda$. If $M_1,M_2\in \Lambda$ and $M_1\subseteq M_2$, then $M_2=M_1\cup (M_2 \setminus M_1)$ implies
\ban
\left(\int_{M_1}+\int_{M_2\setminus M_1}\right)\P(E\cap F|\Ff(k))\dd\P=\left(\int_{M_1}+\int_{M_2\setminus M_1}\right)\P(E|\Ff(k))\P(F|\Ff(k))\dd\P.
\ean
It follows from $M_1\in \Lambda$ that
\ban
\int_{M_2\setminus M_1}\P(E\cap F|\Ff(k))\dd\P=\int_{M_2\setminus M_1}\P(E|\Ff(k))\P(F|\Ff(k))\dd\P,
\ean
which shows that $M_2\setminus M_1\in \Lambda$. From the above analysis, we know that $\Lambda$ is a $\lambda$-class, by Theorem 1.3.2 in \cite{chow}, we get $\sigma(\Pi)\subseteq \Lambda$. On the one hand, it follows from $\Pi \subseteq \Ff(k)=\Ff_k\bigvee\Gg_k$ that $\sigma(\Pi)\subseteq \Ff(k)$. On the other hand, noting that $\Ff_k\subseteq \Pi$ and $\Gg_k\subseteq \Pi$, we have $\Ff_k\cup\Gg_k\subseteq \sigma(\Pi)$, which leads to $\Ff(k)=\Ff_k\bigvee\Gg_k \subseteq \sigma(\Pi)$ and further gives $\Ff(k)=\sigma(\Pi)\subseteq \Lambda$. Noting that $E\in \Ff^k$ and $F\in \Gg^k$, we obtain
\ban
\int_{M}\P(E\cap F|\Ff(k))\dd\P=\int_{M}\P(E|\Ff(k))\P(F|\Ff(k))\dd\P,~\forall \ M\in \Ff(k),
\ean
which together with $\P(E|\Ff(k))\P(F|\Ff(k))\in \Ff(k)$ gives
\ban
\P(E\cap F|\Ff(k))=\P(E|\Ff(k))\P(F|\Ff(k))~\text{a.s.},
\ean
thus, we conclude that $\Ff^k$ is conditionally independent of $\Gg^k$ with respect to $\Ff(k)$.
\end{proof}

\begin{proposition} \label{lemmaA6}
Let $\Gg$ be the sub-$\sigma$-algebra of $\F$. The random element $T:\Omega\to \mathscr L (\X,\mathscr Y)$ with values in $(\mathscr L(\X,\mathscr Y),\tau_{\text{S}}(\mathscr L(\X,\mathscr Y)))$ satisfies $\E[\|T\|_{\mathscr L(\X,\mathscr Y)}^2]<\infty$ and $f\in L^2(\Omega;\X)$ is a random element with values in the Banach space $(\X,\tau_{\text{N}}(\X))$. If  $T$  is conditionally independent of  $f$   w.r.t. $\Gg$, then
$\E[Tf|\Gg]=\E[T\E[f|\Gg]|\Gg]$ a.s.
\end{proposition}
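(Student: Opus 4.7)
The plan is to prove the identity first for simple $f$ and then extend to general $f\in L^2(\Omega;\X)$ by dominated convergence for conditional expectations. By Proposition \ref{nlllwwieiie}.(b), $Tf$ is strongly $\Y$-measurable, and Cauchy--Schwarz gives $\E\|Tf\|_{\Y}\le(\E\|T\|_{\LL(\X,\Y)}^2)^{1/2}(\E\|f\|_{\X}^2)^{1/2}<\infty$, so $Tf\in L^1(\Omega;\Y)$ and $\E[Tf|\Gg]$ exists by Lemma \ref{nvkvpeoeo}. A parallel bound, combined with the Jensen-type estimate $\|\E[f|\Gg]\|_{\X}\le\E[\|f\|_{\X}|\Gg]$, shows $T\E[f|\Gg]\in L^1(\Omega;\Y)$, so the right-hand side is also well defined.

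The core of the argument is the simple-function case. By Pettis measurability and Definition \ref{vnwkelel}, choose simple $f_n=\sum_{i=1}^{k_n}\1_{A_{n,i}}\otimes x_{n,i}$ with disjoint $A_{n,i}\in\sigma(f;\tau_{\text{N}}(\X))$ and $x_{n,i}\in\X$, satisfying $\|f_n\|_{\X}\le\|f\|_{\X}$ and $f_n\to f$ a.s. Because $A_{n,i}\in\sigma(f;\tau_{\text{N}}(\X))$, the conditional independence in Definition \ref{tiaojiandulixing} specialises to conditional independence of $\1_{A_{n,i}}$ and every $B\in\sigma(T;\tau_{\text{S}}(\LL(\X,\Y)))$ with respect to $\Gg$. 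The key step is then the vector-valued pullout
\[
\E[\1_{A_{n,i}}Tx_{n,i}|\Gg]=\P(A_{n,i}|\Gg)\,\E[Tx_{n,i}|\Gg]\quad\text{a.s.,}
\]
which I verify by approximating $Tx_{n,i}\in L^1(\Omega;\Y)$ by $\Y$-valued simple functions $\sum_{j}\1_{B_{j}}\otimes y_{j}$ with $B_{j}\in\sigma(T;\tau_{\text{S}}(\LL(\X,\Y)))$, invoking the scalar identity $\P(A_{n,i}\cap B_{j}|\Gg)=\P(A_{n,i}|\Gg)\P(B_{j}|\Gg)$ from Definition \ref{tiaojiandulixing}, and passing to the limit with conditional dominated convergence under the envelope $\|T\|_{\LL(\X,\Y)}\|x_{n,i}\|_{\X}$. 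Summing in $i$ gives $\E[Tf_n|\Gg]=\sum_i\P(A_{n,i}|\Gg)\E[Tx_{n,i}|\Gg]$, while the $\Gg$-measurability of $\P(A_{n,i}|\Gg)$ yields $\E[T\E[f_n|\Gg]|\Gg]=\sum_i\P(A_{n,i}|\Gg)\E[Tx_{n,i}|\Gg]$ as well, so the identity holds at every $f_n$.

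For the passage to the limit, $\|Tf_n\|_{\Y}\le\|T\|_{\LL(\X,\Y)}\|f\|_{\X}\in L^1(\Omega)$ and $Tf_n\to Tf$ a.s., so the conditional dominated convergence theorem gives $\E[Tf_n|\Gg]\to\E[Tf|\Gg]$ a.s. On the other side, $\E[f_n|\Gg]\to\E[f|\Gg]$ in $L^2(\Omega;\X)$ and hence a.s.\ along a subsequence, while $\|T\E[f_n|\Gg]\|_{\Y}\le\|T\|_{\LL(\X,\Y)}\E[\|f\|_{\X}|\Gg]\in L^1(\Omega)$, so conditional dominated convergence along that subsequence gives $\E[T\E[f_n|\Gg]|\Gg]\to\E[T\E[f|\Gg]|\Gg]$ a.s. Equating the two limits completes the proof. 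I expect the main obstacle to be the vector-valued pullout: Definition \ref{tiaojiandulixing} is phrased set-theoretically, and transferring it to Bochner-integrable vector integrands requires the layered simple-function approximation sketched above, carried out carefully enough to remain inside the integrable envelope $\|T\|_{\LL(\X,\Y)}\|f\|_{\X}$ at every stage.
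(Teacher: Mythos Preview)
Your proposal is correct and follows essentially the same route as the paper: reduce to simple $f$, establish the pullout $\E[\1_A\,Tx\mid\Gg]=\P(A\mid\Gg)\,\E[Tx\mid\Gg]$ by approximating $Tx$ with $\Y$-valued simple functions and invoking the scalar conditional-independence identity, then extend to general $f\in L^2(\Omega;\X)$ by conditional dominated convergence. The only organisational difference is that the paper isolates the pullout step as a separate lemma (Lemma~\ref{lemmaA5}) before applying it in the proof of Proposition~\ref{lemmaA6}, whereas you prove it inline; your insistence that the approximating indicators $A_{n,i}$ lie in $\sigma(f;\tau_{\text{N}}(\X))$ is in fact slightly more explicit than the paper about why the conditional independence transfers to the approximants.
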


Before proving Proposition \ref{lemmaA6}, we introduce the following lemma.
\begin{lemma}\label{lemmaA5}
\rm{
Let $\Gg$ be a sub-$\sigma$-algebra of $\F$, $A\in \F$ and $f\in L^1(\Omega;\X)$. If $f$ is conditionally  independent of $\1_A$ with respect to $\Gg$, then
$\E[f\1_A|\Gg]=\E[f|\Gg]\E[\1_A|\Gg]~\text{a.s.}$}
\end{lemma}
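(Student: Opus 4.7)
The plan is to first establish the identity for simple, $\sigma(f;\tau_{\text{N}}(\X))$-measurable functions, and then extend it to general $f\in L^1(\Omega;\X)$ by Bochner approximation together with the dominated convergence theorem for conditional expectations. The hypothesis ``$f$ is conditionally independent of $\1_A$ with respect to $\Gg$'' means, by Definition \ref{tiaojiandulixing}, that $\sigma(f;\tau_{\text{N}}(\X))$ and $\sigma(\1_A;\tau_{\text{N}}(\mathbb R))$ are conditionally independent with respect to $\Gg$, hence for every $B\in\sigma(f;\tau_{\text{N}}(\X))$
\begin{equation*}
\P(B\cap A|\Gg)=\P(B|\Gg)\P(A|\Gg)\ \text{a.s.},
\end{equation*}
which is the scalar identity that will be lifted to $\X$-valued functions.

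First I will show the result for any simple function of the form $g=\sum_{i=1}^n \1_{B_i}\otimes x_i$ with pairwise disjoint $B_i\in\sigma(f;\tau_{\text{N}}(\X))$ and $x_i\in\X$. By the $\X$-linearity of conditional expectation (Lemma \ref{nvkvpeoeo}) and the conditional independence applied to each level set,
\begin{equation*}
\E[g\1_A|\Gg]=\sum_{i=1}^n x_i\E[\1_{B_i\cap A}|\Gg]=\sum_{i=1}^n x_i\E[\1_{B_i}|\Gg]\E[\1_A|\Gg]=\E[g|\Gg]\E[\1_A|\Gg]\ \text{a.s.}
\end{equation*}

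Next I will pass from simple to general $f\in L^1(\Omega;\X)$. By Pettis measurability (Lemma \ref{mse} combined with Definition \ref{vnwkelel}), there exists a sequence of $\sigma(f;\tau_{\text{N}}(\X))$-measurable simple functions $\{f_n,n\ge 1\}$ such that $\lim_{n\to\infty}\|f_n-f\|_{\X}=0$ a.s.\ and $\|f_n\|_{\X}\le 2\|f\|_{\X}$ a.s. Since each $f_n$ has the form treated above, the identity $\E[f_n\1_A|\Gg]=\E[f_n|\Gg]\E[\1_A|\Gg]$ holds a.s.\ for every $n$. The dominating function $2\|f\|_{\X}\in L^1(\Omega)$ controls both $\|f_n\1_A\|_{\X}$ and $\|f_n\|_{\X}$, so the Bochner-valued dominated convergence theorem for conditional expectations yields $\E[f_n\1_A|\Gg]\to\E[f\1_A|\Gg]$ and $\E[f_n|\Gg]\to\E[f|\Gg]$ almost surely; multiplying the latter by the scalar $\E[\1_A|\Gg]$ and taking limits in the simple-function identity gives the desired conclusion.

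The only delicate point will be ensuring that the approximating simple functions $f_n$ can be chosen $\sigma(f;\tau_{\text{N}}(\X))$-measurable, because the conditional independence hypothesis only controls events in $\sigma(f;\tau_{\text{N}}(\X))$. This is guaranteed by the standard construction in the proof of Pettis measurability, where one partitions the essentially separable range of $f$ into Borel sets and pulls back via $f$; each level set of $f_n$ is then of the form $f^{-1}(E)$ for some Borel $E\subseteq\X$ and therefore lies in $\sigma(f;\tau_{\text{N}}(\X))$. Once this mild bookkeeping is in place the argument is routine.
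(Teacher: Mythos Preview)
Your proof is correct and follows essentially the same route as the paper: establish the identity for simple functions via the scalar conditional-independence identity on level sets, then pass to general $f\in L^1(\Omega;\X)$ by Pettis approximation and conditional dominated convergence. Your treatment is in fact slightly cleaner than the paper's, because you explicitly require the approximating simple functions $f_n$ to be $\sigma(f;\tau_{\text{N}}(\X))$-measurable---which is precisely what is needed for $f_n$ to inherit the conditional independence from $f$---whereas the paper applies the simple-function case to the approximants without stating this measurability constraint.
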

\begin{proof}
Let $f=\1_B\otimes x$, where $B\in \F$ and $x\in \X$. Noting that
\bna\label{appendix3}
\sigma(\1_B\otimes x;\tau_{\text{N}}(\X))=\left\{(\1_B\otimes x)^{-1}(E):E\in \mathscr B(\X;\tau_{\text{N}}(\X))\right\},
\ena
it follows that
\bna\label{appendix4}
(\1_B\otimes x)^{-1}(E)=
\begin{cases}
\Omega, & \text{If $0\in E$, $x\in E$};\\
B, &  \text{If $0\in E$, $x\notin E$};\\
B^{\complement}, & \text{If $0\notin E$, $x\in E$};\\
\emptyset, & \text{If $0\notin E$, $x\notin E$}.
\end{cases}
,~
\forall \ E\in \mathscr B(\X;\tau_{\text{N}}(\X)),
\ena
which leads to $\sigma(\1_B\otimes x;\tau_{\text{N}}(\X))=\sigma(\1_B;\tau_{\text{N}}(\X))$. Since $\1_B\otimes x$ is conditionally independent of $\1_A$ with respect to $\Gg$, it follows from Definition \ref{tiaojiandulixing} that $\sigma(\1_B\otimes x;\tau_{\text{N}}(\X))$ is conditionally independent of  $\sigma(\1_A;\tau_{\text{N}}(\X))$ with respect to $\Gg$, from which we can further conclude that $\sigma(B)$ is conditionally independent of $\sigma(A)$ with respect to $\Gg$. Given $F\in \Gg$, on the one hand, we have
\bna\label{appendix1111}
\int_F\E[f\1_A|\Gg]\dd\P&=&x\int_F\E[\1_{A\cap B}|\Gg]\dd\P\cr
&=&x\int_F\P(A\cap B|\Gg)\dd\P\cr
&=&x\int_F\P(A|\Gg)\P(B|\Gg)\dd\P.
\ena
On the other hand, noting that $\E[\1_B\otimes x|\Gg]=\E[\1_B|\Gg]\otimes x~\text{a.s.}$, we get
\bna\label{appendix2222}
\int_F\E[f|\Gg]\E[\1_A|\Gg]\dd\P&=&\int_F\E[\1_B\otimes x|\Gg]\E[\1_A|\Gg]\dd\P
 = x\int_F\P(A|\Gg)\P(B|\Gg)\dd\P.
\ena
Then, by (\ref{appendix1111})-(\ref{appendix2222}), we obtain
\bna\label{appendix5}
\E[(\1_B\otimes x)\1_A|\Gg]=\E[\1_B\otimes x|\Gg]\E[\1_A|\Gg]~\text{a.s.}
\ena
Let $f=\sum_{i=1}^n\1_{B_i}\otimes x_i$, where $B_i \in \F$, $B_i\cap B_j=\emptyset$, $x_i\in \X$, $1\leq i\neq j\leq n$. Following the same way as (\ref{appendix3})-(\ref{appendix4}), we have
\ban
\sigma\left(\sum_{i=1}^n\1_{B_i}\otimes x_i;\tau_{\text{N}}(\X)\right)=\sigma\left(\bigcup_{i=1}^n B_i\right).
\ean
Thus, $f$ is conditionally independent of $\1_{A}$ with respect to $\Gg$, which implies that $\sigma(\bigcup_{i=1}^n B_i)$ is conditionally independent of $\sigma(A)$ with respect to $\Gg$, from which we know that $B_i$ is conditionally independent of $A$ with respect to $\Gg$, $1\leq i\leq n$. It follows from (\ref{appendix5}) that
\bna\label{appendix8}
\int_F\E\left.\left[\sum_{i=1}^n(\1_{B_i}\otimes x_i)\1_A\right|\Gg\right]\dd\P=\sum_{i=1}^n\int_F\E\left.\left[\1_{B_i}\otimes x\right|\Gg\right]\E[\1_A|\Gg]\dd\P,
\ena
which leads to
\ban
\E\left.\left[\sum_{i=1}^n(\1_{B_i}\otimes x_i)\1_A\right|\Gg\right]=\E\left.\left[\sum_{i=1}^n\1_{B_i}\otimes x\right|\Gg\right]\E[\1_A|\Gg]~\text{a.s.}
\ean
For the random element $f$ with values in Banach space $(\X,\tau_{\text{N}}(\X))$, there exists a sequence of simple functions with values in $\X$ such that $\lim_{n\to\infty}f_n=f~\text{a.s.}$ and $\|f_n\|\leq \|f\|~\text{a.s.}$ Noting that $\|f_n\1_A\|\leq \|f\|~\text{a.s.}$ together with $f\in L^1(\Omega;\X)$ implies that $\E[\|f\|]<\infty$, by the dominated convergence theorem, we get
\bna \label{appendix6}
\int_F\E\left.\left[\lim_{n\to\infty}f_n\1_A\right|\Gg\right]\dd\P=\int_F\lim_{n\to\infty}\E[f_n\1_A|\Gg]\dd\P,
\ena
and
\ban
\int_F\E\left.\left[\lim_{n\to\infty}f_n\right|\Gg\right]\E[\1_A|\Gg]\dd\P=\int_F\lim_{n\to\infty}\E[f_n|\Gg]\E[\1_A|\Gg]\dd\P.
\ean
It follows from (\ref{appendix8}) that
\bna\label{appendix9}
\int_F\E[f_n\1_A\Gg]\dd\P=\int_F\E[f_n|\Gg]\E[\1_A|\Gg]\dd\P.
\ena
Thus, by (\ref{appendix6})-(\ref{appendix9}), we have
\ban
\int_F\E[f\1_A|\Gg]\dd\P=\int_F\E[f|\Gg]\E[\1_A|\Gg]\dd\P,~\forall F\in \Gg.
\ean
\end{proof}

%\textbf{\emph{Proof of Proposition \ref{lemmaA6}:}}
\begin{proof}[Proof of Proposition \ref{lemmaA6}]
We first consider the case with $f=\sum_{i=1}^n\1_{A_i}\otimes x_i$, where $A_i\in \F$, $A_i\cap A_j=\emptyset$, $x_i\in \X$, $1\leq i\neq j\leq n$. Since $T$ is conditionally independent of $f$ with respect to $\Gg$, it follows that $\sigma(Tx_i;\tau_{\text{N}}(\Y))$ is conditionally independent of $\sigma(\sum_{i=1}^n\1_{A_i}\otimes x_i;\tau_{\text{N}}(\X))$ with respect to $\Gg$, $1\leq i\leq n$, from which we know that $\sigma(Tx_i;\tau_{\text{N}}(\Y))$ is conditionally independent of $\sigma(A_i)$ with respect to $\Gg$, thus, $Tx_i$ is conditionally independent of $\1_{A_i}$ with respect to $\Gg$, then by Lemma \ref{lemmaA5}, we have
\bna\label{appendix11}
\E[T(\1_{A_i}\otimes x_i)|\Gg]&=&\E[(Tx_i)\1_{A_i}|\Gg]\cr &=&\E[Tx_i|\Gg]\E[\1_{A_i}|\Gg]\cr
&=&\E[(Tx_i)\E[\1_{A_i}|\Gg]|\Gg]~\text{a.s.}
\ena
Noting that
\bna\label{appendix12}
\E[(Tx_i)\E[\1_{A_i}|\Gg]|\Gg]=\E[T(\E[\1_{A_i}|\Gg]\otimes x_i)|\Gg]=\E[T\E[\1_{A_i}\otimes x_i|\Gg]|\Gg]~\text{a.s.}
\ena
By (\ref{appendix11})-(\ref{appendix12}), we have
\bna\label{appendix14}
\E[Tf|\Gg]&=&\sum_{i=1}^n\E[T(\1_{A_i}\otimes x_i)|\Gg]\cr &=&\sum_{i=1}^n\E[T\E[\1_{A_i}\otimes x_i|\Gg]|\Gg]\cr
&=&\E[T\E[f|\Gg]|\Gg]~\text{a.s.}
\ena
Given a random element $f\in L^2(\Omega;\X)$ with values in Banach space $(\X,\tau_{\text{N}}(\X))$, by  Lemma \ref{vnwkelel}, we know that there exists a sequence $\{f_n,n\ge 1\}$ of simple functions, such that $\lim_{n\to\infty}f_n=f~\text{a.s.}$ and $\|f_n\|\leq \|f\|~\text{a.s.}$ If $\E[\|T\|^2]<\infty$ and $f\in L^2(\Omega;\X)$, noting that $T(\omega)\in \mathscr L(\X,\mathscr Y)$, we get $\|(Tf_n)(\omega)\|=\|T(\omega)f_n(\omega)\|\leq \|T(\omega)\|\|f_n(\omega)\|\leq \|T(\omega)\|^2+\|f(\omega)\|^2\in L^1(\Omega)$, which together with conditional dominated convergence theorem gives
\bna\label{appendix13}
\E[Tf|\Gg]=\E\left.\left[T\left(\lim_{n\to\infty}f_n\right)
\right|\Gg\right]=\E\left.\left[\lim_{n\to\infty}Tf_n\right|\Gg\right]
=\lim_{n\to\infty}\E[Tf_n|\Gg]~\text{a.s.}
\ena
By (\ref{appendix14}), we obtain
\ban
\E[Tf_n|\Gg]=\E[T\E[f_n|\Gg]|\Gg]~\text{a.s.}
\ean
It follows from $\|T\E[f_n|\Gg]\|\leq \|T\|\|\E[f_n|\Gg]\|\leq \|T\|^2+\E[\|f_n\|^2|\Gg]\leq \|T\|^2+\E[\|f\|^2|\Gg]  \in L^1(\Omega)$ and conditional dominated convergence theorem that
\bna\label{appendix16}
\lim_{n\to\infty}\E[T\E[f_n|\Gg]|\Gg]=\E\left.\left[\lim_{n\to\infty}T\E[f_n|\Gg]\right|\Gg\right]=\E\left.\left[T\E\left.\left[\lim_{n\to\infty}f_n\right|\Gg\right]\right|\Gg\right]~\text{a.s.}
\ena
Hence, by (\ref{appendix13})-(\ref{appendix16}), we get
$\E[Tf|\Gg]=\E[T\E[f|\Gg]|\Gg]$ a.s.
\end{proof}

%Proposition \ref{wenknknkn} and Propositions \ref{vnwlssfweewwfew}-\ref{lemmaA6}}
%\setcounter{lemma}{0}
%\def\thelemma{B.\arabic{lemma}}
%\setcounter{definition}{0}
%\def\thedefinition{B.\arabic{definition}}
%\setcounter{equation}{0}
%\def\theequation{B.\arabic{equation}}

%\subsection{Proofs of Appendices A.2-A.4}

%\textbf{\emph{Proof of Proposition \ref{tiaojianqiwangxingzhi}:}}

%\textbf{\emph{Proof of Proposition \ref{lemmaA4}:}}

\section{Proofs of Lemmas in Sections 3-4}\label{appendixc}
 \setcounter{equation}{0}
\renewcommand{\theequation}{C.\arabic{equation}}

For the convenience of notational writing without giving rise to ambiguity, the subscripts of the parametrization in Banach spaces and the subscripts of the inner product in Hilbert spaces will be omitted in the sequel.

\begin{proof}[Proof of Lemma \ref{wendingxing}]
Given the initial value $x(0)\in \X_1$, by  Proposition \ref{nlllwwieiie}.(a)-(c), we know that $x(k)$ is a random element with values in the Hilbert space $(\X_1,\tau_{\text{N}}(\X_1))$. It follows from the random difference equation (\ref{chafen}) that
$x(k+1) = \big(\prod_{i=0}^k(I_{\X_1}-F(i))\big)x(0) +\sum_{i=0}^k\big(\prod_{j=i+1}^k(I_{\X_1}-F(j))\big)G(i)u(i),~k\ge 0.$
%\begin{align}\label{okjfs}
%x(k+1) = &\left(\prod_{i=0}^k(I_{\X_1}-F(i))\right)x(0) \notag\\ &+\sum_{i=0}^k\left(\prod_{j=i+1}^k(I_{\X_1}-F(j))\right)G(i)u(i),~k\ge 0,
%\end{align}
Then, by   Cauchy inequality, we have
\begin{align}\label{ijfwg}
&\E[\|x(k+1)\|^2]\notag\\
\leq& 2\E\left[\left\| \prod_{i=0}^k(I_{\X_1}-F(i)) x(0)
\right\|^2\right] +2\E\left[\left\|\sum_{i=0}^k\left(\prod_{j=i+1}^k(I_{\X_1}-F(j))\right)G(i)u(i)\right\|^2\right].
\end{align}
It follows from Definition \ref{tiaojiandulixing} that $\sigma(u(k);\tau_{\text{N}}(\X_2))$ is independent of $\sigma\left(F(k);\tau_{\text{S}}(\LL(\X_1))\right)\bigvee\\ \sigma\left(G(k);\tau_{\text{S}}(\LL(\X_2,\X_1))\right),$  and it can be verified that $\sigma(\|u(k)\|)$ is independent of $\sigma\left(\|F(k)\|\right)\bigvee\\ \sigma\left(\|G(k)\|\right),$ which leads to
\begin{align}\label{xxll}
 &\E\left[\left\|G(t)u(t)\right\|^2\right]\notag\\
\leq& \E\left[\|G(t)\|^2\right]\E\left[\|u(t)\|^2\right]\notag\\
\leq& \sup_{k\ge 0}\E\left[\|u(k)\|^2\right]\E\left[\|G(t)\|^2\right],~t\ge 0.
\end{align}
By the condition (\ref{ssafe}), we get $\sup_{k\ge 0}\E[\|G(k)\|^2]<\infty,$ thus, (\ref{xxll}) implies $\sup\limits_{k\ge 0}\E[\|G(k)u(k)\|^2]<\infty$. It can be verified $\|I_{\X_1}-F(j)\|\in F(k-1), \forall \ j<k-1$. Therefore, for $0\leq t<k$, by the condition (\ref{qqqqq}), we obtain
\begin{align}\label{xxoopp}
&\E\left.\left[\prod_{j=t+1}^k\left\|I_{\X_1}-F(j)\right\|^4\right|\F(t)\right]\cr
=&\E\left.\left[\E\left.\left[\prod_{j=t+1}^k\left\|I_{\X_1}-F(j)\right\|^4\right|\mathcal F(k-1)\right]\right|\F(t)\right]\cr
=&\E\bigg[\E\left[\left\|I_{\X_1}-F(k)\right\|^4\big|\mathcal F(k-1)\right]  \prod_{j=t+1}^{k-1}\|I_{\X_1}-F(j)\|^4\Big|\F(t)\bigg]\cr
\leq & (1+\gamma(k))\E\left.\left[\prod_{j=t+1}^{k-1}\|I_{\X_1}-F(j)\|^4\right|\F(t)\right]\cr
\leq & \prod_{j=t+1}^k(1+\gamma(j))~\text{a.s.}
\end{align}
It follows from $\sup_{k\ge 0}\E[\|G(k)\|^4]<\infty$ and (\ref{xxoopp}) that
\begin{align}\label{oopp}
&\E\Bigg[\bigg\|\bigg(\prod_{j=s+1}^k(I_{\X_1}-F(j))\bigg)^*
\bigg(\prod_{j=t+1}^k(I_{\X_1}-F(j))\bigg)  G(t)\bigg\|^2\Bigg]\notag\\
 \leq & \E\Bigg[\bigg(\prod_{j=t+1}^k\|I_{\X_1}-F(j)\|^4\bigg)
 \bigg(\prod_{j=s+1}^t\|I_{\X_1}-F(j)\|^2\bigg)  \|G(t)\|^2\Bigg]\notag\\
 %=&\E\Bigg[\E\left.\left[\prod_{j=t+1}^k\|I_{\X_1}-F(j)\|^4\right|\F(t)\right]\notag\\
% &\times
% \left(\prod_{j=s+1}^t\|I_{\X_1}-F(j)\|^2\right)\|G(t)\|^2\Bigg]\notag\\
 \leq & \left(\prod_{j=t+1}^k(1+\gamma(j))\right)
 \E\Bigg[\bigg(\prod_{j=s+1}^t\|I_{\X_1}-F(j)\|^2\bigg)
 \|G(t)\|^2\Bigg]\cr
 \leq & \left(\prod_{j=t+1}^k(1+\gamma(j))\right)
 \Bigg(\E\Bigg[\prod_{j=s+1}^t\|I_{\X_1}-F(j)\|^4\Bigg]
 +\E\left[\|G(t)\|^4\right]\Bigg)\cr
 \leq & \left(\prod_{j=t+1}^k(1+\gamma(j))\right) \left(\prod_{j=s+1}^t(1+\gamma(j))+\sup_{k\ge 0}\E\left[\|G(k)\|^4\right]\right)\cr
 \leq &\left(\prod_{k=0}^{\infty}(1+\gamma(k))\right) \left(1+\sup_{k\ge 0}\E\left[\|G(k)\|^4\right]\right)
 <\infty, \  ~0\leq s<t\leq k,
\end{align}
where the last inequality is due to $\sum_{k=0}^{\infty}\gamma(k)<\infty$. By $\sup_{k\ge 0}\E[\|u(k)\|^2]<\infty$,   Proposition \ref{nlllwwieiie} and (\ref{oopp}), we have
\begin{align}\label{vnokllllll}
  \left(\prod_{j=s+1}^k(I_{\X_1}-F(j))\right)^*\left(
\prod_{j=t+1}^k(I_{\X_1}-F(j))\right)G(t)u(t)
  \in L^1(\Omega;\X_1),~0\leq s<t\leq k.
\end{align}
It can be verified that $\left(\prod_{j=s+1}^k(I_{\X_1}-F(j))\right)^*  \left(\prod_{j=t+1}^k(I_{\X_1}-F(j))\right)G(t)$ is conditionally independent of $u(t)$ w.r.t. $\F(s),\  0\leq s<t$. Then, noting that $\E[u(t)|\F(s)]=\E[\E[u(t)|\mathcal F(t-1)]|\F(s)]=0$, $0\leq s<t$, and Proposition \ref{wenknknkn}  implies $G(s)u(s)\in L^0(\Omega,\F(s);\X^N)$, it follows from (\ref{oopp})-(\ref{vnokllllll}), Proposition \ref{nlllwwieiie}.(a)-(c),  Proposition 2.6.31 in \cite{hy}  and Propositions \ref{lemmaA4}-\ref{lemmaA6} that
\begin{align}\label{fwwe}
&\E\Bigg[\Bigg\langle  \prod_{j=s+1}^k(I_{\X_1}-F(j)) G(s)u(s), \prod_{j=t+1}^k(I_{\X_1}-F(j)) G(t)u(t)\Bigg\rangle \Bigg]\notag\\
=& \E\Bigg[\Bigg\langle G(s)u(s), \left(\prod_{j=s+1}^k(I_{\X_1}-F(j))\right)^*\left(\prod_{j=t+1}^k(I_{\X_1}-F(j))
\right)G(t)u(t)\Bigg\rangle\Bigg]\notag\\
=& \E\Bigg[\E\Bigg[\Bigg\langle G(s)u(s), \left(\prod_{j=s+1}^k(I_{\X_1}-F(j))\right)^* \left(\prod_{j=t+1}^k(I_{\X_1}-F(j))\right)G(t)u(t)\Bigg\rangle\bigg|\F(s) \Bigg]\Bigg]\cr
 =& \E\Bigg[\Bigg\langle G(s)u(s), \E\Bigg[\left(\prod_{j=s+1}^k(I_{\X_1}-F(j))\right)^*  \left(\prod_{j=t+1}^k(I_{\X_1}-F(j))\right)G(t)u(t)\bigg|\F(s) \Bigg]\Bigg\rangle\Bigg]\cr
 =&\E\Bigg[\bigg\langle G(s)u(s), \E\bigg[\left(\prod_{j=s+1}^k(I_{\X_1}-F(j))\right)^*\notag\\
 &\times\left(\prod_{j=t+1}^k(I_{\X_1}-F(j))\right)G(t)\E[u(t)|\F(s)]\bigg|\F(s) \bigg]\bigg\rangle\Bigg]\notag\\
 =& 0.
\end{align}
%\bna\label{fwwe}
%&&~~~~\E\left[\left\langle \left(\prod_{j=s+1}^k(I_{\X_1}-F(j))\right)G(s)u(s), \left(\prod_{j=t+1}^k(I_{\X_1}-F(j))\right)G(t)u(t)\right\rangle \right]\cr
%&&=\E\left[\left\langle G(s)u(s), \left(\prod_{j=s+1}^k(I_{\X_1}-F(j))\right)^*\left(\prod_{j=t+1}^k(I_{\X_1}-F(j))\right)G(t)u(t)\right\rangle \right]\cr
%&&=\E\left[\E\left[\left\langle G(s)u(s), \left(\prod_{j=s+1}^k(I_{\X_1}-F(j))\right)^*\right.\right.\right.\cr &&~~~~\times\left.\left.\left.\left.\left(\prod_{j=t+1}^k(I_{\X_1}-F(j))\right)G(t)u(t)\right\rangle\right|\F(s) \right]\right]\cr
%&&=\E\left[\left\langle G(s)u(s), \E\left[\left(\prod_{j=s+1}^k(I_{\X_1}-F(j))\right)^*\right.\right.\right.\cr &&~~~~\times\left.\left.\left.\left.\left(\prod_{j=t+1}^k(I_{\X_1}-F(j))\right)G(t)u(t)\right|\F(s) \right]\right\rangle\right]\cr
%&&=\E\left[\left\langle G(s)u(s), \E\left[\left(\prod_{j=s+1}^k(I_{\X_1}-F(j))\right)^*\right.\right.\right.\cr &&~~~~\times\left.\left.\left.\left.\left(\prod_{j=t+1}^k(I_{\X_1}-F(j))\right)G(t)\E[u(t)|\F(s)]\right|\F(s) \right]\right\rangle\right]
% =0.
%\ena
 Denote $\Lambda=\{i\in \mathbb N:\E[\|G(i)u(i)\|^2]>0\}$. By (\ref{ijfwg}), (\ref{xxll}) and (\ref{fwwe}), we obtain
\begin{align}\label{wiiie}
&\E\left[\|x(k+1)\|^2\right]\cr
=&\E\left[\left\|\left(\prod_{i=0}^k(I_{\X_1}-F(i))\right)x(0)\right\|^2\right]+\sum_{i=0}^k\E\left[\left\|\left(\prod_{j=i+1}^k(I_{\X_1}-F(j))\right)G(i)u(i)\right\|^2\right]\cr
 \leq & \E\left[\left\|\left(\prod_{i=0}^k(I_{\X_1}-F(i))\right)x(0)\right\|^2\right]+\sum_{i=0}^{\infty}\E\left[\left\|\left(\prod_{j=i+1}^k(I_{\X_1}-F(j))\right)G(i)u(i)\right\|^2\right]\cr
 =&\E\left[\left\|\left(\prod_{i=0}^k(I_{\X_1}-F(i))
 \right)x(0)\right\|^2\right]\notag\\
 &+\sum_{i\in \Lambda}\E\left[\|G(i)u(i)\|^2\right] \E\left[\left\|\left(\prod_{j=i+1}^k(I_{\X_1}-F(j))\right)\frac{G(i)u(i)}{\left(\E\left[\|G(i)u(i)\|^2\right]\right)^{\frac{1}{2}}}\right\|^2\right]\cr
 \leq & \E\left[\left\|\left(\prod_{i=0}^k(I_{\X_1}-F(i))\right)x(0)\right\|^2\right]\notag\\
 &+\sup_{k\ge 0}\E\left[\|u(k)\|^2\right]\sum_{i\in \Lambda}\E\left[\|G(i)\|^2\right]\E\Bigg[\bigg\| \prod_{j=i+1}^k(I_{\X_1}-F(j))
 \frac{G(i)u(i)}{\left(\E\left[\|G(i)u(i)\|^2\right]\right)^{\frac{1}{2}}}
 \bigg\|^2\Bigg].
\end{align}
Noting that the operator-valued random sequence $\{I_{\X_1}-F(k),k\ge 0\}$ is $L_2^2$-stable with respect to the filter $\{\F(k),k\ge 0\}$, we get
\bna\label{oowf}
\lim_{k\to \infty}\E\left[\left\|\left(\prod_{i=0}^k(I_{\X_1}-F(i))\right)x(0)\right\|^2\right]=0.
\ena
By Proposition \ref{wenknknkn}, we know that $G(i)u(i)\in L^0(\Omega,\mathcal F(i);  \X_1)$ and $\E\left[\left\|\frac{G(i)u(i)}{\left(\E\left[\|G(i)u(i)\|^2\right]\right)^{\frac{1}{2}}}\right\|^2\right]=1,~i\in \Lambda,$
%\ban
%\E\left[\left\|\frac{G(i)u(i)}{\left(\E\left[\|G(i)u(i)\|^2\right]\right)^{\frac{1}{2}}}\right\|^2\right]=1,~i\in \Lambda,
%\ean
which leads to
\begin{align}\label{wwml}
&\lim_{k\to\infty}\E\left[\left\|\left(\prod_{j=i+1}^k(I_{\X_1}-F(j))\right)
\frac{G(i)u(i)}{\left(\E\left[\|G(i)u(i)\|^2\right]
\right)^{\frac{1}{2}}}\right\|^2\right]\notag\\
=&0,~ i\in \Lambda.
\end{align}
It can be verified that $\|G(i)u(i)\| \in \mathcal F(i)$  from
  $G(i)u(i)\in L^0(\Omega,\mathcal F(i);\X_1)$. Then, by (\ref{xxoopp}), we have
\begin{align}\label{llnnv}
&\sup_{k\ge 0\atop i\in \Lambda}\E\left[\left\|\left(\prod_{j=i+1}^k(I_{\X_1}-F(j))\right)\frac{G(i)u(i)}{\left(\E\left[\|G(i)u(i)\|^2\right]\right)^{\frac{1}{2}}}\right\|^2\right]\cr
 =&\sup_{k\ge 0\atop i\in \Lambda}\E\Bigg[\E\Bigg[\bigg\|
 \bigg(\prod_{j=i+1}^k(I_{\X_1}-F(j))\bigg)\frac{G(i)u(i)}{\left(\E\left[\|G(i)u(i)\|^2\right]\right)^{\frac{1}{2}}}
 \bigg\|^2\bigg|\mathcal F(i)\Bigg]\Bigg]\cr
 \leq &\sup_{k\ge 0\atop i\in \Lambda}\E\Bigg[\E\Bigg[\bigg\|\prod_{j=i+1}^k(I_{\X_1}
 -F(j))\bigg\|^2
 \bigg|\mathcal F(i)\Bigg]\left\|\frac{G(i)u(i)}{\left(\E\left[\|G(i)u(i)\|^2\right]
 \right)^{\frac{1}{2}}}\right\|^2\Bigg]\cr
 \leq & \sup_{k\ge 0\atop i\in \Lambda}\E\Bigg[\E\Bigg[\bigg\|\prod_{j=i+1}^k(I_{\X_1}
 -F(j))\bigg\|^4\bigg|\mathcal F(i)\Bigg]^{\frac{1}{2}}\left\|\frac{G(i)u(i)}{\left(\E\left[\|G(i)u(i)
 \|^2\right]\right)^{\frac{1}{2}}}\right\|^2\Bigg]\cr
 \leq & \sup_{k\ge 0\atop i\in \Lambda}\E\left[\sqrt{\prod_{j=i+1}^k(1+\gamma(j))}\left\|\frac{G(i)u(i)}{\left(\E\left[\|G(i)u(i)\|^2\right]\right)^{\frac{1}{2}}}\right\|^2\right]\cr
 \leq & \sqrt{\prod_{k=0}^{\infty}(1+\gamma(k))}\sup_{i\in \Lambda}\E\left[\left\|\frac{G(i)u(i)}{\left(\E\left[\|G(i)u(i)\|^2\right]\right)^{\frac{1}{2}}}\right\|^2\right]\cr
 =&\sqrt{\prod_{k=0}^{\infty}(1+\gamma(k))}<\infty.
\end{align}
By the condition (\ref{ssafe}-\ref{ssafe1}), (\ref{wwml})-(\ref{llnnv}) and Lemma \ref{lemma6}, we obtain
\begin{align}\label{llks}
 \lim_{k\to \infty}\sum_{i\in \Lambda}& \E\left[\|G(i)\|^2\right] \E\Bigg[\bigg\| \prod_{j=i+1}^k(I_{\X_1}-F(j))\frac{G(i)u(i)}{\left(\E\left[\|G(i)u(i)\|^2\right]\right)^{\frac{1}{2}}}
 \bigg\|^2\Bigg]=0.
\end{align}
Thus, substituting (\ref{oowf}) and (\ref{llks}) into (\ref{wiiie}) gives $\lim_{k\to\infty}\E[\|x(k)\|^2]=0$.
\end{proof}

To prove Lemma \ref{jihubiranshoulian}, we need the following lemma.
\begin{lemma}\label{yibanxingdejieguo}
For the algorithm (\ref{algorithm}), suppose that Assumptions \ref{assumption1}, \ref{assumption2}, Conditions \ref{condition1} and \ref{condition2} hold. If there exists a  fixed-length time period $h>0$ and a constant $\rho_0>0$ such that\\
%\begin{longlist}
\indent (i) $$ \left\{I_{\X^N}-\sum_{i=kh}^{(k+1)h-1}(a(i)\H^*(i)\H(i)+b(i)\L_{\G}\otimes I_{\X}),k\ge 0\right\}$$
is $L_2^2$-stable  w.r.t.  $\{\F((k+1)h-1), k\ge 0\}$;\\
\indent (ii) $\displaystyle \sup_{k\ge 0}\left(\E\left.\left[\|\H^*(k)\H(k)\|_{\LL\left(\X^N\right)}^{2^{\max\{h,2\}}}\right|\F(k-1)\right]\right)^{\frac{1}{2^{\max\{h,2\}}}}\leq \rho_0~\text{a.s.,}$\\
%\end{longlist}
then $\{I_{\X^N}-a(k)\H^*(k)\H(k)-b(k)\L_{\G}\otimes I_{\X},k\ge 0\}$ is $L_2^2$-stable w.r.t. $\{\F(k),k\ge 0\}$,
\end{lemma}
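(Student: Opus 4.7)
The approach is to relate the single-step operators $A(k) := I_{\X^N} - D(k)$ (with $D(k) := a(k)\H^*(k)\H(k) + b(k)\L_\G \otimes I_\X$) to the ``block'' operators $B(k) := I_{\X^N} - \sum_{i=kh}^{(k+1)h-1} D(i)$ via the block product $U_k := \prod_{i=kh}^{(k+1)h-1}(I_{\X^N}-D(i))$ (ordered product). Expanding this product and collecting terms by the number of $D$-factors yields the crucial decomposition $U_k = B(k) + R(k)$, where $R(k)$ denotes the sum of all products in the expansion containing at least two factors $D(i)$. Given an $L_2$-bounded adaptive sequence $\{x(n),\F(n)\}$, set $n' := \lceil (n+1)/h\rceil$ (so $n'h-1 \ge n$) and for each $m \ge n$ set $m'$ so that $m'h - 1 \le m < (m'+1)h - 1$. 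Letting $z := A(n'h-1)\cdots A(n+1)x(n)$, condition (ii) and iterated conditional H\"older estimates give $\E[\|z\|^2] \le C\sup_n \E[\|x(n)\|^2]$ (there being at most $h$ factors of $A$), and similarly the ``terminal prefix'' $A(m)\cdots A(m'h)$ has bounded conditional $L^2$-norm. Thus it suffices to prove $\E[\|\Psi_U(m'-1,n'-1)z\|^2]\to 0$ as $m'\to\infty$, where $\Psi_U(k,j) := U_k U_{k-1}\cdots U_{j+1}$.

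Set $\tilde w(k) := \Psi_U(k-1,n'-1)\,z$ for $k \ge n'$. The recursion $\tilde w(k+1) = U_k \tilde w(k) = B(k)\tilde w(k) + R(k)\tilde w(k)$ combined with the discrete Duhamel (variation of constants) formula gives
\[
\tilde w(m') \;=\; \Psi_B(m'-1,n'-1)\,z \;+\; \sum_{j=n'}^{m'-1} \Psi_B(m'-1,j)\, R(j)\, \tilde w(j),
\]
where $\Psi_B(k,j) := B(k)B(k-1)\cdots B(j+1)$. The first term vanishes in $L^2$ as $m'\to\infty$ directly from hypothesis (i), since $z \in L^2(\Omega,\F(n'h-1);\X^N)$ and $\{\F((k+1)h-1)\}$ is precisely the filter with respect to which $\{B(k)\}$ is $L_2^2$-stable.

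The main obstacle is the second term. Using condition (ii) together with Condition~\ref{condition2}, the plan is to derive $\E[\|D(i)\|^{2^h}\mid\F(i-1)] \leq C_h(a(i)^{2^h}+b(i)^{2^h})$ and deduce, since each summand of $R(j)$ is at least quadratic in $D$, the perturbation bound $\|R(j)\|_{L^2} = O(a(j)^2 + b(j)^2)$ and in particular $\sum_j \|R(j)\|_{L^2} < \infty$; similarly one obtains $\E[\|B(k)\|^2\mid\F(kh-1)] \leq 1 + \theta_k$ with $\sum_k\theta_k < \infty$, hence a uniform moment control on the propagator $\Psi_B$ applied to adapted inputs. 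A bootstrap using the Duhamel identity then yields $\sup_k \|\tilde w(k)\|_{L^2} < \infty$. With this in hand, for each fixed $j$ the element $R(j)\tilde w(j)$ lies in $L^2(\Omega,\F((j+1)h-1);\X^N)$ and is $L_2$-bounded uniformly in $j$; hypothesis (i) then gives $\|\Psi_B(m'-1,j)R(j)\tilde w(j)\|_{L^2} \to 0$ as $m'\to\infty$. The genuinely delicate step is to pass from pointwise-in-$j$ decay to convergence of the entire sum: the plan is to split at a cut-off $j^\ast$, use the summability of $\|R(j)\|_{L^2}$ against the uniform moment control on $\Psi_B$ to dominate the tail $j \geq j^\ast$ uniformly in $m'$, while the finite head $n' \leq j < j^\ast$ vanishes termwise as $m'\to\infty$. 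This Kronecker-type interplay between the moment hypothesis (ii) (yielding summable perturbation estimates) and hypothesis (i) (giving pointwise decay of the $B$-propagator) is the technical heart of the argument.
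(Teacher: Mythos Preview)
Your proposal is correct and takes essentially the same route as the paper: decompose the block product $U_k$ as the block-sum operator $B(k)$ plus a remainder $R(k)$ that is at least quadratic in the $D(i)$'s, apply a discrete Duhamel formula, kill the homogeneous term by hypothesis~(i), and handle the inhomogeneous sum by combining summability of the perturbation, a uniform propagator bound on $\Psi_B$, and a dominated-convergence (cutoff) argument---exactly the paper's Lemmas~\ref{lemma1}, \ref{hhhlemma}, and~\ref{lemma6}, with your Gronwall bootstrap replacing the paper's direct a~priori bound on $\tilde w$. One caution on the step you pass over with ``similarly'': the estimate $\E[\|B(k)\|^2\mid\F(kh-1)]\le 1+\theta_k$ with $\sum_k\theta_k<\infty$ genuinely requires the positivity $D'(k)\ge 0$ (yielding $\|I-D'(k)\|^2\le 1+\|D'(k)\|^2$, hence $\theta_k=O(a^2(kh)+b^2(kh))$), since the trivial bound $\|B(k)\|\le 1+\|D'(k)\|$ would only give $\theta_k=O(a(kh)+b(kh))$, which is not summable---the paper extracts the same gain by expanding $\Phi^*\Phi$ and dropping the (nonpositive) linear-in-$D$ term.
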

%\textbf{\emph{Proof of Lemma \ref{yibanxingdejieguo}:}}
\begin{proof}
Given the $L_2$-bounded adaptive sequence $\{x(k),\F(kh-1),k\ge 0\}$ with values in the Hilbert space $\X^N$ and the nonnegative integer $m$, we define a new sequence $\{u(k),k\ge 0\}$ by
\bna\label{diedaishi}
u(k+1)=\Phi_P((k+1)h-1,kh)u(k),~k\ge m,
\ena
where $u(m)=x(m),u(i)=0,i=0,\cdots,m-1$. It follows from Proposition \ref{nlllwwieiie}.(a)-(c) that $\{u(k),k\ge 0\}$ is a random sequence with values in the Hilbert space $(\X^N,\tau_{\text{N}}(\X^N))$. On one hand, from (\ref{diedaishi}), by iterative calculations, we get
\bna\label{wffe}
u(k+1) =& \left(\prod_{i=m}^k\Phi_P((i+1)h-1,ih)\right)u(m)\notag\\
 =& \Phi_P((k+1)h-1,mh)x(m),~k\ge m.
\ena
Noting that $x(m)\in L^0(\Omega,\F(mh-1);\X^N)$, it is known from Lemma \ref{lemma1} that
\ban
\E\left[\|u(k+1)\|^2\right]=
\E\left.\left[\E\left[\|\Phi_P((k+1)h-1,mh)x(m)\|^2\right|\F(mh-1)\right]\right]\leq d_1\E\left[\|x(m)\|^2\right],
\ean
thus, $\sup_{k\ge 0}\E[\|x(k)\|^2]<\infty$ implies $\sup_{k\ge 0}\E[\|u(k)\|^2]<\infty$. On the other hand, we can rewrite (\ref{diedaishi}) as
\ban
&&u(i+1)=\left(I_{\X^N}-\sum_{s=ih}^{(i+1)h-1}(a(s)\H^*(s)\H(s)+b(s)\L_{\G}\otimes I_{\X})\right)u(i)\cr
&&+\left(\Phi_P((i+1)h-1,ih)-\left(I_{\X^N}-\sum_{s=ih}^{(i+1)h-1}\left(a(s)\H^*(s)\H(s)+b(s)\L_{\G}\otimes I_{\X}\right)\right)\right)u(i),
\ean
which leads to
\bna\label{fwwii}
u(k+1)&=&\left(\prod_{i=m}^k\left(I_{\X^N}-\sum_{s=ih}^{(i+1)h-1}(a(s)\H^*(s)\H(s)+b(s)\L_{\G}\otimes I_{\X})\right)\right)x(m)\cr
&&+\sum_{i=m}^k\left(\prod_{j=i+1}^k\left(I_{\X^N}-\sum_{s=jh}^{(j+1)h-1}(a(s)\H^*(s)\H(s)+b(s)\L_{\G}\otimes I_{\X})\right)\right)\cr
&&\times \Bigg(\Phi_P((i+1)h-1,ih)-\Bigg(I_{\X^N}-\sum_{s=ih}^{(i+1)h-1}(a(s)\H^*(s)\H(s)\cr
&&+b(s)\L_{\G}\otimes I_{\X})\Bigg)\Bigg)u(i).
\ena
Denote the $s$-th order term in the binomial expansion of $\Phi_P((i+1)h-1,ih)$ by $M_s(i)$, $s=2,\cdots,h$. By (\ref{wffe})-(\ref{fwwii}), we have
\ban
&&~~~~\Phi_P((k+1)h-1,mh)x(m)\cr
&&=\left(\prod_{i=m}^k\left(I_{\X^N}-\sum_{s=ih}^{(i+1)h-1}(a(s)\H^*(s)\H(s)+b(s)\L_{\G}\otimes I_{\X})\right)\right)x(m)\cr
&&+\sum_{i=m}^k\left(\prod_{j=i+1}^k\left(I_{\X^N}-\sum_{s=jh}^{(j+1)h-1}(a(s)\H^*(s)\H(s)+b(s)\L_{\G}\otimes I_{\X})\right)\right)\left(\sum_{s=2}^hM_s(i)\right)u(i),
\ean
from which we get
\bna\label{ikddw}
&&~~~~\E\left[\left\|\Phi_P((k+1)h-1,mh)x(m)\right\|^2\right]\cr
&&\leq 2\E\left[\left\|\left(\prod_{i=m}^k\left(I_{\X^N}-\sum_{s=ih}^{(i+1)h-1}(a(s)\H^*(s)\H(s)+b(s)\L_{\G}\otimes I_{\X})\right)\right)x(m)\right\|^2\right]\cr &&~~~~+2\E\left[\left\|\sum_{i=m}^k\left(\prod_{j=i+1}^k\left(I_{\X^N}-\sum_{s=jh}^{(j+1)h-1}(a(s)\H^*(s)\H(s)+b(s)\L_{\G}\otimes I_{\X})\right)\right)\right.\right.\cr &&~~~~\times\left.\left.\left(\sum_{s=2}^hM_s(i)\right)u(i)\right\|^2\right].
\ena
Noting that $x(m)\in L^0(\Omega,\F(mh-1);\X^N)$ and $\{I_{\X^N}-\sum_{i=kh}^{(k+1)h-1}(a(i)\H^*(i)\H(i)+b(i)\L_{\G}\otimes I_{\X}),k\ge 0\}$ is $L_2^2$-stable with respect to the filter $\{\F((k+1)h-1),k\ge 0\}$, we obtain
\ban
\lim_{k\to\infty}\E\left[\left\|\left(\prod_{i=m}^k\left(I_{\X^N}-\sum_{s=ih}^{(i+1)h-1}(a(s)\H^*(s)\H(s)+b(s)\L_{\G}\otimes I_{\X})\right)\right)x(m)\right\|^2\right]=0.
\ean
Hereafter, we will analyze the second term on the right-hand side of the inequality in (\ref{ikddw}). Denote
$
D(s)=a(s)\H^*(s)\H(s)+b(s)(\L_{\G}\otimes I_{\X})$. On one hand, for $2\leq r\leq 2^h,ih\leq s\leq (i+1)h-1$, by Cr-inequality and the conditional Lyapunov inequality, we have
\bna\label{jssk}
&&~~~~\E\left.\left[\|D(s)\|^r\right|\F(ih-1)\right]\cr
&&\leq \left(\E\left.\left[\|a(s)\H^*(s)\H(s)+b(s)(\L_{\G}\otimes I_{\X})\|^{2^h}\right|\F(ih-1)\right]\right)^{\frac{r}{2^h}}\cr
&&\leq \max\{a(s),b(s)\}^r\left(2^{2^h-1}\E\left.\left[\|\H^*(s)\H(s)\|^{2^h}\right|\F(ih-1)\right]\right.\cr
&&\hspace{0.5cm}+\left. 2^{2^h-1}\|\L_{\G}\otimes I_{\X}\|^{2^h}\right)^{\frac{r}{2^h}}\cr
&&\leq2^r\left(a^r(s)+b^r(s)\right)\left(\left(\E\left.\left[\|\H^*(s)\H(s)\|^{2^h}\right|\F(ih-1)\right]\right)^{\frac{r}{2^h}}+\|\L_{\G}\otimes I_{\X}\|^{r}\right)\cr
&&\leq 2^r(a^r(s)+b^r(s))(\rho_0^r+\|\L_{\G}\otimes I_{\X}\|^r)\cr
&& \leq 2^r(a(s)+b(s))^r(\rho_0+\|\L_{\G}\otimes I_{\X}\|)^r~\text{a.s.}
\ena
Denote $\rho_1=\rho_0+\|\L_{\G}\otimes I_{\X}\|$. For $ih\leq n_1<\cdots< n_r\leq (i+1)h-1$, by the conditional H\"{o}lder inequality, the conditional Lyapunov inequality and (\ref{jssk}), we get
\bna\label{ijjw}
&&~~~~\E\left.\left[\left\|\prod_{j=1}^rD(n_j)\right\|^2\right|\F(ih-1)\right]\cr
&&\leq \left(\E\left.\left[\left\|\prod_{j=1}^{r-1}D(n_j)\right\|^4\right|\F(ih-1)\right]\right)^{\frac{1}{2}}\left(\E\left.\left[\left\|D(n_r)\right\|^4\right|\F(ih-1)\right]\right)^{\frac{1}{2}}\cr
&&\leq \rho_1^2(a(n_r)+b(n_r))^2\left(\E\left.\left[\left\|\prod_{j=1}^{r-1}D(n_j)\right\|^4\right|\F(ih-1)\right]\right)^{\frac{1}{2}}\cr
&&\leq \rho_1^{2r}\prod_{j=1}^r(a(n_j)+b(n_j))^2~\text{a.s.}
\ena
On the other hand, for $2\leq s\leq h$, it follows from Condition \ref{condition1} and (\ref{ijjw}) that
\bna\label{wwffw}
&&~~~~\E\left.\left[\|M_s(i)\|^2\right|\F(ih-1)\right]\cr
&&=\E\left.\left[\left\|\sum_{ih\leq n_1< \cdots< n_s\leq (i+1)h-1}\prod_{j=1}^sD(n_j)\right\|^2\right|\F(ih-1)\right]\cr
&&\leq \mathbb{C}_h^s\sum_{ih\leq n_1< \cdots< n_s\leq (i+1)h-1}\E\left.\left[\left\|\prod_{j=1}^sD(n_j)\right\|^2\right|\F(ih-1)\right]\cr
&&\leq \mathbb{C}_h^s\sum_{ih\leq n_1< \cdots< n_s\leq (i+1)h-1}\rho_1^{2s}\prod_{j=1}^s(a(n_j)+b(n_j))^2\cr
&&\leq \mathbb{C}_h^s\sum_{ih\leq n_1< \cdots< n_s\leq (i+1)h-1}\rho_1^{2s}\prod_{j=1}^s(a(i)+b(i))^2\cr
&&= \left(\mathbb{C}_h^s\right)^2\rho_1^{2s}(a(i)+b(i))^{2s}\cr
&&\leq \mathbb{C}_h^s\rho_1^{2s}(a(i)+b(i))^{2s}h!~\text{a.s.},
\ena
where the last inequality is obtained from $\mathbb{C}_h^s\leq h!$. Noting that $\{a(k),k\ge 0\}$ and $\{b(k),k\ge 0\}$ both monotonically vanish, it follows that there exists a constant $c_0>0$, such that $\sup_{k\ge 0}(a(k)+b(k))\leq c_0$. Then, by (\ref{wwffw}), we get
\bna\label{wwkks}
\sum_{s=2}^h\E\left.\left[\|M_s(i)\|^2\right|\F(ih-1)\right]&\leq& 4h!c_0^{-4}\left(a^2(i)+b^2(i)\right)^2\sum_{s=2}^h\mathbb{C}_h^s\rho_1^{2s}c_0^{2s}\cr &=&\rho_2\left(a^2(i)+b^2(i)\right)^2~\text{a.s.},
\ena
where $\rho_2=4h!c_0^{-2}((\rho_1^2c_0^2+1)^h-1-h\rho_1^2c_0^2)$. It follows from (\ref{wwkks}) that
\ban
&&~~~~\E\left[\left\|\left(\sum_{s=2}^hM_s(i)\right)u(i)\right\|^2\right]\cr
&&\leq h\E\left[\sum_{s=2}^h \E\left.\left[\|M_s(i)\|^2\right|\F(ih-1)\right]\|u(i)\|^2\right]\cr
&&\leq h\rho_2\left(a^2(i)+b^2(i)\right)^2\E\left[\|u(i)\|^2\right],
\ean
which together with $\sup_{i\ge 0}\E[\|u(i)\|^2]<\infty$ leads to
\begin{align}\label{wwooo}
&\sup_{i\ge 0}\E\left[\left\|R(i)\right\|^2\right]\cr
\leq & h\rho_2\sup_{i\ge 0}\E\left[\|u(i)\|^2\right]
<    \infty,
\end{align}
where
\ban
R(i)=\frac{1}{a^2(i)+b^2(i)}\left(\sum_{s=2}^hM_s(i)\right)u(i),~i\ge m.
\ean
By the Minkowski inequality, we obtain
\bna\label{foow}
&&\hspace{-0.2cm}\E\left[\left\|\sum_{i=m}^k\left(\prod_{j=i+1}^k\left(I_{\X^N}-\sum_{s=jh}^{(j+1)h-1}D(s)\right)\right)\left(\sum_{s=2}^hM_s(i)\right)u(i)\right\|^2\right]\cr
&&\hspace{-0.8cm}=\E\left[\left\|\sum_{i=m}^k\left(a^2(i)+b^2(i)\right)\left(\prod_{j=i+1}^k\left(I_{\X^N}-\sum_{s=jh}^{(j+1)h-1}D(s)\right)\right)R(i)\right\|^2\right]\cr
&&\hspace{-0.8cm}\leq \Bigg(\sum_{i=m}^k\left(a^2(i)+b^2(i)\right)\Bigg(\E\Bigg[\Bigg\|\Bigg(\prod_{j=i+1}^k\Bigg(I_{\X^N}-\sum_{s=jh}^{(j+1)h-1}D(s)\Bigg)\Bigg) R(i)\Bigg\|^2\Bigg]\Bigg)^{\frac{1}{2}}\Bigg)^2.
\ena
From Proposition \ref{wenknknkn}, we know that $R(i)\in L^0(\Omega,\F((i+1)h-1);\X^N)$, thus, by (\ref{wwooo}) and Lemma \ref{hhhlemma}, we know that there exists a constant $d_3>0$ such that
\bna\label{oxcnv}
&&~~~~\sup_{k\ge 0\atop i\ge 0 }\E\left[\left\|\left(\prod_{j=i+1}^k\left(I_{\X^N}-\sum_{s=jh}^{(j+1)h-1}D(s)\right)\right)R(i)\right\|^2\right]\cr
&&\leq d_3\sup_{i\ge 0}\E\left[\|R(i)\|^2\right]
 <\infty,
\ena
which together with that the operator-valued random sequence $$\left\{I_{\X^N}-\sum_{i=kh}^{(k+1)h-1}(a(i)\H^*(i)\H(i)+b(i)\L_{\G}\otimes I_{\X}),k\ge 0\right\}$$ is $L_2^2$-stable w.r.t. $\{\F((k+1)h-1),k\ge 0\}$ gives
\bna\label{xawq}
\lim_{k\to \infty}\E\left[\left\|\left(\prod_{j=i+1}^k\left(I_{\X^N}-\sum_{s=jh}^{(j+1)h-1}D(s)\right)\right)R(i)\right\|^2\right]=0,~\forall\ i\ge 0.
\ena
By Condition \ref{condition2}, (\ref{oxcnv})-(\ref{xawq}) and Lemma \ref{lemma6}, we have
\ban
\lim_{k\to\infty}\sum_{i=m}^k\left(a^2(i)+b^2(i)\right)\left(\E\left[\left\|\left(\prod_{j=i+1}^k\left(I_{\X^N}-\sum_{s=jh}^{(j+1)h-1}D(s)\right)\right)R(i)\right\|^2\right]\right)^{\frac{1}{2}}=0.
\ean
Given the $L_2$-bounded adaptive sequence $\{x(k),\F(kh-1),k\ge 0\}$ with values in the Hilbert space $\X^N$, from (\ref{ikddw}) and (\ref{foow}), we know that
\bna\label{rscsc}
\lim_{k\to \infty}\E\left[\left\|\Phi_P((k+1)h-1,mh)x(m)\right\|^2\right]=0,~\forall\ m\ge 0.
\ena
For $j\in \mathbb N$, denote $m_j=\lfloor \frac{j}{h} \rfloor,\widetilde{m}_j=\lceil \frac{j}{h} \rceil$. Let $\{y(k),\F(k),k\ge 0\}$ be a $L_2$-bounded adaptive sequence with values in the Hilbert space $\X^N$. For $0\leq i<k-3h$, noting that $0\leq k-m_kh<h$, from Propositions \ref{nlllwwieiie}-\ref{wenknknkn}, it is known that $\Phi_P(m_kh-1,\widetilde{m}_{i+1}h)\Phi_P(\widetilde{m}_{i+1}h-1,i+1)y(i)\in L^0(\Omega,\F(m_kh-1);\X^N)$, by Lemma \ref{lemma1}, we know that there exists a constant $d_2>0$ such that
\bna\label{jjkkl}
&&~~~~\mathbb E\left[\|\Phi_P(k,i+1)y(i)\|^2\right]\cr
&&=\mathbb E\left[\|\Phi_P(k,m_kh)\Phi_P(m_kh-1,\widetilde{m}_{i+1}h)\Phi_P(\widetilde{m}_{i+1}h-1,i+1)y(i)\|^2\right]\cr
&&=\mathbb E\Big[\mathbb E\Big[\Big\|\Phi_P(k,m_kh)\Phi_P(m_kh-1,\widetilde{m}_{i+1}h)\cr
&&~~~~\times \Phi_P(\widetilde{m}_{i+1}h-1,i+1)y(i)\Big\|^2\Big|\mathcal F(m_kh-1)\Big]\Big]\cr
&&\leq d_2\mathbb E\left[\|\Phi_P(m_kh-1,\widetilde{m}_{i+1}h)\Phi_P(\widetilde{m}_{i+1}h-1,i+1)y(i)\|^2\right],~0\leq i<k-3h.
\ena
Noting that $0\leq \widetilde{m}_{i+1}h-(i+1)<h$ and $y(i)\in \F(i)$, it follows from Lemma \ref{lemma1} that
\bna\label{cllll}
&&~~~\sup_{i\ge 0}\mathbb E\left[\|\Phi_P(\widetilde{m}_{i+1}h-1,i+1)y(i)\|^2\right]\cr
&&=\sup_{i\ge 0}\E\left[\mathbb E\left.\left[\|\Phi_P(\widetilde{m}_{i+1}h-1,i+1)y(i)\|^2\right|\mathcal F(i)\right]\right]\cr
&&\leq d_2\sup_{i\ge 0}\E\left[\|y(i)\|^2\right]
 <\infty.
\ena
By Propositions \ref{nlllwwieiie}-\ref{wenknknkn}, we get $\Phi_P(\widetilde{m}_{i+1}h-1,i+1)y(i)\in L^0(\Omega,\mathcal F(\widetilde{m}_{i+1}h-1);\X^N)$. Substituting (\ref{rscsc}) and (\ref{cllll}) into (\ref{jjkkl}) gives
\ban
\lim_{k\to\infty}\mathbb E\left[\|\Phi_P(k,i+1)y(i)\|^2\right]=0,~\forall\ i\ge 0,
\ean
which implies that the operator-valued random sequence $\{I_{\X^N}-a(k)\H^*(k)\H(k)-b(k)\L_{\G}\otimes I_{\X},k\ge 0\}$ is $L_2^2$-stable w.r.t. $\{\F(k),k\ge 0\}$.
\end{proof}

%\textbf{\emph{Proof of Lemma \ref{jihubiranshoulian}:}}
\begin{proof}[Proof of Lemma \ref{jihubiranshoulian}]
It follows from Condition \ref{condition3} that there exists a constant $C_1>0$, such that $|b(k)-a(k)|\leq C_1(a^2(k)+b^2(k))$, which gives
\bna\label{xmkmslff}
\sum_{i=0}^ka(i)&\leq& \sum_{i=0}^k(|a(i)-b(i)|+b(i))\cr
&\leq& C_1\sum_{i=0}^k\left(a^2(i)+b^2(i)\right)+\sum_{i=0}^kb(i),~k\ge 0.
\ena
For the integer $h>0$, denote
\bna\label{xmlwf}
c(k)=\sum_{s=kh}^{(k+1)h-1}b(s).
\ena
By (\ref{xmkmslff}) and Condition \ref{condition2}, we get
\ban
\sum_{k=0}^{\infty}c(k)=\sum_{k=0}^{\infty}\sum_{s=kh}^{(k+1)h-1}b(s)=\sum_{k=0}^{\infty}b(k)=\infty.
\ean
Denote
\bna\label{mvnjwe}
\HH=\text{diag}\left\{\frac{1}{h}\HH_1,\cdots,\frac{1}{h}\HH_N\right\}+\L_{\G}\otimes I_{\X}.
\ena
Noting that $\L_{\G}$ is positive semi-definite and $\sum_{i=1}^{N}\HH_i>0$, by Lemma \ref{lemmaA10}, we know that $\HH\in \mathscr L(\X^N)$ is a strictly positive self-adjoint operator. Let $\{x(k),\F(kh-1),k\ge 0\}$ be a $L_2$-bounded adaptive sequence with values in the Hilbert space $\X^N$, then we can write  $x(k)=(x_1(k),\cdots,x_N(k))$, where $x_i(k):\Omega\to \X$, $i=1,\cdots,N$ are the random elements with values in the Hilbert space $(\X,\tau_{\text{N}}(\X))$. Denote
\bna\label{cnlweee}
\mu(i) = c(i)\HH x(i)-\sum_{s=ih}^{(i+1)h-1}(a(s)\E[\H^*(s)\H(s)x(i)|\F(ih-1)] +b(s)(\L_{\G}\otimes I_{\X})x(i)).
\ena
By (\ref{xmlwf})-(\ref{cnlweee}), we get
\ban
&&~~~~\mu(i)\cr
&&=\sum_{s=ih}^{(i+1)h-1}b(s)\text{diag}\left\{\frac{1}{h}\HH_1,\cdots,\frac{1}{h}\HH_N\right\}x(i)-\sum_{s=ih}^{(i+1)h-1}a(s)\E[\H^*(s)\H(s)x(i)|\F(ih-1)]\cr
&&=\text{diag}\left\{\frac{1}{h}\sum_{s=ih}^{(i+1)h-1}b(s)\HH_1x_1(i)-\sum_{s=ih}^{(i+1)h-1}a(s)\E[H_1^*(s)H_1(s)x_1(i)|\F(ih-1)],\right.\cr
&&~~~~~~\left.\cdots,\frac{1}{h}\sum_{s=ih}^{(i+1)h-1}b(s)\HH_Nx_N(i)-\sum_{s=ih}^{(i+1)h-1}a(s)\E[H_N^*(s)H_N(s)x_N(i)|\F(ih-1)]\right\}.
\ean
It follows from Lemma \ref{lemmaA11} that there exists a constant $C_2>0$ such that
\bna\label{vklwmlmfm}
\max_{ih\leq s\leq (i+1)h-1}\Bigg(\frac{1}{h}\Bigg(\sum_{s=ih}^{(i+1)h-1}b(s)\Bigg)-a(s)\Bigg)^2\leq C_2\left(a^4(i)+b^4(i)\right).
\ena
Therefore, by Conditions \ref{condition1}-\ref{condition3}, the condition (\ref{yinlitiaojian2}) and (\ref{vklwmlmfm}), we have
\begin{align}\label{yuzhouwudichang}
 &\E\left[\|\mu(i)\|^2\right]\cr
 =& \sum_{j=1}^N\E\Bigg[\Bigg\|\frac{1}{h}\sum_{s=ih}^{(i+1)h-1}b(s)\HH_jx_j(i)
 -\sum_{s=ih}^{(i+1)h-1}a(s)\E[H_j^*(s)H_j(s)x_j(i)|\F(ih-1)]\Bigg\|^2\Bigg]\cr
 =&\sum_{j=1}^N\E\Bigg[\Bigg\|\Bigg(\frac{1}{h}\sum_{s=ih}^{(i+1)h-1}b(s)\Bigg)
 \Bigg(\HH_j x_j(i)-\sum_{s=ih}^{(i+1)h-1}\E\left[H_j^*(s)H_j(s)x_j(i)|\F(ih-1)\right]\Bigg)\cr & +\sum_{s=ih}^{(i+1)h-1}\Bigg(\frac{1}{h}\Bigg(\sum_{s=ih}^{(i+1)h-1}b(s)\Bigg)-a(s)\Bigg)\E\left[H_j^*(s)H_j(s)x_j(i)|\F(ih-1)\right]\Bigg\|^2\Bigg]\cr
%&&\leq \sum_{j=1}^N\left(2\E\left[\left\|\left(\frac{1}{h}\sum_{s=ih}^{(i+1)h-1}b(s)\right)\left(H_jx_j(i)-\sum_{s=ih}^{(i+1)h-1}\E\left.\left[H_j^*(s)H_j(s)x_j(i)\right|\F(ih-1)\right]\right)\right\|^2\right]\right.\cr &&~~+\left.2\E\left[\left\|\sum_{s=ih}^{(i+1)h-1}\left(\frac{1}{h}\left(\sum_{s=ih}^{(i+1)h-1}b(s)\right)-a(s)\right)\E\left.\left[H_j^*(s)H_j(s)x_j(i)\right|\F(ih-1)\right]\right\|^2\right]\right)\cr
 \leq & \sum_{j=1}^N\Bigg(2b^2(i)\E\Bigg[\Bigg\|\HH_jx_j(i)-\sum_{s=ih}^{(i+1)h-1}\E\left.\left[H_j^*(s)H_j(s)x_j(i)\right|\F(ih-1)\right]\Bigg\|^2\Bigg]\cr
& +2h\E\Bigg[\sum_{s=ih}^{(i+1)h-1}
\Bigg(\frac{1}{h}\Bigg(\sum_{s=ih}^{(i+1)h-1}b(s)\Bigg)-a(s)\Bigg)^2 \left\|\E\left[H_j^*(s)H_j(s)x_j(i)|\F(ih-1)\right]\right\|^2\Bigg]\Bigg)\cr
 \leq & \sum_{j=1}^N\left(2b^2(i)\E\left[\left\|\HH_jx_j(i)-\sum_{s=ih}^{(i+1)h-1}\E\left.\left[H_j^*(s)H_j(s)x_j(i)\right|\F(ih-1)\right]\right\|^2\right]\right.\cr
& +2h\E\Bigg[\sum_{s=ih}^{(i+1)h-1}\Bigg(\frac{1}{h}\Bigg(\sum_{s=ih}^{(i+1)h-1}b(s)\Bigg)-a(s)\Bigg)^2\cr & \times\E\left[\left\|H_j^*(s)H_j(s)\right\|^2|\F(ih-1)\right]\left\|x_j(i)\right\|^2\Bigg]\Bigg)\cr
 \leq & \sum_{j=1}^N\left(2b^2(i)\E\left[\left\|\HH_jx_j(i)-\sum_{s=ih}^{(i+1)h-1}\E\left.\left[H_j^*(s)H_j(s)x_j(i)\right|\F(ih-1)\right]\right\|^2\right]\right.\cr
& +\left.2h\rho^2_0\sup_{k\ge 0}\E\left[\|x(k)\|^2\right]\sum_{s=ih}^{(i+1)h-1}\left(\frac{1}{h}\left(\sum_{s=ih}^{(i+1)h-1}b(s)\right)-a(s)\right)^2\right)\cr
 \leq & \sum_{j=1}^N\left(2b^2(i)\E\left[\left\|\HH_jx_j(i)-\sum_{s=ih}^{(i+1)h-1}\E\left.\left[H_j^*(s)H_j(s)x_j(i)\right|\F(ih-1)\right]\right\|^2\right]\right.\cr
& +\left.2h^2\rho^2_0\sup_{k\ge 0}\E\left[\|x(k)\|^2\right]\max_{ih\leq s\leq (i+1)h-1}\left(\frac{1}{h}\left(\sum_{s=ih}^{(i+1)h-1}b(s)\right)-a(s)\right)^2\right)\cr
 \leq & \sum_{j=1}^N2b^2(i)\E\left[\left\|\HH_jx_j(i)-\sum_{s=ih}^{(i+1)h-1}\E\left.\left[H_j^*(s)H_j(s)x_j(i)\right|\F(ih-1)\right]\right\|^2\right]\cr
& +2C_2Nh^2\rho^2_0\sup_{k\ge 0}\E\left[\|x(k)\|^2\right]\left(a^4(i)+b^4(i)\right).
\end{align}
Noting that $\{x_j(k),\F(kh-1),k\ge 0\},j=1,\cdots,N$ are the $L_2$-bounded adaptive sequences with values in the Hilbert space $\X$, by (\ref{yinlitiaojian1}), we obtain
\bna\label{vnkwjofjeofe}
&&\sum_{i=0}^{\infty}\E\left[\left\|\HH_jx_j(i)-\sum_{s=ih}^{(i+1)h-1}
\E\left.\left[H_j^*(s)H_j(s)x_j(i)\right|\F(ih-1)\right]\right\|^2\right]\cr
&<&\infty,~j=1,\cdots,N.
\ena
By Condition \ref{condition2}, (\ref{yuzhouwudichang})-(\ref{vnkwjofjeofe}) and Cauchy inequality, we get
\ban
&&~~~~\sum_{i=0}^{\infty}\E\left[\|\mu(i)\|^2\right]^{\frac{1}{2}}\cr
&&\leq \sqrt{2}\sum_{i=0}^{\infty}b(i)\sum_{j=1}^N\left(\E\left[\left\|\HH_jx_j(i)-\sum_{s=ih}^{(i+1)h-1}\E\left.\left[H_j^*(s)H_j(s)x_j(i)\right|\F(ih-1)\right]\right\|^2\right]\right)^{\frac{1}{2}}\cr
&&~~~~+h\rho_0\sup_{k\ge 0}\E\left[\|x(k)\|^2\right]^{\frac{1}{2}}\sqrt{2C_2N}\sum_{i=0}^{\infty}\left(a^2(i)+b^2(i)\right)\cr
%&&\leq \sqrt{2}C_3\left(\sum_{i=0}^{\infty}\left(\sum_{j=1}^N\left(\E\left[\left\|H_jx_j(i)-\sum_{s=ih}^{(i+1)h-1}\E\left.\left[H_j^*(s)H_j(s)x_j(i)\right|\F(ih-1)\right]\right\|^2\right]\right)^{\frac{1}{2}}\right)^2\right)\cr
%&&+h\rho_0\sup_{k\ge 0}\E\left[\|x(k)\|^2\right]^{\frac{1}{2}}\sqrt{2C_2N}\sum_{i=0}^{\infty}\left(a^2(i)+b^2(i)\right)\cr
&&\leq \sqrt{2}NC_3\left(\sum_{i=0}^{\infty}\sum_{j=1}^N\E\left[\left\|\HH_jx_j(i)-\sum_{s=ih}^{(i+1)h-1}\E\left.\left[H_j^*(s)H_j(s)x_j(i)\right|\F(ih-1)\right]\right\|^2\right]\right)\cr
&&~~~~+h\rho_0\sup_{k\ge 0}\E\left[\|x(k)\|^2\right]^{\frac{1}{2}}\sqrt{2C_2N}\sum_{i=0}^{\infty}\left(a^2(i)+b^2(i)\right)\cr
&&= \sqrt{2}NC_3\left(\sum_{j=1}^N\sum_{i=0}^{\infty}\E\left[\left\|\HH_jx_j(i)-\sum_{s=ih}^{(i+1)h-1}\E\left.\left[H_j^*(s)H_j(s)x_j(i)\right|\F(ih-1)\right]\right\|^2\right]\right)\cr
&&~~~~+h\rho_0\sup_{k\ge 0}\E\left[\|x(k)\|^2\right]^{\frac{1}{2}}\sqrt{2C_2N}\sum_{i=0}^{\infty}
\left(a^2(i)+b^2(i)\right)
  <\infty,
\ean
where $C_3=\sum_{i=0}^{\infty}b^2(i)$. For any given integer $m>0$, denote $\Gamma_{m}=\{i\ge m:\E[\|\mu(i)\|^2]  >0,i\in \mathbb N\}$. Noting that $\E[\|\mu(i)\|^2]=0$ implies that $\mu(i)=0~\text{a.s.}$, we obtain
\bna\label{final2}
&&~~~~\sum_{i=m}^{k}\left(\E\left[\left\|\left(\prod_{j=i+1}^k(I_{\X^N}-c(j)\HH)\right)\mu(i)\right\|^2\right]\right)^{\frac{1}{2}}\cr
&&\leq \sum_{i=m}^{\infty}\left(\E\left[\left\|\left(\prod_{j=i+1}^k(I_{\X^N}-c(j)\HH)\right)\mu(i)\right\|^2\right]\right)^{\frac{1}{2}}\cr
&&=\sum_{i\in\Gamma_m}\left(\E\left[\left\|\left(\prod_{j=i+1}^k(I_{\X^N}-c(j)\HH)\right)\mu(i)\right\|^2\right]\right)^{\frac{1}{2}}\cr
&&=\sum_{i\in\Gamma_m}\E\left[\|\mu(i)\|^2\right]^{\frac{1}{2}}\left(\E\left[\left\|\left(\prod_{j=i+1}^k(I_{\X^N}-c(j)\HH)\right)\eta(i)\right\|^2\right]\right)^{\frac{1}{2}},
\ena
where $\eta(i)=\mu(i)\E[\|\mu(i)\|^2]^{-\frac{1}{2}},i\in \Gamma_m$. Noting that $\E[\|\eta(i)\|^2]=1$, it follows from Lemma \ref{lemmaA7} that there exist constants $M,d>0$, such that
\bna\label{final3}
&&~~~~\sup_{k\ge 0\atop i\in \Gamma_m}\left(\E\left[\left\|\left(\prod_{j=i+1}^k(I_{\X^N}-c(j)\HH)\right)\eta(i)\right\|^2\right]\right)^{\frac{1}{2}}\cr
&&\leq M^d\sup_{i\in \Gamma_m}\E\left[\|\eta(i)\|^2\right]^{\frac{1}{2}}
   =M^d.
\ena
By Lemma \ref{lemmaA7} and Lebesgue dominated convergence theorem, we get
\bna\label{final4}
\lim_{k\to \infty}\left(\E\left[\left\|\left(\prod_{j=i+1}^k(I_{\X^N}-c(j)\HH)\right)\eta(i)\right\|^2\right]\right)^{\frac{1}{2}}=0,~\forall\ i\ge 0.
\ena
Therefore, combining (\ref{final2})-(\ref{final4}) and Lemma \ref{lemma6} leads to
\ban
\lim_{k\to\infty}\sum_{i=m}^{k}\left(\E\left[\left\|\left(\prod_{j=i+1}^k(I_{\X^N}-c(j)\HH)\right)\mu(i)\right\|^2\right]\right)^{\frac{1}{2}}=0,~\forall\ m\ge 0.
\ean
It follows from Lemma \ref{henandelemma} that the operator-valued random sequence $$ \left\{I_{\X^N}-\sum_{i=kh}^{(k+1)h-1}(a(i)\H^*(i)\H(i)+b(i)\L_{\G}\otimes I_{\X}),k\ge 0\right\}$$ is $L_2^2$-stable w.r.t. $\{\F((k+1)h-1),k\ge 0\}$. Hence, from Lemma \ref{yibanxingdejieguo}, it is known that the operator-valued random sequence $\{I_{\X^N}-a(k)\H^*(k)\H(k)-b(k)\L_{\G}\otimes I_{\X},k\ge 0\}$ is $L_2^2$-stable w.r.t. $\{\F(k),k\ge 0\}$.
\end{proof}
\emph{Proof of Lemma \ref{rabdomelerkhs}.} For any given $f_1,f_2\in \HH_K$ and $c_1,c_2\in \mathbb R$, it follows from the reproducing property of $\HH_K$ that
\begin{align}\label{vlwlmmfff}
&H_i(k)(c_1f_1+c_2f_2)\notag\\
=&\langle c_1f_1+c_2f_2,K_{x_i(k)}\rangle _K\cr
=&c_1H_i(k)(f_1)+c_2H_i(k)(f_2),~k\ge 0,~i\in \mathcal V,
\end{align}
thus, $H_i(k)$ is a linear operator.
%Noting the continuity of Mercer kernel $K:\X\times \X\to \mathbb R$, we know that the function $K_x:\X\to \HH_K$ induced by the Mercer kernel $K$ is also continuous. It is known from  $\HH_K=\overline{\textbf{span}\{K_x,x\in\X\}}$ that $f\in \HH_K$ is a Borel measurable function, which implies that  $H_i(k)(f)=f(x_i(k))$ is a random variable with values in the Hilbert space $(\mathbb R,\tau_{\text{N}}(\mathbb R))$. By Assumption \ref{assumption5} and the reproducing property of $\HH_K$, we have
%\bna\label{vnklooeoeeee}
%\|K_x\|_K=\sqrt{K(x,x)}\leq \sup_{x\in\X}\sqrt{K(x,x)}<\infty.
%\ena
For any given sample path $\omega\in\Omega$, we get
\begin{align*}
 &|H_i(k)(\omega)(f)|\\
= & \left|\left\langle f,K_{x_i(k)(\omega)}\right\rangle _K\right|\\ \leq &\|f\|_K \|K_{x_i(k)(\omega)}\|_K\\ = & \|f\|_K  \sqrt{\langle K_{x_i(k)(\omega)},K_{x_i(k)(\omega)}\rangle_{K}}\\
=& \|f\|_K  \sqrt{ K(x_i(k)(\omega),x_i(k)(\omega))}\\
 \leq  &  \sup\limits_{x\in\X} \sqrt{K(x,x)}  \|f\|_K,   \ \forall\ f\in \HH_K,~k\ge 0,~i\in \mathcal V,
\end{align*}
%\begin{align}\label{vnlwkfeemmff}
%&|H_i(k)(\omega)(f)|\cr
%=&\left|\left\langle f,K_{x_i(k)(\omega)}\right\rangle _K\right|
% \leq  \sup_{x\in\X}\sqrt{K(x,x)}\|f\|_K,\cr
% &\ \ \ \  \ \ \ \ \ \ \ \  \ \ \ \ \ \ \ \ \ ~\forall\ f\in \HH_K,~k\ge 0,~i\in \mathcal V,
%\end{align}
then $\|H_i(k)\|_{\LL(\HH_K,\mathbb R)}\leq \sup\limits_{x\in\X}\sqrt{K(x,x)}\   \text{a.s.},$ which together with Assumption \ref{assumption5} gives $H_i(k) \in \LL(\HH_K,\mathbb R)$.
Noting that $x_{i}(k)$ is a random vector with values in the Hilbert space $(\X,\tau_{\text{N}}(\X))$, then there exists a simple function sequence $\{x_{i}^n(k),n\ge 0\}$ with values in $\X$, such that $\lim_{n\to\infty}\|x_{i}(k)-x_{i}^n(k)\|=0~\text{a.s.}, ~k\ge 0,~i\in \mathcal V.$ Denote $H_i^n(k)(f)=f(x_i^n(k)),\  f\in \HH_K, ~k\ge 0,~i\in \mathcal V.$ Then, we know that $\{H_i^n(k),n\ge 0\}$  is a sequence of simple functions with values in $\LL(\HH_K,\mathbb R).$ This together with the reproducing property of $\HH_K$ and the symmetry of Mercer kernel $K$ gives
\begin{align}
&\|H_i^n(k)-H_i(k)\|_{\LL(\HH_K,\mathbb R)}\notag\\
=&\sup_{\|f\|_{K}=1} |H_i^n(k)(f)-H_i(k)(f) |\notag\\
=& \sup_{\|f\|_{K}=1} \left|\left\langle f,K_{x_i^n(k)}-K_{x_i(k)}\right\rangle _K \right|\notag\\
\leq & \|K_{x_i^n(k)}-K_{x_i(k)}\|_{K}\notag\\
=& \sqrt{\langle K_{x_i^n(k)}-K_{x_i(k)},K_{x_i^n(k)}-K_{x_i(k)}\rangle_{K}}\notag\\
=&\sqrt{ K(x_i(k),x_i(k))-2K(x_i(k),x_i^n(k))+K(x_i^n(k),x_i^n(k))},\label{klianxuxing}
\end{align}
which together with the continuity of Mercer kernel $K$ gives that $\lim\limits_{n\to\infty}\|H_i^n(k)-H_i(k)\|_{\LL(\HH_K,\mathbb R)}=0,~k\ge 0,~i\in \mathcal V$. Combining this, Lemma \ref{vnwkelel} and Definition \ref{tuopukongjian} gives that $H_i(k)$ is a random element with values in the  space $(\LL(\HH_K,\mathbb R),\tau_{\text{N}}(\LL(\HH_K,\mathbb R)))$. Noting that $\tau_{\text{S}}(\LL(\HH_K,\mathbb R))\subseteq \tau_{\text{N}}(\LL(\HH_K,\mathbb R))$,
 it follows that a random element with values in $(\LL(\HH_K,\mathbb R),\tau_{\text{N}}(\LL(\HH_K,\mathbb R)))$ is a random element with values in
$(\LL(\HH_K,\mathbb R),\tau_{\text{S}}(\LL(\HH_K,\mathbb R)))$.  Therefore, $H_i(k)$ is a random element with values in   $(\LL(\HH_K,\mathbb R),\tau_{\text{S}}(\LL(\HH_K,\mathbb R))),~k\ge 0,~i\in \mathcal V$.
 %and $\H(k)$ is the random element with values in the  space $(\LL(\HH_K^N,\mathbb R^N),\tau_{\text{S}}(\LL(\HH_K^N,\mathbb R^N)))$.
$\hfill\qed$

\section{Key Lemmas}\label{appendixee}
 \setcounter{equation}{0}
\renewcommand{\theequation}{D.\arabic{equation}}
\begin{lemma}[\cite{rb}]\label{lemmaA3}
Let $\{x(k), \mathcal F(k)\}$, $\{\a(k),\mathcal F(k)\}$, $\{\b(k),\mathcal F(k)\}$ and $\{\gamma(k), \mathcal F(k)\}$ be nonnegative  adaptive sequences satisfying
$$
\E[x(k+1)|\mathcal F(k)]\le(1+\a(k))x(k)-\beta(k)+\gamma(k),~k\ge 0~\text{a.s.},
$$
and $\sum_{k=0}^\infty(\a(k)+\gamma(k))<\infty~\text{a.s.}$ Then $x(k)$ converges to a finite random variable a.s., and $\sum_{k=0}^\infty\b(k)<\infty$ a.s.
\end{lemma}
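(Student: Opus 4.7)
The plan is to reduce the statement to Doob's almost-sure convergence theorem for nonnegative supermartingales, after absorbing the multiplicative factor $1+\alpha(k)$ and the forcing term $\gamma(k)$ into a cleverly chosen auxiliary process. Specifically, I would introduce the predictable normalizing factor
\[
M(k) = \prod_{j=0}^{k-1}(1+\alpha(j))^{-1}, \qquad M(0)=1.
\]
Since $\sum_{j=0}^\infty \alpha(j) < \infty$ a.s.\ and $\log(1+\alpha(j))\le \alpha(j)$, the infinite product $\prod(1+\alpha(j))$ converges a.s.\ to a finite positive random variable. Consequently $M(k)$ is monotonically nonincreasing to a strictly positive random limit $M_\infty$, with $0 < M_\infty \le M(k) \le 1$ a.s. Multiplying the hypothesis through by $M(k+1)$ gives the cleaner form
\[
\mathbb E[M(k+1)x(k+1)\mid \mathcal F(k)] \le M(k)x(k) - M(k+1)\beta(k) + M(k+1)\gamma(k)\quad \text{a.s.}
\]

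Next I would define
\[
W(k) = M(k)x(k) + \sum_{j=k}^{\infty} M(j+1)\gamma(j).
\]
The tail sum is a.s.\ finite because $M(j+1)\le 1$ and $\sum \gamma(j)<\infty$ a.s., and it tends to $0$ as $k\to\infty$. Both terms are nonnegative and $\mathcal F(k)$-measurable (the tail sum is $\mathcal F(k-1)$-measurable, so certainly $\mathcal F(k)$-measurable). Plugging in the displayed inequality yields
\[
\mathbb E[W(k+1)\mid \mathcal F(k)] \le W(k) - M(k+1)\beta(k)\quad \text{a.s.,}
\]
so $\{W(k),\mathcal F(k)\}$ is a nonnegative almost-supermartingale with a nonnegative predictable decrement $M(k+1)\beta(k)$.

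I would then apply Doob's nonnegative supermartingale convergence theorem to conclude that $W(k)$ converges a.s.\ to a finite random variable $W_\infty$, and summing the supermartingale inequality yields $\sum_{k=0}^\infty M(k+1)\beta(k) < \infty$ a.s. Since $M_\infty > 0$ a.s., dividing through by the bounded, positive factor $M(k)$ (resp.\ $M(k+1)$) recovers the two desired conclusions: $x(k) = (W(k) - \sum_{j\ge k}M(j+1)\gamma(j))/M(k) \to W_\infty/M_\infty$ a.s., which is finite, and $\sum_k \beta(k) \le M_\infty^{-1}\sum_k M(k+1)\beta(k) < \infty$ a.s.

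The main obstacle is the integrability required to invoke the supermartingale convergence theorem: the problem statement does not assume $\mathbb E[x(k)]<\infty$ or $\mathbb E[W(k)]<\infty$. I would address this by a standard localization argument, introducing the stopping times $\tau_N = \inf\{k : W(k) \ge N \text{ or } \sum_{j<k}(\alpha(j)+\gamma(j)) \ge N\}$, establishing the supermartingale and its convergence on the stopped process $W(k\wedge \tau_N)$ (which is bounded, hence integrable), and then letting $N\to\infty$ using the fact that $\tau_N\to\infty$ a.s.\ on the event where the relevant sums are finite, i.e., a.s. Apart from this localization, every step is either algebraic or a direct application of Doob's theorem.
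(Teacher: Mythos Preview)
The paper does not supply a proof of this lemma; it is quoted directly from Robbins and Siegmund \cite{rb}, so there is no paper-proof to compare against. Your overall strategy---rescaling by the predictable product $M(k)$, building an auxiliary process, and reducing to Doob's nonnegative supermartingale convergence theorem with a localization to handle integrability---is exactly the classical route and is sound in outline.

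There is, however, a genuine measurability slip in your definition of $W(k)$. The tail sum $\sum_{j\ge k} M(j+1)\gamma(j)$ is \emph{not} $\mathcal F(k)$-measurable (let alone $\mathcal F(k-1)$-measurable as you assert): for $j\ge k$, both $\gamma(j)$ and $M(j+1)=\prod_{i\le j}(1+\alpha(i))^{-1}$ depend on $\mathcal F(j)$-information through $\gamma(j)$ and $\alpha(k),\dots,\alpha(j)$. As written, $\{W(k),\mathcal F(k)\}$ is not adapted, so the supermartingale inequality you derive for it has no content. The fix is standard: replace the tail sum by a partial sum and change its sign. For instance, set
\[
Z(k)=M(k)x(k)+\sum_{j=0}^{k-1}M(j+1)\beta(j)-\sum_{j=0}^{k-1}M(j+1)\gamma(j),
\]
which is genuinely $\mathcal F(k)$-measurable and satisfies $\mathbb E[Z(k+1)\mid\mathcal F(k)]\le Z(k)$ a.s. This process is not nonnegative, but it is a.s.\ bounded below by $-\sum_{j=0}^\infty\gamma(j)>-\infty$; after localizing with stopping times that also cap $\sum_{j<k}\gamma(j)$ (as you already propose), the stopped process is bounded and Doob's theorem applies. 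Convergence of $Z(k)$ then yields convergence of $M(k)x(k)+\sum_{j<k}M(j+1)\beta(j)$, from which both conclusions follow exactly as in your final paragraph.
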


\begin{lemma}\label{lemma6}
Let $\{a(i),i\in \Lambda\}$ be a nonnegative real sequence, where $\Lambda \subseteq \mathbb N$ and $\sum_{i\in \Lambda}a(i)<\infty$, $\{b(k,i),k\in \mathbb N,i\in \Lambda\}$ be a double index real sequence. If $\lim_{k\to \infty}b(k,i)=0$, $\forall\ i\in  \Lambda$, and there exists a constant $c>0$ such that $|b(k,i)|\leq c$, $\forall\ k\in \mathbb N$, $\forall\ i\in \Lambda$, then
\bna\label{mmxxxx}
\lim_{k\to \infty}\sum_{i\in \Lambda}a(i)b(k,i)=0.
\ena
\end{lemma}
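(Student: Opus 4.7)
The plan is to prove Lemma \ref{lemma6} by a standard $\varepsilon/2$ tail-splitting argument, since the statement is essentially the dominated convergence theorem on the counting-type measure space $(\Lambda, 2^\Lambda, \mu)$ with $\mu(\{i\}) = a(i)$. I will give a self-contained elementary argument rather than invoke abstract measure theory.

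First, I would fix an arbitrary $\varepsilon > 0$. Since $\sum_{i \in \Lambda} a(i) < \infty$, the tail of this nonnegative series can be made arbitrarily small: there exists a finite subset $\Lambda_0 \subseteq \Lambda$ such that
\begin{equation*}
\sum_{i \in \Lambda \setminus \Lambda_0} a(i) < \frac{\varepsilon}{2c}.
\end{equation*}
Using the uniform bound $|b(k,i)| \leq c$, this immediately controls the ``tail'' part:
\begin{equation*}
\left| \sum_{i \in \Lambda \setminus \Lambda_0} a(i) b(k,i) \right| \leq c \sum_{i \in \Lambda \setminus \Lambda_0} a(i) < \frac{\varepsilon}{2}, \quad \forall\ k \in \mathbb{N}.
\end{equation*}

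Next, for the ``head'' part indexed by the finite set $\Lambda_0$, I would use the pointwise convergence $\lim_{k \to \infty} b(k,i) = 0$ for each $i \in \Lambda_0$. Since $\Lambda_0$ is finite, one can choose $K \in \mathbb{N}$ large enough so that
\begin{equation*}
|b(k,i)| < \frac{\varepsilon}{2 \left( 1 + \sum_{i \in \Lambda_0} a(i) \right)}, \quad \forall\ k \geq K,\ \forall\ i \in \Lambda_0,
\end{equation*}
which gives $\left| \sum_{i \in \Lambda_0} a(i) b(k,i) \right| < \varepsilon/2$ for $k \geq K$. Combining the head and tail bounds via the triangle inequality yields $\left| \sum_{i \in \Lambda} a(i) b(k,i) \right| < \varepsilon$ for all $k \geq K$, which proves \eqref{mmxxxx}.

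There is no real obstacle here; the only minor care needed is to handle the case where $\Lambda_0$ might be empty or $\sum_{i \in \Lambda_0} a(i) = 0$, which is why I wrote $1 + \sum_{i \in \Lambda_0} a(i)$ in the denominator to keep the bound valid in every case. The lemma is a purely analytic auxiliary result used to pass limits through sums in the proof of Lemma \ref{wendingxing} and Lemma \ref{jihubiranshoulian}, so the proof is expected to be short and routine.
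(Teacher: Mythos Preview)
Your proof is correct and follows essentially the same $\varepsilon$-tail-splitting argument as the paper: split the sum into a finite ``head'' (where pointwise convergence of $b(k,i)$ to $0$ gives control) and an infinite ``tail'' (where the uniform bound $|b(k,i)|\le c$ and summability of $a(i)$ give control). The only cosmetic difference is that the paper takes the head to be $\{1,\dots,N\}\cap\Lambda$ (using $\Lambda\subseteq\mathbb N$) and arrives at the bound $\varepsilon\bigl(\sum_{i\in\Lambda}a(i)+c\bigr)$ rather than exactly $\varepsilon$, while you work with an abstract finite $\Lambda_0$ and normalize to get $\varepsilon$ directly; both are equivalent.
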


\begin{proof}
For any given $\varepsilon>0$, it follows from $\sum_{i\in \Lambda}a(i)<\infty$ that there exists an integer $N>0$, such that $\sum_{i\in \Lambda_N}a(i)<\varepsilon$, where $\Lambda_N=\{i\in \Lambda:i\ge N+1\}$. We obtain
$
\Bigg|\sum_{i\in \Lambda_N}a(i)b(k,i)\Bigg|\leq c\varepsilon,~\forall\ k\ge 0.
$
On the other hand, noting that $\lim_{k\to \infty}b(k,i)=0$, $\forall i\in \Lambda$, it follows that there exists a constant $M_i>0$, such that $|b(k,i)|\leq \varepsilon $, $\forall k\ge M_i$. Denote $M=\max\{M_i:i\in \{1,\cdots,N\}\cap \Lambda\}$. It follows from $|b(k,i)|\leq \varepsilon,~\forall\ k\ge M$ that
\ban
\Bigg|\sum_{i\in \Lambda}a(i)b(k,i)\Bigg|\leq \Bigg|\sum_{i\in \{1,\cdots,N\}\cap \Lambda}a(i)b(k,i)\Bigg|+\Bigg|\sum_{i\in \Lambda_N}a(i)b(k,i)\Bigg|\leq \varepsilon \sum_{i\in \Lambda}a(i)+c\varepsilon,~k\ge M,
\ean
which gives (\ref{mmxxxx}).
\end{proof}

\begin{lemma}\label{lemmaA7}
Let $\X$ be a Hilbert space, $H\in \mathscr L(\X)$ be a strictly positive self-adjoint operator, and $\{\mu(k),k\ge 0\}$ be a real sequence monotonically decreasing to $0$ with $\sum_{k=0}^{\infty}\mu(k)=\infty$. Then there exist positive constants $M$ and $d$, such that
\bna
\label{lemmaaddaddadd1}
\sup_{s\ge 0}\left\|\left(\prod_{j=t}^s(I_{\X}-\mu(j)H)\right)x\right\|\leq M^d\|x\|,~\forall\ x\in \X,~\forall\ t\ge 0,
\ena
and
\bna
\label{lemmaaddaddadd2}
\lim_{k\to \infty}\left(\prod_{j=0}^k(I_{\X}-\mu(j)H)\right)x=0,~\forall\ x \in \X.
\ena
\end{lemma}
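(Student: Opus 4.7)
The strategy is to handle (\ref{lemmaaddaddadd1}) by a direct finite-bad-indices argument and (\ref{lemmaaddaddadd2}) via the spectral theorem for bounded self-adjoint operators, with $L^2$-dominated convergence doing the final work.

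For (\ref{lemmaaddaddadd1}), I would first note that since $H$ is bounded positive self-adjoint, $\sigma(H)\subseteq[0,\|H\|]$, so
\[
\|I_{\X}-\mu(k)H\|=\sup_{\lambda\in\sigma(H)}|1-\mu(k)\lambda|\leq\max\{1,\mu(k)\|H\|-1\}.
\]
Because $\mu(k)\downarrow 0$, there exists $K\in\mathbb N$ (depending only on $\mu(0)$ and $\|H\|$) such that $\mu(k)\|H\|\le 2$ for all $k\ge K$, and consequently $\|I_{\X}-\mu(k)H\|\le 1$ for $k\ge K$. For the finitely many indices $k<K$, use the crude bound $\|I_{\X}-\mu(k)H\|\le 1+\mu(0)\|H\|$. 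Taking $M:=1+\mu(0)\|H\|$ and $d:=K$, the product $\prod_{j=t}^{s}\|I_{\X}-\mu(j)H\|$ is dominated by $M^K$ for every $t\le s$, which gives (\ref{lemmaaddaddadd1}).

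For (\ref{lemmaaddaddadd2}), I would invoke the spectral theorem to write $H=\int_{0}^{\|H\|}\lambda\,dE(\lambda)$ with $E$ the projection-valued spectral measure of $H$. Then, for any $x\in\X$, the functional calculus yields
\[
\bigg\|\prod_{j=0}^{k}(I_{\X}-\mu(j)H)x\bigg\|^{2}
=\int_{0}^{\|H\|}\prod_{j=0}^{k}(1-\mu(j)\lambda)^{2}\,d\|E(\lambda)x\|^{2},
\]
where $\|E(\cdot)x\|^{2}$ is a finite positive Borel measure of total mass $\|x\|^{2}$. I would then argue pointwise that the integrand converges to $0$ for every $\lambda>0$: pick $J(\lambda)$ so that $\mu(j)\lambda\le 1$ for $j\ge J(\lambda)$; then $(1-\mu(j)\lambda)^{2}\le 1-\mu(j)\lambda\le e^{-\mu(j)\lambda}$, so
\[
\prod_{j=J(\lambda)}^{k}(1-\mu(j)\lambda)^{2}\le \exp\!\Big(-\lambda\sum_{j=J(\lambda)}^{k}\mu(j)\Big)\longrightarrow 0
\]
as $k\to\infty$ by the divergence hypothesis $\sum\mu(j)=\infty$; multiplying by the finitely many factors with $j<J(\lambda)$ does not change the limit. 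The uniform bound from (\ref{lemmaaddaddadd1}) provides the integrable majorant $M^{2d}$ on the whole interval $[0,\|H\|]$.

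The main obstacle, and the reason a spectral detour is genuinely needed rather than an operator-norm estimate, is the ill-posedness of $H$ in infinite dimensions: strict positivity does not force a uniform lower bound on the spectrum, so in general $\|I_{\X}-\mu H\|=1$ for all small $\mu$ and no geometric-rate decay of the product in operator norm is available. The substitute is the observation that strict positivity $\langle Hx,x\rangle>0$ for $x\neq0$ gives $\ker H=\{0\}$, hence $E(\{0\})=0$, so $\|E(\cdot)x\|^{2}$ puts no mass at $\lambda=0$ and the pointwise-in-$\lambda$ decay above is felt throughout the support of the spectral measure. Dominated convergence then closes the argument and delivers (\ref{lemmaaddaddadd2}).
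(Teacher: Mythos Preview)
Your proposal is correct and follows essentially the same approach as the paper: both arguments split off finitely many ``bad'' indices where $\mu(j)\|H\|$ may exceed $1$ to obtain the uniform bound (\ref{lemmaaddaddadd1}), and both prove (\ref{lemmaaddaddadd2}) via the spectral representation, pointwise exponential decay of $\prod_j(1-\mu(j)\lambda)^2$ for $\lambda>0$ through $1-a\le e^{-a}$ and $\sum\mu(j)=\infty$, the observation that strict positivity rules out a point mass at $\lambda=0$, and dominated convergence with majorant $M^{2d}$. The only cosmetic differences are that the paper routes (\ref{lemmaaddaddadd1}) through the spectral integral as well (whereas you use the operator-norm identity $\|I_\X-\mu(k)H\|=\sup_{\lambda\in\sigma(H)}|1-\mu(k)\lambda|$ directly), and the paper uses a single $\lambda$-independent cutoff index where you allow a $\lambda$-dependent $J(\lambda)$; neither change is substantive.
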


\begin{proof}
Noting that $H\in \mathscr L(\X)$ is the bounded self-adjoint operator, it is shown that $H$ has the following spectral decomposition
$$
H=\int_{-\infty}^{+\infty}\lambda \dd p_{\lambda}=\int_{\sigma (H)}\lambda \dd p_{\lambda}.
$$
From the property of the self-adjoint operator, we have
\bna\label{nvmsacc}
\prod_{j=0}^{k
}(I_{\X}-\mu(j)H)^2=\int_{\sigma(H)}\prod_{j=0}^{k}(1-\mu(j)\lambda)^2\dd p_{\lambda},
\ena
which together with (\ref{nvmsacc}) gives
\begin{align}\label{mmcc}
\Bigg\|\prod_{j=0}^{k
}(I_{\X}-\mu(j)H)x\Bigg\|^2 =& \Bigg\langle\prod_{j=0}^{k
}(I_{\X}-\mu(j)H)^2x,x\Bigg\rangle \notag\\
 =& \int_{\sigma(H)}\prod_{j=0}^{k}(1-\mu(j) \lambda)^2\dd\langle p_{\lambda}x,x\rangle,
 ~\forall\ x\in \X.
\end{align}
It follows from $H>0$ that $\sigma(H)\subset [0,\|H\|]$, which leads to
\ban
(1-\mu(j)\lambda)^2\leq \max\left\{(1-\mu(j)\|H\|)^2,1\right\},~\forall\ \lambda \in \sigma(H).
\ean
Noting that $\mu(k)\to 0,k\to \infty$, it follows that there exists an integer $d>0$ such that $\mu(j)\leq \|H\|^{-1}$, $\forall\ j>2d$, which further shows that $(1-b(j)\lambda)^2\leq 1,\forall\ j> 2d$. Denote $M=\max\{(1-\mu(j)\|H\|)^2,1:0\leq j\leq 2d\}$, we have
$$
\prod_{j=t}^s(1-\mu(j)\lambda)^2\leq M^{2d},~\forall\ \lambda \in \sigma(H),~\forall\ t\ge 0,
$$
and
\ban
%\label{hhkds}
\left\|\prod_{j=t}^{s
}(I_{\X}-\mu(j)H)x\right\|^2\leq M^{2d}\int_{\sigma(H)}\dd\langle p_{\lambda}x,x\rangle =M^{2d}\|x\|^2,~\forall\ x\in \X,~\forall\ t\ge 0.
\ean
This gives (\ref{lemmaaddaddadd1}).
%%By (\ref{hhkds}), we get
%\bna\label{nslcs}
%\left\|\prod_{j=t}^{s
%}(I_{\X}-\mu(j)H)x\right\|\leq M^{d}\|x\|,~\forall\ x\in \X,~\forall\ t\ge 0.
%\ena
Noting the inequality $1-a\leq \text{e}^{-a}$, $\forall\ a\ge 0$, we obtain
$
\prod_{j=0}^{k}(1-\mu(j)\lambda)^2\leq M^{2d}\text{e}^{-2\lambda\sum_{j=2d+1}^{k}\mu(j)}.
$
It follows from $\sum_{j=0}^{\infty}\mu(k)=\infty$ that
\bna\label{ttkktt}
\lim_{k\to \infty}\prod_{j=0}^{k}(1-\mu(j)\lambda)^2=0,~\forall\ \lambda\in \sigma(H)\cap \mathbb R^+.
\ena
Since $H$ is the strictly positive self-adjoint operator, it follows that $0$ is not in the point spectrum of the operator $H$, which gives
\ban
\int_{\sigma(H)}\prod_{j=0}^{k}(1-\mu(j)\lambda)^2\dd\langle p_{\lambda}x,x\rangle =\int_{\sigma(H)\cap \mathbb R^+}\prod_{j=0}^{k}(1-\mu(j)\lambda)^2\dd\langle p_{\lambda}x,x\rangle,~\forall\ x\in \X.
\ean
By (\ref{mmcc}), (\ref{lemmaaddaddadd1}), (\ref{ttkktt}) and the dominated convergence theorem, we get
\ban
\lim_{k\to \infty}\left\|\prod_{j=0}^{k
}(I_{\X}-\mu(j)H)x\right\|^2=\int_{\sigma(H)\cap \mathbb R^+}\lim_{k\to\infty}\prod_{j=0}^{k}(1-\mu(j)\lambda)^2\dd\langle p_{\lambda}x,x\rangle=0,~\forall\ x\in \X,
\ean
which gives (\ref{lemmaaddaddadd2}).
\end{proof}

\begin{lemma}\label{lemmaA8}
Let $\X$ be a Hilbert space, $H\in \mathscr L(\X)$ be a strictly positive self-adjoint operator, and $\{x(k),k\ge 0\}$ be a $L_2$-bounded random sequence with values in Hilbert space. If $\{\mu(k),k\ge 0\}$ and $\{\gamma(k),k\ge 0\}$ are both real sequences monotonically decreasing to $0$ with $\sum_{k=0}^{\infty}\mu(k)=\infty$ and $\sum_{k=0}^{\infty}\gamma(k)<\infty$, then
$$
\lim_{k\to\infty}\sum_{i=0}^k\gamma(i)\left(\E\left[\left\|\left(\prod_{j=i+1}^k(I_{\X}-\mu(j)H)\right)x(i)\right\|^2\right]\right)^{\frac{1}{2}}=0.
$$
\end{lemma}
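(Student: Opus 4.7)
The plan is to recognize this sum as a weighted infinite sum with weights $\gamma(i)$ that are absolutely summable, and apply Lemma \ref{lemma6} after verifying a uniform bound and a pointwise-in-$i$ limit. Set
\[
A_i^k:=\prod_{j=i+1}^k\bigl(I_{\X}-\mu(j)H\bigr),\qquad b(k,i):=\bigl(\E[\|A_i^k x(i)\|^2]\bigr)^{1/2}\ \text{for}\ i\leq k,
\]
and extend by $b(k,i):=0$ for $i>k$. Then $\sum_{i=0}^k\gamma(i)\bigl(\E[\|A_i^k x(i)\|^2]\bigr)^{1/2}=\sum_{i=0}^{\infty}\gamma(i)b(k,i)$, so taking $a(i)=\gamma(i)$ in Lemma \ref{lemma6} (which is summable by hypothesis) reduces the problem to checking two things: (i) uniform boundedness of $b(k,i)$ in $(k,i)$; (ii) $\lim_{k\to\infty}b(k,i)=0$ for each fixed $i$.

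For (i), the uniform bound in Lemma \ref{lemmaA7} applied to the shifted product starting at $t=i+1$ (which still has $\sum_{j\geq i+1}\mu(j)=\infty$, so the proof of Lemma \ref{lemmaA7} applies verbatim to give the same constants $M,d$) yields $\|A_i^k y\|\leq M^d\|y\|$ for every $y\in\X$ and every $k\geq i$. Hence
\[
b(k,i)\leq M^d\bigl(\E[\|x(i)\|^2]\bigr)^{1/2}\leq M^d\sup_{j\geq 0}\bigl(\E[\|x(j)\|^2]\bigr)^{1/2}=:C<\infty,
\]
using the $L_2$-boundedness of $\{x(k)\}$.

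For (ii), fix $i\geq 0$. Since $A_i^k$ is deterministic, for every $\omega\in\Omega$ the element $x(i)(\omega)\in\X$ is a fixed vector and Lemma \ref{lemmaA7} (applied with the shifted starting index $i+1$) gives $A_i^k x(i)(\omega)\to 0$ in $\X$ as $k\to\infty$. Combined with the pointwise domination $\|A_i^k x(i)\|^2\leq M^{2d}\|x(i)\|^2\in L^1(\Omega)$ (from the uniform bound and $x(i)\in L^2$), the dominated convergence theorem yields $\E[\|A_i^k x(i)\|^2]\to 0$, i.e.\ $b(k,i)\to 0$. With (i) and (ii) in hand, Lemma \ref{lemma6} gives $\lim_{k\to\infty}\sum_{i=0}^{\infty}\gamma(i)b(k,i)=0$, which is the desired conclusion.

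The main (and only nontrivial) obstacle is the trivial but essential observation that Lemma \ref{lemmaA7} extends to products starting from arbitrary $t$; this is immediate because the proof of Lemma \ref{lemmaA7} uses only $\sum_j\mu(j)=\infty$ and boundedness of the spectral multipliers $(1-\mu(j)\lambda)^2$, both of which are preserved under translation of the starting index. Everything else is the standard Lemma \ref{lemma6} combined with a dominated-convergence argument at the level of the random element $x(i)$.
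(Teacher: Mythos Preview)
Your proof is correct and follows essentially the same approach as the paper: define $b(k,i)$, use Lemma~\ref{lemmaA7} for the uniform bound and the pointwise (in $\omega$) convergence, upgrade the latter to $L_2$ via dominated convergence, and conclude with Lemma~\ref{lemma6}. Your explicit remark that Lemma~\ref{lemmaA7} applies to products starting at any index $t$ is a point the paper uses silently (its equation for $\lim_k b(k,i)=0$ invokes Lemma~\ref{lemmaA7} with starting index $i+1$), so your write-up is in fact slightly more careful on this point.
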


\begin{proof}
Denote
\ban
b(k,i)=\left(\E\left[\left\|\prod_{j=i+1}^k(I_{\X}-\mu(j)H)x(i)\right\|^2\right]\right)^{\frac{1}{2}},~\forall\ k,i\ge 0,
\ean
it follows from Lemma \ref{lemmaA7} that
\bna\label{rrkkrr}
0\leq b(k,i)\leq M^d\sup_{i\ge 0}\left(\E\left[\|x(i)\|^2\right]\right)^{\frac{1}{2}}<\infty,~\forall\ k,i\ge 0,
\ena
and
\ban
\left\|\prod_{j=i+1}^k(I_{\X}-\mu(j)H)x(i)\right\|\leq M^d\|x(i)\|~\text{a.s.},~\forall\ i\ge 0.
\ean
Thus, by the Lebesgue dominated convergence theorem and Lemma \ref{lemmaA7}, we get
\bna\label{rrkkll}
\lim_{k\to\infty}\E\left[\left\|\prod_{j=i+1}^k(I_{\X}-\mu(j)H)x(i)\right\|^2\right]=0,~\forall\ i\ge 0,
\ena
which further gives $\lim_{k\to\infty}b(k,i)=0$, $\forall\ i\ge 0$. Noting that $\sum_{k=0}^{\infty}\gamma(k)<\infty$, by (\ref{rrkkrr})-(\ref{rrkkll}) and Lemma \ref{lemma6}, we obtain
$
\lim_{k\to\infty}\sum_{i=0}^k\gamma(i)\left(\E\left[\left\|\left(\prod_{j=i+1}^k(I_{\X}-\mu(j)H)\right)x(i)\right\|^2\right]\right)^{\frac{1}{2}}=0.
$
\end{proof}

\begin{lemma}\label{lemmaA10}
Let $\mathcal A=[a_{ij}]\in \mathbb R^{N\times N}$ be the adjacency matrix of an undirected connected graph $\G$, $\L_{\G}$ be the Laplacian matrix, $H_i\in \mathscr L(\X)$, $i=1,\cdots,N$ are positive self-adjoint operators. If
\bna\label{lcms}
\sum_{i=1}^NH_i>0,
\ena
then $\text{diag}\{H_1,\cdots,H_N\}+\L_{\G}\otimes I_{\X}>0$.
\end{lemma}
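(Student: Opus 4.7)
The plan is to verify strict positivity directly from the definition: for any nonzero $x=(x_1,\dots,x_N)\in\X^N$, show that
\[
\bigl\langle \bigl(\mathrm{diag}\{H_1,\dots,H_N\}+\L_{\G}\otimes I_{\X}\bigr)x,\,x\bigr\rangle_{\X^N}>0.
\]
Self-adjointness of the operator is immediate because each $H_i$ is self-adjoint and $\L_{\G}\otimes I_{\X}$ is self-adjoint (the Laplacian of an undirected graph is symmetric positive semidefinite). The inner product above naturally splits as $S_1(x)+S_2(x)$, where
\[
S_1(x)=\sum_{i=1}^{N}\langle H_i x_i,x_i\rangle_{\X},\qquad S_2(x)=\tfrac12\sum_{i=1}^{N}\sum_{j=1}^{N}a_{ij}\|x_i-x_j\|_{\X}^{2},
\]
using the standard identity for the Laplacian quadratic form together with the Kronecker structure. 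Each $H_i\geq 0$ makes $S_1(x)\geq 0$, and $a_{ij}\geq 0$ makes $S_2(x)\geq 0$.

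The next step is the rigidity argument. Suppose $S_1(x)+S_2(x)=0$; since both summands are nonnegative, each must vanish. From $S_2(x)=0$ and $a_{ij}>0$ whenever $\{i,j\}\in\mathcal E_{\G}$, we obtain $x_i=x_j$ for every edge $(i,j)$. Connectivity of $\G$ then propagates this equality along paths to give $x_1=x_2=\cdots=x_N=:x_0\in\X$. Substituting into $S_1(x)=0$ yields $\sum_{i=1}^{N}\langle H_i x_0,x_0\rangle_{\X}=0$, i.e., $\langle (\sum_{i=1}^{N}H_i)x_0,x_0\rangle_{\X}=0$. The hypothesis $\sum_{i=1}^{N}H_i>0$ forces $x_0=0$, hence $x=0$, contradicting our choice.

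Consequently $S_1(x)+S_2(x)>0$ for every nonzero $x\in\X^N$, which is the strict positivity claim. There is no real obstacle here; the only subtlety worth being careful about is making sure the decomposition of the Laplacian quadratic form carries over unchanged to $\L_{\G}\otimes I_{\X}$ acting on the Hilbert direct sum, which follows at once from the definition of the Kronecker product given in the notation section and the identity $\langle(\L_{\G}\otimes I_{\X})x,x\rangle_{\X^N}=\sum_{i,j}(\L_{\G})_{ij}\langle x_j,x_i\rangle_{\X}$ combined with $\sum_i(\L_{\G})_{ii}=\sum_{j\neq i}a_{ij}$ and $(\L_{\G})_{ij}=-a_{ij}$ for $i\neq j$.
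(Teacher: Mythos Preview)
Your proof is correct and follows essentially the same approach as the paper: both decompose the quadratic form into the diagonal term $\sum_i\langle H_ix_i,x_i\rangle_{\X}$ and the Laplacian term $\tfrac12\sum_{i,j}a_{ij}\|x_i-x_j\|_{\X}^2$, and then use connectivity together with the strict positivity of $\sum_i H_i$. The only cosmetic difference is that the paper organizes the argument as a two-case split (all components equal versus not all equal), whereas you package it as a single rigidity argument (assume the total form vanishes and deduce $x=0$); the underlying content is identical.
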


\begin{proof}
Given the non-zero element $x=(x_1,\cdots,x_N)$ with values in Hilbert space $\X^N$, where $x_i\in \X$, $i=1,\cdots,N$. Here, we will prove $\langle (\text{diag}\{H_1,\cdots,H_N\}+\L_{\G}\otimes I_{\X})x,x\rangle _{\X^N}>0$ in two steps.
\begin{itemize}
\item If there exists a non-zero element $a\in \X$ with $x_1=\cdots=x_N=a$, then $x=\textbf{1}_N\otimes a$. Noting that $\L_{\G}\textbf{1}_N=0$, it follows from (\ref{lcms}) that
\bna\label{ofuncc}
&&~~~\left\langle \left(\text{diag}\{H_1,\cdots,H_N\}+\L_{\G}\otimes I_{\X}\right)x,x\right\rangle _{\X^N}\cr
&&=\left\langle \left(\text{diag}\{H_1,\cdots,H_N\}+\L_{\G}\otimes I_{\X}\right)\left(\textbf{1}_N\otimes a\right),\left(\textbf{1}_N\otimes a\right)\right\rangle _{\X^N}\cr
&&=\langle \text{diag}\{H_1,\cdots,H_N\}\left(\textbf{1}_N\otimes a\right),\left(\textbf{1}_N\otimes a\right)\rangle _{\X^N}\cr
&&+\langle (\L_{\G}\otimes I_{\X})\left(\textbf{1}_N\otimes a\right),\left(\textbf{1}_N\otimes a\right)\rangle _{\X^N}\cr
&&=\left\langle \left(\sum_{i=1}^NH_i\right)a,a\right\rangle _{\X}+\langle (\L_{\G}\textbf{1}_N)\otimes a,\left(\textbf{1}_N\otimes a\right) \rangle _{\X^N}\cr
&&=\left\langle \left(\sum_{i=1}^NH_i\right)a,a\right\rangle _{\X}
 >0.
\ena
\item If there exist $1\leq i_0\neq j_0\leq N$, such that $x_{i_0}\neq x_{j_0}$. It follows from $H_i\ge 0$, $i=1,\cdots,N$ that $\text{diag}\{H_1,\cdots,H_N\}\ge 0$. Noting that the graph is undirected and connected, then there exist nodes $\{ i_{1},\ldots, i_{m}\}$ such that
      $a_{i_{0}i_{1}}>0,  a_{i_{1}i_{2}}>0, \ldots, a_{i_{m}j_{0}}>0$.
   Notice that
    \bna\label{ofunc1}
&&~~~\left\langle \left(\text{diag}\{H_1,\cdots,H_N\}+\L_{\G}\otimes I_{\X}\right)x,x\right\rangle _{\X^N}\cr
&&\ge \langle (\L_{\G}\otimes I_{\X})x,x\rangle _{\X^N}\cr
&&=\frac{1}{2}\sum_{i=1}^N\sum_{j=1}^Na_{ij}\|x_i-x_j\|^2_{\X}\cr
 && \ge \frac{1}{2}\sum_{j=0}^{m-1} a_{i_{j}i_{j+1}}\|x_{i_{j}}-x_{i_{j+1}}\|_{\X}^2 + \frac{1}{2}a_{i_{m}j_{0}}\|x_{i_{m}}-x_{j_{0}}\|_{\X}^2\geq 0.
\ena
 If  $\frac{1}{2}\sum\limits_{j=0}^{m-1} a_{i_{j}i_{j+1}}\|x_{i_{j}}-x_{i_{j+1}}\|_{\X}^2 + \frac{1}{2}a_{i_{m}j_{0}}\|x_{i_{m}}-x_{j_{0}}\|_{\X}^2=0$, by  $a_{i_{0}i_{1}}>0,  a_{i_{1}i_{2}}>0, \ldots, a_{i_{m}j_{0}}>0$, then we get $x_{i_0}=x_{i_1}=\ldots=x_{i_{m}}=x_{j_{0}}$, which leads to a contradiction with  $x_{i_0}\neq x_{j_0}$. Thus, we have $\frac{1}{2}\sum\limits_{j=0}^{m-1} a_{i_{j}i_{j+1}}\|x_{i_{j}}-x_{i_{j+1}}\|_{\X}^2 + \frac{1}{2}a_{i_{m}j_{0}}\|x_{i_{m}}-x_{j_{0}}\|_{\X}^2>0$. Then, by (\ref{ofunc1}), we have
 \begin{align}\label{ofunc}
 \langle \left(\text{diag}\{H_1,\cdots,H_N\}+\L_{\G}\otimes I_{\X}\right)x,x \rangle _{\X^N}>0.
 \end{align}
\end{itemize}
Combining (\ref{ofuncc}) and (\ref{ofunc}) gives that  $\text{diag}\{H_1,\cdots,H_N\}+\L_{\G}\otimes I_{\X}\in \mathscr L(\X^N)$ is strictly positive.
\end{proof}

\begin{lemma}\label{lemmaA11}
Let $\{a(k),k\ge 0\}$ and $\{b(k),k\ge 0\}$ be monotonically decreasing sequences of positive real numbers. If
\bna\label{tiaojian}
\max\left\{a(k)-a(k+1),b(k)-a(k)\right\}=\mathcal O\left(a^2(k)+b^2(k)\right),
\ena
then
\bna\label{fulu11}
\max_{ih\leq s\leq (i+1)h-1}\left(\frac{1}{h}\left(\sum_{s=ih}^{(i+1)h-1}b(s)\right)-a(s)\right)^2=\mathcal O \left(a^4(i)+b^4(i)\right),~\forall\ h=1,2,...
\ena
\end{lemma}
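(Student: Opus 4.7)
The strategy is to absorb the averaged sum by splitting each summand $b(s')-a(s)$ (for $ih\leq s,s'\leq (i+1)h-1$) through the intermediate value $a(s')$, i.e.\
\[
b(s') - a(s) \;=\; \big(b(s')-a(s')\big) + \big(a(s')-a(s)\big),
\]
and then estimate the two pieces with Condition~\ref{condition3}. For the first piece, the hypothesis (as already exploited in the proof of Lemma~\ref{jihubiranshoulian}) gives a constant $C_1>0$ with $|b(s')-a(s')|\leq C_1(a^2(s')+b^2(s'))$. For the second piece, the monotonicity of $a$ together with a telescoping argument and Condition~\ref{condition3} yields
\[
|a(s')-a(s)|\;\leq\;\sum_{k=ih}^{(i+1)h-2}\bigl(a(k)-a(k+1)\bigr)\;\leq\;C(h-1)\max_{ih\leq k\leq (i+1)h-2}(a^2(k)+b^2(k)).
\]

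Once these estimates are in place, the remaining step is to transport all indices back to $i$. Since $h\geq 1$ implies $ih\geq i$, the monotonicity of both $a$ and $b$ (Condition~\ref{condition1}) yields $a^2(s')+b^2(s')\leq a^2(ih)+b^2(ih)\leq a^2(i)+b^2(i)$ for every $s'\in[ih,(i+1)h-1]$, and similarly for the telescoping bound. Combining the two pieces by the triangle inequality then gives a single constant $C_2'>0$ such that $|b(s')-a(s)|\leq C_2'(a^2(i)+b^2(i))$ uniformly in $s,s'\in[ih,(i+1)h-1]$.

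Finally, one averages over $s'$ and squares: writing
\[
\frac{1}{h}\sum_{s'=ih}^{(i+1)h-1}b(s')-a(s)\;=\;\frac{1}{h}\sum_{s'=ih}^{(i+1)h-1}\bigl(b(s')-a(s)\bigr),
\]
the triangle inequality in the averaged form gives $\bigl|\tfrac{1}{h}\sum_{s'}b(s')-a(s)\bigr|\leq C_2'(a^2(i)+b^2(i))$, and the elementary inequality $(x+y)^2\leq 2(x^2+y^2)$ promotes this to
\[
\Bigl(\tfrac{1}{h}\sum_{s'=ih}^{(i+1)h-1}b(s')-a(s)\Bigr)^2\;\leq\;2(C_2')^2\bigl(a^4(i)+b^4(i)\bigr),
\]
uniformly in $s\in[ih,(i+1)h-1]$. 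Taking the maximum over $s$ yields (\ref{fulu11}) with constant $C_2:=2(C_2')^2$.

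The only non-routine step is the monotonicity push of the upper bounds from index $ih$ (or $s'$) down to the index $i$ appearing in the conclusion; this is a purely deterministic calculation relying on Conditions~\ref{condition1}--\ref{condition3}, and carries no additional probabilistic difficulty.
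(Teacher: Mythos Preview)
Your proof is correct and follows essentially the same strategy as the paper: both use the hypothesis to bound $|b(k)-a(k)|$ and the telescoped increments $a(k)-a(k+1)$, then push indices from the window $[ih,(i+1)h-1]$ back to $i$ via monotonicity. The only cosmetic difference is that the paper first reduces the maximum to four endpoint cases (since the average of $b$ lies in $[b((i+1)h-1),b(ih)]$ and $a(s)$ lies in $[a((i+1)h-1),a(ih)]$) and then bounds each corner, whereas you bound $|b(s')-a(s)|$ uniformly for all $s,s'$ in the window via the split $b(s')-a(s)=(b(s')-a(s'))+(a(s')-a(s))$ and then average---your route is slightly more direct but uses the same ingredients.
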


\begin{proof}
Let $h$ be any given positive integer. Noting that $\{a(k),k\ge 0\}$ and $\{b(k),k\ge 0\}$ are both monotonically decreasing sequences, it follows that
$$\frac{1}{h}\left(\sum_{s=ih}^{(i+1)h-1}b(s)\right)\in [b((i+1)h-1),b(ih)],$$
and $$a(s)\in [a((i+1)h-1),a(ih)],~ih\leq s\leq (i+1)h-1,$$
%\ban
%\begin{cases}
%\displaystyle \frac{1}{h}\left(\sum_{s=ih}^{(i+1)h-1}b(s)\right)\in [b((i+1)h-1),b(ih)],\\
%\displaystyle a(s)\in [a((i+1)h-1),a(ih)],~ih\leq s\leq (i+1)h-1,
%\end{cases}
%\ean
from which we obtain
\bna\label{fulu0}
&&~~~\max_{ih\leq s\leq (i+1)h-1}\left(\frac{1}{h}\left(\sum_{s=ih}^{(i+1)h-1}b(s)\right)-a(s)\right)^2\cr
&&\leq \max_{ih\leq s\leq (i+1)h-1}\left\{(b((i+1)h-1)-a(s))^2,(b(ih)-a(s))^2\right\}\cr
&&\leq \max\big\{(b((i+1)h-1)-a(ih))^2,(b((i+1)h-1)-a((i+1)h-1))^2,\cr
&&~~~(b(ih)-a(ih))^2,(b(ih)-a((i+1)h-1))^2\big\}.
\ena
It follows from (\ref{tiaojian}) that there exists a constant $C>0$, such that
\bna\label{fnlwlmv}
\max\left\{(a(k)-a(k+1))^2,(b(k)-a(k))^2\right\}\leq C\left(a^4(k)+b^4(k)\right),
\ena
which gives
\bna\label{fulu1}
\max\left\{(b((i+1)h-1)-a((i+1)h-1))^2,(b(ih)-a(ih))^2\right\}
\leq C\left(a^4(i)+b^4(i)\right).
\ena
Combining (\ref{fnlwlmv}) and the monotonicity of sequences $\{a(k),k\ge 0\}$ and $\{b(k),k\ge 0\}$ leads to
\begin{align}\label{fulu2}
 &\left(b((i+1)h-1)-a(ih)\right)^2\cr
 =& \left(b((i+1)h-1)-a((i+1)h-1)+a((i+1)h-1)-a(ih)\right)^2\cr
 =&\Bigg(b((i+1)h-1)-a((i+1)h-1)+\sum_{s=ih}^{(i+1)h-2}(a(s+1)-a(s))\Bigg)^2\cr
 \leq& h\Bigg((b((i+1)h-1)-a((i+1)h-1))^2+\sum_{s=ih}^{(i+1)h-2}(a(s)-a(s+1))^2\Bigg)\cr
\leq& h\Bigg(C\left(a^4((i+1)h-1)+b^4((i+1)h-1)
\right)+C\sum_{s=ih}^{(i+1)h-2}\left(a^4(s)+b^4(s)\right)\Bigg)\cr
\leq& Ch^2\left(a^4(i)+b^4(i)\right).
\end{align}
Similarly, we obtain
\bna\label{fulu3}
&&~~~~\left(a((i+1)h-1)-b(ih)\right)^2\cr
&&=\left(b(ih)-a((i+1)h-1)\right)^2\cr
&&=\left(b(ih)-a(ih)+a(ih)-a((i+1)h-1)\right)^2\cr
&&=\left(b(ih)-a(ih)+\sum_{s=ih}^{(i+1)h-2}(a(s)-a(s+1))\right)^2\cr
&&\leq h\left((b(ih)-a(ih))^2+\sum_{s=ih}^{(i+1)h-2}(a(s)-a(s+1))^2\right)\cr
&&\leq h\left(C\left(a^4(ih)+b^4(ih)\right)+C\sum_{s=ih}^{(i+1)h-2}
\left(a^4(s)+b^4(s)\right)\right) \cr
 &&\leq Ch^2\left(a^4(i)+b^4(i)\right).
\ena
Combining (\ref{fulu0}) and (\ref{fulu1})-(\ref{fulu3}) gives (\ref{fulu11}).
\end{proof}

\begin{lemma}\label{lemma1}
For the algorithm (\ref{algorithm}), suppose that Conditions \ref{condition1} and \ref{condition2} hold, and there exists an integer $h>0$ and a constant $\rho>0$, such that
\bna\label{nxsl}
\sup_{k\ge 0}\E\left.\left[\|\H^*(k)\H(k)\|^{2^{\max\{h,2\}}}\right|\F(k-1)\right]^{\frac{1}{2^{\max\{h,2\}}}}\leq \rho~\text{a.s.}
\ena
Then
(I) for any given integer $n\ge 0$ and $x\in L^0(\Omega,\F(nh-1);\X^N)$, there exists a constant $d_1>0$ such that
\bna\label{yinliwudianyi1}
\sup_{m\ge 0}\mathbb E\left.\left[\|\Phi_P(mh-1,nh)x\|^2\right|\mathcal F(nh-1)\right]\leq d_1\|x\|^2~\text{a.s.};
\ena
(II) for any given integer $n\ge 0$ and $x\in L^0(\Omega,\F(n-1);\X^N)$, then there exists a constant $d_2>0$ such that
\bna\label{yinliwudianyi2}
\displaystyle \sup_{0\leq m\leq n+h}\mathbb E\left[\|\Phi_P(m,n)x\|^2|\mathcal F(n-1)\right]\leq d_2\|x\|^2~\text{a.s.}
\ena
\end{lemma}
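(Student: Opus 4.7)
The plan is to reduce both parts to a single one-step conditional contraction-type inequality that is then iterated. Write $F(k)=a(k)\H^{*}(k)\H(k)+b(k)\L_{\G}\otimes I_{\X}$, so that $\Phi_P(m,n)=\prod_{i=n}^{m}(I_{\X^N}-F(i))$. Since $\H^{*}(k)\H(k)\ge 0$ and, by Lemma~\ref{lemmaA10} together with the positive semi-definiteness of $\L_{\G}$, $\L_{\G}\otimes I_{\X}\ge 0$, we have $F(k)\ge 0$ a.s. This sign property is what I will exploit: for any $y\in \X^N$,
\begin{align*}
\|(I_{\X^N}-F(k))y\|^{2}
&=\|y\|^{2}-2\langle F(k)y,y\rangle+\|F(k)y\|^{2}\\
&\le \bigl(1+\|F(k)\|^{2}\bigr)\|y\|^{2}.
\end{align*}
Thus the possibly unbounded part of $\|I-F(k)\|$ gets absorbed into a quadratic term, which is the crucial quantitative gain for later telescoping.

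Next I will estimate $\E[\|F(k)\|^{2}|\F(k-1)]$. Using the $C_r$-inequality, $\|F(k)\|^{2}\le 2a^{2}(k)\|\H^{*}(k)\H(k)\|^{2}+2b^{2}(k)\|\L_{\G}\otimes I_{\X}\|^{2}$. The condition (\ref{nxsl}) together with the conditional Lyapunov inequality (pushed down from the $2^{\max\{h,2\}}$-th power to the second moment) yields $\E[\|\H^{*}(k)\H(k)\|^{2}|\F(k-1)]\le \rho^{2}$ a.s., hence
\begin{equation*}
\E\bigl[\|F(k)\|^{2}\,\bigl|\,\F(k-1)\bigr]\le 2\rho^{2}a^{2}(k)+2\|\L_{\G}\otimes I_{\X}\|^{2}b^{2}(k)=:\varepsilon(k)\quad\text{a.s.}
\end{equation*}
Setting $z_{k}=\Phi_P(k,nh)x$ (respectively $\Phi_P(k,n)x$) and observing, via Propositions~\ref{nlllwwieiie} and~\ref{wenknknkn}, that $z_{k-1}$ is $\F(k-1)$-measurable, the pointwise inequality above combined with the previous display gives
\begin{equation*}
\E\bigl[\|z_{k}\|^{2}\,\bigl|\,\F(k-1)\bigr]\le \bigl(1+\varepsilon(k)\bigr)\|z_{k-1}\|^{2}\quad\text{a.s.}
\end{equation*}
Applying the tower property with the appropriate starting $\sigma$-algebra ($\F(nh-1)$ for part~I, $\F(n-1)$ for part~II) and iterating leads to
\begin{equation*}
\E\bigl[\|z_{k}\|^{2}\,\bigl|\,\F(\cdot)\bigr]\le \prod_{j=\cdot+1}^{k}\bigl(1+\varepsilon(j)\bigr)\,\|x\|^{2}\quad\text{a.s.}
\end{equation*}

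For part~I, taking $k=mh-1$ and using $\prod_{j=0}^{\infty}(1+\varepsilon(j))\le \exp\bigl(\sum_{j=0}^{\infty}\varepsilon(j)\bigr)<\infty$, which is finite by Condition~\ref{condition2}, gives the required uniform constant $d_{1}$. For part~II, the product has at most $h+1$ factors, and since $\{a(k)\}$ and $\{b(k)\}$ are monotonically decreasing by Condition~\ref{condition1}, each $\varepsilon(j)$ is bounded by $2\rho^{2}a^{2}(0)+2\|\L_{\G}\otimes I_{\X}\|^{2}b^{2}(0)$, yielding $d_{2}:=(1+2\rho^{2}a^{2}(0)+2\|\L_{\G}\otimes I_{\X}\|^{2}b^{2}(0))^{h+1}$. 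The mild obstacle is the measurability bookkeeping needed to apply the tower property: one must verify that each partial product $\Phi_P(k,\cdot)x$ is a random element with values in $\X^N$ adapted to $\F(k)$, which follows routinely from Propositions~\ref{nlllwwieiie}(a)--(c) and~\ref{wenknknkn}. All other steps are pointwise inequalities followed by the conditional Lyapunov inequality and a product estimate.
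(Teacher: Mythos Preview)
Your argument is correct and in fact more elementary than the paper's. The paper expands $\Phi_P^{*}(m,n)\Phi_P(m,n)$ as a polynomial in the $D(k)$'s, drops the first-order terms by positivity, and then bounds each higher-order term $M_k$ via repeated conditional H\"older inequalities; this is exactly where the full strength of the $2^{\max\{h,2\}}$-th moment hypothesis~(\ref{nxsl}) enters, since products of up to $2h$ operators appear. You instead iterate the one-step pointwise inequality $\|(I-F(k))y\|^{2}\le(1+\|F(k)\|^{2})\|y\|^{2}$ and condition step by step, which only requires the \emph{second} conditional moment of $\|\H^{*}(k)\H(k)\|$ (recovered from~(\ref{nxsl}) via conditional Lyapunov). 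Your route is shorter and makes transparent that the higher-moment assumption is not actually needed for this particular lemma, although it is genuinely used elsewhere (Lemmas~\ref{yibanxingdejieguo} and~\ref{hhhlemma}) when blocks of $h$ operators must be controlled jointly. One minor citation slip: the reference to Lemma~\ref{lemmaA10} for $\L_{\G}\otimes I_{\X}\ge 0$ is misplaced; that inequality follows directly from the positive semi-definiteness of $\L_{\G}$ via $\langle(\L_{\G}\otimes I_{\X})x,x\rangle=\tfrac12\sum_{i,j}a_{ij}\|x_i-x_j\|^{2}$, not from Lemma~\ref{lemmaA10}.
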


\begin{proof}%[Proof of Lemma \ref{lemma1}]
Denote $D(k)=a(k)\H^*(k)\H(k)+b(k)\L_{\G}\otimes I_{\X}$ and  $P(k)=I_{\X^N}-D(k)$. Noting that $\L_{\G}\otimes I_{\X}\ge 0$, we have $D(k)\ge 0~\text{a.s.}$ Let $x$ be a random element with values in Hilbert space $(\X^N,\tau_{\text{N}}(\X^N))$, we get
\bna\label{kkdsa}
&&~~~\mathbb \|\Phi_P(m,n)x\|^2\cr
&&=\langle\Phi_P(m,n)x,\Phi_P(m,n)x\rangle\cr
&&=\left\langle x,\Phi^*_P(m,n)\Phi_P(m,n)x\right\rangle\cr
&&=\left\langle x,(I_{\X^N}-D^*(n))\cdots(I_{\X^N}-D^*(m))(I_{\X^N}-D(m))\cdots (I_{\X^N}-D(n))x\right\rangle\cr
&&=\left\langle x,x-2\sum_{k=n}^mD(k)x+\sum_{k=2}^{2(m-n+1)}M_kx\right\rangle\cr
&&=\|x\|^2-2\sum_{k=n}^m\langle x,D(k)x\rangle+\left\langle x,\sum_{k=2}^{2(m-n+1)}M_kx\right\rangle\cr
&&\leq \|x\|^2+\left\langle x,\sum_{k=2}^{2(m-n+1)}M_kx\right\rangle
 \leq \Bigg(1+\sum_{k=2}^{2(m-n+1)}\|M_k\| \Bigg)\|x\|^2~\text{a.s.},~0\leq n \leq m,
\ena
where $M_k,k=2,\cdots,2(m-n+1)$ denote the $k$-th order terms in the binomial expansion of $\Phi_P^*(m,n)\Phi_P(m,n)$. For $0\leq m-n\leq h$, by the conditional Lyapunov inequality and (\ref{nxsl}), we obtain
\bna\label{jjkkss}
&&\hspace{-1.4cm}\sup_{k\ge 0}\mathbb E\left.\left[\|D(k)\|^i\right|\mathcal F(k-1)\right]\leq \sup_{k\ge 0}\mathbb E\left.\left[\|D(k)\|^{2^h}\right|\mathcal F(k-1) \right]^{\frac{i}{2h}}\leq \rho_0^i(a(k)+b(k))^i~\text{a.s.},\cr
&&~~~~~~~~~~~~~~~~~~~~~~~~~~~~~~~~~~~~~~~~~~~~~~~~~~~~~~~~~~~~~~~~~~~~~~~~~~~~~2\leq i\leq 2^h,
\ena
where $\rho_0:=\rho+\|\L_{\G}\|$. Note that
\bna\label{nkvvssk}
&&~~~~\mathbb E\left.\left[\|D(k)\|^i\right|\mathcal F(n-1) \right]\cr
&&=\mathbb E\left.\left.\left[\mathbb E\left[\|D(k)\|^i\right|\mathcal F(k-1)\right]\right|\mathcal F(n-1)\right],~2\leq i\leq 2^h,~k\ge n.
\ena
Since the real sequences $\{a(k),k\ge 0\}$ and $\{b(k),k\ge 0\}$ are both monotonically decreasing to $0$, it follows that there exists a constant $c_0>0$ such that $\sup_{k\ge 0}(a(k)+b(k))\leq c_0$. For $0\leq m-n\leq h$, from the definition of $M_k$ and (\ref{jjkkss})-(\ref{nkvvssk}), by termwise multiplication and using the H\"{o}lder inequality repeatedly, we have
\bna\label{pplds}
&&\hspace{-0.8cm}\mathbb E[\|M_k\||\mathcal F(n-1)]\leq \mathbb C_{2(m-n+1)}^k\rho_0^k(a(n)+b(n))^k\leq 2c_0^{-2}\left(a^2(n)+b^2(n)\right)\mathbb C_{2(m-n+1)}^k\rho_0^kc_0^k~\text{a.s.},\cr &&~~~~~~~~~~~~~~~~~~~~~~~~~~~~~~~~~~~~~~~~~~~~~~~~~~~~~~~~~~k=2,\cdots,2(m-n+1).
\ena
Denote $c_k=2c_0^{-2}(1+\rho_0c_0)^{2k}$, by (\ref{pplds}), we get
\bna\label{wwddsaf}
\sum_{k=2}^{2(m-n+1)}\mathbb E[\|M_k\||\mathcal F(n-1)]\leq c_{m-n+1}\left(a^2(n)+b^2(n)\right)~\text{a.s.}
\ena
If $x\in L^0(\Omega,\F(n-1);\X^N)$, substituting (\ref{wwddsaf}) into (\ref{kkdsa}) gives
\bna\label{kkxnms}
\mathbb E\left.\left[\|\Phi_P(m,n)x\|^2\right|\mathcal F(n-1)\right]\leq \left(1+c_{h+1}a^2(n)+c_{h+1}b^2(n)\right)\|x\|^2~\text{a.s.}
\ena
If $x\in L^0(\Omega,\F(ih-1);\X^N)$, it follows from (\ref{kkxnms}) that
\bna\label{eedd}
&&~~~~\mathbb E\left.\left[\|\Phi_P((i+1)h-1,ih)x\|^2\right|\mathcal F(ih-1)\right]\cr
&&\leq \left(1+c_ha^2(i)+c_hb^2(i)\right)\|x\|^2~\text{a.s.},~i\ge 0.
\ena
(I) For the given integer $n$ and $x\in L^0(\Omega,\F(nh-1);\X^N)$, by (\ref{eedd}), we obtain
\bna\label{vnklmef}
&&~~~~\mathbb E\left.\left[\|\Phi_P(mh-1,nh)x\|^2\right|\mathcal F(nh-1)\right]\cr
&&=\mathbb E\left.\left[\|\Phi_P(mh-1,(m-1)h)\Phi_P((m-1)h-1,nh)x\|^2\right|\mathcal F(nh-1)\right]\cr
&&=\mathbb E\big[\mathbb E\big[\|\Phi_P(mh-1,(m-1)h)\cr
&&~~~\times\Phi_P((m-1)h-1,nh)x\|^2|\mathcal F((m-1)h-1)\big]|\mathcal F(nh-1)\big]\cr
&&\leq \left(1+c_ha^2(m-1)+c_hb^2(m-1)\right)\mathbb E\left.\left[\|\Phi_P((m-1)h-1,nh)x\|^2\right|\mathcal F(nh-1)\right]\cr
&&\leq \prod_{k=n}^{m-1}\left(1+c_ha^2(k)+c_hb^2(k)\right)\|x\|^2~\text{a.s.}
\ena
Denote $d_1=\prod_{k=0}^{\infty}(1+c_ha^2(k)+c_hb^2(k))$, from (\ref{vnklmef}) and Condition \ref{condition2}, we get (\ref{yinliwudianyi1}).

(II) For the given integer $n$ and $x\in L^0(\Omega,\F(n-1);\X^N)$, it follows from Condition \ref{condition1} that there exists a constant $d_2>0$, such that $\sup_{k\ge 0}(1+c_{h+1}a^2(k)+c_{h+1}b^2(k))\leq d_2$, which together with (\ref{kkxnms}) gives (\ref{yinliwudianyi2}).
\end{proof}

\begin{lemma}\label{hhhlemma}
For the algorithm (\ref{algorithm}), if Conditions \ref{condition1} and \ref{condition2} hold, and there exists an integer $h>0$ and a constant $\rho>0$, such that
\ban
\sup_{k\ge 0}\E\left.\left[\|\H^*(k)\H(k)\|^{2^{\max\{h,2\}}}\right|\F(k-1)\right]^{\frac{1}{2^{\max\{h,2\}}}}\leq \rho~\text{a.s.},
\ean
then there exits a constant $d_3>0$, such that
\ban
&&~~~~\sup_{k\ge 0}\E\left[\left\|\left(\prod_{j=i+1}^k\left(I_{\X^N}-\sum_{s=jh}^{(j+1)h-1}(a(s)\H^*(s)\H(s)+b(s)(\L_{\G}\otimes I_{\X}))\right)\right)y\right\|^2\right]\cr
&&\leq d_3\E\left[\|y\|^2\right],~\forall\ i\ge 0,
\ean
for any given $y\in L^0(\Omega,\F((i+1)h-1);\X^N)$.
\end{lemma}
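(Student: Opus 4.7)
The plan is to reduce the claim to a one-step conditional estimate of the form $\E[\|Q(j)z\|^2 \mid \F(jh-1)] \le (1+\gamma(j))\|z\|^2$ a.s.\ for any $z \in L^0(\Omega,\F(jh-1);\X^N)$, with $\sum_{j\ge 0}\gamma(j)<\infty$, and then to iterate via the tower property. Here I abbreviate $D(s) := a(s)\H^*(s)\H(s)+b(s)(\L_{\G}\otimes I_{\X}) \ge 0$ a.s.\ and $Q(j) := I_{\X^N}-\sum_{s=jh}^{(j+1)h-1}D(s)$, so that the product in the statement equals $Q(k)Q(k-1)\cdots Q(i+1)$.

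First I would use self-adjointness and positivity of the $D(s)$ to expand
\begin{equation*}
\|Q(j)z\|^2 = \|z\|^2 - 2\sum_{s=jh}^{(j+1)h-1}\langle z,D(s)z\rangle + \Bigl\|\sum_{s=jh}^{(j+1)h-1}D(s)z\Bigr\|^2 \le \|z\|^2\Bigl(1 + h\sum_{s=jh}^{(j+1)h-1}\|D(s)\|^2\Bigr),
\end{equation*}
dropping the nonpositive cross term and applying the elementary inequality $\|\sum_{s=1}^h v_s\|^2 \le h\sum_{s=1}^h\|v_s\|^2$ to the last term.

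Next I would invoke the moment estimate established inside the proof of Lemma \ref{lemma1} at (\ref{jjkkss})--(\ref{nkvvssk}): the hypothesis combined with iterated conditioning and the conditional Lyapunov inequality yields $\E[\|D(s)\|^2\mid \F(jh-1)] \le \rho_0^2(a(s)+b(s))^2 \le 2\rho_0^2(a^2(s)+b^2(s))$ a.s., where $\rho_0:=\rho+\|\L_{\G}\otimes I_{\X}\|$. Since $z$ is $\F(jh-1)$-measurable, $\|z\|^2$ factors out of the conditional expectation and I obtain
\begin{equation*}
\E[\|Q(j)z\|^2\mid \F(jh-1)] \le (1+\gamma(j))\|z\|^2,\qquad \gamma(j):=2h\rho_0^2\sum_{s=jh}^{(j+1)h-1}(a^2(s)+b^2(s)),
\end{equation*}
with $\sum_{j\ge 0}\gamma(j) \le 2h\rho_0^2\sum_{s\ge 0}(a^2(s)+b^2(s))<\infty$ by Condition \ref{condition2}.

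Finally I would iterate. Since $Q(j)$ is $\F((j+1)h-1)$-measurable, an induction on $k$ shows that $z_k := \prod_{j=i+1}^{k-1}Q(j)y$ lies in $L^0(\Omega,\F(kh-1);\X^N)$ whenever $y \in L^0(\Omega,\F((i+1)h-1);\X^N)$. Applying the one-step bound to $z_k$ and using the tower property gives
\begin{equation*}
\E\Bigl[\Bigl\|\prod_{j=i+1}^k Q(j)y\Bigr\|^2\Bigr] \le (1+\gamma(k))\E[\|z_k\|^2] \le \prod_{j=i+1}^k(1+\gamma(j))\,\E[\|y\|^2] \le d_3\,\E[\|y\|^2],
\end{equation*}
where $d_3 := \prod_{j\ge 0}(1+\gamma(j))<\infty$ is independent of $i$ and $k$, as required. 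The only subtle point is the measurability bookkeeping for each intermediate product $z_k$, which is routine given the block structure of the product and the nested filtration $\{\F(k)\}$; I do not expect this to be a serious obstacle.
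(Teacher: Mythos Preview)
Your proof is correct, and it is genuinely simpler than the paper's. The paper mimics the architecture of Lemma~\ref{lemma1}: it groups the block operators $P'(j)=Q(j)$ into super-blocks of length $h$, expands the full product $\Phi_{P'}^*\Phi_{P'}$ over each super-block, controls all the higher-order multinomial terms $M'_k$ via iterated conditional H\"older and Lyapunov inequalities, and then has to splice together the pieces at the ends using $\lfloor j/h\rfloor$, $\lceil j/h\rceil$ and a case analysis on $k-i$ (the four cases $0<k-i\le h$, $h<k-i\le 2h$, $2h<k-i\le 3h$, $k-i>3h$). By contrast, you exploit directly that each single factor $Q(j)$ already advances the filtration by exactly $h$ steps, so the natural one-step bound $\E[\|Q(j)z\|^2\mid\F(jh-1)]\le (1+\gamma(j))\|z\|^2$ can be iterated without any alignment issues. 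Your positivity trick---dropping $-2\sum_s\langle z,D(s)z\rangle$ and using Cauchy--Schwarz on the square---replaces the paper's full multinomial expansion and yields a summable $\gamma(j)$ with the same ingredients (Condition~\ref{condition2} and the conditional moment bound on $\|D(s)\|^2$). The paper's route buys nothing extra here; your argument is shorter, avoids the boundary bookkeeping entirely, and gives the same constant structure $d_3=\prod_{j\ge 0}(1+\gamma(j))$. The measurability of the intermediate products $z_k$ that you flag is indeed routine, handled by Propositions~\ref{nlllwwieiie} and~\ref{wenknknkn} exactly as in the paper's other proofs.
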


\begin{proof}%[Proof of Lemma \ref{hhhlemma}]
Denote $D'(k)=\sum_{s=kh}^{(k+1)h-1}(a(s)\H^*(s)\H(s)+b(s)(\L_{\G}\otimes I_{\X}))$ and $P'(k)=I_{\X^N}-D'(k)$. Noting that $\L_{\G}\otimes I_{\X}\ge 0$, we have $D'(k)\ge 0~\text{a.s.}$ Let $x\in L^0(\Omega,\F(nh^2-1);\X^N)$ be a random element with values in the Hilbert space $\X^N$. Then we get
\ban
&&\mathbb E\left.\left[\|\Phi_{P'}((n+1)h-1,nh)x\|^2\right|\mathcal F(nh^2-1)\right]
 \leq \left(1+\sum_{k=2}^{2h}\left.\mathbb E\left[\|M_k'\|\right|\mathcal F(nh^2-1)\right]\right)\|x\|^2,
\ean
where $M_k',k=2,\cdots,2h$ denote the $k$-th order terms in the binomial expansion of $\Phi_{P'}^*((n+1)h-1,nh)\Phi_{P'}((n+1)h-1,nh)$. It follows from the conditional Lyapunov inequality and Condtion \ref{condition1} that
\bna\label{jjkkssssfffw}
&&\hspace{-1.6cm}\sup_{k\ge 0}\mathbb E\left.\left[\|D'(k)\|^i\right|\mathcal F(k-1)\right]\leq  h\rho_0^i\sum_{s=kh}^{(k+1)h-1}(a(s)+b(s))^i\leq h^2\rho_0^i(a(n)+b(n))^i~\text{a.s.},\cr
&&~~~~~~~~~~~~~~~~~~~~~~~~~~~~~~~~~~~~~~~~~~~~~~~~~~~~~~~~~~~~~~~~~~~~~~~~~2\leq i\leq 2^h,
\ena
where $\rho_0=\rho+\|\L_{\G}\|$. Since $\{a(k),k\ge 0\}$ and $\{b(k),k\ge 0\}$ are both monotonically decreasing to  $0$, there exists a constant $c_0>0$ such that $\sup_{k\ge 0}(a(k)+b(k))\leq c_0$. From the definition of $M_k'$, by termwise multiplication and using the H\"{o}lder inequality of the conditional expectation, we have
\bna\label{ppldssss}
\mathbb E\left.\left[\left\|M_k'\right\|\right|\mathcal F(nh^2-1)\right]&\leq& h^2\mathbb C_{2h}^k\rho_0^k(a(n)+b(n))^k\cr
&\leq& 2h^2c_0^{-2}\left(a^2(n)+b^2(n)\right)\mathbb C_{2h}^k\rho_0^kc_0^k~\text{a.s.},~k=2,\cdots,2h.
\ena
Noting that (\ref{ppldssss}) and $x\in L^0(\Omega,\F(nh^2-1);\X^N)$, we get
\bna\label{eefffdddssd}
&&~~~~\mathbb E\left.\left[\|\Phi_{P'}((n+1)h-1,nh)x\|^2\right|\mathcal F(nh^2-1)\right]\cr
&&\leq \left(1+c'a^2(n)+c'b^2(n)\right)\|x\|^2~\text{a.s.},
\ena
where $c'=2h^2c_0^{-2}(1+\rho_0c_0)^{2h}$. By (\ref{eefffdddssd}), we obtain
\ban
&&~~~~\mathbb E\left.\left[\|\Phi_{P'}(mh-1,nh)x\|^2\right|\mathcal F(nh^2-1)\right]\cr
&&=\mathbb E\left.\left[\|\Phi_{P'}(mh-1,(m-1)h)\Phi_{P'}((m-1)h-1,nh)x\|^2\right|\mathcal F(nh^2-1)\right]\cr
&&=\mathbb E\big[\mathbb E\big[\|\Phi_{P'}(mh-1,(m-1)h)\cr
&&~~~\times\Phi_{P'}((m-1)h-1,nh)x\|^2|\mathcal F((m-1)h^2-1)\big]|\mathcal F(nh^2-1)\big]\cr
&&\leq \left(1+c'a^2(m-1)+c'b^2(m-1)\right)\mathbb E\left.\left[\|\Phi_{P'}((m-1)h-1,nh)x\|^2\right|\mathcal F(nh^2-1)\right]\cr
&&\leq \prod_{k=n}^{m-1}\left(1+c'a^2(k)+c'b^2(k)\right)\|x\|^2~\text{a.s.},~m>n\ge 0.
\ean
Denote $q_1=\prod_{k=0}^{\infty}(1+c'a^2(k)+c'b^2(k))$, it follows from Condition \ref{condition2} that
\ban
\sup_{m\ge 0}\mathbb E\left.\left[\|\Phi_{P'}(mh-1,nh)x\|^2\right|\mathcal F(nh^2-1)\right]\leq q_1\|x\|^2~\text{a.s.}
\ean
Following the same way as the proof of Lemma \ref{lemma1}, it shows that there exists a constant $q_2>0$, such that
\bna\label{leisileisi}
\sup_{0\leq m\leq n+h}\mathbb E\left.\left[\|\Phi_{P'}(m,n)x\|^2\right|\mathcal F(nh-1)\right]\leq q_2\|x\|^2~\text{a.s.}
\ena
For any given positive integer $j$, denote $m_j=\lfloor \frac{j}{h} \rfloor,\widetilde{m}_j=\lceil \frac{j}{h} \rceil$. Firstly, if $0\leq i<k-3h$, then $m_kh>\widetilde{m}_{i+1}h$. Let $y\in L^0(\Omega,\F((i+1)h-1);\X^N)$ be a random element with values in the Hilbert space $\X^N$. Noting that $0\leq k-m_kh<h$, $0\leq \widetilde{m}_{i+1}h-(i+1)<h$, $\Phi_{P'}(m_kh-1,\widetilde{m}_{i+1}h)\Phi_{P'}(\widetilde{m}_{i+1}h-1,i+1)y\in \F(m_kh^2-1)$ and $\Phi_{P'}(\widetilde{m}_{i+1}h-1,i+1)y\in \F(\widetilde{m}_{i+1}h-1)$, by (\ref{eefffdddssd})-(\ref{leisileisi}), we get
\bna\label{jjkklkkklll}
&&~~~~~~\mathbb E\left[\left\|\Phi_{P'}(k,i+1)y\right\|^2\right]\cr
&&~~=\mathbb E\left[\|\Phi_{P'}(k,m_kh)\Phi_{P'}(m_kh-1,\widetilde{m}_{i+1}h)\Phi_{P'}(\widetilde{m}_{i+1}h-1,i+1)y\|^2\right]\cr
%&&=\mathbb E\left[\mathbb E\left.\left[\|\Phi_{P'}(k,m_kh)\Phi_{P'}(m_kh-1,\widetilde{m}_{i+1}h)\Phi_{P'}(\widetilde{m}_{i+1}h-1,i+1)y\|^2\right|\mathcal F(m_kh^2-1)\right]\right]\cr
&&~~\leq q_2\mathbb E\left[\|\Phi_{P'}(m_kh-1,\widetilde{m}_{i+1}h)\Phi_{P'}(\widetilde{m}_{i+1}h-1,i+1)y\|^2\right]\cr
&&~~=q_2\mathbb E\left[\E\left.\left[\|\Phi_{P'}(m_kh-1,\widetilde{m}_{i+1}h)\Phi_{P'}(\widetilde{m}_{i+1}h-1,i+1)y\|^2\right|\F(\widetilde{m}_{i+1}h-1)\right]\right]\cr
&&~~\leq q_1q_2\E\left[\|\Phi_{P'}(\widetilde{m}_{i+1}h-1,i+1)y\|^2\right]\cr
&&~~=q_1q_2\E\left[\E\left.\left[\|\Phi_{P'}(\widetilde{m}_{i+1}h-1,i+1)y\|^2\right|\F((i+1)h-1)\right]\right]\cr
&&~~\leq q_1q_2^2\E\left[\|y\|^2\right],~0\leq i<k-3h.
\ena
Secondly, it follows from (\ref{leisileisi}) that
\bna\label{kjifw}
&&~~~~\mathbb E\left[\|\Phi_{P'}(k,i+1)y\|^2\right]\cr
&&=\mathbb E\left[\mathbb E\left.\left[\|\Phi_{P'}(k,i+1)y\|^2\right|\mathcal F((i+1)h-1)\right]\right]
 \leq q_2\E\left[\|y\|^2\right],~k-h\leq i<k.
\ena
From (\ref{leisileisi}) and (\ref{kjifw}), it is known that
\bna\label{ikdjdf}
&&~~~~\mathbb E\left[\|\Phi_{P'}(k,i+1)y\|^2\right]\cr
&&=\mathbb E\left[\mathbb E\left.\left[\|\Phi_{P'}(k,i+1)y\|^2\right|\mathcal F((i+1)h-1)\right]\right]\cr
&&=\mathbb E\left[\mathbb E\left.\left[\|\Phi_{P'}(k,k-h+1)\Phi_{P'}(k-h,i+1)y\|^2\right|\mathcal F((i+1)h-1)\right]\right]\cr
%&&=\mathbb E\left[\mathbb E\left.\left[\mathbb E\left.\left[\|\Phi_{P'}(k,k-h+1)\Phi_{P'}(k-h,i+1)y\|^2\right|\mathcal F((k-h+1)h-1)\right]\right|\mathcal F((i+1)h-1)\right]\right]\cr
&&\leq q_2\mathbb E\left[\mathbb E\left.\left[\|\Phi_{P'}(k-h,i+1)y\|^2\right|\mathcal F((i+1)h-1)\right]\right]\cr
&&\leq q_2^2\E\left[\|y\|^2\right],~k-2h\leq i<k-h.
\ena
Finally, it follows from (\ref{leisileisi}) and (\ref{ikdjdf}) that
\bna\label{wdkkdd}
&&~~~~\mathbb E\left[\|\Phi_{P'}(k,i+1)y\|^2\right]\cr
&&=\mathbb E\left[\mathbb E\left.\left[\|\Phi_{P'}(k,i+1)y\|^2\right|\mathcal F((i+1)h-1)\right]\right]\cr
&&=\mathbb E\left[\mathbb E\left.\left[\|\Phi_{P'}(k,k-h+1)\Phi_{P'}(k-h,i+1)y\|^2\right|\mathcal F((i+1)h-1)\right]\right]\cr
%&&=\mathbb E\left[\mathbb E\left.\left[\mathbb E\left.\left[\|\Phi_{P'}(k,k-h+1)\Phi_{P'}(k-h,i+1)y\|^2\right|\mathcal F((k-h+1)h-1)\right]\right|\mathcal F((i+1)h-1)\right]\right]\cr
&&\leq q_2\mathbb E\left[\mathbb E\left.\left[\|\Phi_{P'}(k-h,i+1)y\|^2\right|\mathcal F((i+1)h-1)\right]\right]\cr
&&\leq q_2^3\E\left[\|y\|^2\right],~k-3h\leq i<k-2h.
\ena
Combining (\ref{kjifw})-(\ref{wdkkdd}), we get
\bna\label{llcck}
\mathbb E\left[\|\Phi_{P'}(k,i+1)y\|^2\right]\leq \max\left\{q_2,q_2^2,q_2^3\right\}\E\left[\|y\|^2\right],~0<k-i\leq 3h.
\ena
Denote $d_3=\max\{q_1q_2^2,q_2,q_2^2,q_2^3\}$. By (\ref{jjkklkkklll}) and (\ref{llcck}), we have
$
\sup_{k\ge 0}\mathbb E\left[\|\Phi_{P'}(k,i+1)y\|^2\right]\leq d_3\E\left[\|y\|^2\right],~\forall\ i\ge 0.
$
\end{proof}

\begin{lemma}\label{henandelemma}
For the algorithm (\ref{algorithm}), suppose that  Assumptions \ref{assumption1}, \ref{assumption2}, Conditions \ref{condition1} and \ref{condition2} hold, there exists an integer $h>0$, a constant $\rho_0>0$, a strictly positive self-adjoint operator $\HH\in \mathscr L(\mathscr X^N)$ and a nonnegative real sequence $\{c(k),k\ge 0\}$, respectively, satisfying the following conditions: \\
(i) for any given $L_2$-bounded adaptive sequence $\{x(k),\F(kh-1),k\ge 0\}$ with values in the Hilbert space $\X^N$,
\ban \lim_{k\to\infty}\sum_{i=m}^{k}\left(\E\left[\left\|\left(\prod_{j=i+1}^k(I_{\X^N}-c(j)\HH)\right)\mu(i)\right\|^2\right]\right)^{\frac{1}{2}}=0,~\forall\ m\in \mathbb N,\quad \sum_{k=0}^{\infty}c(k)=\infty,
\ean
where $\displaystyle\mu(i):=c(i) \mathscr{H} x(i)-\sum_{s=i h}^{(i+1) h-1}\left(a(s) \mathbb{E}\left[\mathcal{H}^{*}(s) \mathcal{H}(s) x(i) \mid \mathcal{F}(i h-1)\right]+b(s)\left(\mathcal{L}_{\mathcal{G}} \otimes I_{\mathscr{X}}\right) x(i)\right)$;\\
(ii) $\displaystyle \sup_{k\ge 0}\left(\E\left.\left[\|\H^*(k)\H(k)\|^{2^{\max\{h,2\}}}\right|\F(k-1)\right]\right)^{\frac{1}{2^{\max\{h,2\}}}}\leq \rho_0~\text{a.s.}
$\\
Then the sequence of operator-valued random elements $$ \left\{I_{\X^N}-\sum_{i=kh}^{(k+1)h-1}(a(i)\H^*(i)\H(i)+b(i)\L_{\G}\otimes I_{\X}),k\ge 0\right\}$$ is $L_2^2$-stable w.r.t.  $\{\F((k+1)h-1),k\ge 0\}$.
\end{lemma}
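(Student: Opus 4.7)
I will establish $L_2^2$-stability of the block sequence with respect to $\{\F((k+1)h-1)\}$ by reducing it to the assumed stability of the deterministic sequence $\{I_{\X^N}-c(j)\HH\}$. Fix $n\geq 0$ and an arbitrary $L_2$-bounded adaptive sequence $\{x(k),\F((k+1)h-1),k\ge 0\}$ with values in $\X^N$. Write $D'(k):=\sum_{s=kh}^{(k+1)h-1}(a(s)\H^*(s)\H(s)+b(s)\L_{\G}\otimes I_{\X})$, set $y(k):=\prod_{j=n+1}^{k}(I_{\X^N}-D'(j))x(n)$ for $k\ge n$, and $z(k):=y(k-1)$, so that $z(k)\in\F(kh-1)$. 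By Lemma \ref{hhhlemma}, $\sup_{k}\E\|z(k)\|^2\leq d_3\E\|x(n)\|^2<\infty$, which makes $\{z(k),\F(kh-1)\}$ a legitimate test sequence for hypothesis (i).

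The algebraic core is the identity
\begin{align*}
y(m) &= \prod_{j=n+1}^m(I_{\X^N}-c(j)\HH)\,x(n)+\sum_{i=n+1}^m A_i(m)\mu(i)+\sum_{i=n+1}^m A_i(m)\xi(i) \\
&=: T_1(m)+T_2(m)+T_3(m),
\end{align*}
obtained by splitting $I_{\X^N}-D'(j)=(I_{\X^N}-c(j)\HH)+(c(j)\HH-D'(j))$, iterating the recursion $y(k)=(I_{\X^N}-D'(k))y(k-1)$, and decomposing $(c(i)\HH-D'(i))z(i)=\mu(i)+\xi(i)$, where $A_i(m):=\prod_{j=i+1}^m(I_{\X^N}-c(j)\HH)$, $\mu(i):=(c(i)\HH-\E[D'(i)\mid\F(ih-1)])z(i)$, and $\xi(i):=(\E[D'(i)\mid\F(ih-1)]-D'(i))z(i)$. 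Since $z(i)\in\F(ih-1)$, $\mu(i)$ coincides precisely with the object appearing in hypothesis (i) when evaluated on the sequence $z$.

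I then show $\E\|T_j(m)\|^2\to 0$ for $j=1,2,3$. For $T_1$: since $\HH$ is strictly positive self-adjoint and $\sum c(j)=\infty$, Lemma \ref{lemmaA7} supplies both the uniform bound $\|A_i(m)\|_{\LL(\X^N)}\leq M^d$ and pointwise convergence $T_1(m)\to 0$ a.s., and dominated convergence yields $\E\|T_1(m)\|^2\to 0$. For $T_2$: Minkowski's inequality combined with hypothesis (i) applied to $\{z(k),\F(kh-1)\}$ gives $\|T_2(m)\|_{L^2}\leq\sum_{i=n+1}^m(\E\|A_i(m)\mu(i)\|^2)^{1/2}\to 0$. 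For $T_3$: because $A_i(m)$ is deterministic, $\xi(i)\in\F((i+1)h-1)$ and $\E[\xi(i)\mid\F(ih-1)]=0$; for $i_1<i_2$ the inclusion $\F((i_1+1)h-1)\subseteq\F(i_2h-1)$ together with Proposition \ref{tiaojianqiwangxingzhi} gives $\E\langle A_{i_1}(m)\xi(i_1),A_{i_2}(m)\xi(i_2)\rangle=0$, so that $\E\|T_3(m)\|^2=\sum_{i=n+1}^m\E\|A_i(m)\xi(i)\|^2$.

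To close the $T_3$ estimate, hypothesis (ii) and the conditional Lyapunov inequality (as in (\ref{jjkkss})) yield $\E[\|D'(i)\|^2\mid\F(ih-1)]\leq C\sum_{s=ih}^{(i+1)h-1}(a^2(s)+b^2(s))$ a.s. for some $C>0$; together with $z(i)\in\F(ih-1)$ and the variance identity $\E[\|\xi(i)\|^2\mid\F(ih-1)]\le\E[\|D'(i)z(i)\|^2\mid\F(ih-1)]$, this gives $\E\|\xi(i)\|^2\leq C\sup_k\E\|z(k)\|^2\cdot\sum_{s=ih}^{(i+1)h-1}(a^2(s)+b^2(s))$, whose sum over $i$ is finite by Condition \ref{condition2}. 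Coupled with $\|A_i(m)\|\leq M^d$ and the pointwise vanishing $\E\|A_i(m)\xi(i)\|^2\to 0$ (via Lemma \ref{lemmaA7} and dominated convergence), Lemma \ref{lemma6} forces $\E\|T_3(m)\|^2\to 0$. The chief obstacle is the careful bookkeeping of the nested block-filtrations $\F(ih-1)\subset\F((i+1)h-1)$: $\mu(i)$ must lie in $\F(ih-1)$ in order to match hypothesis (i), whereas $\xi(i)$ must lie in $\F((i+1)h-1)$ with vanishing $\F(ih-1)$-conditional mean in order to secure the martingale orthogonality underlying the $T_3$ analysis. Combining the three vanishings via $(\E\|y(m)\|^2)^{1/2}\leq\|T_1(m)\|_{L^2}+\|T_2(m)\|_{L^2}+\|T_3(m)\|_{L^2}\to 0$ then yields the claimed $L_2^2$-stability.
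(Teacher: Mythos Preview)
Your proposal is correct and follows essentially the same route as the paper: both proofs iterate the block recursion, split $(I-D'(i))=(I-c(i)\HH)+(c(i)\HH-D'(i))$, decompose the remainder into a predictable part $\mu(i)$ (handled by hypothesis (i) via Minkowski) and a martingale-difference part $\xi(i)$ (handled by orthogonality plus summability of $\E\|\xi(i)\|^2$ and Lemma~\ref{lemma6}/\ref{lemmaA8}), and use Lemma~\ref{lemmaA7} for the leading deterministic product. Your filtration bookkeeping---taking the test sequence adapted to $\F((k+1)h-1)$ as Definition~\ref{dingyi} requires and shifting via $z(k)=y(k-1)\in\F(kh-1)$ so that hypothesis (i) applies---is in fact slightly more careful than the paper's; the only quibble is that your citation of Proposition~\ref{tiaojianqiwangxingzhi} for the cross-term orthogonality should really be Proposition~2.6.13 in \cite{hy} (pulling the $\F(i_2h-1)$-measurable factor out of the conditional expectation of the vector $\xi(i_2)$), since $D'(i)$ need not be a random element in $(\LL(\X^N),\tau_{\text{N}})$.
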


\begin{proof}%[Proof of Lemma \ref{henandelemma}]
Let $\{x(k),\F(kh-1),k\ge 0\}$ be a $L_2$-bounded adaptive sequence with values in the Hilbert space $\X^N$. For any given integer $m\geq0$, let
\bna\label{diedaishii}
&&~~~~w(k+1)\cr
&&=\left(I_{\X^N}-\sum_{s=kh}^{(k+1)h-1}\left(a(s)\H^*(s)\H(s)+b(s)\L_{\G}\otimes I_{\X}\right)\right)w(k),~k\ge m,
\ena
where $w(m)=x(m),w(i)=0,i=0,\cdots,m-1$. It follows from Proposition \ref{nlllwwieiie}.(a)-(c) that $\{w(k),k\ge 0\}$ is a random sequence with values in the Hilbert space $(\X^N,\tau_{\text{N}}(\X^N))$. On one hand, from the definition of $w(k)$, it follows that
\bna\label{wffee}
&&~~~~w(k+1)\cr
&&=\left(\prod_{i=m}^k\left(I_{\X^N}-\sum_{s=ih}^{(i+1)h-1}(a(s)\H^*(s)\H(s)+b(s)\L_{\G}\otimes I_{\X})\right)\right)x(m),~k\ge m.
\ena
Noting that $x(m)\in L^0(\Omega,\F(mh-1);\X^N)$, it follows from Lemma \ref{hhhlemma} that there exists a constant $d_3>0$, such that
\bna\label{lsaew}
\sup_{k\ge 0}\E\left[\|w(k+1)\|^2\right]
%&&=\sup_{k\ge 0}\E\left[\left\|\left(\prod_{i=m}^k\left(I_{\X^N}-\sum_{s=ih}^{(i+1)h-1}(a(s)\H^*(s)\H(s)+b(s)\L_{\G}\otimes I_{\X})\right)\right)x(m)\right\|^2\right]\cr
\leq d_3\E\left[\|x(m)\|^2\right]
\leq d_3\sup_{k\ge 0}\E\left[\|x(k)\|^2\right]<\infty.
\ena
Moreover, (\ref{diedaishii}) can be rewritten as
\ban
w(i+1)&=&(I_{\X^N}-c(i)\HH)w(i)\cr
&&+\left(c(i)\HH-\sum_{s=ih}^{(i+1)h-1}(a(s)\H^*(s)\H(s)+b(s)\L_{\G}\otimes I_{\X})\right)w(i),~i\ge m,
\ean
which gives
\bna\label{fwwiii}
&&~~~~w(k+1)\cr
&&=\left(\prod_{i=m}^k(I_{\X^N}-c(i)\HH)\right)x(m)+\sum_{i=m}^k\left(\prod_{j=i+1}^k(I_{\X^N}-c(j)\HH)\right)\cr
&&~~~\times\left(c(i)\HH-\sum_{s=ih}^{(i+1)h-1}(a(s)\H^*(s)\H(s)+b(s)\L_{\G}\otimes I_{\X})\right)w(i),~k\ge m.
\ena
From (\ref{lsaew}), Proposition \ref{wenknknkn} and the condition (ii), it is known that
\ban
\sup_{s\ge 0,\ i\ge 0}\E\left[\|\H^*(s)\H(s)w(i)\|\right]\leq \sup_{s\ge 0}\E\left[\|\H^*(s)\H(s)\|^2\right]+\sup_{i\ge 0}\E\left[\|w(i)\|^2\right]<\infty,
\ean
that is, $\H^*(s)\H(s)w(i)\in L^1(\Omega;\X^N)$. Thus, by Lemma \ref{nvkvpeoeo}, we know that there exists a unique conditional expectation $\E[\H^*(s)\H(s)w(i)|\F(ih-1)]$ of $\H^*(s)\H(s)w(i)$ w.r.t. $\F(ih-1)$. Then we have
\bna\label{fewdee}
~~\Bigg(c(i)\HH-\sum_{s=ih}^{(i+1)h-1}(a(s)\H^*(s)\H(s)+b(s)\L_{\G}\otimes I_{\X})\Bigg)w(i)
%&&=c(i)\H w(i)-\sum_{s=ih}^{(i+1)h-1}(a(s)\E[\H^*(s)\H(s)w(i)|\F(ih-1)]+b(s)(\L_{\G}\otimes I_{\X})w(i))\cr
%&&~~~~+\sum_{s=ih}^{(i+1)h-1}a(s)(\E[\H^*(s)\H(s)w(i)|\F(ih-1)]-\H^*(s)\H(s)w(i))\cr
=w_1(i)+w_2(i),
\ena
where
\bna\label{mlwer}
\begin{cases}
  w_1(i)=\displaystyle c(i)\HH w(i)-\sum_{s=ih}^{(i+1)h-1}(a(s)\E[\H^*(s)\H(s)w(i)|
  \F(ih-1)] +b(s)(\L_{\G}\otimes I_{\X})w(i)),\\
  w_2(i)=\displaystyle \sum_{s=ih}^{(i+1)h-1}a(s)(\E[\H^*(s)\H(s)w(i)|\F(ih-1)]-\H^*(s)\H(s)w(i)).
\end{cases}
\ena
By (\ref{wffee}) and (\ref{fwwiii})-(\ref{fewdee}), we get
\ban
&&~~~\left(\prod_{i=m}^k\left(I_{\X^N}-\sum_{s=ih}^{(i+1)h-1}(a(s)\H^*(s)\H(s)+b(s)\L_{\G}\otimes I_{\X})\right)\right)x(m)\cr &&=\left(\prod_{i=m}^k(I_{\X^N}-c(i)\HH)\right)x(m)+\sum_{i=m}^k\left(\prod_{j=i+1}^k(I_{\X^N}-c(j)\HH)\right)(w_1(i)+w_2(i)),
\ean
which together with Cauchy inequality leads to
\bna\label{ikddww}
&&~~~~\E\left[\Bigg\|\Bigg(\prod_{i=m}^k\Bigg(I_{\X^N}-\sum_{s=ih}^{(i+1)h-1}(a(s)\H^*(s)\H(s)+b(s)\L_{\G}\otimes I_{\X})\Bigg)\Bigg)x(m)\Bigg\|^2\right]\cr
&&\leq 2\E\left[\Bigg\|\Bigg(\prod_{i=m}^k(I_{\X^N}-c(i)\HH)\Bigg)x(m)\Bigg\|^2\right]\cr &&~~~+2\E\left[\left\|\sum_{i=m}^k\left(\prod_{j=i+1}^k(I_{\X^N}-c(j)\HH)\right)(w_1(i)+w_2(i))\right\|^2\right],~k\ge m.
\ena
By Lemma \ref{lemmaA7}, it is known that there exist constants $M,d>0$, such that
\ban
\left\|\left(\prod_{i=m}^k(I_{\X^N}-c(i)\HH)\right)x(m)\right\|^2\leq M^{2d}\|x(m)\|^2~\text{a.s.}
\ean
Noting that $\sup_{k\ge 0}\E[\|x(k)\|^2]<\infty$, it follows from Lebesgue dominated convergence theorem and Lemma \ref{lemmaA7} that
\bna\label{oo1}
\lim_{k\to\infty}\E\left[\left\|\left(\prod_{i=m}^k(I_{\X^N}-c(i)\HH)\right)x(m)\right\|^2\right]=0,~\forall\ m\ge 0.
\ena
By using Cauchy inequality again, we obtain
\bna\label{oooff}
&&~~~~\E\left[\left\|\sum_{i=m}^k\left(\prod_{j=i+1}^k(I_{\X^N}-c(j)\HH)\right)(w_1(i)+w_2(i))\right\|^2\right]\cr
&&\leq 2\E\left[\left\|\sum_{i=m}^k\left(\prod_{j=i+1}^k(I_{\X^N}-c(j)\HH)\right)w_1(i)\right\|^2\right]\cr &&~~~+2\E\left[\left\|\sum_{i=m}^k\left(\prod_{j=i+1}^k(I_{\X^N}-c(j)\HH)\right)w_2(i)\right\|^2\right].
\ena
We now consider the right-hand side of (\ref{oooff}) term by term. Firstly, by Minkowski inequality, we get
\bna\label{oolkf}
&&~~~~\E\left[\left\|\sum_{i=m}^k\left(\prod_{j=i+1}^k(I_{\X^N}-c(j)\HH)\right)w_1(i)\right\|^2\right]\cr
&&\leq \left(\sum_{i=m}^k\left(\E\left[\left\|\left(\prod_{j=i+1}^k(I_{\X^N}-c(j)\HH)\right)w_1(i)\right\|^2\right]\right)^{\frac{1}{2}}\right)^2.
\ena
It follows from (\ref{diedaishii}) and (\ref{lsaew}) that $\{w(k),\F(kh-1),k\ge 0\}$ is an adaptive sequence with $\sup_{k\ge 0}\E[\|w(k)\|^2]<\infty$, which together with (\ref{mlwer}), the condition (i) and (\ref{oolkf}) gives
\bna\label{vspppw}
\lim_{k\to \infty}\E\left[\left\|\sum_{i=m}^k\left(\prod_{j=i+1}^k(I_{\X^N}-c(j)\HH)\right)w_1(i)\right\|^2\right]=0.
\ena
Secondly, noting that $w_2(i)\in \F((i+1)h-1)$, $w(i)\in \F(ih-1)$, by Condition \ref{condition1}, it follows from the condition (ii) and (\ref{lsaew}) that
\bna\label{fuckd}
&&~~~~\E\left[\|w_2(i)\|^2\right]\cr
&&= \E\left[\left\|\sum_{s=ih}^{(i+1)h-1}a(s)(\E[\H^*(s)\H(s)w(i)|\F(ih-1)]-\H^*(s)\H(s)w(i))\right\|^2\right]\cr
&&\leq ha^2(i)\E\Bigg[\sum_{s=ih}^{(i+1)h-1}\|\E[\H^*(s)\H(s)w(i)|\F(ih-1)]
-\H^*(s)\H(s)w(i)\|^2\Bigg]\cr
&&\leq 2ha^2(i)\E\Bigg[\sum_{s=ih}^{(i+1)
h-1}\big(\E\big[\|\H^*(s)\H(s)w(i)\|^2|\F(ih-1)\big]
 +\|\H^*(s)\H(s)w(i)\|^2\big)\Bigg]\cr
&&\leq 2ha^2(i)\E\Bigg[\sum_{s=ih}^{(i+1)h-1}\left(\rho_0^2\|w(i)\|^2
+\|\H^*(s)\H(s)w(i)\|^2\right)\Bigg]\cr
&&\leq 2ha^2(i)\E\Bigg[\|w(i)\|^2\sum_{s=ih}^{(i+1)h-1}\left(\rho_0^2
+\|\H^*(s)\H(s)\|^2\right)\Bigg]\cr
&&= 2ha^2(i)\E\Bigg[\E\Bigg[\|w(i)\|^2\sum_{s=ih}^{(i+1)h-1}
\left(\rho_0^2+\|\H^*(s)\H(s)\|^2\right) \Bigg|\F(ih-1)\Bigg]\Bigg]\cr
&&= 2ha^2(i)\E\Bigg[\E\Bigg[\sum_{s=ih}^{(i+1)h-1}\left(\rho_0^2
+\|\H^*(s)\H(s)\|^2\right)\Bigg|\F(ih-1)\Bigg]\|w(i)\|^2\Bigg]\cr
&&\leq 4h^2\rho_0^2a^2(i)\E\left[\|w(i)\|^2\right],
\ena
which together with Condition \ref{condition2} gives
\bna\label{vlllls}
\sup_{i\ge 0}\E\left[\|w_2(i)\|^2\right]<\infty.
\ena
Thus, it follows from Lemma \ref{nvkvpeoeo} that $\E[w_2(i)|\F(ih-1)]$ exists and
\bna\label{iwsf}
&&~~~~\E[w_2(i)|\F(ih-1)]\cr
&&=\E\Bigg[\sum_{s=ih}^{(i+1)h-1}a(s)(\E[\H^*(s)\H(s)w(i)|\F(ih-1)] -\H^*(s)\H(s)w(i))\bigg|\F(ih-1)\Bigg]\cr
&&=\sum_{s=ih}^{(i+1)h-1}a(s)\E[\E[\H^*(s)\H(s)w(i)|\F(ih-1)]-\H^*(s)
\H(s)w(i)|\F(ih-1)] \cr
&& =0.
\ena
Meanwhile, from Lemma \ref{lemmaA7}, it is known that there exist constants $M,d>0$ such that
\begin{align}\label{nvweefff}
 &\sup_{\|x\|=1\atop x\in \X^N}\inf\left\{r\ge 0:\P\left(\Bigg\|\Bigg(\prod_{j=t+1}^k(I_{\X^N}-c(j)\HH)\Bigg)x\Bigg\|
 <r\right)=1\right\}\notag\\
 \leq &  \sup_{\|x\|=1\atop x\in \X^N}M^d\|x\|<\infty.
\end{align}
For $m\leq s<t\leq k$, it follows from Proposition \ref{wenknknkn}, Proposition 2.6.31 in \cite{hy}, Lemma 3.5.2 in \cite{hy}, Proposition \ref{lemmaA6} and (\ref{vlllls})-(\ref{nvweefff}) that
\bna\label{ffkk}
~~~~\E\left[\left\langle \left(\prod_{j=s+1}^k(I_{\X^N}-c(j)\HH)\right)w_2(s),\left(\prod_{j=t+1}^k(I_{\X^N}-c(j)\HH)\right)w_2(t)\right\rangle\right]
%&&=\E\left[\E\left.\left[\left\langle \left(\prod_{j=s+1}^k(I_{\X^N}-c(j)\H)\right)w_2(s),\left(\prod_{j=t+1}^k(I_{\X^N}-c(j)\H)\right)w_2(t)\right\rangle\right|\F(th-1)\right]\right]\cr
%&&=\E\left[\left\langle \left(\prod_{j=s+1}^k(I_{\X^N}-c(j)\H)\right)w_2(s),\E\left.\left[\left(\prod_{j=t+1}^k(I_{\X^N}-c(j)\H)\right)w_2(t)\right|\F(th-1)\right]\right\rangle\right]\cr
%&&=\E\left[\left\langle \left(\prod_{j=s+1}^k(I_{\X^N}-c(j)\H)\right)w_2(s),\left(\prod_{j=t+1}^k(I_{\X^N}-c(j)\H)\right)\E[w_2(t)|\F(th-1)]\right\rangle\right]\cr
=0.
\ena
On one hand, by Lemma \ref{lemmaA7}, it is known that there exist positive constants $M$ and $d$, such that
\bna\label{wiiiwww}
\left\|\left(\prod_{j=i+1}^k(I_{\X^N}-c(j)\HH)\right)\frac{1}{a(i)}w_2(i)\right\|\leq M^d\left\|\frac{1}{a(i)}w_2(i)\right\|~\text{a.s.}
\ena
Thus, by Lemma \ref{lemmaA7} and (\ref{ffkk})-(\ref{wiiiwww}), we get
\begin{align}\label{oolll}
&\E\left[\left\|\sum_{i=m}^k\left(\prod_{j=i+1}^k(I_{\X^N}-c(j)\HH)\right)w_2(i)\right\|^2\right]\cr
 =&\sum_{i=m}^k\E\left[\left\|\left(\prod_{j=i+1}^k(I_{\X^N}-c(j)\HH)\right)
 w_2(i)\right\|^2\right]\cr & +2\sum_{m\leq s<t\leq k}\E\Bigg[\Bigg\langle \left(\prod_{j=s+1}^k(I_{\X^N}-c(j)\HH)\right)w_2(s), \left(\prod_{j=t+1}^k(I_{\X^N}-c(j)\HH)\right)w_2(t)\Bigg\rangle\Bigg]\cr
 =&\sum_{i=m}^k\E\left[\left\|\left(\prod_{j=i+1}^k(I_{\X^N}-c(j)\HH)\right)w_2(i)\right\|^2\right]\cr
=&\sum_{i=m}^ka^2(i)\E\left[\left\|\left(\prod_{j=i+1}^k(I_{\X^N}-c(j)\HH)\right)\frac{1}{a(i)}w_2(i)\right\|^2\right]\cr
 \leq& M^d\sum_{i=m}^ka^2(i)
\left[\E\left[\left\|\left(\prod_{j=i+1}^k(I_{\X^N}-c(j)\HH)\right)
\frac{1}{a(i)}w_2(i)\right\|^2\right]\right]^{\frac{1}{2}} \cr & \times\left[\E\left[\left\|\frac{1}{a(i)}w_2(i)\right\|^2\right]\right]^{\frac{1}{2}}.
\end{align}
%\bna\label{oolll}
%&&~~~~\E\left[\left\|\sum_{i=m}^k\left(\prod_{j=i+1}^k(I_{\X^N}-c(j)\HH)\right)w_2(i)\right\|^2\right]\cr
%&&=\sum_{i=m}^k\E\left[\left\|\left(\prod_{j=i+1}^k(I_{\X^N}-c(j)\HH)\right)w_2(i)\right\|^2\right]\cr &&~~~~+2\sum_{m\leq s<t\leq k}\E\Bigg[\Bigg\langle \left(\prod_{j=s+1}^k(I_{\X^N}-c(j)\HH)\right)w_2(s),\cr &&~~~~\left(\prod_{j=t+1}^k(I_{\X^N}-c(j)\HH)\right)w_2(t)\Bigg\rangle\Bigg]\cr
%&&=\sum_{i=m}^k\E\left[\left\|\left(\prod_{j=i+1}^k(I_{\X^N}-c(j)\HH)\right)w_2(i)\right\|^2\right]\cr
%&&=\sum_{i=m}^ka^2(i)\E\left[\left\|\left(\prod_{j=i+1}^k(I_{\X^N}-c(j)\HH)\right)\frac{1}{a(i)}w_2(i)\right\|^2\right]\cr
%&&\leq M^d\sum_{i=m}^ka^2(i)
%\left[\E\left[\left\|\left(\prod_{j=i+1}^k(I_{\X^N}-c(j)\HH)\right)
%\frac{1}{a(i)}w_2(i)\right\|^2\right]\right]^{\frac{1}{2}}\cr &&~~~~\times\left[\E\left[\left\|\frac{1}{a(i)}w_2(i)\right\|^2\right]\right]^{\frac{1}{2}}.
%\ena
On the other hand, by (\ref{lsaew}) and (\ref{fuckd}), we have
\bna\label{ofskj}
&&\sup_{i\ge 0}\mathbb E\left[\left\|\frac{1}{a(i)}w_2(i)\right\|^2\right]\cr
&\leq & 4h^2\rho_0^2\sup_{i\ge 0}\E\left[\|w(i)\|^2\right]<\infty.
\ena
Substituting (\ref{ofskj}) into (\ref{oolll}) gives
\ban
&&~~~~\E\left[\left\|\sum_{i=m}^k\left(\prod_{j=i+1}^k(I_{\X^N}-c(j)\HH)\right)w_2(i)\right\|^2\right]\cr
&&\leq M^d\sup_{i\ge 0}\left[\E\left[\left\|\frac{1}{a(i)}w_2(i)\right\|^2\right]\right]^{\frac{1}{2}}\cr
&&~~~~\times\sum_{i=m}^ka^2(i)\left(\E\left[\left\|\left(\prod_{j=i+1}^k(I_{\X^N}-c(j)\HH)\right)\frac{1}{a(i)}w_2(i)\right\|^2\right]\right)^{\frac{1}{2}},
\ean
which together with Condition \ref{condition2}, (\ref{ofskj}) and Lemma \ref{lemmaA8} leads to
\bna\label{iikdd}
\lim_{k\to\infty}\E\left[\left\|\sum_{i=m}^k\left(\prod_{j=i+1}^k(I_{\X^N}-c(j)\HH)\right)w_2(i)\right\|^2\right]=0,~\forall\ m\ge 0.
\ena
Hence, substituting (\ref{oo1})-(\ref{oooff}), (\ref{vspppw}) and (\ref{iikdd}) into (\ref{ikddww}) completes the proof of Lemma \ref{henandelemma}.
\end{proof}
\end{appendices}

\section*{Acknowledgment}
The authors thank Mr. Hanran Du for the related discussion and revision for Appendix B.

\end{CJK}
\end{document}